\newtheorem{theorem}{Theorem}
\newtheorem{assumption}{Assumption}
\newtheorem{lemma}[theorem]{Lemma}
\newtheorem{example}[theorem]{Example}
\newtheorem{remark}[theorem]{Remark}
\newtheorem{definition}[theorem]{Definition}
\definecolor{winered}{rgb}{0.5,0,0}
\crefname{subappendix}{\IfAppendix{section}{appendix}}{\IfAppendix{sections}{appendices}s}
\def\thmt@refnamewithcomma #1#2#3,#4,#5\@nil{%
  \@xa\def\csname\thmt@envname #1utorefname\endcsname{#3}%
  \ifcsname #2refname\endcsname
    \csname #2refname\expandafter\endcsname\expandafter{\thmt@envname}{#3}{#4}%
  \fi
}
\def\independenT#1#2{\mathrel{\rlap{$#1#2$}\mkern2mu{#1#2}}}
\newcommand\independent{\protect\mathpalette{\protect\independenT}{\perp}}
\def\independenT#1#2{\mathrel{\rlap{$#1#2$}\mkern2mu{#1#2}}}
\newcommand\dsep[1]{{\perp\!\!\!\!\perp_{#1}}}
\renewcommand{\t}{\intercal}
\newcommand{\ep}{\varepsilon}
\renewcommand{\epsilon}{\varepsilon}
		\renewcommand{\i}{\infty}
		\renewcommand{\phi}{\varphi}
\newcommand{\Var}{\mathrm{Var}}
\newcommand\lra{%
	\mathrel{{\ooalign{\hss\raisebox{-0.5ex}{$\longrightarrow$}\hss\cr\raisebox{0.5ex}{$\dashleftarrow$}}}}
}
\newcommand{\PA}[1]{{\mathrm{PA}({#1})}}
\newcommand{\PAg}[2]{{\mathrm{pa}^{#1}({#2})}}
\newcommand{\CHg}[2]{{\mathrm{ch}^{#1}({#2})}}
\newcommand{\DEg}[2]{{\mathrm{de}^{#1}({#2})}}
\newcommand{\ANg}[2]{{\mathrm{an}^{#1}({#2})}}
\newcommand{\NDg}[2]{{\mathrm{nd}^{#1}({#2})}}
\renewcommand{\root}[1]{\mathrm{rt}(#1)}
\newcommand{\lE}{\ell_\mathrm{E}}
\newcommand{\lG}{\ell_\mathrm{G}}
\newcommand{\lCE}{\ell_\mathrm{CE}}
\newcommand{\convp}{\stackrel{P}{\longrightarrow}}
\newcommand{\convas}{\stackrel{a.s.}{\longrightarrow}}
\newcommand{\convd}{\stackrel{\mathcal{D}}{\longrightarrow}}
\DeclareMathOperator*{\argmin}{arg\,min}
\newcommand{\eqd}{\stackrel{\mathcal{D}}{=}}
\newcommand{\define}[4]{\expandafter#1\csname#3#4\endcsname{#2{#4}}}
\newcommand{\R}{\mathbb{R}}
\newcommand{\N}{\mathbb{N}}
\newcommand{\E}{\mathbb{E}}
\newcommand{\lp}{\left( }
\newcommand{\rp}{\right) }
\newcommand{\lf}{\left[}
\newcommand{\rf}{\right]}
\Crefname{lemma}{Lemma}{Lemmas}
\Crefname{theorem}{Theorem}{Theorems}
\Crefname{definition}{Definition}{Definitions}
\Crefname{example}{Example}{Examples}
\Crefname{proposition}{Proposition}{Propositions}
\Crefname{assumption}{Assumption}{Assumptions}
\crefname{lemma}{lemma}{lemmas}
\crefname{theorem}{theorem}{theorems}
\crefname{definition}{definition}{definitions}
\crefname{example}{example}{examples}
\crefname{proposition}{proposition}{propositions}
\crefname{assumption}{assumption}{assumptions}
\renewenvironment{proof}[1][\proofname]{\par
	\pushQED{\qed}%
	\normalfont \topsep6\p@\@plus6\p@\relax
	\trivlist
	\item\relax
	{\bfseries Proof#1\@addpunct{.}}\hspace\labelsep\ignorespaces
}{%
	\popQED\endtrivlist\@endpefalse
}
\begin{document}

\title{
	\bf
	Structure Learning for Directed Trees
}
\author{Martin Emil Jakobsen$^\diamond
	$
	\quad Rajen D.\ Shah$^\flat 
	$ \\ Peter B{\"u}hlmann$^{\star}$ \quad Jonas Peters$^\natural$}
\affil{
	\normalsize{
		\textit{$^\diamond$ University of Copenhagen, Denmark}\\
		{\footnotesize\texttt{m.jakobsen@math.ku.dk}} \vspace{0.2cm}\\ 
		\textit{$^\flat$University of Cambridge, United Kingdom}\\ 
		{\footnotesize\texttt{r.shah@statslab.cam.ac.uk}} \vspace{0.2cm}\\ 
		\textit{$^{\star}$ETH Zurich, Switzerland}\\
		{\footnotesize\texttt{buhlmann@stat.math.ethz.ch}} \vspace{0.2cm} \\
		\textit{$^\natural
			$University of Copenhagen, Denmark}\\
		{\footnotesize\texttt{jonas.peters@math.ku.dk}} \vspace{-0.1cm}\\ 
	}
}
\date{\today}

\maketitle

\begin{abstract}%
Knowing the causal structure of a system is of fundamental interest in many areas of science and can aid the design of prediction algorithms that work well
under manipulations to the system. The causal structure becomes identifiable from the
observational distribution under certain restrictions. To
learn the structure from data, score-based methods evaluate different
graphs according to the quality of their fits. However, for large, continuous, and nonlinear models, these rely on heuristic optimization approaches with no general guarantees
of recovering the true causal structure. In this paper, we consider
structure learning of directed trees.
We propose a fast and scalable method based on
Chu–Liu–Edmonds’ algorithm we call causal additive trees (CAT).
For the case of Gaussian errors, we prove consistency in an asymptotic regime
with a vanishing identifiability gap. We also introduce two methods for testing substructure
hypotheses with asymptotic family-wise error rate control that is valid post-selection and in unidentified settings.
Furthermore, we study the identifiability gap, which quantifies how much better the true causal model fits the observational
distribution, and prove that it is lower bounded by local properties of the causal model.
Simulation studies demonstrate the favorable performance of CAT compared to competing
structure learning methods.
 \end{abstract}
\textbf{Keywords: }	Causality, structure learning, directed trees, hypothesis testing.
	\section{Introduction}
Learning the underlying causal structure of a stochastic system involving the random vector $X=(X_1,\ldots,X_p)$ is 
an important problem in economics, industry, and science. Knowing the causal structure allows researchers to understand 
whether $X_i$ causes $X_j$ (or vice versa)
and how a system reacts under an intervention. 
However, it is not generally possible to learn the causal structure (or parts thereof) from the observational data of a system alone. Without further restrictions on the system of interest there might exist another system with a different causal structure inducing the same observational distribution, i.e., the structure might not be identifiable from observed data.

Common structure learning methods using observational data are constraint-based \citep[e.g.,][]{pearl2009causality, spirtes2000causation}, score-based \citep[e.g.,][]{chickering2002optimal}, or a mix thereof \citep[e.g.,][]{nandy2018high}. Each of these approaches requires different assumptions to ensure 
identifiability of the causal structure and consistency of the approach.  
In structural causal models, one assumes that there are (causal) functions $f_1, \ldots, f_p$ such that for all
$$1\leq i \leq p: \qquad X_i := f_i(X_{\PA i}, N_i),$$ 
for subsets $\PA i \subset \{1, ..., p\}$ and jointly independent noise variables $N=(N_1, ..., N_p)\sim P_N$ (see \Cref{def:StructuralCausalAdditiveTreeModel} for a precise definition including further restrictions). 
The causal graph is constructed as follows: for each variable $X_i$ one adds directed edges from its direct causes or parents $\PA i$ into $i$. For such models, system assumptions concerning the causal functions can make the causal graph identified from the observational distribution. Specific assumptions  that guarantee identifiability of the causal graph have been studied for, e.g., linear additive Gaussian noise models with equal noise variance \citep{peters2014identifiability}, linear additive non-Gaussian noise models \citep{shimizu2006linear}, nonlinear additive noise models \citep{hoyer2008nonlinear,peters2014causal}, post-nonlinear additive noise models \citep{zhang2009}, partially-linear additive Gaussian noise models \citep{rothenhausler2018causal} and discrete models \citep{Peters2011tpami}.

Score-based structure learning 
usually starts with 
a function $\ell$ assigning a population score to causal structures.
Depending on the assumed model class, this function is minimized by the true structure. 
For example, when considering directed acylic graph (DAGs), the true causal DAG $\cG$ satisfy
\begin{align} \label{eq:ScoreFunctionMinimizationIntroduction}
	\cG \in \argmin_{\tilde \cG \,: \,\tilde  \cG \text{ is a DAG}} \ell(\tilde \cG).
\end{align}
The idea is then to estimate the score from a finite sample and minimize the empirical score over all DAGs.
As the cardinality of the space of all DAGs grows super-exponentially in the number of nodes $p$ \citep{chickering2002optimal}, brute-force minimization becomes computationally infeasible even for moderately large systems.\footnote{For example, there are over $10^{275}$ distinct directed acyclic graphs over 40 nodes \citep{SizeOfDags}.}

For linear additive Gaussian noise models, %
assuming the Markov conditions and
faithfulness,
one can recover the correct Markov equivalence class (MEC) of $\cG$,
which can be represented by a unique completed partially directed acyclic graph (CPDAG) \citep{pearl2009causality}.  
The optimization can be done greedily over MECs with greedy equivalent search \citep[GES,][]{chickering2002optimal} or over
DAGs \citep[][]{Tsamardinos2006} 
and in the former case, the method is known to be consistent. More specifically, the output of GES search is not guaranteed, for a fixed sample size, to solve the empirical version of \Cref{eq:ScoreFunctionMinimizationIntroduction} but it solves the problem with probability tending to one in the large sample limit.

\cite{Chickering1996} showed that, in general,  solving the problem in \Cref{eq:ScoreFunctionMinimizationIntroduction} is an  NP-hard problem, even if we restrict the search to MECs for structures with fixed causal indegree of $K>2$. 
Several exact exponential runtime algorithms have been proposed, for example, A$^*$ search \citep[][]{yuan2011learning,yuan2013learning}  and CPBayes \citep[][]{10.1007/978-3-319-23219-5_31}  for discrete systems, algorithms based on integer linear programming \citep[][]{jaakkola2010learning,cussens2017bayesian,10.5555/3020548.3020567}, and algorithms based on dynamic programming \citep[][]{koivisto2004exact,10.5555/3020419.3020473,10.5555/1795114.1795165}.

In the nonlinear additive Gaussian noise case, \citet{CAM} show that nonparametric maximum-likelihood estimation consistently estimates the correct causal order. However, the greedy search algorithm 
minimizing the score function does not come with any theoretical guarantees.  Other heuristic approaches (for discrete or linear Gaussian systems) include acyclic selection ordering-based search \citep[][]{scanagatta2015learning}, memetic insert neighbourhood ordering-based search \citep[][]{lee2017metaheuristics}, and  max-min hill-climb \citep[][]{Tsamardinos2006}.
Recently, methods  have been proposed that
perform continuous, non-convex optimization 
\citep{zheng2018dags} but such methods are without guarantees and it is currently debated whether they exploit some artifacts in simulated data \citep{reisach2021beware}.	Thus, for nonlinear models, there is currently no score-based method that provably guarantees recovery of the true causal graph with high probability.

In this paper we focus on models  of reduced complexity, namely models with directed trees as causal graphs. This complexity reduction allow for polynomial runtime minimization of the score-function using the Chu--Liu--Edmonds' algorithm \citep[proposed independently by][]{chuliu1965,edmonds1967optimum} and it allows for the derivation of  hypothesis testing theory. As such the structure learning problem remains computationally feasible even for very large systems.
Our method is called causal additive trees (CAT). The method is easy to implement and consists of two steps. In the first step, we employ user-specified (univariate) regression methods to estimate the
conditional expectations $x\mapsto \E[X_i|X_j=x]$ for all $i\not = j$. We then use these to construct edge weights 
as inputs to the Chu--Liu--Edmonds' algorithm. This algorithm then outputs a directed tree with minimal edge weight, corresponding to a directed tree minimizing the score in \Cref{eq:ScoreFunctionMinimizationIntroduction}.

\subsection{Contributions}
We now highlight four main contributions of the paper:

\textit{(i) Computational feasibility:} 
Assuming an identifiable model class, such as additive noise, allows us 
to infer the causal DAG by minimizing   \Cref{eq:ScoreFunctionMinimizationIntroduction} for a suitable score function.
However, even for trees, the cardinality of the search space grows super-exponentially in the number of variables $p$. Hence, brute-force minimization (exhaustive search) in \Cref{eq:ScoreFunctionMinimizationIntroduction}
remains computationally infeasible for large systems.
We propose the score-based method CAT (based on Chu--Liu--Edmonds' algorithm) and prove that it recovers the causal tree with a run-time complexity of $\cO(p^2)$. This method can be useful even when not restricting onself to the class of directed trees: e.g., when using a heuristic method such as greedy search for aiming to find an optimal scoring DAG, one can use the score of the optimal scoring tree as a sanity check or the corresponding tree for initialization.

\textit{(ii) Consistency:} We prove that CAT is pointwise consistent in an identified additive Gaussian noise setup. That is, we recover the causal directed tree with probability tending to one as the sample size increases. Consistency only requires that the regression methods for estimating the conditional mean functions have mean squared prediction error converging to zero in probability. This property that is satisfied by many nonparametric regression methods such as nearest neighbors, neural networks, or kernel methods \citep[see e.g.][]{gyorfi2002distribution}. Moreover, the vanishing estimation error is only required for causal edges for which the conditional means coincide with the causal functions. 
We also derive sufficient conditions that ensure consistency in an asymptotic setup with vanishing identifiability.
Specifically, we show that consistency is retained even when the identifiability gap decreases at a rate $q_n$ with $q_n^{-1}=o(\sqrt{n})$ as long as the conditional expectation mean squared prediction error %
corresponding to the causal edges vanishes at a rate $o_p(q_n)$.

\textit{(iii) Hypothesis testing:} 
We provide two algorithms for performing hypothesis tests concerning the presence and absence of substructures, such as particular edges, in the true causal graph. The type I error is controlled asymptotically when the mean squared prediction error of the regression corresponding to the true causal edges decays at a relatively slow $o_p(n^{-1/2})$ rate. The tests are valid post-selection, that is, the hypotheses to be tested may be chosen after the graph has been estimated, and when multiple tests are performed, the family-wise error rate is controlled for any number of tests.  
Furthermore, one of the two proposed testing procedures is valid in the non-identified setting.

\textit{(iv) Identifiability analysis:} We analyze the identifiability gap, that is,  the smallest population score difference between an alternative graph and the causal graph. The reduced system complexity, due to the restriction to trees, allows us to derive simple yet informative lower bounds. 
For additive Gaussian noise models, for example, the lower bound can be computed using only local properties of the 
underlying model: it is based on a first term that considers the minimal score gap between individual edge reversals and a second term involving the minimal mutual information of two neighboring nodes, when conditioning on another neighbor of the parent node.

\subsection{Related Constraint-based Approaches}

As an alternative to score-based methods, constraint-based methods such as PC or FCI \citep{spirtes2000causation}
test for conditional independences statements in $P_X$ 
and use these results to infer (parts of) the causal structure. Such methods usually assume that $P_X$ is both Markov and faithful with respect to the causal graph $\cG$. 
Under these assumptions,
the Markov equivalence class of the causal graph $\cG$  %
is identified.
In a jointly Gaussian setting (e.g. linear additive Gaussian noise models),
consistency of constraint-based approaches relies on faithfulness, whereas uniform consistency 
requires strong faithfulness \citep[see, e.g.,][]{ZhangStrongFaithful,kalisch2007estimating} -- a condition that 
has been shown to be strong
\citep{uhler2013geometry}. 
In nonlinear settings, corresponding guarantees do not exist.
This may at least partially be due to the fact that 
conditional independence testing is known to be a hard statistical problem
\citep{shah2020hardness}.

Constraint-based methods 
have also been studied for 
polytrees. A polytree is a DAG whose undirected graph is a tree. Polytrees, unlike directed trees, allow for multiple root nodes as well as nodes with multiple parents. 
\cite{DBLP:conf/uai/RebaneP87}, inspired by the work of \cite{chow1968approximating}, propose a constraint-based structure learning method for polytrees over discrete variables that can identify the correct skeleton and causal basins, structures constructed from nodes with at least two parents. %
More precisely, the skeleton is determined by the maximum weight spanning tree (MWST) algorithm with mutual information measure weights, while the directionality of edges is inferred by conditional independence constraints implied by the observed distribution.  
In the case of causal trees this constraint-based structure learning method cannot direct any edges because causal basins do not exist \citep{DBLP:conf/uai/RebaneP87}. 
\cite{dominguez2013gaussian} and \cite{ouerd2000learning} extend the \cite{DBLP:conf/uai/RebaneP87} algorithm for causal discovery to multivariate Gaussian polytree distributions.  \cite{friedman1997bayesian} propose a similar algorithm to learn tree Bayesian networks by finding a MWST with mutual information weights. This recovers the skeleton of the causal graph, after which an arbitrary root node is selected and all edges are oriented away from said root node. As such, the method of \cite{friedman1997bayesian} is only guaranteed to recover a directed tree that is Markov equivalent to the causal directed tree.

In this work, we employ  Chu--Liu--Edmonds' algorithm, a directed analogue of the MWST algorithm, to not only recover the skeleton but also the direction of all edges in the causal graph. This is possible since we consider restricted causal models, e.g., nonlinear additive Gaussian noise models. More specifically, these restricted causal models allow us to define edge weights that, unlike the mutual information weights, preserve directionality information.
In fact, when discarding information that allows us to infer directionality of the edges, 
one recovers the mutual information weights of \cite{DBLP:conf/uai/RebaneP87}, see \Cref{rmk:ConditionalEntropyRebane} in \Cref{sec:AppDetails} for details. %

\subsection{Organization of the Paper}
In \Cref{sec:ScoreBasedStructureLearning}, we define the setup and relevant score functions. We further
strengthen existing identifiability results for nonlinear additive noise models.
In \Cref{sec:Method},
we propose CAT, an algorithm solving the score-based structure learning problem that is based on 
Chu--Liu--Edmonds' algorithm. We prove consistency of CAT for a fixed distribution and for a setup with vanishing identifiability.
In \Cref{sec:HypothesisTest}, we provide results on asymptotic normality of the scores, construct confidence regions and propose feasible testing procedures. \Cref{sec:ScoreGap}, we analyzes the identifiability gap. \Cref{sec:Simulations} shows the results of various simulation experiments. All proofs can be found in \Cref{app:proofs}.

\section{Score-based Learning and Identifiability of Trees
} \label{sec:ScoreBasedStructureLearning}
In the remainder of this work we use of the following graph terminology (a more detailed introduction can be found in \Cref{app:graphs}, see also \citealp{Koller2009}).
A {directed graph} $\cG=(V,\cE)$ consists of $p\in \N_{>0}$ vertices (or nodes)   $V=\{1,\ldots,p\}$ and a collection of directed edges $\cE\subset \{(i\to j) \equiv (i,j): i,j\in V, i\not = j\}$. A {directed acyclic graph} (DAG) is a directed graph that does not contain any directed cycles. 
A {directed tree} is a connected DAG in which all nodes have at most one parent. The unique node of a directed tree $\cG$ with no parents is called the root node and is denoted by $\root{\cG}$.  We 	let $\cT_p$ denote the set of directed trees over $p\in \N_{>0}$ nodes.

\subsection{Identifiability of Causal Additive Tree Models}
We now revisit and strengthen known identifiability results on restricted structural causal models. 
Consider a distribution that is induced by a structural causal model (SCM) with additive noise. 
Then, there are only special cases (such as linear additive Gaussian noise models) for which alternative models with a different causal structure exist that generate the same distribution
\citep[see][for an overview]{peters2017elements}. 
To state and strengthen these results formally, we introduce the following notation. 

For any $k\in \N$ we define the following classes of functions from $\R$ to $\R$: 
$\cM$ denotes all measurable functions,
$\cD_k$ denotes the set of all $k$ times differentiable functions and
$\cC_k$ denotes the $k$ times continuously differentiable functions. 
We let
$
\cP$ denote the set of mean zero probability measures on $\R$ that have a density with respect to Lebesgue measure. 
$\cP_{+}\subset \cP$ denotes the subset for which a density is strictly positive. 
For any function class $\cF\subseteq \{f|f:\R\to \R\}$, 
$\cP_{\cF}\subset \cP$ denotes the subset with a density function in $\cF$.  As a special case, we let  $\cP_{\text{G}}\subset \cP_{+\cC_\i}:=\cP_{+}\cap \cP_{\cC_\i}$ denote the subset of Gaussian probability measures. 
For any set $\cP$ of probability measures, 
$\cP^p$
denotes all $p$-dimensional product measures on $\R^p$ with marginals in $\cP$.

We now 
define structural causal additive tree models 
(or causal additive tree models, for short) as
SCMs with a tree structure.\footnote{This model class comes with the strong assumption on additive noise, which excludes certain types of hidden confounding, for example.} 
\begin{definition}[Structural causal additive tree models] \label{def:StructuralCausalAdditiveTreeModel}
Consider a class $\cT_p \times \cM^p \times \cP^p$.
Any tuple 
$(\cG,(f_i),P_N)\in \cT_p \times \cM^p \times \cP^p$
induces a structural causal model 
over $X=(X_1,\ldots,X_p)$ given by the following structural assignments
\begin{align*}
X_i := f_i(X_{\PAg{\cG}{i}})+ N_i, \quad \text{for all } 1\leq i \leq p,
\end{align*}
where $f_{\mathrm{rt}(\cG)}\equiv 0$ and $N=(N_1,\ldots,N_p)\sim P_N$,
which we call a structural causal additive tree model. 
By slight abuse of notation, we write 
$Q\in \cT_p \times \cM^p \times \cP^p$ 
for a probability distribution that is induced by a structural causal additive tree model.
\end{definition}

Furthermore, we define 
the set of 
restricted structural causal additive tree models. 
We will see 
later 
that for these models, the causal graph is %
identifiable from
the observable distribution of the system. 
When the causal graph of a sufficiently nice additive noise SCM is 
not identifiable,
then certain differential equations must hold (see the proof of \Cref{thm:UniqueGraph} for details). 
The definition of restricted structural causal additive tree models ensures that this does not happen.
\begin{definition}[Restricted structural causal additive tree models] \label{def:RestrictedSEMGeneralCase}
The collection of restricted structural causal additive tree models 	$\Theta_R\subset \cT_p \times \cD_3^p \times \cP_{+\cC_3}^p$ is given by all models $\theta = (\cG,(f_i),P_N)\in \cT_p \times \cD_3^p \times \cP_{+\cC_3}^p$ satisfying the following conditions for all $i\in \{1,\ldots,p\}\setminus \{\root{\cG}\}$:
\begin{itemize}
\item[(i)]  %
$f_i$ is nowhere constant, i.e., it is not constant on any non-empty open set, and %
\item[(ii)] the induced log-density $\xi$ of $X_{\PAg{\cG}{i}}$, noise log-density $\nu$ of $N_i$ and causal function $f_i$ are such that there exists $x,y\in \R$ with $\nu''(y-f_i(x))f'_i(x)\not =0$ such that
\begin{align} \label{eq:DifferentialEquationIdentifiabilityMainText}
	\xi''' \not = \xi''\lp\frac{f_i''}{f_i'} -\frac{\nu''' f_i'}{\nu''} \rp -2\nu''f_i''f_i'+\nu'f_i'''+\frac{\nu'\nu'''f_i''f_i'}{\nu''}-\frac{\nu'(f_i''')^2}{f'},
\end{align}
where the derivatives of $\xi,\nu$ and $f_i$ are evaluated in $x$, $y-f_i(x)$ and $x$, respectively.
\end{itemize} 
\end{definition}

The following lemma, due to \cite{hoyer2008nonlinear}, 
shows that for causal additive tree models with Gaussian noise, the differential equation constraints of \Cref{def:RestrictedSEMGeneralCase} simplify.\footnote{For completeness, we include the proof of~\Cref{lm:RestrictedModelConditionGaussian} in \Cref{app:proofs}, 
using the approach of \cite{zhang2009} but expressed in our notation.}

\begin{restatable}[]{lemma}{RestrictedModelConditionGaussian} \label{lm:RestrictedModelConditionGaussian}
Let $\theta = (\cG,(f_i),P_N)\in \cT_p \times \cD_3^p \times \cP_{\mathrm{G}}^p$.  Assume that for all $i\in\{1,\ldots,p\}\setminus \{ \root{\cG}\}$  the following two conditions hold	
(a) $f_i$ is nowhere constant 
and (b) $f_i$ is not linear.
Then, $\theta\in \Theta_R$.
\end{restatable}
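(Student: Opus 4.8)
The plan is to specialize the general differential-equation condition \eqref{eq:DifferentialEquationIdentifiabilityMainText} to the Gaussian case and show that, under assumptions (a)--(c), it can only fail on a set of measure zero, which is not enough to violate the ``for all $x,y$ with $\nu''(y-f_i(x))f_i'(x)\neq 0$'' quantifier in \Cref{def:RestrictedSEMGeneralCase}. Fix $i\in\{1,\ldots,p\}\setminus\{\root{\cG}\}$ and write $f=f_i$, $\nu$ for the log-density of the Gaussian noise $N_i$, and $\xi$ for the log-density of $X_{\PAg{\cG}{i}}$. Since $N_i$ is Gaussian, $\nu(t) = c - t^2/(2\sigma^2)$ for some constant $c$ and variance $\sigma^2$, so $\nu'(t) = -t/\sigma^2$, $\nu''(t) = -1/\sigma^2$ (a nonzero constant), and $\nu''' \equiv 0$. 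The first step is to substitute these into \eqref{eq:DifferentialEquationIdentifiabilityMainText}: the terms containing $\nu'''$ vanish, and the condition for non-identifiability to be \emph{avoided} becomes, after multiplying through by $-\sigma^2$ and simplifying,
\begin{align*}
\xi''' \neq \xi''\frac{f''}{f'} - 2\,\frac{f''f'}{\sigma^2}\cdot(-\sigma^2)\cdot\frac{1}{-\sigma^2}\cdots
\end{align*}
— more carefully, one collects the surviving terms $-2\nu''f''f' + \nu' f'''$ together with $-\nu'(f''')^2/f'$; the key point is that all of these are functions that, evaluated at $(x, y - f(x))$, depend on $y$ only through the linear factor $\nu'(y-f(x)) = -(y-f(x))/\sigma^2$ and through $\nu''$ which is constant. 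So the condition reduces to an identity of the form $\xi'''(x) = A(x) + (y - f(x)) B(x)$ for explicit functions $A, B$ built from $f$ and $\sigma^2$, which must \emph{not} hold for the model to be restricted.

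The second step is the measure-theoretic argument. Suppose, for contradiction, that $\theta \notin \Theta_R$; then there must exist a nonempty open set of $(x,y)$ on which $\nu''(y-f(x))f'(x)\neq 0$ — equivalently (since $\nu''\neq 0$ always) on which $f'(x)\neq 0$ — and yet \eqref{eq:DifferentialEquationIdentifiabilityMainText} fails, i.e. the displayed identity $\xi'''(x) = A(x) + (y-f(x))B(x)$ holds there. Actually the cleaner route, matching how the result is usually stated, is: if the identity holds for all $(x,y)$ in some open set, then for each such $x$ it holds for an interval of $y$-values, and since the right side is affine in $y$ with slope $B(x)$ and the left side $\xi'''(x)$ does not depend on $y$, we must have $B(x) = 0$ and $\xi'''(x) = A(x)$ on that $x$-interval. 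Then I would compute $B$ explicitly from the Gaussian substitution and show $B(x) = 0$ forces $f$ to satisfy an ODE whose solutions are exactly the linear (affine) functions — or, where $f'$ vanishes, is vacuous — so on the open $x$-interval where $f'\neq 0$ we get $f$ linear, contradicting assumption (c) that $f$ is not linear. (Assumption (b), nowhere constant, together with (a), $f\in\cD_3$, ensures $f'$ is not identically zero so that such an open interval with $f'\neq 0$ exists; without it the quantifier in \eqref{eq:DifferentialEquationIdentifiabilityMainText} could be vacuously satisfied in a degenerate way, though one should check the interaction carefully.)

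The main obstacle is the bookkeeping in the first step: correctly substituting $\nu''' \equiv 0$ and $\nu'' = \text{const}$ into \eqref{eq:DifferentialEquationIdentifiabilityMainText}, tracking which terms survive, and identifying the coefficient $B(x)$ of the $y$-linear part cleanly enough to recognize ``$B\equiv 0$ on an interval $\iff$ $f$ affine there.'' One has to be careful that the last term $-\nu'(f''')^2/f'$ in \eqref{eq:DifferentialEquationIdentifiabilityMainText} (note the typo $f'$ vs.\ $f_i'$ in the denominator) is handled correctly, and that the division by $f'$ is legitimate precisely on the set where $f'\neq 0$, which is exactly the set the quantifier restricts to. A secondary subtlety is making sure the logic of the quantifiers is right: \Cref{def:RestrictedSEMGeneralCase} requires the inequality to hold at \emph{every} admissible $(x,y)$, so to prove $\theta\in\Theta_R$ I must show the identity-version fails at every admissible point; arguing by contradiction from ``identity holds on a nonempty open set'' is the convenient contrapositive, and I would note that the set where \eqref{eq:DifferentialEquationIdentifiabilityMainText} fails is closed in the admissible region (both sides are continuous, using $f\in\cD_3$ and smoothness of the Gaussian log-density), so if it is nonempty it contains... actually it need not be open, so the argument should instead directly use: if it fails at some admissible $(x_0,y_0)$, does it fail on a neighborhood? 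Not necessarily — so the right formulation is simply that $\theta\notin\Theta_R$ means the identity holds at \emph{some} admissible point, and I fix that single point $x_0$ with $f'(x_0)\neq 0$, use continuity to get $f'\neq 0$ on a neighborhood, and then... the identity is only known at one point, not the neighborhood. Hence the correct reading is that \Cref{def:RestrictedSEMGeneralCase}(iii) as stated requires the inequality pointwise, and $\theta\notin\Theta_R$ gives one bad point; to derive a contradiction I instead invoke the underlying non-identifiability mechanism (referenced as ``the proof of \Cref{thm:UniqueGraph}'') which produces the identity on an open set, or I simply follow \cite{zhang2009}'s direct computation showing that for Gaussian noise the identity at a single point already pins down enough structure — this is the delicate point I would need to get exactly right, and I would resolve it by reducing, via the Gaussian substitution, to showing $\xi''' (x_0)= A(x_0)$ and $B(x_0)=0$ cannot both be consistent with $f$ nonlinear and $\xi$ a valid log-density, treating $x_0$ as arbitrary in the relevant range.
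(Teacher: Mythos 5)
Your overall strategy (specialize to Gaussian noise so that $\nu'''\equiv 0$ and $\nu''$ is a nonzero constant, observe that the equation becomes affine in $y$, kill the $y$-coefficient, then analyze the resulting ODE for $f$) matches the paper's proof up to the decisive step --- but your decisive step is wrong. The coefficient of $y$ is $B(x) = f'''(x) - (f''(x))^2/f'(x) = f'(x)\,\frac{\partial}{\partial x}\bigl(f''(x)/f'(x)\bigr)$, so on an interval where $f'\neq 0$ the condition $B\equiv 0$ only forces $f''/f'$ to be \emph{constant}, say $c_1$. If $c_1=0$ you get $f$ affine, but if $c_1\neq 0$ you get $f'(x)=\pm\exp(c_1x+c_2)$, i.e.\ $f$ exponential --- a perfectly nonlinear solution of your ODE. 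So ``$B\equiv 0$ forces $f$ affine'' is false, and assumption (c) alone does not finish the proof. The paper closes this hole by feeding the exponential form of $f'$ back into the $y$-free part of the equation, $\xi''' = \xi'' f''/f' + 2f''f'/\sigma^2$, rewriting it as $\frac{\partial}{\partial x}(\xi''/f') = 2f''/\sigma^2$ and integrating three times to obtain $\xi(x)=\exp(2c_1x+2c_2)/(2c_1^2\sigma^2)-c_3\exp(c_1x+c_2)/c_1^2+c_4x+c_5$, which tends to $+\infty$ as $x\to\mathrm{sign}(c_1)\cdot\infty$ and thus contradicts the fact that $e^{\xi}$ is a probability density. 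Only then does $c_1=0$, hence $f$ linear, follow. (There is also a gluing step you skip: $f'$ may vanish at countably many points, and one must use continuity of $f'$ and $f''$ to show that the constants $c_{k,1},c_{k,2}$ agree across the intervals on which $f'\neq 0$, so that a single exponential formula holds globally before integrating for $\xi$.)

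On the quantifier issue you agonize over: you are right that it is delicate, but your final resolution (working at a single admissible point $(x_0,y_0)$) cannot work, because extracting $B(x_0)=0$ from the equality requires the equality to hold for at least two values of $y$ at the same $x_0$; a single scalar identity at one point pins down nothing. The paper reads the negation of condition (iii) in \Cref{def:RestrictedSEMGeneralCase} as the equality holding for all $(x,y)$ with $f'(x)\neq 0$ --- consistent with how the condition is used in the proof of \Cref{thm:UniqueGraph}, where the existence of a backward model forces the identity on all of $\cJ$ --- and you should adopt that reading rather than attempt a pointwise argument.
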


Existing identifiability results for causal graphs in restricted SCMs \citep{hoyer2008nonlinear,peters2014causal} are stated and proven in terms of the ability to distinguish the induced distributions of two restricted structural causal models: For all $\theta= ( \cG,\ldots)\in\Theta_R$ and $\tilde \theta =(\tilde \cG,\ldots)\in \Theta_R$, 
if $\cG \not = \tilde \cG$, then $\cL(X_{\theta}) \not = \cL(X_{\tilde \theta})$
(where $\cL$ denotes the distribution of a random variable),
that is, $X_{\theta}$ and $X_{\tilde \theta}$ do not have the same distribution.
We now prove a stronger identifiability result that does not assume that $\tilde \theta$ is a restricted causal model.
\begin{restatable}[Identifiability of causal additive tree models]{proposition}{UniqueGraph}
\label{thm:UniqueGraph}
Suppose that $X_\theta$ and $X_{\tilde \theta}$ are generated by the SCMs $\theta = (\cG,(f_i),P_N)\in \Theta_R\subset \cT_p \times \cD_3^p \times \cP_{+\cC_3}^p$  and $\tilde \theta= (\tilde \cG,(\tilde f_i), \tilde P_N)\in \cT_p \times \cD_1^p \times \cP_{\cC_0}^p$, respectively. It holds that
\begin{align*}
\cL(X_\theta) = \cL(X_{\tilde \theta}) \implies \cG = \tilde \cG.
\end{align*}
\end{restatable}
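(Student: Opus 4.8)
The plan is to establish the identifiability by a two-step argument: first reduce the comparison of the two (possibly structurally different) tree models to a comparison of individual bivariate conditionals, and then invoke the bivariate identifiability result of \cite{hoyer2008nonlinear,zhang2009} together with the restrictive conditions in \Cref{def:RestrictedSEMGeneralCase} to rule out all graphs $\tilde{\cG}\neq\cG$. Concretely, suppose for contradiction that $\cL(X_\theta)=\cL(X_{\tilde\theta})$ but $\cG\neq\tilde\cG$. Since both $\cG$ and $\tilde\cG$ are directed trees, I would first argue that they must then have the same skeleton: the skeleton of a directed tree is determined by the pairwise dependence structure (two nodes are adjacent in a directed tree iff they are dependent given some separating set, and the tree structure forces this), so a mismatch in skeleton would already contradict equality of distributions. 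Hence $\cG$ and $\tilde\cG$ share a skeleton but differ in the orientation of at least one edge; because both are trees (single root, every non-root has exactly one parent), I would like to locate a single edge $i-j$ that is oriented as $i\to j$ in $\cG$ and $j\to i$ in $\tilde\cG$, and moreover one that is ``locally reversible'' in the sense that the relevant conditional factorizations reduce to a genuine bivariate additive-noise problem.

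The key structural step is this localization. Walking along the tree, since the two orientations differ, there is an edge where the direction flips; by choosing, say, an edge closest to $\root{\cG}$ among the flipped ones, I expect to be able to show that the joint density of the pair $(X_i,X_j)$ (the flipped edge) has two competing additive-noise representations: in $\cG$ we have $X_j = f_j(X_i)+N_j$ with $N_j\indep X_i$, while feeding the $\tilde\cG$-factorization through the assumed equality of laws yields $X_i = \tilde f(X_j)+\tilde N$ with $\tilde N\indep X_j$ for some $\tilde f\in\cD_1$ and $\tilde N$ with a continuous density. Here one needs that the marginal of $X_i$ in model $\theta$ — which, being a node in a restricted additive tree model with $\cD_3$ functions and $\cP_{+\cC_3}$ noise, has a $\cC_3$, strictly positive log-density — plays the role of $\xi$ in \Cref{eq:DifferentialEquationIdentifiabilityMainText}. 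Then condition (iii) of \Cref{def:RestrictedSEMGeneralCase}, together with (i) $f_j\in\cD_3$ and (ii) $f_j$ nowhere constant, says precisely that the differential equation forced by the existence of a backward additive-noise model fails on an open set, giving the contradiction. This is essentially the proof technique of \cite{hoyer2008nonlinear} for the bivariate case, repackaged so that the ``cause'' variable need not itself be a root noise but merely has a sufficiently regular density.

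The main obstacle I anticipate is the bookkeeping needed to guarantee that the localized bivariate problem is clean — i.e., that conditioning appropriately on the rest of the tree leaves us with exactly a two-variable additive-noise model on both sides, and that the regularity we need ($\cD_3$ causal function, $\cC_3$ strictly positive log-density of the ``cause'') is available at that edge. In particular, one must be careful that the density $\xi$ appearing in \eqref{eq:DifferentialEquationIdentifiabilityMainText} is the marginal log-density of $X_{\PAg{\cG}{j}}=X_i$ under $\theta$, and that it inherits $\cC_3$-smoothness and strict positivity from the additive structure and the $\cP_{+\cC_3}^p$ assumption on the noise (a convolution / change-of-variables argument along the subtree rooted at $i$). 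A secondary subtlety is the very first reduction: arguing that equal laws force equal skeletons for directed trees. This should follow from the fact that in a directed tree the moralized/undirected graph equals the skeleton and the distribution is Markov and — because each $f_i$ is nowhere constant and the noises have full support — faithful to it, so the skeleton is recoverable from $\cL(X)$ alone; but since $\tilde\theta$ is only assumed to lie in $\cT_p\times\cD_1^p\times\cP_{\cC_0}^p$ and not in $\Theta_R$, faithfulness of $\tilde\theta$ cannot be assumed and one should instead argue directly that $\cL(X_\theta)$ being faithful to $\cG$ already pins down the skeleton, and then that any tree $\tilde\cG$ generating the same law must have that skeleton (e.g., adjacency in $\tilde\cG$ is necessary for the observed conditional dependence, and a tree with strictly fewer edges cannot reproduce all the dependencies). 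I would carry out the steps in the order: (1) equal laws $\Rightarrow$ equal skeleton; (2) locate a flipped edge and reduce to a bivariate two-sided additive-noise identity; (3) derive the forced differential equation and contradict condition (iii) via \Cref{lm:RestrictedModelConditionGaussian}-style reasoning in the Gaussian case and \Cref{def:RestrictedSEMGeneralCase} in general; (4) conclude $\cG=\tilde\cG$.
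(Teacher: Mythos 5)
Your proposal follows essentially the same route as the paper: locate an edge $(j\to i)\in\cE$ that is reversed in $\tilde\cE$ (via the shared-skeleton/path-reversal argument for trees), observe that the marginal law of the pair $(X_j,X_i)$ then admits additive-noise representations in both directions, verify the needed regularity of the parent's density by iterating a convolution argument from the root, and contradict condition (iii) of \Cref{def:RestrictedSEMGeneralCase} via the forced differential equation. The one anticipated obstacle you raise — conditioning on the rest of the tree to clean up the bivariate problem — does not arise, since in a directed tree each non-root node has a single parent and $X_i=f_i(X_j)+N_i$ with $N_i\independent X_j$ already holds marginally, which is exactly what the paper exploits.
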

We prove \Cref{thm:UniqueGraph} using the techniques of \cite{peters2014causal}. 
While we prove the statement only 
for restricted causal additive tree models, which suffices for this work, we conjecture that a similar extension holds for restricted structural causal DAG models. 
The extension of \Cref{thm:UniqueGraph} is important for the following reason. Given a finite data set, practical methods usually assume that the true distribution is induced by an underlying restricted SCM. One can then fit different causal structures and output the structure that fits the data best. The above extension accounts for the fact that regression methods hardly represent all such restrictions: e.g., most nonlinear regression techniques can also fit linear models.

\subsection{Score Functions} \label{sec:scorefunctions}
We now
define population score functions which are later used to recover the causal tree. 
We henceforth assume that $X\in \R^p$ is a random vector with distribution $P_X$ generated by a %
restricted causal additive tree model
$\theta = (\cG,(f_i),P_N)\in \Theta_{R}\subset \cT_p \times \cD_3^p \times \cP_{+\cC_3}^p$ with $\cG=(V,\cE) \in \cT_p$ such that $\E\|X\|_2^2<\i$. Thus, $\cG$ denotes
the causal tree. We use $\tilde \cG\in \cT_p$ to denote an arbitrary, different (directed) tree.
For the remainder of this paper, we assume that for any $i\not = j$ it holds that $X_i-\E[X_i|X_j]$ has a density with respect to Lebesgue measure.\footnote{This ensures that the entropy score function introduced in \Cref{def:ScoreFunctions} below is well-defined and that the analysis of the identifiability gap in \Cref{sec:ScoreGap} is valid.} 
We often refer to one of the following two scenarios: either, \textit{(i)}, we have limited a priori information that $P_N\in \cP_{+\cC_3}^p$, or, \textit{(ii)}, we know that the noise innovations are Gaussian, that is, $P_N\in \cP_{\mathrm{G}}^p$.  %

\begin{definition}  \label{def:ScoreFunctions}
For any graph $\tilde \cG \in \cT_p$ we define for each node $i\in V$ the
\begin{enumerate}[label=(\roman*)]
\item local Gaussian score as $\lG(\tilde \cG,i) :=   \log\lp \Var \lp X_i-  \E\lf X_i|X_{\PAg{\tilde \cG}{i}} \rf \rp  \rp/2$,
\item local entropy score as $\lE(\tilde \cG,i) := h\lp X_i- \E\lf X_i|X_{\PAg{ \tilde \cG}{i}} \rf\rp$,
\item local conditional entropy score as $\lCE(\tilde \cG,i) := h\lp X_i| X_{\PAg{\tilde \cG}{i}} \rp$.
\end{enumerate}	
Here, we use the convention that $\E(X_i|\emptyset)= 0$ and $h(X_i|\emptyset)=h(X_i)$; the functions $h(\cdot)$, $h(\cdot|\cdot)$, and $h(\cdot,\cdot)$ (used below) denote the differential entropy, conditional entropy, and cross entropy, respectively.
The Gaussian, entropy  and conditional entropy score of $\tilde \cG$ are, respectively, given by the sum of local scores:
\begin{align*}
\lG(\tilde \cG) :=  \sum_{i=1}^p \lG(\tilde \cG,i),\quad \lE(\tilde \cG) := \sum_{i=1}^p \lE(\tilde \cG,i), \quad 
\lCE(\tilde \cG) :=\sum_{i=1}^p  \lCE(\tilde \cG,i).
\end{align*}
\end{definition}
(See \cite{polyanskiy2014lecture} or \cite{CoverThomasInformationTheory} for the basic information-theoretic concepts used in this paper.)  Similar scores have been considered by \cite{CAM} and \cite{Mooij2016jmlr}, for example.
For linear additive Gaussian noise 
systems, the Gaussian score of \Cref{def:ScoreFunctions} is proportional to the large sample limit of the Gaussian log-likelihood score function commonly used in for Bayesian network learning \citep[see, e.g.,][]{Koller2009}.

The following lemma shows that the Gaussian score of the graph $\tilde \cG \in \cT_p$ arises naturally as a translated  infimum cross entropy between $P_X$ and all $Q$ induced by causal additive tree models with Gaussian noise.
Similarly, the entropy score can be seen as an infimum cross entropy between $P_X$ and all $Q$ induced by another class of SCMs.
\begin{restatable}[]{lemma}{EntropyScore}
\label{lm:EntropyScore}
For any $\tilde \cG\in \cT_p$ it holds that
\begin{align*}
\lG(\tilde \cG) =\inf_{Q\in \{\tilde \cG\} \times \cD_1^p \times \cP_{\mathrm{G}}^p} h(P_X,Q) -p\log(\sqrt{2\pi e}).
\end{align*}
Furthermore, with $\cF(\tilde \cG) := (\cF_i(\tilde \cG))_{1\leq i \leq p}$, where $\cF_i(\tilde \cG) := \{x\mapsto \E[X_i|X_{\PAg{\tilde \cG}{i}}=x]\}$ for all $1\leq i\leq p$, it holds that
\begin{align*}
\lE(\tilde \cG) =\inf_{Q\in \{\tilde \cG\} \times \cF(\tilde \cG) \times \cP^p} h(P_X,Q).
\end{align*}
\end{restatable}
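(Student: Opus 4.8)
The plan is to unfold the definition of cross entropy $h(P_X,Q)$ for $Q$ induced by an SCM over the tree $\tilde\cG$, factorize it node-by-node along the tree, and then minimize each local term separately. Recall that for a distribution $Q$ with density $q$ on $\R^p$, $h(P_X,Q) = -\int \log q(x)\, dP_X(x) = -\E_{P_X}[\log q(X)]$. For $Q = (\tilde\cG,(g_i),P_M) \in \{\tilde\cG\}\times\cD_1^p\times\cP_{\mathrm G}^p$ (first part) or $\in \{\tilde\cG\}\times\cF(\tilde\cG)\times\cP^p$ (second part), the induced density factorizes according to the tree as $q(x) = \prod_{i=1}^p q_{M_i}(x_i - g_i(x_{\PAg{\tilde\cG}{i}}))$, where $q_{M_i}$ is the density of the $i$-th noise innovation and $g_{\mathrm{rt}(\tilde\cG)} \equiv 0$. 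Taking logs and expectations under $P_X$, and using the tower property (conditioning on $X_{\PAg{\tilde\cG}{i}}$),
\[
h(P_X,Q) = \sum_{i=1}^p -\E_{P_X}\bigl[\log q_{M_i}\bigl(X_i - g_i(X_{\PAg{\tilde\cG}{i}})\bigr)\bigr].
\]
Now the infimum over $Q$ decouples into $p$ separate infima over the pairs $(g_i, q_{M_i})$, one per node, since the tree constraint means each term depends only on $X_i$ and its (single) parent.

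For the first statement, fix node $i$ and minimize $-\E_{P_X}[\log q_{M_i}(X_i - g_i(X_{\PAg{\tilde\cG}{i}}))]$ over $g_i \in \cD_1$ and over Gaussian densities $q_{M_i}$ with mean zero. Writing $R_i := X_i - g_i(X_{\PAg{\tilde\cG}{i}})$ and $q_{M_i}$ the $N(0,\sigma^2)$ density, the term equals $-\E[\log q_{M_i}(R_i)] = \tfrac12\log(2\pi\sigma^2) + \tfrac{1}{2\sigma^2}\E[R_i^2]$, which is minimized over $\sigma^2 > 0$ at $\sigma^2 = \E[R_i^2]$, giving value $\tfrac12\log(2\pi e\, \E[R_i^2])$. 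It remains to minimize $\E[R_i^2] = \E[(X_i - g_i(X_{\PAg{\tilde\cG}{i}}))^2]$ over $g_i$; this is the standard $L^2$-regression problem, minimized by the conditional mean $g_i(x) = \E[X_i \mid X_{\PAg{\tilde\cG}{i}} = x]$ (which lies in $\cD_1$ — or at least can be approached in infimum by $\cD_1$ functions — giving $\E[R_i^2] = \Var(X_i - \E[X_i\mid X_{\PAg{\tilde\cG}{i}}])$; note the residual is already centered). Hence the local infimum is $\tfrac12\log(2\pi e\,\Var(X_i - \E[X_i\mid X_{\PAg{\tilde\cG}{i}}])) = \lG(\tilde\cG,i) + \log\sqrt{2\pi e}$, and summing over $i$ gives $\lG(\tilde\cG) + p\log\sqrt{2\pi e}$, which rearranges to the claim. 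The root node is handled by the convention $g_{\mathrm{rt}} \equiv 0$ and $\E[X_i\mid\emptyset]=0$.

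For the second statement, the functions $g_i$ are now fixed to be the conditional means $x\mapsto\E[X_i\mid X_{\PAg{\tilde\cG}{i}}=x] \in \cF_i(\tilde\cG)$, so $R_i = X_i - \E[X_i\mid X_{\PAg{\tilde\cG}{i}}]$ is fixed, and we only minimize over the noise density $q_{M_i} \in \cP$ (densities on $\R$, mean zero — consistent with the standing assumption that each such residual has a Lebesgue density, and the residual is automatically centered). The term $-\E_{P_X}[\log q_{M_i}(R_i)]$ is the cross entropy $h(\cL(R_i), q_{M_i})$, which by Gibbs' inequality is minimized when $q_{M_i}$ is the density of $R_i$ itself, attaining the value $h(R_i) = h(X_i - \E[X_i\mid X_{\PAg{\tilde\cG}{i}}]) = \lE(\tilde\cG,i)$. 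Summing over $i$ gives $\lE(\tilde\cG)$.

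The main obstacle is the regularity bookkeeping: verifying that the factorized density is the correct one (requires that the SCM over a tree indeed induces a product-of-shifted-noise-densities along the tree, which follows from the recursive substitution and the change-of-variables with unit Jacobian), and checking that restricting $g_i$ to $\cD_1$ rather than all measurable functions does not change the infimum — one argues that the $L^2$-optimal conditional mean can be approximated arbitrarily well in $L^2$ by $\cD_1$ functions, so the infimum (not necessarily attained within the class) still equals the value at the conditional mean. One must also confirm integrability so that all the cross entropies are well-defined and finite (using $\E\|X\|_2^2<\infty$ and the standing density assumption), and that $\E[R_i] = 0$ so the mean-zero constraint on $P_M$ is not binding.
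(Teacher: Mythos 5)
Your proposal is correct and follows essentially the same route as the paper: factorize the induced density along the tree, decompose the cross entropy node-wise, optimize the Gaussian variance in closed form and pass to the conditional mean via density of $\cD_1$ in $L^2$ for the first claim, and invoke the Gibbs inequality $h(P,Q)\geq h(P)$ for the second. Your extra remarks on the centering of the residuals and on the infimum over $\cD_1$ possibly not being attained are exactly the bookkeeping points the paper's proof also handles.
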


Score-based methods identify the underlying structure by
evaluating the score functions (or estimates thereof) on different graphs and choosing the best scoring graph.
The difference between the score 
$\ell_{\cdot}(\cG)$
of the true graph
and the score $\ell_{\cdot}(\tilde \cG)$
of the 
best scoring 
alternative graph $\tilde \cG$
is an important property of the problem: 
e.g., if it would be zero, we could not identify the true graph from the scores. 
We, therefore, refer to 
expressions of the form
$\min_{\tilde \cG \in \cT_p \setminus \{\cG\}}\ell_{\cdot}(\tilde \cG)- \ell_{\cdot}(\cG)$
as the identifiability gap. 
In the remainder of this paper, we refer to strict positivity of the identifiability gap as \Cref{ass:identifiabilityOfConditionalMeanScores}.
\begin{assumption} \label{ass:identifiabilityOfConditionalMeanScores}
If $\theta\in \Theta_R\subset   \cT_p \times \cD_3^p \times \cP_{\mathrm{G}}^p$ or $\theta\in   \Theta_R\subset \cT_p \times \cD_3^p \times \cP_{+\cC_3}^p$ it holds 
that
\begin{align} \label{eq:GaussianInfimumKLDivergence}
\min_{\tilde \cG \in \cT_p \setminus \{\cG\}}\lG(\tilde \cG)- \lG(\cG) > 0 \quad \text{or} \quad \min_{\tilde \cG \in \cT_p \setminus \{\cG\}}\lE(\tilde \cG)- \lE(\cG) > 0,
\end{align}
respectively.
\end{assumption}

\Cref{ass:identifiabilityOfConditionalMeanScores}
does not trivially follow from the results further above.
By arguments similar to those in \Cref{lm:EntropyScore} we have that, 
if the true data-generating model is a restricted causal additive tree model with Gaussian noise, $\theta \in \Theta_R \subset \cT_p \times \cD_3^p \times \cP_{\mathrm{G}}^p$, then $\lG(\cG) = h(P_X)-p\log(\sqrt{2\pi e})$. Hence, the Gaussian score gap between $\tilde \cG$ and the causal graph $\cG$ equals
\begin{align*}
\lG(\tilde \cG)- \lG(\cG) = \inf_{Q\in \{\tilde \cG\} \times \cD_1^p \times \cP_{\mathrm{G}}^p} h(P_X,Q) - h(P_X) =  \inf_{Q\in\{\tilde \cG\} \times \cD_1^p \times \cP_{\mathrm{G}}^p}	D_{\mathrm{KL}}(P_X\| Q),
\end{align*}
where $D_{\mathrm{KL}}$ denotes the Kullback-Leibler divergence measure.
\Cref{thm:UniqueGraph} implies that 
$$
\forall \tilde \cG \not = \cG, \quad 
\forall Q\in\{\tilde \cG\} \times \cD_1^p \times \cP_{\mathrm{G}}^p: D_{\mathrm{KL}}(P_X\| Q)>0.
$$
However, this does not immediately imply
that the 
identifiability gap (where we take the infimum over such $Q$)
is 
strictly positive. 
Similar  considerations\footnote{In fact, \Cref{thm:UniqueGraph} does  not immediately imply that $D_{\mathrm{KL}}(P_X\| Q)>0$ for $Q\in \{\tilde \cG\} \times \cF(\tilde \cG) \times \cP^p$ as it does not necessarily hold that the causal functions in $\cF(\tilde \cG)$ are differentiable or that the noise innovation densities in $\cP^p$ are continuous.}
hold for the entropy score gap
\begin{equation*} 
\lE(\tilde \cG) - \lE(\cG) 
= \inf_{Q\in \{\tilde \cG\} \times \cF(\tilde \cG) \times \cP^p} D_{\mathrm{KL}}(P_X\| Q).
\end{equation*}
In \Cref{sec:ScoreGap} we derive informative lower bounds on the Gaussian and entropy identifiability gaps (i.e., the infimum KL-divergence) of \Cref{eq:GaussianInfimumKLDivergence}.
It is possible to enforce \Cref{ass:identifiabilityOfConditionalMeanScores} indirectly by the assumptions and modifications detailed in the following lemma.
\begin{lemma} \label{lm:Ass1}
\Cref{ass:identifiabilityOfConditionalMeanScores} holds if one of the following conditions is satisfied. 
\begin{itemize}
\item[(a)] 
We have a restricted causal additive tree model with Gaussian noise $\theta\in \Theta_R\subset \cT_p \times \cD_3^p \times \cP_{\mathrm{G}}^p$
and	
for all $i\not = j$ it holds that $x\mapsto \E[X_i|X_j=x]$ has a differentiable version.
\item[(b)] 
We have a restricted causal additive tree model with Gaussian noise $\theta\in \Theta_R\subset \cT_p \times \cD_3^p \times \cP_{\mathrm{G}}^p$
and	
for all $1\leq i\leq p$ it holds that the causal function $f_i$ is contained within a function class $\cF_i \subseteq \cD_1$ which satisfies $\argmin_{f'\in \cF_i}\E[(X_i-f'(X_j))^2]\in \cF_i$ for all $j\not = i$, and we consider a modified Gaussian score function $\ell_\mathrm{G.mod}:\cT_p \to \R$ with local score given by
$\ell_\mathrm{G.mod}(\tilde \cG,i) :=   \log ( \min_{\tilde f \in \cF_i} \E[ (X_i-  \tilde f(X_{\PAg{\tilde \cG}{i}})  )^2 ]  )/2$.
\item[(c)] 
We have a restricted causal additive tree model $\theta\in \Theta_R\subset \cT_p \times \cD_3^p \times \cP_{+\cC_3}^p$,
for all $i\not = j$ it holds that $x\mapsto \E[X_i|X_j=x]$ has a differentiable version
and for all $i\not = j$ it holds that $X_i -\E[X_i|X_j]$ has a continuous density.
\end{itemize}
\end{lemma}

The modified Gaussian score function and restrictions of condition $(b)$ in \Cref{lm:Ass1} coincides with the working conditions of \cite{CAM}. Alternative information-theoretic conditions guaranteeing that \Cref{ass:identifiabilityOfConditionalMeanScores} holds are derived in \Cref{sec:ScoreGap}.  If \Cref{ass:identifiabilityOfConditionalMeanScores} is satisfied, then we can use the score functions to identify the true causal graph of a restricted structural model: 
In the Gaussian noise setting, for example, we have %
\begin{align} \label{eq:CausalGraphMinimizesGaussianScore}
\cG	= \argmin_{\tilde \cG \in \cT_p} \lG(\tilde \cG).
\end{align}
In practice, we consider estimates of the above quantities and optimize the corresponding empirical loss function.
Solving \Cref{eq:CausalGraphMinimizesGaussianScore}
(or its empirical counterpart)
using exhaustive search is computationally intractable already for moderately large choices of $p$.\footnote{In the context of linear Gaussian noise models, \citet{chickering2002optimal} proves consistency of greedy equivalent search towards the correct Markov equivalence class. 
This, however,  does not imply that the optimization problem in \Cref{eq:CausalGraphMinimizesGaussianScore} is solved: for a given sample, the method is not guaranteed to find the optimal scoring graph (but the output will converge to the correct graph).} We now introduce CAT, a computationally efficient method that solves the optimization exactly.

\section{Causal Additive Trees (CAT)} \label{sec:Method}
We 
introduce the population version of our algorithm CAT
in Section~\ref{sec:oracle} and discuss its finite sample version and asymptotic properties in Sections~\ref{sec:algor} and~\ref{sec:Consistency}.
\subsection{An Oracle Algorithm} \label{sec:oracle}
Similarly as for the case of
DAGs, the problem in \Cref{eq:CausalGraphMinimizesGaussianScore} is a combinatorial optimization problem, for which the cardinality of the search space grows super-exponentially with $p$. 
Indeed, the number of undirected trees on $p$ labelled nodes is $p^{p-2}$ \citep{cayley1889theorem} and therefore $p^{p-1}$ is the corresponding number of labelled trees. 
For the class of DAGs (which includes directed trees), existing structure learning 
such as \cite{CAM} propose a greedy search technique that iteratively selects the lowest scoring directed edge under the constraint that no cycles is introduced in the resulting graph. 
In general, greedy search procedures do not come with any guarantees and
there are indeed situations in which they fail \citep{Peters2022}.
By exploiting the assumption of a tree structure, 
we will see that
the 
optimization problem of \Cref{eq:CausalGraphMinimizesGaussianScore} can be solved computationally efficiently without the need for heuristic optimization techniques.

Provided with a
connected directed graph with edge weights, 
Chu--Liu--Edmonds'  algorithm
finds a minimum edge weight directed spanning tree, given that such a directed tree exists.
That is, for a connected directed graph $\cH = (V,\cE_{\cH})$ on the nodes $V=\{1,\ldots,p\}$ with edge weights  $w :=\{w_{ji}: (j\to i)\in \cE_\cH\}$, Chu--Liu--Edmonds' algorithm recovers a minimum edge weight directed spanning tree (MWDST) subgraph  of $\cH$, 
\begin{align*}
\argmin_{\tilde \cG =(V,\tilde \cE) \in \cT_p\cap \cH} \sum_{(j\to i)\in \tilde \cE} w_{ji},
\end{align*}
where $\cT_p\cap \cH$ denotes all directed spanning trees of $\cH$.
The runtime of the original algorithms of \cite{chuliu1965} and \cite{edmonds1967optimum} is $\cO(|\cE_{\cH}|\cdot p) \leq \cO(p^3)$. \cite{karp1971} presented an alternative proof for the correctness of the algorithm of \cite{edmonds1967optimum}. \cite{tarjan1977finding} devised a modification (corrected by \citealp{Camerini_1979}) with runtime $\cO(\min\{|\cE_\cH|\log(p),p^2\})$.\footnote{The algorithm presented in both \cite{edmonds1967optimum} and \cite{tarjan1977finding} find minimum branchings of $\cH$, i.e., directed forest spanning subgraphs of $\cH$ with minimum edge weight. Note that the MWDST problem is invariant to identical translation of all edge weights. If $\cH$ is a fully connected graph and we translate all edge weights $w'_{ji} := w_{ji} - \ep\max\{w_{ji}:j\not = i\}$ for $\ep>1$, then a minimum branching using edge weights $(w'_{ji})$ is a MWDST subgraph of $\cH$. For testing purposes, we also need to be able to find MWDST subgraphs of non-fully connected graphs $\cH$, hence, as noted by \cite{edmonds1967optimum}, if we translate all edge weights $w'_{ji} := w_{ji}- \sum_{j\not = i} |w_{ji}|$, then a minimum branching using edge weights $(w'_{ji})$ is a MWDST  subgraph of $\cH$. } \cite{gabow1986efficient} devised yet another modification with runtime $O(p\log p + |\cE_H|)$ and noted that no further improvements to the algorithm can be made (since it uses only binary decisions and can be used to sort $p$ numbers).
In our experiments, 
we use the  \verb!C++! implementation of Tarjans modification by \cite{Edmondscpp} which is contained in the R-package \verb!RBGL! \citep[][]{carey2011package} and the Python implementation of Edmonds' version from the Python-package  \texttt{NetworkX} \citep[][]{NetworkX}.

The causal graph recovery problem in \Cref{eq:CausalGraphMinimizesGaussianScore} is equivalently solved by finding a minimum edge weight directed tree, i.e., a minimum edge weight directed spanning tree of the fully connected graph on the nodes $V$. For example, finding the minimum of the Gaussian score function is equivalent to minimizing a translated version of the Gaussian score function
\begin{align} \notag
\argmin_{\tilde \cG \in \cT_p} \lG(\tilde \cG)
&	= \argmin_{\tilde \cG \in \cT_p}  \sum_{i=1}^p \frac{1}{2}\log( \Var ( X_i-  \E[ X_i|X_{\PAg{\tilde \cG}{i}} ] )  ) - \sum_{i=1}^p \frac{1}{2}\log(\Var(X_i)) \\
&= \argmin_{\tilde \cG \in \cT_p}  \sum_{i=1}^p \frac{1}{2} \log\lp \frac{ \Var ( X_i-  \E[ X_i|X_{\PAg{\tilde \cG}{i}} ]) }{\Var(X_i)} \rp. \label{eq:sumOverAllnodes}
\end{align}
Since the summand for the root node in \Cref{eq:sumOverAllnodes} note equals zero,
we only need to sum over all nodes with an incoming edge in $\tilde \cG$. Now define the  Gaussian edge weights  $w^{\mathrm{G}}:=(w^{\mathrm{G}}_{ji})_{j\not = i}$ by
\begin{align} \label{eq:GaussianPopulationWeights}
w^{\mathrm{G}}_{ji}:=\frac{1}{2} \log\lp \frac{ \Var ( X_i-  \E[ X_i|X_{j} ]) }{\Var(X_i)} \rp,
\end{align}
for all $j\not = i$. Hence, for a  causal additive tree model with Gaussian noise 
satisfying \Cref{ass:identifiabilityOfConditionalMeanScores} it holds that the causal directed tree is given by the MWDST with respect to the Gaussian edge weights,
\begin{align*}
\cG =	\argmin_{\tilde \cG \in \cT_p} \lG(\tilde \cG) &=  \argmin_{\tilde \cG =(V,\tilde \cE)\in \cT_p} \sum_{(j \to i )\in \tilde \cE} w^{\mathrm{G}}_{ji}.
\end{align*}
Similarly, the minimum of the entropy score function is given by the MWDST with respect to the entropy edge weights  
$w^{\mathrm{E}}:= (w^{\mathrm{E}}_{ji})_{j\not = i}$ given by $w^{\mathrm{E}}_{ji} :=  h(X_i-\E[X_i|X_j]) - h(X_i)$, 
for all $j\not = i$. We will henceforth denote the method where we apply  Chu--Liu--Edmonds' algorithm to find the MWDST with respect to the Gaussian and entropy edge weights  as CAT.G and CAT.E, respectively.
\subsection{Finite Sample Algorithm} \label{sec:algor}

Given an $n \times p$ data matrix 
$\mathbf{X}_n$, representing $n$ i.i.d.\ copies of $X= (X_1, \ldots, X_p)$, we estimate the edge weights  by simple plug-in estimators. 
Let us denote the conditional expectation function and its estimate 
by
\begin{align} \label{def:phi}
\phi_{ji}(x):=\E[X_i|X_j=x],\quad \quad \hat \phi_{ji}(x):=\hat \E[X_i|X_j=x],
\end{align}
for all $j \not = i$.  The empirical Gaussian edge weights $\hat w^{\mathrm{G}} =( \hat w^{\mathrm{G}}_{ji})_{j\not = i}$  are then given by
\begin{align} \label{eq:estiamtededgeweights}
\hat w^{\mathrm{G}}_{ji} := \frac{1}{2}\log \lp\frac{ \widehat \Var(X_i - \hat \phi_{ji}(X_j))} {\widehat \Var(X_i)} \rp,
\end{align}
for all $i\not = j$, where  $\widehat \Var(\cdot)$ denotes a  variance estimator using the sample $\fX_n$.  We now propose to combine the  Chu--Liu--Edmonds' algorithm described above with the Gaussian score as detailed in \Cref{alg:Edmonds}. It is also possible to combine CAT with standard pruning techniques \citep[see, e.g.,][]{CAM} that, e.g., based on approximate $p$-values, remove insignificant 
edges and output directed forests. An R implementation of CAT with options for cross-fitting and pruning is available on GitHub.\footnote{\url{https://github.com/MartinEmilJakobsen/CAT}\label{footnodeGithub}}
\begin{algorithm}[h] \caption{Causal additive trees (CAT)} \label{alg:Edmonds}
\begin{algorithmic}[1]
\Procedure{CAT}{$\fX_n$, regression method}
\State Run %
regression method to obtain 
$\hat \phi_{ji}$ for all $j\not = i$.
\State Compute empirical edge weights  $ \hat w^{\mathrm{G}}$,  
see~\Cref{eq:estiamtededgeweights}.
\State Apply Chu--Liu--Edmonds' algorithm to find MWDST with respect to $ \hat w^{\mathrm{G}}$.
\State \textbf{return} MWDST $\hat \cG$.
\EndProcedure
\end{algorithmic} 
\end{algorithm}

By default we suggest to use the empirical Gaussian edge weights  as described in \Cref{alg:Edmonds}.
However, it is also possible to run Chu--Liu--Edmonds' algorithm on the empirical entropy edge weights $\hat w^{\mathrm{E}} = (\hat w^{\mathrm{E}}_{ji})_{j\not = i}$  given by 
\begin{align*}
\hat 	w^{\mathrm{E}}_{ji} &:=  \hat h( X_i - \hat \phi_{ji}(X_j) ) - \hat h(X_i),
\end{align*}
for all $j\not =i$, where $\hat h(\cdot)$ denotes a user-specific entropy estimator using the observed data $\fX_n$. Estimating differential entropy is a difficult statistical problem but we will later in \Cref{sec:Simulations} demonstrate by simulation experiments that it can be beneficial to use the estimated entropy edge weights   when the additive noise distributions are highly non-Gaussian.

Under suitable conditions on the (possibly nonparametric) regression technique, we now show that the proposed algorithm consistently recovers the true causal graph of a causal additive tree model with Gaussian noise using the empirical Gaussian edge weights. 

\subsection{Consistency} \label{sec:Consistency}
We study a version of the CAT.G algorithm applied to a causal additive tree model with Gaussian noise where the regression estimates are trained on auxiliary data, simplifying the theoretical analysis. We believe that consistency without sample splitting holds  but may require some stronger conditions (in the experimental section, we do not use sample splitting).
As such, we only view the sample splitting as a theoretical device for simplifying proofs but we do not recommend it in practical applications. For each $n$ we let $\fX_n=((X_{1,i})_{1\leq i \leq p},\ldots,(X_{n,i})_{1\leq i \leq p})$ and $\tilde{\fX}_n=( (\tilde X_{1,i})_{1\leq i \leq p},\ldots, (\tilde X_{n,i})_{1\leq i \leq p})$ denote independent datasets each consisting of $n$ 
i.i.d.\ random variables with distribution identical to that of $X=(X_1,...,X_p)\in \R^p$. We suppose that the regression estimates $\hat{\varphi}_{ji}$ have been trained on $\tilde{\fX}_n$ and then compute the edge weights using $\fX_n$ as in step 3 of Algorithm~\ref{alg:Edmonds}:
\begin{align} \label{eq:samplesplitEdgeWeight}
\hat w^{\mathrm{G}}_{ji}%
:= \frac{1}{2} \log \left(  \frac{\frac{1}{n}\sum_{k=1 }^n \left( X_{k,i} - \hat \phi_{ji} (X_{k,j}) \right)^2}{\frac{1}{n}\sum_{k=1}^n X_{k,i}^2 - (\frac{1}{n}\sum_{k=1}^n X_{k,i})^2} \right).
\end{align}
The consistency results may be extended to cross-fitted edge weight estimators formed as an average of estimators of the form in \Cref{eq:samplesplitEdgeWeight} with the roles of the $\fX_n$ and $\tilde{\fX}_n$ samples interchanged, which would make full use of the available data.
The following result shows pointwise consistency of CAT.G  whenever the conditional mean estimation is weakly consistent.

\begin{restatable}[Pointwise consistency]{theorem}{Consistency}
\label{thm:consistency}
Suppose that for all $j \not = i$ 
the following two conditions hold:
\begin{enumerate}[label=(\alph*)]
\item if $(j \to i )\in \cE$, $\E[(\hat \phi_{ji}(X_j)-\phi_{ji}(X_j))^2|\tilde \fX_n] \convp_n 0$;
\item if $(j\to i) \not \in \cE$, $\E[(\hat \phi_{ji}(X_j)-\tilde \phi_{ji}(X_j))^2|\tilde \fX_n] \convp_n 0$ for some fixed $\tilde \phi_{ji}:\R\to \R$,
\end{enumerate} 
where $\phi_{ji}$ and $\hat \phi_{ji}$ are defined in \Cref{def:phi}. Furthermore, suppose that \Cref{ass:identifiabilityOfConditionalMeanScores} holds.
In the large sample limit, we recover the causal graph with probability one, that is
\begin{align*}
P(\hat{\mathcal{G}} = \mathcal{G})
\to_n 1,
\end{align*}
where $\hat \cG$ is the output of \Cref{alg:Edmonds} using weights $\hat w^{\mathrm{G}}$  given by \Cref{eq:samplesplitEdgeWeight}.
\end{restatable}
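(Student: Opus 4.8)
The plan is to show that the empirical Gaussian edge weights converge in probability to their population counterparts uniformly over the (finitely many) ordered pairs $(j\to i)$, and then to invoke continuity of the Chu--Liu--Edmonds' output together with the strictly positive identifiability gap from \Cref{ass:identifiabilityOfConditionalMeanScores}. Concretely, recall that $\hat\cG$ is the minimizer of $\sum_{(j\to i)\in\tilde\cE}\hat w_{\mathrm{G}}(j\to i)$ over $\tilde\cG\in\cT_p$, while by \Cref{eq:CausalGraphMinimizesGaussianScore} the causal tree $\cG$ is the unique minimizer of $\sum_{(j\to i)\in\tilde\cE} w_{\mathrm{G}}(j\to i)$, with a gap $\delta := \min_{\tilde\cG\neq\cG}\lG(\tilde\cG)-\lG(\cG)>0$. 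Since a directed tree has exactly $p-1$ edges, $|\sum_{(j\to i)\in\tilde\cE}(\hat w_{\mathrm{G}}-w_{\mathrm{G}})(j\to i)|\leq (p-1)\max_{j\neq i}|\hat w_{\mathrm{G}}(j\to i)-w_{\mathrm{G}}(j\to i)|$ for every $\tilde\cG$; so if the right-hand side is below $\delta/(2(p-1))$, then the ranking of $\cG$ against every competitor is preserved and $\hat\cG=\cG$. It therefore suffices to prove $\max_{j\neq i}|\hat w_{\mathrm{G}}(j\to i)-w_{\mathrm{G}}(j\to i)|\convp 0$.

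Because there are only finitely many pairs $(j,i)$, it is enough to fix one pair and show $\hat w_{\mathrm{G}}(j\to i)\convp w_{\mathrm{G}}(j\to i)$. Writing $\hat w_{\mathrm{G}}(j\to i)=\tfrac12\log(\hat A_{ji}/\hat B_i)$ with $\hat A_{ji}=\tfrac1n\sum_k(X_{k,i}-\hat\phi_{ji}(X_{k,j}))^2$ and $\hat B_i=\tfrac1n\sum_k X_{k,i}^2-(\tfrac1n\sum_k X_{k,i})^2$, and using that $\log$ is continuous on $(0,\infty)$ with $w_{\mathrm{G}}(j\to i)=\tfrac12\log(\Var(X_i-\phi_{ji}(X_j))/\Var(X_i))$, it suffices by the continuous mapping theorem to show $\hat A_{ji}\convp \Var(X_i-\phi_{ji}(X_j))=:A_{ji}>0$ and $\hat B_i\convp\Var(X_i)=:B_i>0$ (positivity of $A_{ji}$ holds because $X_i-\E[X_i|X_j]$ has a density, hence is nondegenerate; positivity of $B_i$ is immediate). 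The second convergence $\hat B_i\convp B_i$ is just the weak law of large numbers using $\E\|X\|_2^2<\infty$. For the first, I would decompose
\begin{align*}
\hat A_{ji} &= \frac1n\sum_{k=1}^n\big(X_{k,i}-\phi_{ji}(X_{k,j})\big)^2 + \frac2n\sum_{k=1}^n\big(X_{k,i}-\phi_{ji}(X_{k,j})\big)\big(\phi_{ji}(X_{k,j})-\hat\phi_{ji}(X_{k,j})\big) \\
&\quad + \frac1n\sum_{k=1}^n\big(\phi_{ji}(X_{k,j})-\hat\phi_{ji}(X_{k,j})\big)^2.
\end{align*}
The first term converges to $A_{ji}$ by the law of large numbers (note $\E[X_i-\phi_{ji}(X_j)]=0$ so the variance equals the second moment). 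The third term is exactly $\E[(\hat\phi_{ji}(X_j)-\phi_{ji}(X_j))^2\mid\tilde\fX_n]$ plus a mean-zero fluctuation; conditioning on the auxiliary sample $\tilde\fX_n$ (which makes $\hat\phi_{ji}$ deterministic), assumption (a) — or, when $(j\to i)\notin\cE$, assumption (b) combined with the $L^2$ bound $\E[(\hat\phi_{ji}(X_j)-\phi_{ji}(X_j))^2\mid\tilde\fX_n]\lesssim \E[(\hat\phi_{ji}(X_j)-\tilde\phi_{ji}(X_j))^2\mid\tilde\fX_n]+\E[(\tilde\phi_{ji}(X_j)-\phi_{ji}(X_j))^2]$, the last summand being a finite constant — gives that the conditional mean of the third term tends to $0$ (in probability over $\tilde\fX_n$), and a conditional Markov/Chebyshev argument controls the fluctuation; so the third term is $o_p(1)$. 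The middle term is then bounded in absolute value by $2\sqrt{(\text{first term})\cdot(\text{third term})}$ via Cauchy--Schwarz, hence also $o_p(1)$. Assembling these yields $\hat A_{ji}\convp A_{ji}$.

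The main subtlety — and the step I would be most careful about — is the handling of the non-causal edges $(j\to i)\notin\cE$, where the relevant population quantity is $\Var(X_i-\phi_{ji}(X_j))$ with $\phi_{ji}$ the true conditional mean, yet assumption (b) only controls the distance of $\hat\phi_{ji}$ to some fixed (not necessarily optimal) $\tilde\phi_{ji}$. The point is that $w_{\mathrm{G}}(j\to i)$ is defined using the \emph{true} conditional expectation, so $\hat A_{ji}$ as estimated does \emph{not} converge to $A_{ji}$ in general; rather it converges to $\tilde A_{ji}:=\E[(X_i-\tilde\phi_{ji}(X_j))^2]\geq A_{ji}$. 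What actually needs to hold is that the estimated edge weight for a wrong edge converges to something \emph{at least as large} as the true population edge weight, so that wrong edges remain unfavourable. I would therefore restate the target: for $(j\to i)\notin\cE$ show $\hat w_{\mathrm{G}}(j\to i)\convp \tfrac12\log(\tilde A_{ji}/B_i)\geq w_{\mathrm{G}}(j\to i)$, and then re-examine whether the identifiability gap argument still goes through — it does, because replacing some wrong-edge weights by larger values only increases the score of every tree $\tilde\cG\neq\cG$ that uses such an edge, while $\cG$ itself uses only causal edges where we have genuine consistency; more carefully, any $\tilde\cG\neq\cG$ contains at least one non-causal edge, and one shows the limiting score of every competitor strictly exceeds $\lG(\cG)$, so the gap-preservation argument of the first paragraph applies with $\delta$ replaced by the possibly-larger effective gap. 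Filling in this last comparison cleanly — ensuring the "larger wrong-edge weight" is used correctly and that no competitor can sneak below $\lG(\cG)$ — is the crux of the argument; the rest is routine law-of-large-numbers and continuous-mapping bookkeeping.
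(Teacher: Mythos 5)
Your proposal is correct and follows essentially the same route as the paper: decompose the empirical residual second moment into the oracle term, a squared estimation-error term controlled via conditional Markov's inequality, and a cross term bounded by Cauchy--Schwarz, then preserve the identifiability gap over the finitely many trees. The fix you identify in your last paragraph for non-causal edges --- that $\hat w_{\mathrm{G}}(j\to i)$ converges to a value at least as large as $w_{\mathrm{G}}(j\to i)$, so every competitor's limiting score still exceeds $\lG(\cG)$ by the gap --- is precisely the paper's device of the auxiliary score $\lG^*$ built from $\tilde\phi_{ji}$ on $\tilde\cE\setminus\cE$, which dominates $\lG$ everywhere and coincides with it at $\cG$ (so you should discard the second paragraph's claim that the third term vanishes for non-causal edges, which your final paragraph correctly supersedes).
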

\Cref{thm:consistency} states that under the given assumptions, the estimated graph will converge to the true causal graph with probability tending to one. In fact, the assumptions are fairly week: we only require weakly consistent estimation of the conditional means for edges that are  present in the causal graph; these represent causal relationships and are often assumed to be smooth. 
This distinction allow us to employ regression techniques that are consistent only for those function classes that we consider reasonable for modeling the causal mechanisms.
For non-causal edges, $(j\to i)\not \in \cE$, 
the estimator $\hat \phi_{ji}$ only needs to converge to a function $\tilde \phi_{ji}$, which does
not necessarily need to be the conditional mean.
\subsubsection{Consistency under Vanishing Identifiability}
We now consider an asymptotic regime involving a sequence $(\theta_n)_{n\in \N}$ of SCMs with potentially changing conditional mean functions $\varphi_{ji}$ and a vanishing identifiability gap. We have the following result.
\begin{restatable}[Consistency under vanishing identifiability]{theorem}{ConsistencyVanishingIdentifiability}
\label{thm:ConsistencyVanishing}
Let $(\theta_n)_{n\in \N}$ be a sequence of SCMs  on $p\in \N$ nodes all with the same causal directed tree $\cG=(V,\cE)$ such that 
\begin{enumerate}[label=(\roman*)]
\item  for $q_n := \min_{\tilde \cG\in \cT_p\setminus \{\cG\}} \lG(\cG) - \lG(\tilde \cG)
$ (the gap of model $\theta_n$), we have $q_n^{-1}=o(\sqrt{n})$; 
\item for all $(j\to i)\in \cE$ and $\ep >0$, 
$ P_{\theta_n}\lp q_n^{-1}\E_{\theta_n}\lf (\phi_{ji}(X_j)- \hat \phi_{ji}(X_j))^2   | \tilde \fX_n  \rf > \ep \rp\to_n 0$;
\item for all $j\not = i$ and $\ep>0$, $
P_{\theta_n}\lp \frac{q_n^{-2}}{n}\E_{\theta_n}\lf (\phi_{ji}(X_j)- \hat \phi_{ji}(X_j))^4   | \tilde \fX_n  \rf > \varepsilon  \rp   \to_n 0$; and
\item there exists $C>0$ such that for all $j\not = i$ $
\inf_{n} P_{\theta_n}(\Var_{\theta_n}(X_i|X_j)\leq C)=1$ and 
$\sup_{n} \E_{\theta_n}  \|X\|_2^4<\i$.
\end{enumerate}
Then it holds that
\begin{align*}
P(\hat{\mathcal{G}} = \mathcal{G}) \to_n 1.
\end{align*}
\end{restatable}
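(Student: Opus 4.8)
The plan is to mimic the structure of the proof of the pointwise consistency result (Theorem~\ref{thm:consistency}), but to track all error terms quantitatively against the vanishing gap rate $q_n$ rather than merely sending them to zero. Recall that $\hat{\mathcal G}$ minimizes the empirical Gaussian edge weights $\hat w_{\mathrm G}(j\to i)$ over $\mathcal T_p$, and $\mathcal G$ minimizes the population weights $w_{\mathrm G}(j\to i)$. Writing, for any tree $\tilde{\mathcal G}$, $\hat L(\tilde{\mathcal G}) = \sum_{(j\to i)\in\tilde{\mathcal E}} \hat w_{\mathrm G}(j\to i)$ and $L(\tilde{\mathcal G})$ for the population analogue, the event $\{\hat{\mathcal G}=\mathcal G\}$ is implied by the event that $\sup_{\tilde{\mathcal G}} |\hat L(\tilde{\mathcal G}) - L(\tilde{\mathcal G})| < q_n/2$, since $L(\tilde{\mathcal G}) - L(\mathcal G) \geq q_n$ for all $\tilde{\mathcal G}\neq\mathcal G$ by the definition of $q_n$. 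Because there are at most $p^2$ edges and $p$ is fixed, it suffices to show that for every ordered pair $j\neq i$,
\begin{align*}
	P_{\theta_n}\!\left( q_n^{-1}\,\bigl| \hat w_{\mathrm G}(j\to i) - w_{\mathrm G}(j\to i) \bigr| > \varepsilon \right) \to_n 0
\end{align*}
for every $\varepsilon>0$; equivalently $\hat w_{\mathrm G}(j\to i) - w_{\mathrm G}(j\to i) = o_{P_{\theta_n}}(q_n)$.

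The next step is to decompose this edge-weight error. Since $w_{\mathrm G}$ and $\hat w_{\mathrm G}$ are $\tfrac12\log$ of ratios of (conditional) residual variance to marginal variance, and the $\log$ function is Lipschitz on sets bounded away from $0$ and $\infty$, condition~(iv) lets me reduce to controlling the difference of the numerators and denominators on their raw scale. The denominator $\widehat\Var(X_i)$ converges to $\Var_{\theta_n}(X_i)$ at rate $O_{P}(n^{-1/2})$ uniformly in $n$ by condition~(iv) (bounded fourth moments), and $n^{-1/2} = o(q_n)$ by condition~(i), so that term is negligible. The main work is the numerator: I would split
\begin{align*}
	\tfrac1n\sum_{k=1}^n (X_{k,i}-\hat\phi_{ji}(X_{k,j}))^2 - \E_{\theta_n}[(X_i-\phi_{ji}(X_j))^2]
\end{align*}
(in the causal case; in the non-causal case $\phi_{ji}$ is replaced by its fixed limit $\tilde\phi_{ji}$, exactly as in Theorem~\ref{thm:consistency}) into (a) an empirical-process fluctuation term $\tfrac1n\sum_k (X_{k,i}-\phi_{ji}(X_{k,j}))^2 - \E[\cdots]$, which is $O_P(n^{-1/2}) = o(q_n)$ by condition~(iv); (b) a cross term $\tfrac2n\sum_k (X_{k,i}-\phi_{ji}(X_{k,j}))(\phi_{ji}(X_{k,j})-\hat\phi_{ji}(X_{k,j}))$; and (c) a pure estimation-error term $\tfrac1n\sum_k (\phi_{ji}(X_{k,j})-\hat\phi_{ji}(X_{k,j}))^2$. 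Conditioning on the auxiliary sample $\tilde{\fX}_n$, term (c) has conditional mean equal to $\E_{\theta_n}[(\phi_{ji}(X_j)-\hat\phi_{ji}(X_j))^2\mid\tilde\fX_n]$, which is $o_{P}(q_n)$ by condition~(ii), and its conditional fluctuation around that mean is controlled via condition~(iii) (the fourth-moment-of-residual bound divided by $n q_n^2$), giving that (c) is $o_P(q_n)$; term (b) is handled by conditional Cauchy–Schwarz, bounding it by the square root of (c) times $\tfrac1n\sum_k (X_{k,i}-\phi_{ji}(X_{k,j}))^2$, the latter being $O_P(1)$, so (b) is $O_P(\sqrt{q_n \cdot o_P(1)}) \cdot$ — more carefully, $(b) = O_P\!\bigl(\sqrt{o_P(q_n)}\bigr)$, which is indeed $o_P(q_n)$ provided $q_n\to 0$, or one argues directly that $|(b)| \le o_P(q_n)^{1/2}\,O_P(1)$ and uses that a term which is $o_P(q_n^{1/2})$ need not be $o_P(q_n)$ — so this step needs the sharper split $(b)=o_P(q_n)$ obtained by writing $|(b)|\le \delta\cdot\tfrac1n\sum_k(X_{k,i}-\phi_{ji}(X_{k,j}))^2 + \delta^{-1}\cdot(c)$ for arbitrary $\delta>0$ and optimizing, or simply noting $|(b)|\le (c) + \tfrac1n\sum_k(X_{k,i}-\phi_{ji})^2$ is too lossy; the clean route is $|(b)| \le 2\sqrt{(c)}\sqrt{\tfrac1n\sum_k(X_{k,i}-\phi_{ji})^2}$ and then, since $(c)=o_P(q_n)$ and the second factor is $O_P(1)$, deduce $(b)=o_P(\sqrt{q_n})$ — and here I would instead absorb $(b)$ into the argument by replacing the target bound on $(c)$ with $o_P(q_n^2)$-type control when needed; in fact conditions~(ii) and~(iii) are exactly calibrated so that $(b)=o_P(q_n)$ follows, since $\E[(X_i-\phi_{ji}(X_j))^2\mid\tilde\fX_n]$ is bounded by condition~(iv) and cross terms of an $o_P(q_n)$-in-mean-square quantity with a bounded quantity are $o_P(q_n)$ by Cauchy–Schwarz once one is careful that $\sqrt{o_P(q_n)}\cdot O_P(1)$ combined with the other $\sqrt{o_P(q_n)}$ factor gives $o_P(q_n)$ — i.e. the cross term genuinely carries two half-powers.

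Putting the pieces together, each edge-weight error is $o_{P_{\theta_n}}(q_n)$, hence $\sup_{\tilde{\mathcal G}}|\hat L(\tilde{\mathcal G})-L(\tilde{\mathcal G})| = o_{P_{\theta_n}}(q_n)$ since the supremum is over finitely many trees, each a sum of at most $p-1$ such edge errors; therefore $P(\hat{\mathcal G}=\mathcal G)\to_n 1$. The main obstacle I anticipate is precisely the bookkeeping in the cross term (b): one must verify that the product of an estimation error that is small \emph{in conditional mean square} with a residual term that is merely bounded in $L^2$ is $o_P(q_n)$ and not just $o_P(\sqrt{q_n})$ — this is what forces the somewhat strong-looking condition~(iii) on the fourth moments of the estimation error scaled by $q_n^{-2}/n$, and getting the conditioning on $\tilde{\fX}_n$ and the application of Markov/Chebyshev exactly right is the delicate part. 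A secondary point requiring care is that all $O_P$ and $o_P$ statements must be made \emph{uniformly in $n$} (since $\theta_n$ changes with $n$), which is why condition~(iv) imposes a uniform-in-$n$ bound on fourth moments and a uniform-in-$n$ almost-sure bound on the conditional variances; the standard i.i.d. laws of large numbers and CLT-type bounds must be replaced by their Markov-inequality quantitative versions to make this uniformity transparent.
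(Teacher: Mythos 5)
Your overall reduction (uniform $o_{P_{\theta_n}}(q_n)$ control of the empirical score over the finitely many trees implies $P(\hat{\cG}=\cG)\to 1$) matches the paper, but there are two genuine gaps in how you propose to control the individual edge weights. First, you aim for the two-sided bound $\hat w_{\mathrm G}(j\to i)-w_{\mathrm G}(j\to i)=o_{P_{\theta_n}}(q_n)$ for \emph{every} pair $j\neq i$, replacing $\phi_{ji}$ by a ``fixed limit $\tilde\phi_{ji}$'' for non-causal edges. Theorem~\ref{thm:ConsistencyVanishing} has no such hypothesis: condition (ii) controls the regression error only for causal edges, and condition (iii) only bounds a scaled fourth moment, so for a non-causal edge nothing forces $\hat\phi_{ji}$ to converge to anything, and the two-sided bound is simply not available. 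The paper's proof instead keeps the true conditional mean $\phi_{ji}$ as the centering for all edges and proves only the \emph{one-sided} statement $P_{\theta_n}(\hat w_{ji}-w_{ji}\geq -q_n\ep)\to 1$ for non-causal edges: the extra term $\tfrac1n\sum_k(\phi_{ji}(X_{k,j})-\hat\phi_{ji}(X_{k,j}))^2$, centered at $\delta_{n,\theta_n}^2:=\E_{\theta_n}[(\phi_{ji}(X_j)-\hat\phi_{ji}(X_j))^2\mid\tilde\fX_n]\geq 0$, can only bias the empirical weight of a wrong edge upward, which is harmless. Without switching to this asymmetric argument (or adding an assumption not in the theorem), your plan cannot close.

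Second, your treatment of the cross term (b) is not correct as finally stated. Cauchy--Schwarz gives $|(b)|\leq 2\sqrt{(c)}\cdot\sqrt{\tfrac1n\sum_k(X_{k,i}-\phi_{ji}(X_{k,j}))^2}=o_{P}(\sqrt{q_n})\cdot O_{P}(1)$, which carries only \emph{one} half-power of the small quantity and therefore does not yield $o_{P}(q_n)$ when $q_n\to 0$; there is no second $\sqrt{o_P(q_n)}$ factor to be had from that inequality. The mechanism that actually delivers the rate --- and the reason conditions (iii) and (iv) are stated the way they are --- is that, conditionally on $\tilde\fX_n$, the cross term is an average of $n$ i.i.d.\ terms with conditional mean zero (because $\phi_{ji}$ is the exact conditional mean, for causal and non-causal edges alike), so its conditional variance is
\begin{align*}
\E_{\theta_n}\lf T_{3,n}^2\mid\tilde\fX_n\rf \;\leq\; \frac{C}{n}\,\E_{\theta_n}\lf(\phi_{ji}(X_j)-\hat\phi_{ji}(X_j))^2\mid\tilde\fX_n\rf,
\end{align*}
using the uniform bound on $\Var_{\theta_n}(X_i\mid X_j)$ from (iv); conditional Chebyshev together with (iii) and $q_n^{-2}/n\to 0$ then gives $T_{3,n}=o_{P_{\theta_n}}(q_n)$. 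This extra factor $1/n$ from the conditionally centered i.i.d.\ structure is the missing idea; Cauchy--Schwarz alone cannot substitute for it.
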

Condition (i) asks that the identifiability gap $q_n$ goes to zero more slowly than the standard convergence rate $1/\sqrt{n}$ of estimators in regular parametric models. Such a requirement would be necessary in almost any structure identification problem. Condition (ii) requires the mean squared error of the regression estimates corresponding to true causal edges to be $o_P(q_n)$. We regard this as a fairly mild assumption: indeed, the minimax rate of estimation of regression functions in H\"older balls with smoothness $\beta$ is $n^{-2\beta / (2\beta + 1)}$ \citep{Tsybakov}. Thus, we can expect that if the causal regression functions have smoothness $\beta \geq 1/2$ and all lie in a H\"older ball, (ii) can be satisfied for any $q_n$ satisfying (i). Condition (iii) allows the fourth moments of the estimation errors to increase at any rate slower than  $n q_n^{2} \to \infty$; of course, we would typically expect this error to decay, at least for the causal edges.
\section{Hypothesis Testing} \label{sec:HypothesisTest} 
This section presents two procedures to test any substructure hypothesis regarding the causal directed tree of a  causal additive tree model with Gaussian noise. We continue  our analysis using the sample split estimators of \Cref{eq:samplesplitEdgeWeight},  where the conditional expectations are estimated on an auxiliary dataset. Our approach makes use of the fact that the estimated weights in \Cref{eq:samplesplitEdgeWeight} are logarithms of ratios of i.i.d.\ quantities, and thus the joint distribution of the estimated edge weights should, with appropriate centering and scaling, be asymptotically Gaussian; see \Cref{thm:asymptoticnormalityedgecomponents} in \Cref{app:proofs} for 
the precise statement.
This allows us to create a (biased) confidence region of the true edge weights, which in turn gives a  confidence set for the true graph. This confidence set of graphs is not necessarily straightforward to compute and list. However, we show that it can be queried to test hypotheses of interest, such as the presence or absence of a particular edge. As these hypothesis tests are derived from a confidence region, they are valid even when the hypothesis to test has been chosen after examining the data.

Similar to the results in the previous sections, we avoid making assumptions on the performance of regressions corresponding to non-causal edges. 	Unlike the consistency analysis, however, here we do not, in general, require identifiability of the true graph.

In order to state our results and assumptions, we introduce the following notation. 
For a collection of variables   $(K_{ji})_{j\not = i}$,  we let $
K_{i} := (K_{1i}, \ldots,K_{(i-1)i},K_{(i+1)i},\ldots,K_{pi})^\t \in \R^{p-1}$, furthermore, for any collection $(K_i)_{1\leq i \leq p}$, we let $
K  := (K_{1},\ldots,K_{p})^\t$. With this notation, let, for all $k\in\{1,\ldots,n\}$, the vectors of squared residuals and squared centered observations 
be given by
\[
\hat{M}_k := \{(X_{k,i} - \hat{\varphi}_{ji}(X_{k,j}))^2\}_{j \neq i} \in \R^{p(p-1)}, \qquad \hat{V}_k= \bigg\{\bigg(X_{k,i} - \frac{1}{n}\sum_{m=1}^n X_{m,i} \bigg)^2\bigg\}_{1\leq i \leq p} \in \R^p.
\]
Further let
\[
\hat{\mu} := \frac{1}{n} \sum_{k=1}^n \hat{M}_k, \qquad \hat{\nu} =: \frac{1}{n} \sum_{k=1}^n \hat{V}_k.
\]
Note that with this notation, the empirical Gaussian edge weight for $j \to i$ is given by $\log(\hat{\mu}_{ji} / \hat{\nu}_i) / 2$. Let us denote by $\widehat{\Sigma}_M \in \R^{p(p-1) \cdot p(p-1)}$, $\widehat{\Sigma}_V \in \R^{p \cdot p}$ and $\widehat{\Sigma}_{MV} \in \R^{p(p-1) \cdot p}$, the empirical variances of the $\hat{M}_k$ and $\hat{V}_k$ and their empirical covariance respectively, so
\[
\hat \Sigma :=	\begin{pmatrix}
\widehat \Sigma_{M} & \widehat \Sigma_{MV} \\ \widehat \Sigma_{MV}^\t & \widehat \Sigma_{V}  
\end{pmatrix}:=\frac{1}{n}\sum_{k=1}^n \begin{pmatrix}
\hat M_{k}
\hat M_{k}^\t -\hat \mu\hat \mu^\t & \hat M_{k}  \hat V_{k}^\t- \hat \mu \hat \nu^\t  \\
\hat V_k \hat M_k^\t- \hat \nu \hat \mu^\t &  V_{k}V_{k}^\t- \hat \nu \hat \nu^\t
\end{pmatrix}.
\]
With this, we may now present our construction of confidence intervals for the edge weights. (For simplicity, all proofs in this section assume the variables to have mean zero.)

\subsection{Confidence Region for the Causal Tree} \label{sec:Confidence}

We use the delta method to estimate the variances of the $\hat{w}^\mathrm{G}_{ji}$, and a simple Bonferroni correction to ensure simultaneous coverage of the confidence intervals we develop. Writing $z_{\alpha}$ for the upper $\alpha / \{2p(p-1)\}$ quantile of a  standard normal distribution, we set
\begin{align} \label{eq:bonferonnibounds}
\hat u_{ji},\, 	\hat l_{ji}   := & \, \frac{1}{2}\log\lp\frac{\hat \mu_{ji}}{\hat \nu_{i}}\rp \pm z_\alpha \frac{\hat \sigma_{ji}}{2\sqrt{n}} = \hat w_{ji}^{\mathrm{G}} \pm z_\alpha \frac{\hat \sigma_{ji}}{2\sqrt{n}},
\end{align}
where
\begin{align*}
\hat \sigma_{ji}^2 := \frac{\widehat \Sigma_{M,ji,ji}}{\hat \mu_{ji}^2} + \frac{\widehat \Sigma_{V,i,i}}{\hat \nu_{i}^2} - 2 \frac{\widehat \Sigma_{MV,ji,i}}{\hat \mu_{ji} \hat \nu_{i}}.
\end{align*}
We treat $[\hat{l}_{ji}, \hat{u}_{ji}]$ as a confidence interval for the true edge weight $w^{\mathrm{G}}_{ji}$
and define the following region of directed trees formed of minimizers of the score with edge weights in the confidence hyperrectangle:
\begin{align*}
\hat{C}_{\mathrm{Bon}} := \hat C\lp \hat l, \hat u\rp  := \bigg\{ \argmin_{\tilde \cG=(V,\tilde \cE)\in \cT_p} \sum_{(j \to i )\in \tilde \cE}w_{ji}', :\, \,  &\forall j\not = i , w_{ji}'\in[\hat l_{ji},\hat u_{ji}]\bigg\}.
\end{align*}
We have the following coverage guarantee for $\hat{C}_{\mathrm{Bon}}$.
\begin{restatable}[Confidence region]{theorem}{thmconfidence}
\label{thm:Confidence}
Suppose the following conditions hold:
\begin{itemize}
\item[(i)] there exists $\xi > 0$ such that $\E \|X\|^{4 + \xi} < \infty$;
\item[(ii)] there exists $\xi > 0$ such that for all $j \neq i$,  $\E[|\hat \phi_{ji}(X_{j})- \phi_{ji}(X_{j})|^{4+\xi}|\tilde \fX_n] = O_p(1)$;
\item[(iii)] $\Var( (\hat{M}_1^\t, \hat{V}_1^\t)^\t |\tilde \fX_n )\convp_n \Sigma$, where $\Sigma$ is constant with strictly positive diagonal;
\item[(iv)] for $(j\to i)\in \cE $, $\sqrt{n}\E[ (\hat \phi_{ji}(X_{k,j})- \phi_{ji}(X_{k,j}))^2 |\tilde \fX_n ]\convp_n 0$. \label{cond:sqrtconv}

\end{itemize}
Then
\begin{align*}
\liminf_{n\to \i} P\lp \cG \in \hat{C}_{\mathrm{Bon}} \rp \geq 1-\alpha.
\end{align*}
\end{restatable}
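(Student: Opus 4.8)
The plan is to show that, with asymptotic probability at least $1-\alpha$, every true edge weight $w_{\mathrm{G}}(j\to i)$ lies in its interval $[\hat l_{ji},\hat u_{ji}]$, and then argue that this event implies $\cG\in\hat C$. For the second, deterministic, implication: on the event $E_n:=\{\forall j\neq i:\ w_{\mathrm{G}}(j\to i)\in[\hat l_{ji},\hat u_{ji}]\}$ the tuple $w'=(w_{\mathrm{G}}(j\to i))_{j\neq i}$ is one admissible choice in the definition of $\hat C$, and by \Cref{eq:CausalGraphMinimizesGaussianScore} (which holds under \Cref{ass:identifiabilityOfConditionalMeanScores}, or is replaced by the set-of-minimizers reading in the non-identified case) the graph $\argmin_{\tilde\cG}\sum_{(j\to i)\in\tilde\cE}w_{\mathrm{G}}(j\to i)$ is exactly $\cG$; hence $\cG\in\hat C$. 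So it suffices to prove $\liminf_n P(E_n)\geq 1-\alpha$, and by Bonferroni it suffices to show that each individual interval has asymptotic miscoverage at most $\alpha/\{2p(p-1)\}$ \emph{from each side}, i.e. that the standardized statistic is asymptotically dominated by a standard normal.

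First I would condition on the auxiliary sample $\tilde\fX_n$ and treat $\hat\phi_{ji}$ as fixed functions. Then $\hat M_k,\hat V_k$ are (row-wise) i.i.d.\ across $k=1,\dots,n$ given $\tilde\fX_n$, with conditional mean $\mu^{(n)}:=\E[\hat M_1\mid\tilde\fX_n]$, $\nu^{(n)}:=\E[\hat V_1\mid\tilde\fX_n]$ (up to the $O(1/n)$ centering correction in $\hat V_k$, which is negligible). Condition (iii) gives a stable conditional covariance $\Sigma$ with positive diagonal, and conditions (i)--(ii) supply a uniform $(2+\xi/2)$-th moment bound on the summands (since $\hat M_{1,ji}=(X_i-\phi_{ji}(X_j)+(\phi_{ji}-\hat\phi_{ji})(X_j))^2$ and $(a+b)^2$ expands, using $\E\|X\|^{4+\xi}<\infty$ and $\E[|\hat\phi_{ji}-\phi_{ji}|^{4+\xi}\mid\tilde\fX_n]=O_p(1)$). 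A Lyapunov / Lindeberg CLT for triangular arrays then yields, conditionally on $\tilde\fX_n$ and hence unconditionally,
\begin{align*}
\sqrt{n}\,\big((\hat\mu^\t,\hat\nu^\t)^\t-((\mu^{(n)})^\t,(\nu^{(n)})^\t)^\t\big)\convd \mathcal N(0,\Sigma).
\end{align*}
Applying the delta method to $(\mu,\nu)\mapsto\frac12\log(\mu_{ji}/\nu_i)$ gives $\sqrt n\,(\hat w_{ji}-\frac12\log(\mu^{(n)}_{ji}/\nu^{(n)}_i))\convd\mathcal N(0,\sigma_{ji}^2)$ with $\sigma_{ji}^2$ the population analogue of $\hat\sigma_{ji}^2$; a consistency argument (ratios of the consistent estimators $\widehat\Sigma$, $\hat\mu$, $\hat\nu$, all with non-vanishing denominators by (iii)) shows $\hat\sigma_{ji}^2\convp\sigma_{ji}^2$, so by Slutsky the studentized statistic $2\sqrt n(\hat w_{ji}-\frac12\log(\mu^{(n)}_{ji}/\nu^{(n)}_i))/\hat\sigma_{ji}$ is asymptotically standard normal. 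Combined with the symmetric $\pm z_\alpha$ construction this controls each one-sided error at $\alpha/\{2p(p-1)\}$ \emph{provided} the centering $\frac12\log(\mu^{(n)}_{ji}/\nu^{(n)}_i)$ can be replaced by the true edge weight $w_{\mathrm{G}}(j\to i)=\frac12\log(\Var(X_i-\phi_{ji}(X_j))/\Var(X_i))$.

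The main obstacle is exactly this replacement, and it is where condition (iv) enters. For a non-causal edge, $\frac12\log(\mu^{(n)}_{ji}/\nu^{(n)}_i)$ need not converge to $w_{\mathrm{G}}(j\to i)$ at all — but that is harmless, because $w_{\mathrm{G}}(j\to i)\le\frac12\log(\mu^{(n)}_{ji}/\nu^{(n)}_i)+o(1)$ is not even needed there; non-causal weights only need to be \emph{over}-covered or, more precisely, the confidence hyperrectangle only needs to contain the true weight vector, and for non-causal edges one shows $\E[(\hat\phi_{ji}-\phi_{ji})^2\mid\tilde\fX_n]=O_p(1)$ keeps $\mu^{(n)}_{ji}$ bounded away from the degenerate values so the CLT still applies and $\mu^{(n)}_{ji}\geq\Var(X_i-\phi_{ji}(X_j))$ up to $o_p(n^{-1/2})$ via $\mu^{(n)}_{ji}=\Var(X_i-\phi_{ji}(X_j))+\E[(\hat\phi_{ji}-\phi_{ji})^2\mid\tilde\fX_n]$ (the cross term vanishes by the tower property since $X_i-\phi_{ji}(X_j)$ has conditional mean zero given $X_j$). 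Wait — that decomposition in fact holds for \emph{all} $j\neq i$, causal or not: $\mu^{(n)}_{ji}=\Var(X_i-\phi_{ji}(X_j))+b_{ji}^{(n)}$ with bias $b_{ji}^{(n)}:=\E[(\hat\phi_{ji}(X_j)-\phi_{ji}(X_j))^2\mid\tilde\fX_n]\geq 0$. For causal edges condition (iv) says $\sqrt n\,b_{ji}^{(n)}\convp 0$, so $\frac12\log(\mu^{(n)}_{ji}/\nu^{(n)}_i)=w_{\mathrm{G}}(j\to i)+o_p(n^{-1/2})$ and the centering is asymptotically correct on causal edges; for non-causal edges $b_{ji}^{(n)}\geq 0$ means $\frac12\log(\mu^{(n)}_{ji}/\nu^{(n)}_i)\geq w_{\mathrm{G}}(j\to i)+o_p(1)$, so the interval, being centered at the larger value, still covers the smaller true weight from the left with the required probability while the right endpoint only helps. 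Making this asymmetric bookkeeping precise — that one-sided over-coverage on non-causal edges plus two-sided coverage on causal edges plus Bonferroni yields $P(\cG\in\hat C)\geq 1-\alpha-o(1)$ — is the delicate step; the CLT and delta-method parts are standard once the moment conditions (i)--(iii) are in hand.
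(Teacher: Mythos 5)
Your final paragraph lands on essentially the right argument, and it is a genuinely different --- and in one respect cleaner --- route than the paper's. The paper handles non-causal edges by introducing infeasible, unbiasedly centred intervals $[\tilde l_{ji},\tilde u_{ji}]$ built from $\tilde w_{ji}=\tfrac12\log((\hat\mu_{ji}-\delta_{n,ji}^2)/\hat\nu_i)$, applying the delta method at the convergent point $\hat\mu_{ji}-\delta_{n,ji}^2\convp\mu_{ji}$, and then proving separately (Lemma~\ref{lm:InequalityWithHighProb}) that $\tilde u_{ji}\le\hat u_{ji}$ with probability tending to one --- a somewhat delicate mean-value-theorem comparison of endpoints, needed because the feasible interval has a larger centre but possibly a smaller width. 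You instead keep the biased centring $c_n=\tfrac12\log(\mu^{(n)}_{ji}/\nu_i)$ with $\mu^{(n)}_{ji}=\mu_{ji}+b^{(n)}_{ji}$, note that $b^{(n)}_{ji}\ge0$ makes $c_n\ge w_{\mathrm{G}}(j\to i)$ deterministically, and conclude the one-sided bound $P(w_{\mathrm{G}}(j\to i)>\hat u_{ji})\le P(2\sqrt n(\hat w_{ji}-c_n)/\hat\sigma_{ji}<-z_\alpha)\to\alpha/\{2p(p-1)\}$ directly; combined with the deterministic fact (used by both proofs) that inflating non-causal weights inside the hyperrectangle preserves $\cG$ as a minimiser of the Chu--Liu--Edmonds objective, this bypasses Lemma~\ref{lm:InequalityWithHighProb} entirely, and the Bonferroni budget still closes since non-causal edges consume only one tail each. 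The price is that your delta method is applied at a random, non-convergent point: $b^{(n)}_{ji}$ is only $O_p(1)$, so both the centring and the target variance $a_n^\t\Sigma a_n$ with $a_n=(1/\mu^{(n)}_{ji},-1/\nu_i)$ are moving; you need the conditional/subsequence CLT machinery of Lemma~\ref{lm:conditionaltriangularCLT}, the consistency $\hat\sigma^2_{ji}-a_n^\t\Sigma a_n\convp0$, and non-degeneracy of $a_n^\t\Sigma a_n$ to studentise correctly. This is doable but must be spelled out; it is the step the paper avoids by recentring.

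The one genuine flaw is your opening plan. The event $E_n=\{\forall j\ne i:\ w_{\mathrm{G}}(j\to i)\in[\hat l_{ji},\hat u_{ji}]\}$ cannot be shown to have asymptotic probability $1-\alpha$ under conditions (i)--(iv): for a non-causal edge the bias $b^{(n)}_{ji}$ need not vanish, so $\hat l_{ji}$ concentrates around $w_{\mathrm{G}}(j\to i)+\tfrac12\log(1+b^{(n)}_{ji}/\mu_{ji})$, which can exceed the true weight by a non-vanishing amount and force $P(w_{\mathrm{G}}(j\to i)<\hat l_{ji})\to1$. The hyperrectangle is genuinely biased, two-sided coverage of the true weight vector fails, and $\hat C$ covers $\cG$ only because of the asymmetric bookkeeping in your last paragraph (two-sided control on causal edges, upper-endpoint-only control on non-causal edges, plus monotonicity of the minimiser in the weights). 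You notice this mid-proposal and pivot, but the proof must be organised around that corrected version from the outset; as written, the first paragraph claims an implication route that is unavailable.
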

The second condition requires little more than 4th moments for the absolute errors in the regression (they do not need to converge to zero). 
Condition (iv) requires that the mean squared prediction errors corresponding to the true causal edges decay faster than a relatively slow $1/\sqrt{n}$ rate. 
If the causal graph is unidentifiable, then when (iv) holds for all edges corresponding to population score minimizing graphs, $\hat{C}_{\mathrm{Bon}}$ covers every such graph with a probability of at least $1-\alpha$.

\subsection{Testing of Substructures} \label{sec:testing}
Whilst the confidence region $\hat{C}_{\mathrm{Bon}}$ has attractive coverage properties, it will typically not be possible to compute it in practice (due to the ranges of $w_{ji}'$ one would need to try).
We now introduce two computationally feasible schemes for querying whether $\hat{C}_{\mathrm{Bon}}$ 
satisfies certain constraints such as 
containing or not containing a given substructure.  More precisely, we propose a  
conservative exact query scheme called CheckC (for `check confidence region'), and an asymptotically valid query scheme called ConvB (for `converging bounds'), which we will see in the simulation experiments is less conservative. The ConvB test gains power at the expense of generality. While the CheckC test works in both the identified and the non-identified setup, the ConvB test needs both identifiability and stronger assumptions in order to hold level.

The idea is as follows: by \Cref{thm:Confidence} the confidence region for the causal graph $\hat{C}_{\mathrm{Bon}}$ contains the causal graph with probability tending to at least $1-\alpha$. Thus, if we can verify that no graph in $\hat{C}_{\mathrm{Bon}}$ contains a certain substructure, we are able to test the hypothesis that the causal graph satisfies said substructure with asymptotically valid $1-\alpha$ level control.

\subsubsection{Substructure Hypotheses}
A substructure restriction $\cR= (\cE_\cR, \cE_\cR^{\text{miss}},r)$ on the nodes $V$ contains specified sets $\cE_\cR$ of existing edges, $\cE_\cR^{\text{miss}}$ of missing edges, 
and a specific root node $r$ (any of such restrictions may be void, too). For example, a substructure restriction could be that a single edge is present (such as $X_1 \to X_2$), or that a single edge is not present (such as $X_1 \not \to X_2$); the restriction can also specify a directed tree. 
Our approach allows us to conclude that  
at least one of the constraints in $\mathcal{R}$ does \emph{not} hold for the true graph $\cG = (V,\cE)$. 
More precisely, we propose a test for the null hypothesis
\begin{align*}
\cH_0(\cR) : \cE_\cR \setminus \cE = \emptyset, \;   \cE \setminus \cE^\text{miss}_\cR   =\cE ,\; r=\root{\cG},
\end{align*}
i.e, that all constraints in a substructure restriction $\cR$ are satisfied in the causal graph.  We henceforth assume that a proposed substructure $\cR$  has no internal inconsistencies, i.e., that there exists at least one directed tree  over the nodes $V$ satisfying all conditions of $\cH_0(\cR)$.
\Cref{ex:substructurehyp} illustrates how substructure restrictions allow us to test various hypotheses about the causal graph.%
\begin{example} \label{ex:substructurehyp}
In \Cref{fig:hypothesisexample} we illustrate a true causal graph and five examples of substructure hypotheses that we can test.
\begin{itemize}
\setlength\itemsep{0em}
\item 	Hypothesis 1 (true) consists of 
the		
restriction $\cR = \cE_\cR$, where $\cE_\cR := \{(X_4 \to X_5)\}$. This substructure restriction specifies  that $(X_4 \to X_5)$ is present in the causal graph.
\item  Hypothesis 2 (false) consists of 
the restriction $\cR = \cE_\cR^{\mathrm{miss}}$, where $\cE_{\cR}^{\mathrm{miss}}:=\{(X_6 \to X_3)\}$. This  restriction specifies  that $(X_6 \to X_3)$ is not in the causal graph.
\item  Hypothesis 3 (true) 
consists of the		
restriction $\cR := (\cE_\cR, \cE_{\cR}^{\mathrm{miss}})$ with multiple present edges and a single missing edge. Here, the substructure restriction specifies that all edges in $\cE_\cR := \{(X_3 \to X_2), (X_4 \to X_5),(X_4 \to X_7), (X_6 \to X_3)\}$ are present, and that the edge in $\cE_{\cR}^{\mathrm{miss}}:=\{(X_8 \to X_9)\}$ is not present in the causal graph. 
\item Hypothesis 4 (false) consists of the restriction $\cR := (\cE_\cR, \cE_{\cR}^{\mathrm{miss}})$ with multiple present edges and multiple missing edges. This substructure restriction specifies that all edges in $\cE_\cR := \{(X_1 \to X_2), (X_1 \to X_4),(X_5 \to X_5)\}$ are present, and that all edges in $\cE_{\cR}^{\mathrm{miss}}:=\{(X_3 \to X_6),(X_8 \to X_7)\}$ are not present in the causal graph. 
\item Hypothesis 5 (false) 
contains the		
substructure $\cR := \cE_\cR$ with multiple present edges, specifying that all edges in $\cE_\cR := \{(X_1 \to X_2), (X_2 \to X_3),(X_1 \to X_4),(X_4 \to X_5),(X_4 \to X_7),(X_5 \to X_6),(X_5 \to X_8),(X_6 \to X_9)\}$ are present in the causal graph. This substructure restriction uniquely specifies a specific complete directed tree.
\end{itemize}

\begin{figure}[htp]
\begin{center}
\begin{tikzpicture}[node distance = 0.5cm, roundnode/.style={circle, draw=black, fill=gray!10, thick, minimum size=7mm},
roundnode/.style={circle, draw=black, fill=gray!10, thick, minimum size=7mm},
outer/.style={draw=gray,dashed,fill=black!1,thick,inner sep=5pt}
]
\node[roundnode] (1) [] {$1$};
\node[roundnode] (2) [right = of 1,label=above:{Truth}] {$2$};
\node[roundnode] (3) [right = of 2] {$3$};
\node[roundnode] (4) [below = of 1] {$4$};
\node[roundnode] (5) [right = of 4] {$5$};
\node[roundnode] (6) [right = of 5] {$6$};
\node[roundnode] (7) [below = of 4] {$7$};
\node[roundnode] (8) [right = of 7] {$8$};
\node[roundnode] (9) [right = of 8] {$9$};
\draw[->, line width=0.4mm] (7) -- (8);
\draw[->, line width=0.4mm] (4) -- (7);
\draw[->, line width=0.4mm] (1) -- (4);
\draw[->, line width=0.4mm] (4) -- (5);
\draw[->, line width=0.4mm] (5) -- (6);
\draw[->, line width=0.4mm] (6) -- (9);
\draw[->, line width=0.4mm] (6) -- (3);
\draw[->, line width=0.4mm] (3) -- (2);
\end{tikzpicture}
\hspace{0.5cm}
\definecolor{cadmiumgreen}{rgb}{0.0, 0.42, 0.24}
\begin{tikzpicture}[node distance = 0.5cm, roundnode/.style={circle, draw=black, fill=gray!10, thick, minimum size=7mm},
roundnode/.style={circle, draw=black, fill=gray!10, thick, minimum size=7mm},
outer/.style={draw=gray,dashed,fill=black!1,thick,inner sep=5pt}
]
\node[roundnode] (1) [] {$1$};
\node[roundnode] (2) [right = of 1,label=above:{Hypothesis 1}] {$2$};
\node[roundnode] (3) [right = of 2] {$3$};
\node[roundnode] (4) [below = of 1] {$4$};
\node[roundnode] (5) [right = of 4] {$5$};
\node[roundnode] (6) [right = of 5] {$6$};
\node[roundnode] (7) [below = of 4] {$7$};
\node[roundnode] (8) [right = of 7] {$8$};
\node[roundnode] (9) [right = of 8] {$9$};	
\draw[->, line width=0.4mm,color=cadmiumgreen] (4) -- (5);
\end{tikzpicture} 
\hspace{0.5cm}
\begin{tikzpicture}[node distance = 0.5cm, roundnode/.style={circle, draw=black, fill=gray!10, thick, minimum size=7mm},
roundnode/.style={circle, draw=black, fill=gray!10, thick, minimum size=7mm},
outer/.style={draw=gray,dashed,fill=black!1,thick,inner sep=5pt}
]
\node[roundnode] (1) [] {$1$};
\node[roundnode] (2) [right = of 1,label=above:{Hypothesis 2}] {$2$};
\node[roundnode] (3) [right = of 2] {$3$};
\node[roundnode] (4) [below = of 1] {$4$};
\node[roundnode] (5) [right = of 4] {$5$};
\node[roundnode] (6) [right = of 5] {$6$};
\node[roundnode] (7) [below = of 4] {$7$};
\node[roundnode] (8) [right = of 7] {$8$};
\node[roundnode] (9) [right = of 8] {$9$};	
\draw[->, line width=0.4mm,color=red] (6) -- (3);
\end{tikzpicture} \\ \vspace{0.5cm}
\begin{tikzpicture}[node distance = 0.5cm, roundnode/.style={circle, draw=black, fill=gray!10, thick, minimum size=7mm},
roundnode/.style={circle, draw=black, fill=gray!10, thick, minimum size=7mm},
outer/.style={draw=gray,dashed,fill=black!1,thick,inner sep=5pt}
]
\node[roundnode] (1) [] {$1$};
\node[roundnode] (2) [right = of 1,label=above:{Hypothesis 3}] {$2$};
\node[roundnode] (3) [right = of 2] {$3$};
\node[roundnode] (4) [below = of 1] {$4$};
\node[roundnode] (5) [right = of 4] {$5$};
\node[roundnode] (6) [right = of 5] {$6$};
\node[roundnode] (7) [below = of 4] {$7$};
\node[roundnode] (8) [right = of 7] {$8$};
\node[roundnode] (9) [right = of 8] {$9$};	
\draw[->, line width=0.4mm,color=cadmiumgreen] (4) -- (7);
\draw[->, line width=0.4mm,color=cadmiumgreen] (4) -- (5);
\draw[->, line width=0.4mm,color=cadmiumgreen] (6) -- (3);
\draw[->, line width=0.4mm,color=cadmiumgreen] (3) -- (2);
\draw[->, line width=0.4mm,color=red] (8) -- (9);
\end{tikzpicture} 
\hspace{0.5cm}
\begin{tikzpicture}[node distance = 0.5cm, roundnode/.style={circle, draw=black, fill=gray!10, thick, minimum size=7mm},
roundnode/.style={circle, draw=black, fill=gray!10, thick, minimum size=7mm},
outer/.style={draw=gray,dashed,fill=black!1,thick,inner sep=5pt}
]
\node[roundnode] (1) [] {$1$};
\node[roundnode] (2) [right = of 1,label=above:{Hypothesis 4}] {$2$};
\node[roundnode] (3) [right = of 2] {$3$};
\node[roundnode] (4) [below = of 1] {$4$};
\node[roundnode] (5) [right = of 4] {$5$};
\node[roundnode] (6) [right = of 5] {$6$};
\node[roundnode] (7) [below = of 4] {$7$};
\node[roundnode] (8) [right = of 7] {$8$};
\node[roundnode] (9) [right = of 8] {$9$};	
\draw[->, line width=0.4mm,color=cadmiumgreen] (1) -- (4);
\draw[->, line width=0.4mm,color=cadmiumgreen] (1) -- (2);
\draw[->, line width=0.4mm,color=cadmiumgreen] (4) -- (5);
\draw[->, line width=0.4mm,color=red] (8) -- (7);
\draw[->, line width=0.4mm,color=red] (3) -- (6);
\end{tikzpicture}
\hspace{0.5cm} 
\begin{tikzpicture}[node distance = 0.5cm, roundnode/.style={circle, draw=black, fill=gray!10, thick, minimum size=7mm},
roundnode/.style={circle, draw=black, fill=gray!10, thick, minimum size=7mm},
outer/.style={draw=gray,dashed,fill=black!1,thick,inner sep=5pt}
]
\node[roundnode] (1) [] {$1$};
\node[roundnode] (2) [right = of 1,label=above:{Hypothesis 5}] {$2$};
\node[roundnode] (3) [right = of 2] {$3$};
\node[roundnode] (4) [below = of 1] {$4$};
\node[roundnode] (5) [right = of 4] {$5$};
\node[roundnode] (6) [right = of 5] {$6$};
\node[roundnode] (7) [below = of 4] {$7$};
\node[roundnode] (8) [right = of 7] {$8$};
\node[roundnode] (9) [right = of 8] {$9$};	
\draw[->, line width=0.4mm,color=cadmiumgreen] (1) -- (4);
\draw[->, line width=0.4mm,color=cadmiumgreen] (1) -- (2);
\draw[->, line width=0.4mm,color=cadmiumgreen] (4) -- (5);
\draw[->, line width=0.4mm,color=cadmiumgreen] (2) -- (3);
\draw[->, line width=0.4mm,color=cadmiumgreen] (5) -- (6);
\draw[->, line width=0.4mm,color=cadmiumgreen] (4) -- (7);
\draw[->, line width=0.4mm,color=cadmiumgreen] (5) -- (8);
\draw[->, line width=0.4mm,color=cadmiumgreen] (6) -- (9);
\end{tikzpicture} 
\end{center}
\caption{Illustration of six graphs, see \Cref{ex:substructurehyp}. Colored edges represent testing the presence of edges (green, $\cE_\cR$) or whether edges are missing (red, $\cE_\cR^{\text{miss}}$).}
\label{fig:hypothesisexample}
\end{figure}
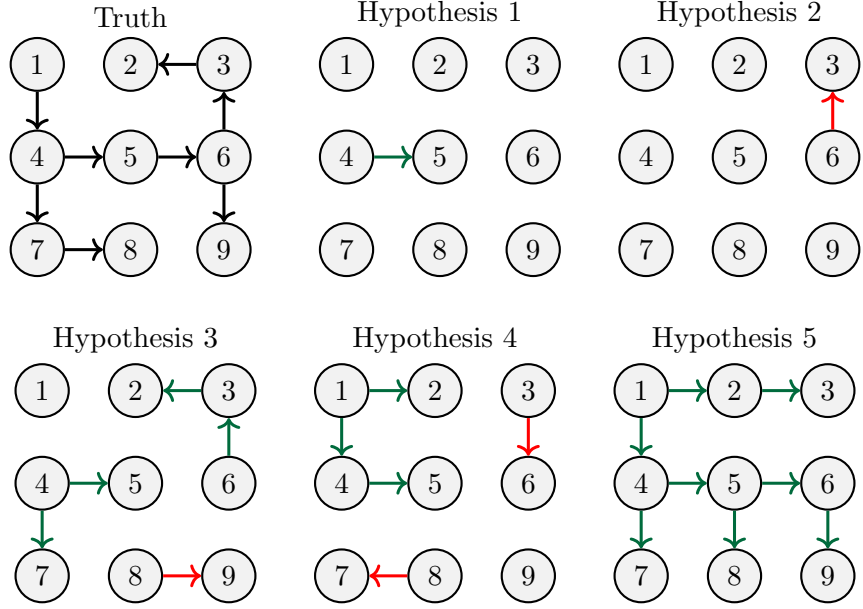

\end{example}

\subsubsection{
Checking the Confidence Region}
In order to present the first method, we introduce some notation.
For any non-empty subset of directed trees $\cT\subset \cT_p$, let 
$S_\cT(w)$ 
be the score attained by the minimum edge weight directed tree recovered by  Chu--Liu--Edmonds'  algorithm with input edge weights  
$w:=(w_{ji})_{j\not = i}$, when restricting the search to all directed trees in $\cT$. That is, if we denote the minimum edge weight directed spanning tree (MWDST) as recovered by Chu--Liu--Edmonds'  algorithm, when searching over all directed trees in $\cT$ by
\begin{align} \label{eq:MWDST}
\cG^*_\cT(w) := 	\argmin_{\tilde \cG=(V,\tilde \cE)\in \cT} \sum_{(j \to i )\in \tilde \cE}w_{ji},
\end{align}
then with $\cG^*_\cT(w) =( V, \cE^*_\cT(w))$ the associated score is given by
\begin{align} \label{eq:scoreSubstructure}
S_\cT(w) := \sum_{(j\to i) \in \cE^*_\cT(w)} w_{ji}.
\end{align}

Now let $\cT_p(\cR)\subset \cT_p$ be the set of all directed trees 
satisfying the substructure restriction $\cR$ and 
suppose that the causal directed tree $\cG$ satisfies $\cR$, i.e., $\cG\in \cT_p(\cR)$. Hence with probability tending to at least $1-\alpha$ we know that there exists a graph in $\hat{C}_{\mathrm{Bon}}$ satisfying the substructure restriction $\cR$.
That is, there exist edge weights $w'=(w'_{ji})_{j\not = i}$, with $\hat l_{ji}\leq w' \leq \hat u_{ji}$ for all $j\not = i$, such that $\cG^*_{\cT_p}(w')$ satisfies the substructure restriction $\cR$. Hence, it must hold that $S_{\cT_p(\cR)}(w') = S_{\cT_p}(w')$. Since the score function is weakly monotone, we have, 
with probability tending to at least $1-\alpha$,
that
\begin{align*}
S_{\cT_p(\cR)}(\hat{l})  \leq S_{\cT_p(\cR)}(w') = S_{\cT_p}(w') \leq S_{\cT_p}(\hat{u}).
\end{align*}
On the other hand, if $S_{\cT_p(\cR)}(\hat{l}) > S_{\cT_p}(\hat{u})$, then we know for certain that $\hat{C}_{\mathrm{Bon}}$ does not contain any graph satisfying the substructure restriction $\cR$. We thus define  our CheckC test function as
\begin{align} \label{eq:TestExact}
\psi_{\cR}^{\mathrm{CheckC}} := \left\{ \begin{array}{rl}
0 & \text{if } S_{\cT_p(\cR)}(\hat l) \leq S_{\cT_p}(\hat u) \\
1 & \text{otherwise}.
\end{array}\right.  %
\end{align}
Recall that Chu--Liu--Edmonds' algorithm recovers a minimum edge weight directed spanning tree subgraph of a connected graph $\cH$. We can construct a specific connected graph $\cH$ for which the set of directed spanning tree subgraphs coincides with $\cT_p(\cR)$.
In pseudo-algorithm of \Cref{alg:HypothesisTestExact} we detail how to test substructure hypotheses with CheckC test. 
\begin{algorithm}[htp] \caption{Hypothesis testing of $\cH_0(\cR)$ using the CheckC test} \label{alg:HypothesisTestExact}
\begin{algorithmic}[1]
\Procedure{CheckC}{$\cR= (\cE_\cR, \cE_\cR^{\text{miss}},r)$,  $\hat l = (\hat l_{ji})_{j\not = i}$, $\hat u = (\hat u_{ji})_{j\not = i}$}
\State Initialize fully connected graph $\cH := \{(j\to i): i,j\in V, j\not = i\}$.
\State For each $(j \to i)\in \cE_{\cR}$, delete from $\cH$ the edges $\{(k\to i): k \in V \setminus \{j\}\} \cup \{ i \to j\} $.
\State For each $(j\to i)\in \cE_{\cR}^{\text{miss}}$, delete from $\cH$ the edge $(j\to i)$.
\State If root $r\in \cR$, delete from $\cH$ the edges $\{(j \to r): j\in V  \}$.
\State Apply Chu--Liu--Edmonds' algorithm to find $S_{\cT_p(\cR)}(\hat l)$ and $\cG^*_{\cT_p(\cR)}(\hat l)$,  the minimum $\hat u$-weighted directed spanning subtree of $\cH$.
\State Apply Chu--Liu--Edmonds' algorithm to find $S_{\cT_p}(\hat u)$ and $\cG^*_{\cT_p}(\hat u)$, the minimum $\hat l$-weighted directed spanning subtree of the fully connected graph.
\State If $S_{\cT_p(\cR)}(\hat l) \leq S_{\cT_p}(\hat u)$, then set $\psi_\cR^{\mathrm{CheckC}} := 0$, otherwise set $\psi_{\cR}^{\mathrm{CheckC}} := 1$.
\State \textbf{return} $\psi_{\cR}^{\mathrm{CheckC}}$. 
\EndProcedure
\end{algorithmic} 
\end{algorithm}	

This testing procedure is  conservative as seen by the simulation experiments in \Cref{sec:ExperimentHypothesisTest}. While \Cref{thm:testlevel} proves that hypothesis testing using the CheckC test achieves pointwise asymptotic level, the simulation experiments show that the finite sample power of the test is low for small to moderately large sample sizes.
For example, if $\max\{\hat l_{ji}: j\not = i\} \leq \min\{\hat u_{ji}:j\not = i\}$, then no false substructure hypothesis can be rejected. In  \Cref{sec:AsympTestQuery} we propose an alternative test which exhibits improved finite sample power.
\subsubsection{Converging Bounds} %
\label{sec:AsympTestQuery}
We now present the ConvB test which is based on an asymptotically valid query scheme, that is, with probability increasing to one (in the large sample limit) it makes a valid choice on whether any graph in $\hat{C}_{\mathrm{Bon}}$ satisfies a substructure restriction $\cR$. We call this test ConvB for `converging bounds' because it requires that all lower edge weight bounds converge towards the Gaussian population edge weights. Consider a true null hypothesis $H_0(\cR)$, i.e., a substructure restriction $\cR$ which is satisfied by the causal graph $\cG$. Suppose that  $\cG\in \hat{C}_{\mathrm{Bon}}$, which implies the existence of edge weights   $w'= (w'_{ji})_{j\not = i}$, with $\hat l_{ji} \leq w'_{ji} \leq \hat u_{ji}$ for all $j\not = i$, such that the minimum edge weight directed spanning tree, 
\begin{align*}
\cG^*_{\cT_p}(w'):=	\argmin_{\tilde \cG=(V,\tilde \cE)\in \cT_p} \sum_{(j \to i )\in \tilde \cE}w_{ji}',
\end{align*}
satisfies the restrictions $\cR$. The intuition for our approach is as follows:
We propose a method that `helps' all edge weights that are not in direct disagreement with $\cR$, and `penalizes' all edge weights that are in disagreement with $\cR$, 
more precisely, we define the edge weights $\check w = (\check {w}_{ji})_{j\not = i}$  by
\begin{align*}
\check w_{ji} = \left\{\begin{array}{ll}
\hat u_{ji} & \text{if }  [\exists k \not = j : (k\to i) \in \cE_{\cR} ] \lor [ (i \to j) \in \cE_{\cR}] \lor [ (j\to i) \in \cE_{\cR}^{\text{miss}}] \lor [ i = r], \\
\hat l_{ji} & \text{otherwise,}   \\
\end{array} \right..
\end{align*}
We can then expect that $\cG^*_{\cT_p}( \check w)$ still satisfies the restriction $\cR$ 
(with probability tending to one, see Theorem~\ref{thm:testlevel}).
Conversely, the probability that $\cG^*_{\cT_p}( \check w)$ does not satisfy the restriction $\cR$ is, in the large sample limit, bounded by the probability that $\cG$ is not in the confidence region $\hat{C}_{\mathrm{Bon}}$. We may set our test function 
\begin{align*}
\psi_{\cR}^{\mathrm{ConvB}} = \left\{ \begin{array}{ll}
0, & \text{if } \cG^*_{\cT_p}(\check w) \text{ satisfies } \cR\\
1, & \text{otherwise.}
\end{array} \right. 
\end{align*}
The pseudo-algorithm in \Cref{alg:HypothesisTestAsymp} details how to test any substructure hypothesis $\cH_0(\cR)$ using the asymptotic query scheme of the ConvB test. 
\begin{algorithm}[htp] \caption{Hypothesis testing of $\cH_0(\cR)$ using the ConvB test} \label{alg:HypothesisTestAsymp}
	\begin{algorithmic}[1]
		\Procedure{ConvB}{$\cR= (\cE_\cR, \cE_\cR^{\text{miss}},r)$,  $\hat l = (\hat l_{ji})_{j\not = i}$, $\hat u = (\hat u_{ji})_{j\not = i}$}
		\State Initialize $\check w := \hat l$.
		\State For each $(j \to i)\in \cE_{\cR}$ and all $ k \in V \setminus \{j\}$, set $\check w_{ki} := \hat u_{ki}$.
		\State For each $(j\to i)\in \cE_{\cR}^{\text{miss}}$, set $\check w_{ji} := \hat u_{ji}$.
		\State If root $r\in \cR$, then for all $j\in V$, set $\check w_{jr} := \hat u_{lr}$.
		\State Apply Chu--Liu--Edmonds' algorithm to find  $\cG^*_{\cT_p}(\check w)$.
		\State If $\cG^*_{\cT_p}(\check w)$ satisfies $\cR$, then set $\psi_\cR^{\mathrm{ConvB}} := 0$, otherwise set $\psi_{\cR}^{\mathrm{ConvB}} := 1$.
		\State \textbf{return} $\psi_{\cR}^{\mathrm{ConvB}}$. 
		\EndProcedure
	\end{algorithmic} 
\end{algorithm}

Our GitHub repository (see \Cref{footnodeGithub}) contains R implementations of both testing procedures.	The following theorem shows that both substructure hypothesis tests achieve pointwise asymptotic level. Any number of null hypotheses may be tested simultaneously, without the need for any multiple testing correction. This is because the tests may be viewed as simply querying the properties of the single confidence region of Theorem~\ref{thm:Confidence}, which has coverage of the truth with probability at least $1-\alpha$.

\begin{restatable}[Pointwise asymptotic level]{theorem}{thmtestlevel}
\label{thm:testlevel} Let $\alpha\in (0,1)$ and let $\cR_1,\cR_2,\ldots$ be any collection of potentially data-dependent substructure restrictions. Suppose that conditions of \Cref{thm:Confidence} are satisfied. If either
\begin{itemize}
\item[(a)]  $\psi_{\cR_k} = \psi_{\cR_k}^{\mathrm{CheckC}}$ for all $k \geq 1$, or
\item[(b)]  $\psi_{\cR_k} = \psi_{\cR_k}^{\mathrm{ConvB}}$ for all $k \geq 1$, \Cref{ass:identifiabilityOfConditionalMeanScores} holds, and for all $(j\to i) \not \in \cE $ it holds that $\sqrt{n}\E[ (\hat \phi_{ji}(X_{k,j})- \phi_{ji}(X_{k,j}))^2 |\tilde \fX_n ]\convp 0$,
\end{itemize}
then it holds that
\[
\limsup_{n\to \i} P\left(\bigcup_{k \,:\, \mathcal{H}_0(\cR_k) \text{ is true}} ( \psi_{\cR_k}=1 ) \right) \leq  \alpha.
\]
\end{restatable}
The ConvB test requires stronger conditions than the CheckC test. Additionally to the assumptions made by the CheckC test, it requires identifiability of the causal graph and $\sqrt{n}$-convergence of the mean squared estimation error for the non-causal edges. On the other hand, it would be possible to give uniform asymptotic level guarantees for the CheckC test as it only relies on the coverage properties of confidence intervals for the true weights.

\section{Bounding the Identifiability Gap}
\label{sec:ScoreGap}
We have seen in Section~\ref{sec:Consistency} that 
the identifiability gap, 
that is, the smallest score  
difference 
between 
the causal tree
$\cG$ and 
any alternative graph $\tilde \cG \in \cT_p\setminus \{\cG\}$,
plays an important role when 
identifying causal trees from observational data. 	It provides information about whether the causal graph is identifiable through the corresponding score function, for example, if we can establish that the smallest Gaussian score gap is strictly positive, i.e.,
\begin{align}  		\min_{\tilde \cG\in \cT_p\setminus \{\cG\}}	\lG(\tilde \cG) - \lG(\cG) &= \inf_{Q\in \{\tilde \cG\} \times \cF(\tilde \cG) \times \cP^p} D_{\mathrm{KL}}(P_X\| Q)>0, \label{eq:GaussianIdentifiabilityGapExampleSectionBoundIDGap}
\end{align}
then $\cG$ is identified by the Gaussian score function. \Cref{lm:Ass1} lists  conditions guaranteeing that \Cref{ass:identifiabilityOfConditionalMeanScores}, i.e.,
\Cref{eq:GaussianIdentifiabilityGapExampleSectionBoundIDGap} holds. However, 
postitivity of the identifiability gap for a single model
is not 
sufficient
for uniform consistency or consistency under vanishing identifiability.

For consistency under vanishing identifiability we need to ensure that the identifiability gap vanishes at a slower rate than $1/\sqrt{n}$; see \Cref{thm:ConsistencyVanishing}. Similarly, for uniform consistency over a class of causal additive noise models $\Theta \subset \cT_p\times \cM^p \times \cP^p$, one needs the existence of a strictly positive constant $c>0$ uniformly lower bounding the identifiability gap, i.e., \begin{align} \label{eq:uniformlowerboundidentifiabilitygap}
\inf_{\theta\in \Theta} \min_{\tilde \cG\in \cT_p\setminus \{\cG\}}	\lG(\tilde \cG) - \lG(\cG) > c.
\end{align}
The identifiability gap is an involved quantity.
In this section, we 
derive a 	%
lower bound 
that 
is based on local properties of the underlying structural causal models (such as the ability to reverse edges), using 
information-theoretic quantities.

We first consider the special cases of 
bivariate models (Section~\ref{sec:Bivariate}) and
multivariate Markov equivalent trees (Section~\ref{sec:ScoreGapMarkovEquivalentGraphs}) and then turn to general trees (\Cref{sec:ScoreGapGeneralGraphs}).
However, before we venture into the derivation of the specific lower bounds we first examine the connection between the identifiability gaps associated with the different score 
functions.  	In this section, 
we 
assume that $X\sim P_X$ is generated by a structural causal additive tree model with $\E\|X\|^2<\i$ such that the local Gaussian, entropy and conditional entropy scores are well-defined. We neither
assume that $\theta$ is a restricted structural causal additive model, i.e., $\theta\in \Theta_R$, nor strict positivity of the identifiability  gap, i.e.,  \Cref{ass:identifiabilityOfConditionalMeanScores}. 
The following result shows that the local node-wise score gaps associated with the different score functions are ordered.
\begin{restatable}[]{lemma}{ScoreOrderings}
\label{lm:ScoreOrderings}
For any $\tilde \cG\in \cT_p$ 
and	for all $i\in V$
\begin{align*} %
\lCE(\tilde \cG,i)-\lCE( \cG,i) \leq \lE(\tilde \cG,i)-\lE(\cG,i).
\end{align*}
If the underlying model is an causal additive tree model with Gaussian noise, then 
\begin{align*} %
\lE(\tilde \cG,i)-\lE(\cG,i)  \leq \lG(\tilde \cG,i) -\lG(\cG,i).
\end{align*}	
\end{restatable}
It follows that the full graph score gaps and identifiability gaps associated with the different score functions satisfy a similar ordering. 
Thus, given that the underlying model is an causal additive tree model with Gaussian noise, a strictly positive entropy identifiability gap implies that the Gaussian identifiability gap is strictly positive. It is, however, not possible to establish strict positivity of the conditional entropy identifiability gap; see \Cref{rmk:ConditionalEntropyRebane} in \Cref{sec:AppDetails}.  Therefore, we focus on establishing a lower bound for the entropy identifiability gap that is tighter than that given by the conditional entropy identifiability gap.

In general, 
we cannot use node-wise comparisons of 
the scores of two graphs to bound 
the 
identifiability gap
(the reason is that in general a node receives a better score in a graph, where it has a parent,
compared to a graph, where it does not;
see \Cref{ex:negativelocalscoregap} in \Cref{sec:AppDetails} for a formal argument).
We start by analyzing the identifiability gap in models with two variables.

\subsection{Bivariate Models} \label{sec:Bivariate}
We now consider two nodes $V=\{X,Y\}$, 
and 
graphs $\cT_2 = \{(X\to Y), (Y\to X)\}$. 
Without loss of generality assume that $(X,Y)\in \cL^2(P)$ is generated by an additive noise SCM $\theta =(\cG,(f_i),P_N)$ with causal graph $\cG= (X\to Y)\in \cT_2$ to which the only alternative graph is  $\tilde \cG= (Y\to X)$. That is,
\begin{align} \label{eq:bivariatesetup}
X:= N_X, \quad 	Y :=f(X)+N_Y,
\end{align}
where $(N_X,N_Y)\sim P_N \in \cP^2$. %
The bivariate entropy identifiability gap,
which we will later refer to as the edge reversal entropy score gap, 
is defined as
\begin{align*}
\Delta \lE ( X \lra Y) :&= \lE(\tilde \cG)- \lE(\cG)  \\
&= h(Y) + h(X-\E[X|Y]) - h(X)-h(Y-\E[Y|X]),
\end{align*}
where the fully drawn arrow symbolizes the true causal relationship and the dashed arrow the alternative. The following lemma simplifies the bivariate entropy identifiability gap to a single mutual information between the  effect and the residual of the minimum mean squared prediction error regression of cause on the effect.
\begin{restatable}[]{lemma}{EdgeReversal}
\label{lm:EdgeReversal} Consider the bivariate setup of \Cref{eq:bivariatesetup} and assume that $f(X)$ has density. It holds that	\begin{align*}
\Delta \lE ( X \lra Y) = I(X-\E[X|Y];Y)\geq 0.
\end{align*}
\end{restatable}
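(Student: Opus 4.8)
The plan is to show directly that $\Delta\lE(X\lra Y)$ equals the mutual information $I(R;Y)$, where I abbreviate $R:=X-\E[X|Y]$; nonnegativity then follows from the standard fact $I(R;Y)\ge 0$. The starting point is to simplify the two ``causal-direction'' terms in $\Delta\lE(X\lra Y)=h(X-\E[X|Y])+h(Y)-h(X)-h(Y-\E[Y|X])$. Since $P_N\in\cP^2$ is a product of mean-zero measures, $N_Y$ is independent of $X=N_X$ and $\E[N_Y]=0$, so $\E[Y|X]=f(X)$ and hence $Y-\E[Y|X]=N_Y$; also $X=N_X$. It therefore suffices to prove
\begin{align*}
 I(R;Y)=h(R)+h(Y)-h(N_X)-h(N_Y).
\end{align*}

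For this I would combine the two chain-rule expansions of mutual information with the translation invariance of (conditional) differential entropy. First, $I(R;Y)=h(R)-h(R\,|\,Y)$. Because $R=X-g(Y)$ for the measurable function $g(y):=\E[X|Y=y]$, conditionally on $Y=y$ the variable $R$ is a deterministic shift of $X$, so $h(R\,|\,Y=y)=h(X\,|\,Y=y)$ for $P_Y$-almost every $y$, and integrating gives $h(R\,|\,Y)=h(X\,|\,Y)=h(X)-I(X;Y)$. Next, expanding $I(X;Y)$ the other way, $I(X;Y)=h(Y)-h(Y\,|\,X)$, and since $Y=f(X)+N_Y$ with $N_Y$ independent of $X$, conditionally on $X=x$ the variable $Y$ is a shift of $N_Y$ by the constant $f(x)$, so $h(Y\,|\,X=x)=h(N_Y)$ for all $x$ and hence $h(Y\,|\,X)=h(N_Y)$. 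Putting these together,
\begin{align*}
 h(R\,|\,Y)=h(X)-\big(h(Y)-h(N_Y)\big)=h(N_X)-h(Y)+h(N_Y),
\end{align*}
using $h(X)=h(N_X)$, and therefore $I(R;Y)=h(R)-h(R\,|\,Y)=h(R)+h(Y)-h(N_X)-h(N_Y)$; together with the simplification above this is exactly $\Delta\lE(X\lra Y)$.

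I expect the only genuine obstacle to be the measure-theoretic bookkeeping rather than any conceptual step. All differential entropies appearing are well-defined because $N_X$ and $N_Y$ have Lebesgue densities (as $P_N\in\cP^2$), so $(X,Y)=(N_X,\,f(N_X)+N_Y)$ and hence $Y$ have densities, while $R=X-\E[X|Y]$ has a density by the standing assumption of \Cref{sec:scorefunctions} (and the hypothesis that $f(X)$ has a density), so the conditional laws used above are almost surely absolutely continuous. The translation-invariance statements $h(R\,|\,Y=y)=h(X\,|\,Y=y)$ and $h(Y\,|\,X=x)=h(N_Y)$ require only measurability of $g$ and $f$ — no differentiability, since a shift has unit Jacobian — and all identities are read in the extended reals, so the conclusion holds precisely when the scores of \Cref{def:ScoreFunctions} are well-defined.
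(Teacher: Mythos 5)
Your proof is correct and follows essentially the same route as the paper's: both rest on the translation invariance $h(X-\E[X|Y]\mid Y)=h(X\mid Y)$, the additive-noise identity $h(Y\mid X)=h(N_Y)$, and the chain-rule expansions of mutual information. The only difference is cosmetic — you expand $I(X-\E[X|Y];Y)$ directly and thereby skip the paper's intermediate observation that $I(Y;f(X))=I(Y;X)$, which makes your version marginally shorter.
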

Thus, the causal graph is identified in a bivariate setting if one maintains dependence between the predictor and minimum mean squared error regression residual in the anti-causal direction. This result is in accordance with the previous identifiability results. For example, in the linear additive Gaussian noise case, 
$I(X-\E[X|Y];Y) = 0$. 
Consequently, 
the causal graph is not identified from the entropy score function.

Whenever the conditional mean in the anti-causal direction vanishes, e.g., with symmetric causal function and symmetric noise distribution, it is possible to derive a more explicit lower bound with more intuitive sufficient conditions for identifiability of the causal graph. 
\begin{restatable}[]{proposition}{EdgeReversalSymmetric} \label{lm:EdgeReversalSymmetric}
Consider the bivariate setup of \Cref{eq:bivariatesetup} and assume that $f(X)$ has density. If the reversed direction conditional mean $\E[X|Y]$ almost surely vanishes (e.g., because $f$, $X$ and $N_Y$ are symmetric), then
\begin{align*}
\Delta \lE ( X \lra Y) = I(X;f(X)+N_Y),
\end{align*}
which is strictly positive if and only if $X\not \independent f(X)+N_Y$. 
In addition, we have the following statements.
\begin{itemize}
\item[(a)] Let $f(X)^\mathrm{G}$ and $N_Y^{\mathrm{G}}$ be independently normally distributed with the same mean and variance as $f(X)$ and $N_Y$, respectively. If $D_{\mathrm{KL}}(f(X)\|f(X)^\mathrm{G}) \leq D_{\mathrm{KL}}(N_Y\|N_Y^\mathrm{G})$, 
then
\begin{align*}
\Delta \lE ( X \lra Y) \geq \frac{1}{2}\log\lp 1+ \frac{\Var(f(X))}{\Var(N_Y)}\rp.
\end{align*}
\item[(b)] If the density of $f(X)+N_Y$ 
is log-concave, then 
\begin{align*}
\Delta \lE ( X \lra Y) \geq \frac{1}{2}\log \lp \frac{2}{\pi e} + \frac{2}{\pi e} \frac{\Var(f(X))}{\Var(N_Y)} \rp.
\end{align*}
This lower bound is 
non-trivial only if $$\Var(f(X)) > (\pi e /2-1) \Var(N_Y) \approx 3.27 \Var(N_Y).$$
\end{itemize}
\end{restatable}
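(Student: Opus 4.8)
The starting point is Lemma \ref{lm:EdgeReversal}, which gives $\Delta \lE(X\lra Y) = I(X-\E[X|Y];Y)$. Under the hypothesis that $\E[X|Y]$ vanishes almost surely, this simplifies to $I(X;Y) = I(X; f(X)+N_Y)$, since $Y = f(X)+N_Y$ and mutual information is invariant under (the identity reparametrization of) $Y$. The symmetry remark follows because if $f$, $X=N_X$ and $N_Y$ are all symmetric about zero, then $(X,Y)\eqd(-X,Y)$, so $\E[X|Y] = \E[-X|Y] = -\E[X|Y]$ a.s., hence $\E[X|Y]=0$ a.s. That $I(X;f(X)+N_Y)>0$ iff $X\not\independent f(X)+N_Y$ is the standard fact that mutual information vanishes exactly under independence (using the assumed existence of densities so that the information is well-defined and non-negative).

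For part (a), the plan is to lower bound $I(X;f(X)+N_Y)$ by rewriting it via entropies. Since $X = N_X$ is independent of $N_Y$ and, conditionally on $X$, $Y = f(X)+N_Y$ has the law of $N_Y$ shifted, we get $I(X;Y) = h(Y) - h(Y|X) = h(f(X)+N_Y) - h(N_Y)$. Now compare $f(X)$ and $N_Y$ to their Gaussian surrogates $f(X)^{\mathrm G}$, $N_Y^{\mathrm G}$ with matched first two moments. The entropy power inequality gives $e^{2h(f(X)+N_Y)} \geq e^{2h(f(X))} + e^{2h(N_Y)}$, but to get a clean bound in terms of variances I would instead use $h(f(X)+N_Y) \geq h(f(X)^{\mathrm G} + N_Y)$ — wait, that direction needs care. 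The cleaner route: write $h(Z) = h(Z^{\mathrm G}) - D_{\mathrm{KL}}(Z\|Z^{\mathrm G}) = \tfrac12\log(2\pi e\,\Var(Z)) - D_{\mathrm{KL}}(Z\|Z^{\mathrm G})$ for any $Z$ with a density. Apply this to $Z = f(X)+N_Y$ and to $Z=N_Y$:
\begin{align*}
I(X;Y) &= \tfrac12\log(2\pi e(\Var(f(X))+\Var(N_Y))) - D_{\mathrm{KL}}(f(X)+N_Y\|(f(X)+N_Y)^{\mathrm G})\\
&\quad - \tfrac12\log(2\pi e\,\Var(N_Y)) + D_{\mathrm{KL}}(N_Y\|N_Y^{\mathrm G})\\
&= \tfrac12\log\Big(1+\tfrac{\Var(f(X))}{\Var(N_Y)}\Big) + D_{\mathrm{KL}}(N_Y\|N_Y^{\mathrm G}) - D_{\mathrm{KL}}(f(X)+N_Y\|(f(X)+N_Y)^{\mathrm G}),
\end{align*}
using that the variance of $f(X)+N_Y$ is $\Var(f(X))+\Var(N_Y)$ by independence, and that its matched Gaussian has exactly that variance. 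It then remains to show $D_{\mathrm{KL}}(f(X)+N_Y\|(f(X)+N_Y)^{\mathrm G}) \leq D_{\mathrm{KL}}(f(X)\|f(X)^{\mathrm G})$, i.e., that convolving with the independent noise $N_Y$ (and matching Gaussians) does not increase the relative entropy to the matched Gaussian. This is a convolution/Gaussian-smoothing monotonicity statement for non-Gaussianness; combined with the hypothesis $D_{\mathrm{KL}}(f(X)\|f(X)^{\mathrm G})\le D_{\mathrm{KL}}(N_Y\|N_Y^{*})$ (here $N_Y^*=N_Y^{\mathrm G}$) the two KL terms cancel favorably and the claimed bound drops out.

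For part (b), the plan is to again use $I(X;Y) = h(f(X)+N_Y) - h(N_Y)$ and bound the two entropies separately. For the numerator: a log-concave random variable $Z$ satisfies $h(Z) \geq \tfrac12\log\big(\tfrac{2}{\pi e}\cdot 2\pi e\,\Var(Z)\big)$... more precisely the sharp bound $\Var(Z) \le e^{2h(Z)}/(2\pi e)$ fails for log-concave; rather one has the reverse-type bound $e^{2h(Z)} \geq c\,\Var(Z)$ with $c = 2/(\pi e)$ — this is the known fact that among log-concave densities with given variance, entropy is minimized (up to the constant $2/(\pi e)$) by the uniform/exponential-type extremes, giving $h(Z)\geq \tfrac12\log(\tfrac{2}{\pi e}\Var(Z))$. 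For the denominator, upper bound $h(N_Y) \leq \tfrac12\log(2\pi e\,\Var(N_Y))$ by the maximum-entropy property of the Gaussian. Subtracting,
\begin{align*}
I(X;Y) \geq \tfrac12\log\Big(\tfrac{2}{\pi e}(\Var(f(X))+\Var(N_Y))\Big) - \tfrac12\log(2\pi e\,\Var(N_Y)) = \tfrac12\log\Big(\tfrac{1}{(\pi e)^2}\Big(1+\tfrac{\Var(f(X))}{\Var(N_Y)}\Big)\Big),
\end{align*}
which does not match the stated $\tfrac12\log(\tfrac{2}{\pi e}+\tfrac{2}{\pi e}\tfrac{\Var(f(X))}{\Var(N_Y)})$ — so the denominator bound must be sharpened: since log-concavity is preserved under convolution, $f(X)+N_Y$ is log-concave, but more to the point $N_Y$ being log-concave too gives $h(N_Y) \le \tfrac12\log(2\pi e\,\Var(N_Y))$ still; the discrepancy signals that one should instead apply the log-concave lower bound to $f(X)+N_Y$ and the Gaussian upper bound to $N_Y$ but keep track that $h(f(X)+N_Y)-h(N_Y) = h(f(X)+N_Y) - h(N_Y)$ and use $h(N_Y)\leq \tfrac12\log(2\pi e \Var(N_Y))$, yielding exactly $\tfrac12\log\big(\tfrac{2}{\pi e}\cdot\tfrac{\Var(f(X))+\Var(N_Y)}{2\pi e\,\Var(N_Y)}\cdot\text{(fix constant)}\big)$; I would reconcile the constants by using the precise log-concave entropy bound $h(Z) \ge \tfrac12\log(2\Var(Z))$ (attained in the limit by two-sided exponentials, for which $e^{2h} = 2e^2\Var$... ) — in any case the correct constant is pinned down by the extremal log-concave law, and then the non-triviality threshold $\Var(f(X)) > (\tfrac{\pi e}{2}-1)\Var(N_Y)$ is just the condition that the argument of the logarithm exceeds $1$. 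The main obstacle I anticipate is the convolution-monotonicity of relative entropy to the matched Gaussian needed in part (a), and getting the sharp entropy constant for log-concave densities right in part (b); both are delicate inequalities where the constants matter and a naive application of the entropy power inequality or the maximum-entropy bound gives something slightly weaker than claimed.
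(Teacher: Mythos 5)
Your reduction of $\Delta\lE(X\lra Y)$ to $I(X;Y)=h(f(X)+N_Y)-h(N_Y)$ and the equivalence with $X\not\independent f(X)+N_Y$ match the paper, and the overall architecture of (a) and (b) (bound the two entropies separately) is also the paper's. However, both parts hinge on a quantitative ingredient that you either misstate or fail to pin down, so as written the proof does not go through.

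In (a), your KL decomposition $I(X;Y)=\tfrac12\log(1+\Var(f(X))/\Var(N_Y))+D_{\mathrm{KL}}(N_Y\|N_Y^{\mathrm G})-D_{\mathrm{KL}}(f(X)+N_Y\|(f(X)+N_Y)^{\mathrm G})$ is correct, but the lemma you propose to close it --- that convolving with independent noise does not increase the relative entropy to the matched Gaussian, i.e.\ $D_{\mathrm{KL}}(A+B\|(A+B)^{\mathrm G})\le D_{\mathrm{KL}}(A\|A^{\mathrm G})$ --- is false in general: take $A$ Gaussian and $B$ non-Gaussian, so the left side is positive and the right side is zero (and the proposition's hypothesis $D_{\mathrm{KL}}(A\|A^{\mathrm G})\le D_{\mathrm{KL}}(B\|B^{\mathrm G})$ is satisfied). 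What your decomposition actually needs is $D_{\mathrm{KL}}(A+B\|(A+B)^{\mathrm G})\le D_{\mathrm{KL}}(B\|B^{\mathrm G})$, which does follow from the entropy power inequality: writing $D_Z:=D_{\mathrm{KL}}(Z\|Z^{\mathrm G})$ and $\lambda=\Var(A)/(\Var(A)+\Var(B))$, the EPI plus Jensen give $D_{A+B}\le\lambda D_A+(1-\lambda)D_B\le D_B$ under the hypothesis $D_A\le D_B$. The paper sidesteps all of this by directly citing Lemma C.1 of \cite{silva2009unified}, which states $I(f(X)+N_Y;f(X))\ge I(f(X)^{\mathrm G}+N_Y^{\mathrm G};f(X)^{\mathrm G})$ under the KL ordering, and then computes the Gaussian mutual information. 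In (b), the missing ingredient is the sharp entropy--variance inequality for log-concave densities: the paper invokes Theorem 3 of \cite{marsiglietti2018lower}, $h(Z)\ge\tfrac12\log(4\Var(Z))$, applied to $Z=f(X)+N_Y$, and $4/(2\pi e)=2/(\pi e)$ produces exactly the stated constant; your candidate constants ($2/(\pi e)$ and $2$ in place of $4$) are both too weak to recover the claimed bound, and your proposal never resolves this. The non-triviality threshold is then, as you say, just the condition that the argument of the logarithm exceeds one.
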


Thus, if the conditional mean $\E[X|Y]$ in the anti-causal direction vanishes, then under certain conditions, the causal direction is identified by the entropy score function (as long as $\Var(f(X))$ is sufficiently large relative to $\Var(N_Y)$).
The edge reversal score gap for the Gaussian score is given by
\begin{align*}
\Delta \lG( X \lra Y) :=& \, \frac{1}{2}\log \lp \frac{\Var(X-\E[X|Y])}{\Var(X)} \rp - \frac{1}{2}\log \lp\frac{\Var(Y-\E[Y|X])}{\Var(Y)} \rp \\
=&\, \frac{1}{2}\log \lp \frac{\Var(X-\E[X|Y])}{\Var(X)} \rp + \frac{1}{2}\log \lp 1 + \frac{\Var(f(X))}{\Var(N_Y)} \rp,
\end{align*}
which reduces to the lower bound in point (a) of \Cref{lm:EdgeReversalSymmetric} if the conditional mean $\E[X|Y]$ in the anti-causal direction vanishes.
\begin{example} Consider the bivariate setup of \Cref{eq:bivariatesetup}.
Suppose that the causal function $f$ is a quadratic function $f(x)=\alpha x^2 + \beta$ for some $\alpha,\beta,\in\R$ and that $ N_X\sim \cN(0,\sigma_X^2)$ and $N_Y\sim \cN(0,\sigma_Y^2)$. It holds that $E[X|Y]$ vanishes, and the bivariate Gaussian identifiability gap reduces to \begin{align*}
\Delta \lG( X \lra Y) = \frac{1}{2}\log \lp 1 + 2\alpha^2\frac{ \sigma_X^4}{\sigma_Y^2} \rp.
\end{align*}
\end{example}

\subsection{Multivariate Markov Equivalent Trees} \label{sec:ScoreGapMarkovEquivalentGraphs}
Two Markov equivalent trees 
differ in precisely  one directed path 
that is reversed in one graph relative to the other.\footnote{To see this, note that any two directed trees are Markov equivalent if and only if they satisfy the exact same $d$-separations or equivalently they share the same skeleton (there are no v-structures in directed trees). Distinct directed trees sharing the same skeleton must have distinct root nodes. Consequently, there exist exactly one directed path in $\cG$ from $\root{\cG}$ to $\root{\tilde \cG}$ that is reversed in $\tilde \cG$; see \Cref{lm:MarkovEquivTreesPathReversal}}
The entropy score gap of 
Markov equivalent trees 
therefore reduces to the binary case. 
\begin{restatable}[]{proposition}{MarkovEquivTreesScoreGap}
\label{lm:MarkovEquivTreesScoreGap}
Consider any $ \tilde \cG \in \cT_p\setminus \{\cG\}$ that is Markov equivalent to the causal tree~$\cG$. 
Let 
$c_1\to \cdots \to c_r$ be the unique directed path in $\cG$ that is reversed in $\tilde \cG$. 
Then
\begin{align*}
\lE(\tilde \cG) - \lE(\cG) & = \sum_{i=1}^{r-1} \Delta \lE (c_i\lra c_{i+1}) \geq \min_{1\leq i \leq r-1} \Delta \lE (c_i\lra c_{i+1}).
\end{align*}
\end{restatable}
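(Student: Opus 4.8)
The plan is to reduce this multivariate entropy score gap to a telescoping sum of bivariate edge-reversal gaps and then invoke nonnegativity of the latter. First I would fix the combinatorial picture using \Cref{lm:MarkovEquivTreesPathReversal} and the observation of the footnote: since directed trees contain no v-structures, $\cG$ and $\tilde \cG$ being Markov equivalent means they share a skeleton, and as they have distinct root nodes there is a unique directed path $c_1 \to \cdots \to c_r$ in $\cG$ from $\root{\cG} = c_1$ to $\root{\tilde \cG} = c_r$ whose orientation is reversed in $\tilde \cG$ (to $c_r \to \cdots \to c_1$), while every edge off this path keeps its orientation; note $r \ge 2$ since $\tilde \cG \neq \cG$. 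Because a node in a directed tree has at most one parent, every node $i \notin \{c_1,\dots,c_r\}$ has the same parent set in $\cG$ and $\tilde \cG$, so by \Cref{def:ScoreFunctions} the corresponding local entropy scores cancel in the difference, leaving $\lE(\tilde \cG) - \lE(\cG) = \sum_{j=1}^{r}\bigl(\lE(\tilde \cG,c_j) - \lE(\cG,c_j)\bigr)$.

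Next I would write out the path contributions. In $\cG$ we have $\PAg{\cG}{c_1} = \emptyset$ and $\PAg{\cG}{c_j} = \{c_{j-1}\}$ for $2 \le j \le r$, hence $\lE(\cG,c_1) = h(X_{c_1})$ and $\lE(\cG,c_j) = h(X_{c_j} - \E[X_{c_j}|X_{c_{j-1}}])$; symmetrically, in $\tilde \cG$ we have $\PAg{\tilde \cG}{c_r} = \emptyset$ and $\PAg{\tilde \cG}{c_j} = \{c_{j+1}\}$ for $1 \le j \le r-1$, hence $\lE(\tilde \cG,c_r) = h(X_{c_r})$ and $\lE(\tilde \cG,c_j) = h(X_{c_j} - \E[X_{c_j}|X_{c_{j+1}}])$. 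Substituting, reindexing the $\cG$-contribution ($j \mapsto j+1$), and using the telescoping identity $h(X_{c_r}) - h(X_{c_1}) = \sum_{j=1}^{r-1}\bigl(h(X_{c_{j+1}}) - h(X_{c_j})\bigr)$, the difference collapses to
\[
\lE(\tilde \cG) - \lE(\cG) = \sum_{j=1}^{r-1}\Bigl(h(X_{c_{j+1}}) + h\bigl(X_{c_j} - \E[X_{c_j}|X_{c_{j+1}}]\bigr) - h(X_{c_j}) - h\bigl(X_{c_{j+1}} - \E[X_{c_{j+1}}|X_{c_j}]\bigr)\Bigr).
\]
Here I would observe that each summand is exactly the bivariate gap $\Delta\lE(c_j \lra c_{j+1})$, with causal orientation $c_j \to c_{j+1}$ inherited from $\cG$: the marginal law of $(X_{c_j},X_{c_{j+1}})$ is that of a bivariate additive-noise SCM with cause $c_j$, since $X_{c_{j+1}} = f_{c_{j+1}}(X_{c_j}) + N_{c_{j+1}}$ with $N_{c_{j+1}}$ independent of $X_{c_j}$ (as $c_{j+1}$ is a strict descendant, hence not an ancestor, of $c_j$).

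It then remains to bound this sum below. Applying \Cref{lm:EdgeReversal} to each pair gives $\Delta\lE(c_j \lra c_{j+1}) = I\bigl(X_{c_j} - \E[X_{c_j}|X_{c_{j+1}}];\, X_{c_{j+1}}\bigr) \ge 0$, so every term of the sum is nonnegative and therefore $\sum_{j=1}^{r-1}\Delta\lE(c_j \lra c_{j+1}) \ge \min_{1 \le j \le r-1}\Delta\lE(c_j \lra c_{j+1})$ (the index set being nonempty as $r \ge 2$), which is the claim. The only genuinely delicate point is the very first step, the characterization of how two Markov equivalent trees differ; but this is exactly what \Cref{lm:MarkovEquivTreesPathReversal} supplies, so the remaining work is bookkeeping of the telescoping identity together with a routine check that each marginal pair $(X_{c_j},X_{c_{j+1}})$ meets the density hypotheses needed to apply \Cref{lm:EdgeReversal}.
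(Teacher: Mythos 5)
Your proposal is correct and follows essentially the same route as the paper: reduce to the reversed path via \Cref{lm:MarkovEquivTreesPathReversal}, cancel the off-path local scores, telescope the marginal entropies, and identify each summand as a bivariate edge-reversal gap that is nonnegative by \Cref{lm:EdgeReversal}. If anything, your direct node-wise decomposition is slightly cleaner than the paper's bookkeeping (which adds auxiliary entropy sums to both sides and only displays an inequality), since it delivers the stated equality $\lE(\tilde \cG) - \lE(\cG) = \sum_{i=1}^{r-1}\Delta\lE(c_i \lra c_{i+1})$ explicitly; your remarks on why $N_{c_{j+1}} \independent X_{c_j}$ and on the density hypothesis are exactly the checks needed to invoke \Cref{lm:EdgeReversal}.
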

Thus, a lower bound of the entropy score gap that holds uniformly over the Markov equivalence class is given by the smallest possible edge reversal in the causal directed graph:
\begin{align*}
\min_{\tilde \cG \in \mathrm{MEC}(\cG)\setminus \{\cG\}} \lE( \tilde \cG) - \lE(\cG) \geq \min_{(j\to i)\in \cE} \Delta \lE ( j \lra i).
\end{align*}

\subsection{General Multivariate Trees } \label{sec:ScoreGapGeneralGraphs}
We now derive a lower bound of the entropy identifiability gap, i.e., a lower bound of the entropy score gap that holds uniformly over all alternative trees $\cT_p\setminus \{\cG\}$. 
To do so, we exploit a graph reduction technique \citep[introduced by][]{peters2014causal} which 
enables us to reduce the analysis %
to 
three distinct scenarios.  This graph reduction works as follows.
Fix any alternative graph $\tilde \cG\in \cT_p\setminus \{\cG\}$, 
and iteratively remove any node (from both $\cG$ and $\tilde \cG$)
that has no children and the same parents in both $\cG$ and $\tilde \cG$. 
The score gap is unaffected by the graph reduction.%
\footnote{All removed nodes $V\setminus V_R$ have identical incoming edges in both graphs and therefore have identical local scores. That is, for any loss function $l\in\{\lCE,\lE,\lG\}$ we have that
$l(\tilde \cG) - \ell(\cG) = \sum_{i\in V_R} \ell(\tilde \cG,i) - \ell(\cG,i) + \sum_{i\in V\setminus V_R} \ell(\tilde \cG,i) - \ell(\cG,i) 
= \sum_{i\in V_R} \ell(\tilde \cG,i) - \ell(\cG,i) = \ell(\tilde \cG_R) -l(\cG_R)$.
}

Applying 
this iteration scheme,
until no such node can be found,
results in two 
reduced 
graphs $\cG_R=(V_R,\cE_R)$ and $\tilde \cG_R=(V_R,\tilde \cE_R)$. 
These reduced graphs cannot be empty, for that would only happen if $\tilde \cG = \cG$.
Further, they have identical vertices 
but different edges. 
And they can be categorized into one of three cases.
To do so, consider a node $L$ that is a sink node,  i.e., a node without children, in $\cG_R$ and consider its parent in $\cG_R$.
Now, considering $\tilde{\cG}_R$, one of the following conditions must hold: 
the parent is also a parent of $L$ in $\tilde{\cG}_R$ (we then call it $Z$),
the parent is not connected to $L$ in $\tilde{\cG}_R$ (we then call it $W$), or
the parent is a child of $L$ in $\tilde{\cG}_R$ (we then call it $Y$). 
Figure~\ref{fig:reducedSubgraphs} visualizes these three scenarios. 
\begin{figure}[htp] 
\begin{center}
\begin{tabular} {lr}
\begin{tabular}{l}
\begin{tikzpicture}[node distance = 1.15cm, roundnode/.style={circle, draw=black, fill=gray!10, thick, minimum size=7mm},
roundnode/.style={circle, draw=black, fill=gray!10, thick, minimum size=7mm},
outer/.style={draw=gray,dashed,fill=black!1,thick,inner sep=2pt}
]
\node[roundnode] (W) [] {W};
\node[roundnode] (Y) [right = of W] {Y};
\node[roundnode] (Z) [right = of Y] {Z};
\node[roundnode] (L) [below = 0.8cm of Y] {L};
\node[outer] (ANW) [above = 0.8cm of W] {$\ANg{\cG_R}{W}$};
\node[outer] (ANY) [above = 0.8cm of Y] {$\ANg{\cG_R}{Y}$};
\node[outer] (ANZ) [above = 0.8cm of Z] {$\ANg{\cG_R}{Z}$};
\node[roundnode,fill opacity= 0, draw  opacity =0] (dummy2) [below = of L] {};
\node[above,font=\large ] at (current bounding box.north) {subgraph of  $\cG_R$};

\draw[->, line width=0.4mm] (W) -- (L);
\draw[->, line width=0.4mm] (Y) -- (L);
\draw[->, line width=0.4mm] (Z) -- (L);
\draw[->, line width=0.4mm] (ANW) -- (W);
\draw[->, line width=0.4mm] (ANY) -- (Y);
\draw[->, line width=0.4mm] (ANZ) -- (Z);

\end{tikzpicture}
\end{tabular}
&
\begin{tabular}{l}
\begin{tikzpicture}[node distance = 1.15cm, roundnode/.style={circle, draw=black, fill=gray!10, thick, minimum size=9mm},
roundnode/.style={circle, draw=black, fill=gray!10, thick, minimum size=7mm,},
outer/.style={draw=gray,dashed,fill=black!1,thick,inner sep=2pt}
]
\node[roundnode] (D) [] {D};
\node[] (dummy) [right = of D] {};
\node[roundnode] (L) [right = of D] {L};
\node[roundnode] (Z) [right = of L] {Z};
\node[roundnode,fill opacity= 0, draw  opacity =0] (dummy2) [below = 0.8cm of L] {};
\node[roundnode] (Y) [left = of dummy2] {Y};
\node[roundnode,label=center:$O_1$] (O1) [below = 0.8cm of L] {};
\node[roundnode,label=center:$O_k$] (Ok) [right = of O1] {};
\node[outer] (AND) [above  = 0.8cm of D] {$\ANg{\tilde \cG_R}{D}$};
\node[outer] (ANZ) [above = 0.8cm of Z] {$\ANg{\tilde \cG_R}{Z}$};
\node[outer] (DEY) [below = 0.8cm of Y] {$\DEg{\tilde \cG_R}{Y}$};
\node[outer] (DEO1) [below = 0.8cm of O1] {$\DEg{\tilde \cG_R}{O_1}$};
\node[outer] (DEOk) [below = 0.8cm of Ok] {$\DEg{\tilde \cG_R}{O_k}$};
\node[above,font=\large ] at (current bounding box.north) {subgraph of  $\tilde \cG_R$};
\draw[->, line width=0.4mm] (D) -- (L);
\draw[->, line width=0.4mm] (Z) -- (L);
\draw[->, line width=0.4mm] (L) -- (Y);
\draw[->, line width=0.4mm] (L) -- (O1);
\draw[->, line width=0.4mm] (L) -- (Ok);
\draw[->, line width=0.4mm] (AND) -- (D);
\draw[->, line width=0.4mm] (ANZ) -- (Z);
\draw[->, line width=0.4mm] (Y) -- (DEY);
\draw[->, line width=0.4mm] (O1) -- (DEO1);
\draw[->, line width=0.4mm] (Ok) -- (DEOk);
\draw[line width=2pt, line cap=round, dash pattern=on 0pt off 5\pgflinewidth,shorten <=5pt] (O1) -- (Ok);
\end{tikzpicture} 
\end{tabular}
\end{tabular}
\end{center}
\caption{Schematic illustration of parts of two reduced graphs produced by the graph reduction technique described in \Cref{sec:ScoreGapGeneralGraphs}. 
Consider a sink node $L$ in $\cG_R$. Its parent (in $\cG_R$) must either be 
a parent in $\tilde{\cG}_R$, too,
it must be a child in $\tilde{\cG}_R$,
or it is unconnected to $L$ in $\tilde{\cG}_R$.
Thus, exactly one of the sets 
$Z$, $Y$, and $W$ is non-empty.
This case distinction is used to compute the three bounds in \Cref{lm:thm:scoreGapEntropy}.
$D$, $O_1, \ldots, O_k$ denote further (possibly existing) nodes in $\tilde{\cG}_R$. 
}
\label{fig:reducedSubgraphs}
\end{figure}
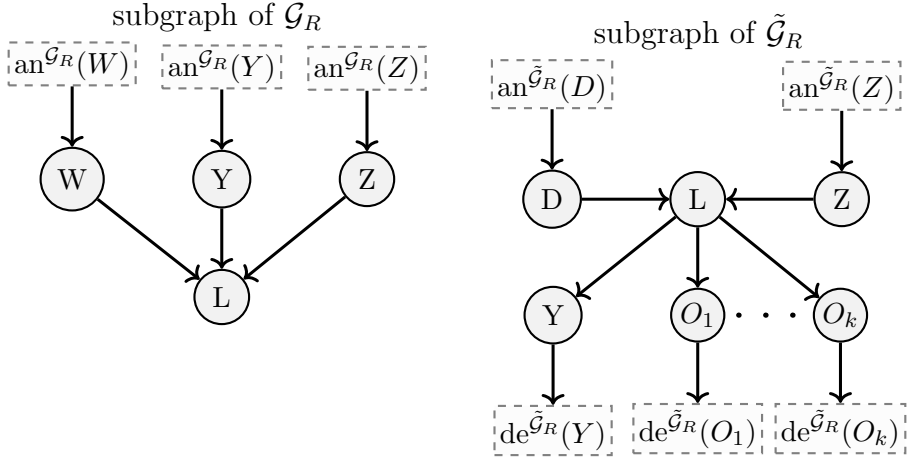

We can now obtain bounds for each of the three case individually.
For the case with a node $Z$ (a `staying parent'), define 
$$
\Pi_Z(\cG) := \left\{(z,l,o)\in V^3 \text{ s.t.\ }  (z\to l)\in \cE \text{ and }
o\in \NDg{\cG}{l}\setminus\{z,l\} \right\}.
$$
The entropy score gap can then be lower bounded by 
$\min_{(z,l,o)\in \Pi_Z(\cG)} I(X_z;X_o|X_l)$
(see \Cref{lm:ScoreGapCaseOne}).
Intuitively, 
$I(X_z;X_o|X_l)$ quantifies the strength of the connection between $z$ and $o$, when conditioning on $l$ (which does not lie on the path between $z$ and $o$). This is a non-local bound in that it does not constrain the length of the path connecting $z$ and $o$. Analyzing or bounding this term might be difficult.
We will see in \Cref{sec:GaussianLocalization} that this part is not needed
for  causal additive tree models with Gaussian noise.

For the case with a node $W$ (`removing parent'), define
$$
\Pi_W(\cG) := \left\{(w,l,o)\in V^3 \text{ s.t.\ }  (w\to l)\in \cE \text{ and } 
o\in (\CHg{\cG}{w} \setminus \{l\})\cup \PAg{\cG}{w}\right\}.
$$
This case results in the lower bound
$\min_{(w,l,o)\in \Pi_W(\cG)} I(X_w;X_l|X_o)$
(see \Cref{lm:ScoreGapCaseTwo}).
Here, $w$ is a parent of $l$ and $o$ is directly connected to $w$.
Intuitively, 
$I(X_w;X_l|X_o)$ quantifies the strength of the edge $w \rightarrow l$. We condition on $o$ but that node is not directly connected to $l$ (only via $w$).
For the first two cases, faithfulness \citep{spirtes2000causation}
implies that these terms are non-zero
and bounding them away from zero reminds of strong faithfulness \citep{ZhangStrongFaithful}. 
However, in the second case, one considers individual edges, which reminds more  of
a strong version of causal minimality
\citep{spirtes2000causation, peters2017elements}.

For the case with a node $Y$ (`parent to child'), 
a lower bound is given by the minimal edge reversal score gap
$\min_{(j\to i) \in \cE} \Delta \lE(j \lra i)$
(see \Cref{lm:ScoreGapCaseThree}).
The term $\Delta \lE(j \lra i)$ measures the identifiability of the direction of an individual edge. It is zero in the linear additive Gaussian noise case, for example. 
We provide more details on the reduced graphs and on the arguments in the three cases in Section~\ref{sec:moredetailsgraphreduction} of \Cref{app:proofs}.

Combining the three bounds from above, we obtain the following theorem. 
\begin{restatable}[]{theorem}{thm:scoreGapEntropy}
\label{lm:thm:scoreGapEntropy}
It holds that
\begin{align} \notag
\min_{\tilde \cG \in \cT_p\setminus \{\cG\}}\lE(\tilde \cG) -\lE(\cG) \geq \min\bigg\{
& \min_{(z,l,o)\in \Pi_Z(\cG)} I(X_z;X_o|X_l), \\
& \min_{(w,l,o)\in \Pi_W(\cG)} I(X_w;X_l|X_o), \notag \\
& \min_{(j\to i) \in \cE} \Delta \lE(j \lra i) \bigg\}. \label{eq:EntropyScoreGapUniformLowerBound}
\end{align}
\end{restatable}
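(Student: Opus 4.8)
The plan is to prove the bound by combining the graph-reduction technique with the three case-specific lemmas (\Cref{lm:ScoreGapCaseOne}, \Cref{lm:ScoreGapCaseTwo}, \Cref{lm:ScoreGapCaseThree}) referenced in \Cref{sec:ScoreGapGeneralGraphs}. Fix an arbitrary alternative tree $\tilde\cG\in\cT_p\setminus\{\cG\}$. First I would invoke the reduction: iteratively delete every node that is childless and has identical parents in $\cG$ and $\tilde\cG$, noting (as in the footnote) that each such deletion leaves the score gap $\lE(\tilde\cG)-\lE(\cG)$ unchanged because the deleted node contributes an identical local score in both graphs. This produces reduced graphs $\cG_R=(V_R,\cE_R)$ and $\tilde\cG_R=(V_R,\tilde\cE_R)$ on a common nonempty vertex set with $\lE(\tilde\cG)-\lE(\cG)=\lE(\tilde\cG_R)-\lE(\cG_R)$.

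Next I would pick a sink node $L$ of $\cG_R$; since $V_R$ is nonempty and $\cG_R$ is a tree, $L$ has a (unique) parent $P$ in $\cG_R$, and $P$ was not removable, so $L$ is not childless-with-matching-parents in $\tilde\cG_R$. A short argument shows exactly one of three mutually exclusive situations holds in $\tilde\cG_R$: $P$ is still a parent of $L$ (call it $Z$), $P$ is not adjacent to $L$ (call it $W$), or $P$ is a child of $L$ (call it $Y$); this is the case split of \Cref{fig:reducedSubgraphs}. In each case I would cite the corresponding lemma to conclude $\lE(\tilde\cG_R)-\lE(\cG_R)$ is at least the relevant information-theoretic quantity, and then observe that the triple appearing there lies in the appropriate index set defined over the \emph{original} graph $\cG$ (since $\cE_R\subseteq\cE$ and non-descendant/child/parent relations in $\cG_R$ are inherited from $\cG$ for the relevant vertices), so the bound is at least the corresponding minimum over $\Pi_Z(\cG)$, $\Pi_W(\cG)$, or $\{(j\to i)\in\cE\}$. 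Hence in all cases $\lE(\tilde\cG)-\lE(\cG)$ is at least the overall minimum of the three terms on the right-hand side of \Cref{eq:EntropyScoreGapUniformLowerBound}. Since $\tilde\cG$ was arbitrary, taking the minimum over $\tilde\cG\in\cT_p\setminus\{\cG\}$ on the left gives the claim.

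The routine parts are the reduction invariance (already justified in the footnote) and checking that the indices produced in each case genuinely belong to the sets $\Pi_Z(\cG)$, $\Pi_W(\cG)$, $\cE$ defined relative to $\cG$ rather than to $\cG_R$ — a matter of verifying that edges and non-descendant relations survive the reduction in the direction we need. The main obstacle, and the real content, lies in the three case lemmas themselves, especially establishing the bound $I(X_z;X_o\mid X_l)$ in the `staying parent' case: here one must carefully expand $\lE(\tilde\cG_R)-\lE(\cG_R)$ using \Cref{lm:EntropyScore} (the cross-entropy / KL representation), regroup the local entropy terms along the path where $\cG_R$ and $\tilde\cG_R$ differ, and extract a single conditional mutual information via the chain rule for KL divergence together with the fact that conditional-mean residuals carry all the relevant dependence. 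I would also make sure the $Y$-case reduces cleanly to a directed-path reversal so that \Cref{lm:MarkovEquivTreesScoreGap}-style telescoping and \Cref{lm:EdgeReversal} apply, yielding the $\min_{(j\to i)\in\cE}\Delta\lE(j\lra i)$ term. Finally, I would double-check the edge cases where one of the three index sets is empty (e.g.\ $\Pi_W(\cG)=\emptyset$): the convention that an empty minimum is $+\infty$ makes the inequality vacuously consistent, and the reduction guarantees at least one case always applies so the left-hand side is still controlled.
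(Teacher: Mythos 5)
Your proposal is correct and follows essentially the same route as the paper: reduce $\cG$ and $\tilde\cG$ by iteratively deleting childless nodes with matching parents (score gap preserved), split on whether the $\cG_R$-parent of a sink node $L$ is a staying parent ($Z$), non-adjacent ($W$), or a child ($Y$) in $\tilde\cG_R$, and invoke \Cref{lm:ScoreGapCaseOne,lm:ScoreGapCaseTwo,lm:ScoreGapCaseThree} for the three cases, checking that the resulting triples lie in $\Pi_Z(\cG)$, $\Pi_W(\cG)$, or $\cE$ of the original graph. The only cosmetic difference is that the paper organizes the case split as a partition of $\cT_p\setminus\{\cG\}$ into three classes and takes minima classwise, whereas you fix one alternative tree and one sink node; both yield the same bound.
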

This result lower bounds the identifiability gap using information-theoretic quantities. 
Corresponding results for the Gaussian score follow immediately 
by \Cref{lm:ScoreOrderings}. 
The last two terms are local properties of the underlying structural causal model; the first term is not.
As seen in Section~\ref{sec:ScoreGapMarkovEquivalentGraphs}, the last term on the right-hand side is required when considering only Markov equivalent trees; if it is non-zero, it allows us to orient all edges in the skeleton. The first two terms (non-zero under faithfulness) are additionally required when the considered trees are not Markov equivalent.

We now turn to the case of causal additive tree models with Gaussian noise innovations. Here, the first term is not needed; the bound then depends only on local properties of the structural causal model.

\subsection{Gaussian Multivariate Trees} \label{sec:GaussianLocalization}
The score gap lower bound in  \Cref{eq:EntropyScoreGapUniformLowerBound} consists of local dependence properties except for the node tuples $\Pi_Z(\cG)$ (\Cref{lm:ScoreGapCaseOne}) that arise when considering alternative graphs 
that result in reduced graphs with a node $Z$ (`staying parents').
However, we show that for additive Gaussian noise models, 
the score gap for such 
alternative graphs 
can be lower bounded by the score gaps already considered in 
alternative graphs with a node $Y$ (`parent to child') and a node $W$ (`removing parent').
Thus, we have the following theorem, 
with a bound 
consisting only of local %
properties of the model.
\begin{restatable}[Gaussian localization of the identifiability gap]{theorem}{GaussScoreGapCaseOne}
\label{thm:GaussScoreGapCaseOne}
For causal additive \linebreak tree models with Gaussian noise, we have that
\begin{align*}
\min_{\tilde \cG \in \cT_p\setminus \{\cG\}}\lG(\tilde \cG) -\lG(\cG)&\geq \min\left\{ \min_{(w,l,o)\in \Pi_W(\cG)} I(X_w; X_l\, | \, X_o),\min_{(j\to i) \in \cE} \Delta \lE( j \lra i)\right\}.
\end{align*}	
\end{restatable}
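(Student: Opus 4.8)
The plan is to follow the proof architecture of \Cref{lm:thm:scoreGapEntropy}. Fix an arbitrary alternative tree $\tilde\cG\in\cT_p\setminus\{\cG\}$, apply the graph reduction of \Cref{sec:ScoreGapGeneralGraphs} to obtain directed trees $\cG_R=(V_R,\cE_R)$ and $\tilde\cG_R=(V_R,\tilde\cE_R)$ on a common vertex set with $\lG(\tilde\cG)-\lG(\cG)=\lG(\tilde\cG_R)-\lG(\cG_R)$, fix a sink $L$ of $\cG_R$, and split into the three cases of \Cref{fig:reducedSubgraphs} according to whether the parent of $L$ in $\cG_R$ is, in $\tilde\cG_R$, again a parent of $L$ (case $Z$), unconnected to $L$ (case $W$), or a child of $L$ (case $Y$). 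It then suffices to bound $\lG(\tilde\cG)-\lG(\cG)$ below, in each case, by one of the two quantities on the right-hand side, and finally to minimize over all $\tilde\cG$.

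Cases $W$ and $Y$ require no new work. In case $W$ the proof of \Cref{lm:ScoreGapCaseTwo} produces a tuple $(w,l,o)\in\Pi_W(\cG)$ with $\lE(\tilde\cG)-\lE(\cG)\ge I(X_w;X_l\mid X_o)$, and in case $Y$ the proof of \Cref{lm:ScoreGapCaseThree} gives $\lE(\tilde\cG)-\lE(\cG)\ge\min_{(j\to i)\in\cE}\Delta\lE(j\lra i)$. Since the noise innovations are Gaussian, \Cref{lm:ScoreOrderings} yields $\lG(\tilde\cG,i)-\lG(\cG,i)\ge\lE(\tilde\cG,i)-\lE(\cG,i)$ for every $i\in V$, hence $\lG(\tilde\cG)-\lG(\cG)\ge\lE(\tilde\cG)-\lE(\cG)$, and in both cases the required bound follows.

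The real work is in case $Z$, where the entropy argument of \Cref{lm:ScoreGapCaseOne} only delivers the non-local term $\min_{(z,l,o)\in\Pi_Z(\cG)}I(X_z;X_o\mid X_l)$, which we want to eliminate. Here $L$ is a sink of $\cG_R$ with parent $Z$, and because the reduction has terminated $L$ is not a sink of $\tilde\cG_R$, so $\tilde\cG_R$ contains $Z\to L$ together with at least one outgoing edge $L\to O_1$; as in the proof of \Cref{lm:thm:scoreGapEntropy} one checks that $Z\to L\in\cE$ and $O_1\in\NDg{\cG}{L}$. The plan is to produce from $\tilde\cG$ an auxiliary alternative tree $\tilde\cG'\in\cT_p\setminus\{\cG\}$ whose reduced pair is of case-$Y$ type --- obtained by a local surgery around $L$ (reversing $Z\to L$ on the relevant sub-path together with a rerouting of the outgoing edges of $L$), or equivalently as a path reversal of $\cG$ making $\tilde\cG'$ Markov equivalent to $\cG$ --- so that \Cref{lm:ScoreGapCaseThree} (or \Cref{lm:MarkovEquivTreesScoreGap}) gives $\lG(\tilde\cG')-\lG(\cG)\ge\min_{(j\to i)\in\cE}\Delta\lE(j\lra i)$, and then to show $\lG(\tilde\cG)-\lG(\cG)\ge\lG(\tilde\cG')-\lG(\cG)$. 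This last inequality is argued node by node: only the local scores at $L$ and at the rerouted nodes $O_1,\dots,O_k$ change, and in the Gaussian setting each local score is $\tfrac12\log$ of a residual variance $\Var(X_i-\E[X_i\mid X_{\PA i}])=\E[\Var(X_i\mid X_{\PA i})]$, so the comparison reduces to a monotonicity statement for conditional variances along the two-edge chain $Z\to L\to O_1$ of $\tilde\cG_R$ versus the corresponding configuration of $\tilde\cG'$, which one controls using that $Z\to L$ is a causal edge, $O_1\in\NDg{\cG}{L}$, and that conditioning on additional variables never increases the minimum mean squared prediction error.

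The main obstacle is precisely this comparison in case $Z$: establishing $\lG(\tilde\cG)\ge\lG(\tilde\cG')$ for the surgically modified tree. The analogous statement is genuinely false for the entropy and conditional-entropy scores --- a node can attain a strictly smaller local score under an anti-causal parent than under its causal parent, cf.\ \Cref{ex:negativelocalscoregap} --- so the argument must exploit that only residual variances enter the Gaussian score, together with the additive-noise structure of the causal model $\cG$ and the additivity of conditional variances. Pinning down the exact surgery that both lands in case $Y$ and admits such a monotone variance comparison, and verifying it uniformly over all admissible case-$Z$ configurations (arbitrarily many rerouted children $O_m$, with $O_1$ possibly an ancestor, a descendant, or merely a correlated relative of the various $O_m$ in $\cG$), is the delicate part. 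Once case $Z$ is settled, taking the minimum of the three per-case bounds over all $\tilde\cG\in\cT_p\setminus\{\cG\}$ gives the stated inequality.
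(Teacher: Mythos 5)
Your handling of cases $W$ and $Y$ is exactly the paper's: invoke \Cref{lm:ScoreGapCaseTwo} and \Cref{lm:ScoreGapCaseThree} for the entropy gap and lift to the Gaussian gap via \Cref{lm:ScoreOrderings}. The gap in your argument is case $Z$, and it is a real one: the surgery you propose is not the one that works, and the step you defer ("the delicate part") is precisely the step that needs a proof. Reversing $Z\to L$ so that $\tilde\cG'$ becomes Markov equivalent to $\cG$ forces you to reverse an entire directed path (otherwise $Z$ acquires two parents), so the local scores change at every node along that path, not just at $L$ and the rerouted children $O_1,\dots,O_k$; your claimed node-by-node comparison then has no monotone structure to exploit, and no inequality of the form $\lG(\tilde\cG)\ge\lG(\tilde\cG')$ with $\tilde\cG'$ Markov equivalent to $\cG$ is available in general. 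Moreover, reducing all of case $Z$ to the edge-reversal term alone is a strictly stronger claim than the theorem, and you give no mechanism for it.

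The paper's argument keeps the staying parent $Z\to L$ in place and instead reroutes the children of $L$: it replaces each edge $L\to O_i$ of $\tilde\cG_R$ by $Z\to O_i$, forming an intermediate graph $\hat\cG_R$. Because $O_i\independent X_L\mid X_Z$ (the unique, collider-free tree path from $O_i$ to $L$ passes through $Z$), the law of total variance gives $\E[(X_{O_i}-\E[X_{O_i}\mid X_L])^2]\ge\E[(X_{O_i}-\E[X_{O_i}\mid X_Z,X_L])^2]=\E[(X_{O_i}-\E[X_{O_i}\mid X_Z])^2]$, so each rerouted local Gaussian score can only decrease; this is where Gaussianity (score $=$ log residual variance) enters, and it is local. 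After rerouting, $L$ is a sink with the same parent $Z$ in both graphs, so it is deleted and the whole reduction is \emph{iterated}; the process terminates in case $W$, case $Y$, or a two-node edge reversal. You also miss the sub-case in which $Z\to O_i$ is already a causal edge of $\cE_R$: there the rerouting is impossible (it would duplicate an edge), and the paper instead uses $O_i\dsep{\tilde\cG_R}Z\mid L$ to extract the bound $I(X_Z;X_{O_i}\mid X_L)$ with $(Z,O_i,L)\in\Pi_W(\cG)$ — this is why the $\Pi_W$ term, and not only the edge-reversal term, must appear in the case-$Z$ analysis. Without the rerouting-plus-iteration mechanism and this sub-case, your case $Z$ is not closed.
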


\section{Simulation Experiments} \label{sec:Simulations}
In this section, we investigate the finite-sample performance of CAT and perform simulation experiments investigating the identifiability gap and its lower bound. In \Cref{sec:StructureLearningForTrees} we compare the performance of CAT to CAM of \cite{CAM} for causal additive tree models with Gaussian and non-Gaussian noise.  In \Cref{sec:ExperimentCATonDAGs} we compare the CAT and CAM for causal discovery on non-tree DAG models (CAT always outputs a directed tree). In \Cref{sec:ExperimentHypothesisTest} we investigate the finite sample power and level of the proposed hypothesis testing procedures. In \Cref{sec:ExperimentIdentifiabilityConstant} we perform simulation experiments that highlight the behavior of the identifiability gap and its corresponding lower bound derived in \Cref{sec:ScoreGap}. 
The code scripts (R) for the simulation experiments, empirical applications and the implementation of CAT and the two testing procedures are available on GitHub (see \Cref{footnodeGithub}).

\subsection{Causal Structure Learning for Trees}\label{sec:StructureLearningForTrees} 
In this section, we compare the performance of the structure learning methods CAT and CAM when employed on additive noise  models with causal graphs given by directed trees. 
\subsubsection{Tree Generation Schemes}
\label{sec:TreeGenerationSchemes}
We employ two different random directed tree generation schemes: Type 1 (many leaf nodes) and Type 2 (many branch nodes). 
\Cref{fig:TreeTypes} illustrates two directed trees generated in accordance with the two generation schemes. 
For more details, see   \Cref{alg:type1tree,alg:type2tree} in Section \ref{app:TreeGeneration} of \Cref{app:Experiments}.
\begin{figure}[H]
\includegraphics[width=\textwidth-20pt]{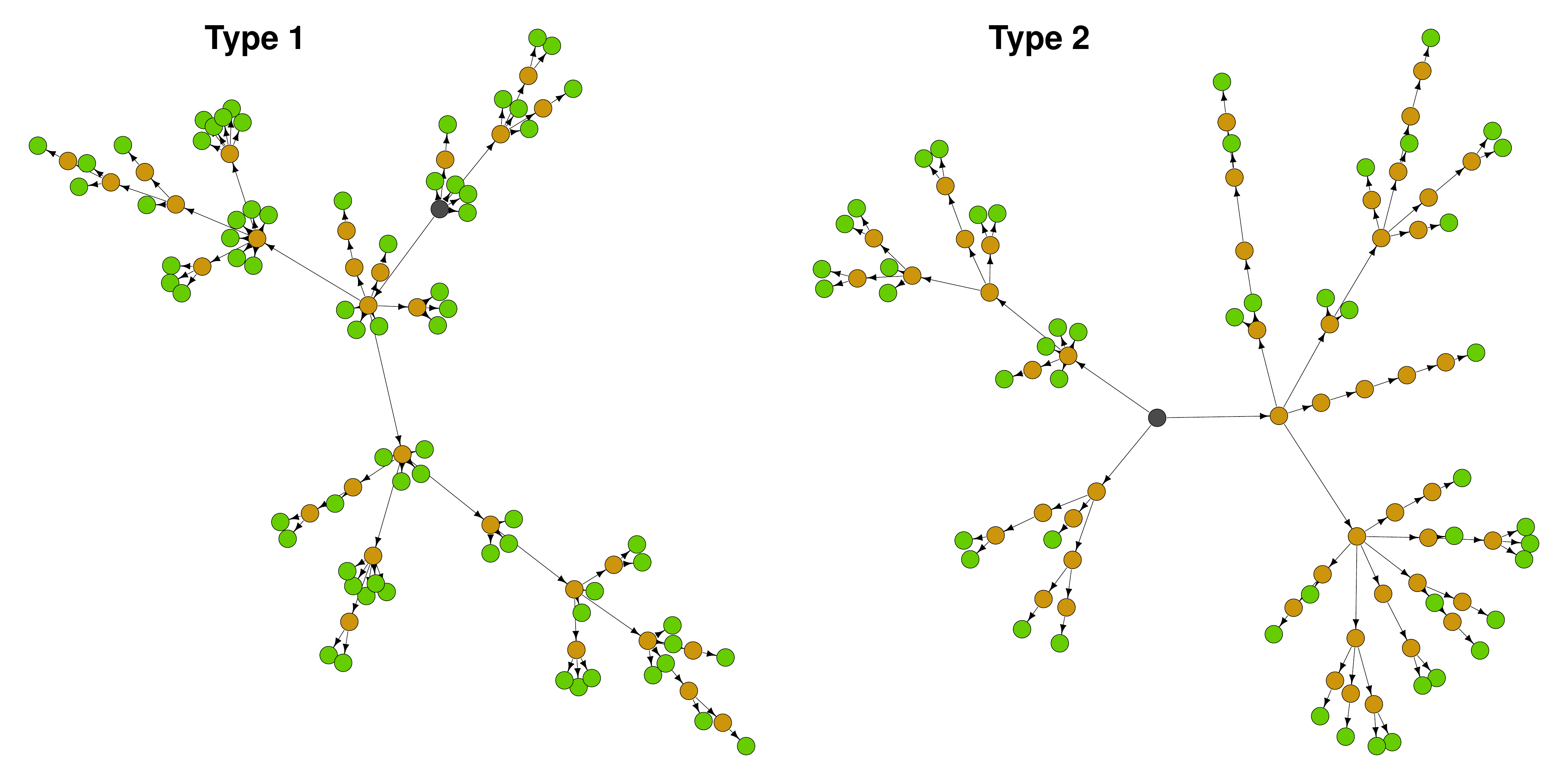}
\caption{Illustration of Type 1 (many leaf nodes) and Type 2 (many branch nodes) directed trees over $p=100$ nodes. The green nodes are leaf nodes, the brown nodes are branch nodes, and the black nodes are root nodes. The Type 1 tree contains 70 leaf nodes, while the Type 2 tree only contains 49 leaf nodes.} \label{fig:TreeTypes}
\end{figure}

\subsubsection{Gaussian Experiment} \label{sec:ExperimentGaussianTrees}
In this experiment, we generate data similarly to the experimental setup of \cite{CAM}. For any given directed tree we generate causal functions by sample paths of Gaussian processes with radial basis function (RBF) kernel and bandwidth parameter of one. Sample paths of Gaussian processes with radial basis function kernels are almost surely infinitely continuous differentiable \citep[e.g.,][]{kanagawa2018gaussian}, non-constant and nonlinear, so they satisfy the requirements of \Cref{lm:RestrictedModelConditionGaussian}. See \Cref{fig:samplepaths} in Section \ref{sec:additionalIllustrations} of \Cref{app:Experiments} for illustrations of random draws of such functions. Root nodes are mean zero Gaussian variables with standard deviation sampled uniformly on $(1,2)$. Furthermore, for each fixed tree and set of causal functions, we introduce at each non-root node additive Gaussian noise with mean zero and standard deviation sampled uniformly on $(1/5,\sqrt{2}/5)$.

We first compare our method CAT with Gaussian score function (CAT.G) against the method CAM of \cite{CAM} on the previously detailed nonlinear additive Gaussian noise tree setup.  We use CAT.G without both cross-fitting and pruning. Note that with cross-fitting the results do not change much but, as expected, cross-fitting yields slightly worse results for small sample sizes (see \Cref{fig:crossfitboxplot} in \Cref{app:Experiments}).  We use the R-package \verb!mgcv! \citep[Mixed GAM Computation Vehicle,][]{wood2022mgcv} with default settings to construct a thin plate regression spline estimate of the conditional expectations \citep{wood2003}.  We use the implementation of  Chu--Liu--Edmonds' algorithm from the R-package \verb!RBGL!.\footnote{The RBGL implementation finds maximum edge weight directed trees and requires all positive edge weights. As such, we take the negative of our edge weights  and shift them all by the absolute value of smallest edge weight. If an edge weight is set to zero this edge can not be chosen. \label{footnode:EdmondsRBGL}}  
CAM is employed with a maximum number of parents set to one (restricting the output to directed trees), without preliminary neighborhood selection and subsequent pruning. We measure the performance of the methods by computing the Structural Hamming Distance \citep[SHD,][]{Tsamardinos2006} and Structural Intervention Distance \citep[SID,][]{peters2015structural} to the causal tree. 

For each system size $p\in\{16,32,64,128\}$ we generate a causal tree, corresponding causal functions and noise variances and sample  $n\in\{50,100,200,500\}$ observations. This is repeated 200 times and the SHD results are summarized in the boxplot of \Cref{fig:boxplot_Gaussian_SHD_Type1and2}. 

\begin{figure}[ht]
	\begin{center}
		\includegraphics[width=\textwidth]{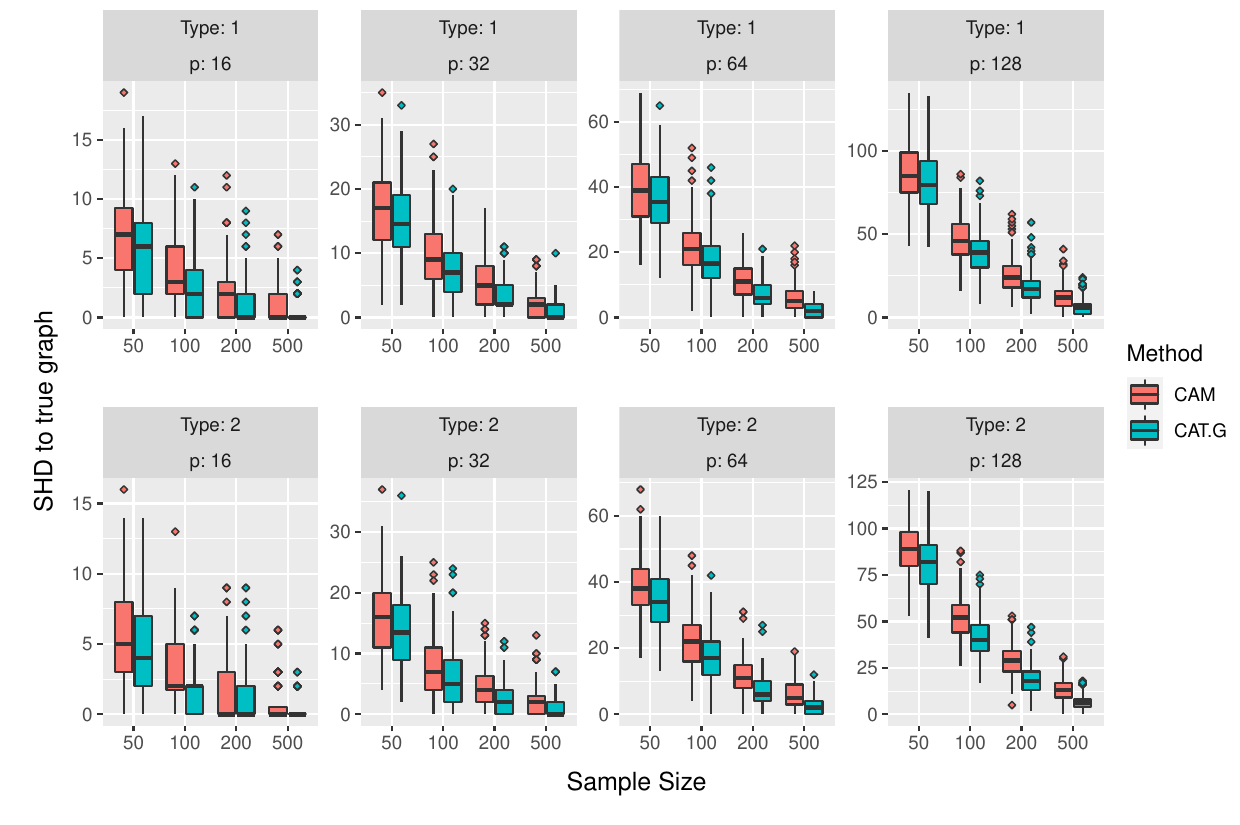}
	\end{center}
	\caption{Causal additive tree models with Gaussian noise: Boxplots of the SHD performance of CAM and CAT.G (Gaussian score) for varying sample sizes, system sizes, and tree types. 
		CAT.G outperforms CAM in a wide range of scenarios.
	}
	\label{fig:boxplot_Gaussian_SHD_Type1and2}
\end{figure}

Both methods perform better on trees of Type 2  than on trees of Type 1. 
CAT.G outperforms CAM in terms of SHD to the true graph both in median distance and IQR length and position for all sample sizes, system sizes and tree types. 
Considering the SID to the causal tree yields similar conclusions; see \Cref{fig:boxplot_Gaussian_SID_Type1and2} in Section \ref{sec:additionalIllustrations} of \Cref{app:Experiments}.
In their default versions, CAM and CAT.G use different estimation techniques of the conditional expectations, but this does not seem to be the source of the performance difference: \Cref{fig:boxplot_Gaussian_CamScores} in Section \ref{sec:additionalIllustrations} of \Cref{app:Experiments} illustrates a similar SHD performance difference when forcing CAT.G to use the edge weights  produced by the CAM implementation.

\subsubsection{Non-Gaussian Experiment}
\label{sec:NonGaussianExperiment}

We  now compare the performance of CAM and CAT with Gaussian (CAT.G) and entropy (CAT.E) score functions in a setup with varying noise distributions. 
The entropy edge weights  used by CAT.E are estimated with the differential entropy estimator of \cite{10.1214/18-AOS1688} as implemented in the CRAN R-package \verb!IndepTest!  \citep[][]{IndepTest}. 
We use the same simulation setup as in  \Cref{sec:ExperimentGaussianTrees}  but now we only consider trees of Type~1 and parameterize the setup by $\alpha >0$, which controls the deviation of the additive noise innovations from a Gaussian distribution. More precisely, we generate the additive noise variables $N_i(\alpha)$ 
as
\begin{align*}
N_i(\alpha) = \mathrm{sign}(Z_i)|Z_i|^\alpha,
\end{align*}
where $Z_i \sim \cN(0,\sigma_i^2)$ with 
$\sigma_i$ sampled uniformly on $(1/5,\sqrt{2}/5)$ or uniformly on $(1,2)$ if $i=\root{\cG}$. For $\alpha=1$ this yields
Gaussian noise, while for alpha $\alpha\not= 1$ the noise is non-Gaussian. We conduct the experiment for all combinations of $\alpha \in \{0.1,0.2,...,4\}$ and sample sizes $n\in\{50,500\}$ for a fixed system size of $p=32$. Each setting is repeated 500 times and the results are illustrated in \Cref{fig:boxplot_NonGaussian_SHD}. 
\begin{figure}[ht]
\begin{center}
\includegraphics[width=\textwidth]{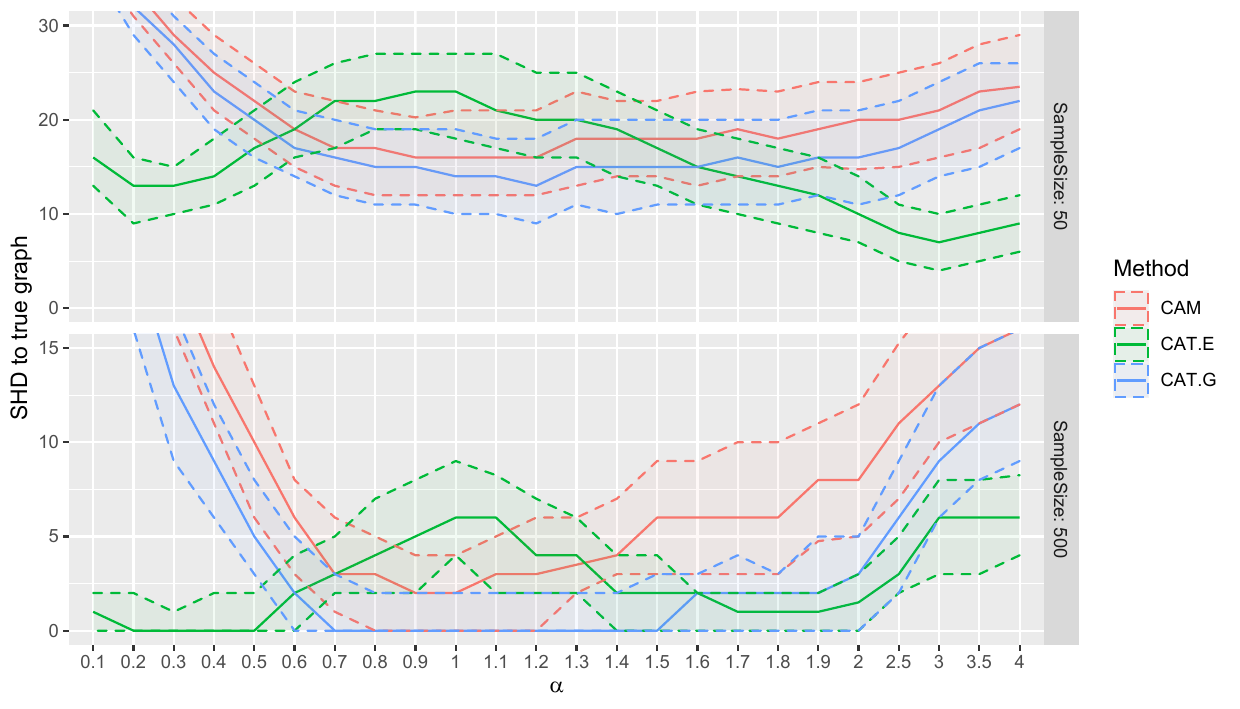}
\end{center}
\caption{Deviations from Gaussianity: 
The parameter $\alpha$ controls the noise deviation from the Gaussian distribution. 
CAT.G and CAT.E are instances of CAT with edge weights  derived from Gaussian and entropy score functions, respectively. 	
The solid lines represent the median SHD and the shaded (dashed) region represents the interquartile range. Using the entropy score yields better results for noise distributions that deviate strongly from Gaussian noise.}
\label{fig:boxplot_NonGaussian_SHD}
\end{figure}	

For Gaussian noise, both CAM and CAT.G outperform CAT.E. This can (at least) be attributed to two factors: (i) CAT.E does not, unlike CAM and CAT.G, explicitly use the Gaussian noise specification and (ii) differential entropy estimation is a difficult statistical problem \citep[see, e.g.,][]{paninski2003estimation,Yanjun2020} 
For small and moderate deviations from Gaussianity,
CAT.G outperforms both CAM and CAT.E.
For larger  deviations, 
CAT.E outperforms both CAT.G and CAM in terms of median SHD.  Finally, we note that CAT.G always outperforms CAM in terms of median SHD.

\subsection{Robustness: CAT on DAGs} \label{sec:ExperimentCATonDAGs} 
This experiment analyzes how CAT performs compared to CAM and the max-min hill-climbing \citep[MMHC,][]{Tsamardinos2006} structure learning method using the Bayesian Gaussian equivalent score \citep[BGe,][]{Geiger1994LearningGN,Geiger1995Gaussian} (the latter method is not expected to work well in our setting, as it does not exploit the additional identifiability).  We compare the performance of these structure learning methods when applied to data generated from an additive Gaussian noise model with a non-tree DAG as a causal graph. More specifically, we analyze the behavior on single-rooted DAGs. 

For any fixed $p\in \N$ we generate a directed tree of Type 1 and for each zero in the upper triangular part of the adjacency matrix we add an edge with 5\% probability. 
The causal functions and Gaussian noise innovations are generated according to the specifications given in the experiment of \Cref{sec:ExperimentMultiVariateIdentifiabilityConstant}. 
The structural assignment for each node is 
additive in each causal parent, 
i.e., for all $i\in\{1,\ldots,p\}$, $X_i := \sum_{j\in \PAg{\cG}{i}} f_{ji}(X_j) + N_i$, 
with $(N_1,\ldots,N_p)$ mutually independent Gaussian distributed noise innovations. For each $p\in\{16,32,64\}$ and sample size $n\in \{50,250,500\}$ we randomly generate 200 single-rooted Gaussian additive models according to the above specifications. For this experiment, we employ CAM with preliminary neighborhood selection and subsequent pruning.

As CAT.G outputs trees, we do not expect it to output the correct graph. In \Cref{fig:SingleRootedDagsSHD} of Section \ref{sec:additionalIllustrations} of \Cref{app:Experiments} we have illustrated boxplot comparisons of the SHD between the estimated and true graph for CAM, CAT.G and the MMHC with BGe score (MMHC.BGe). We see a clear ranking of the methods in terms of SHD performance. The best performance is seen for CAM, followed by CAT.G, and finally the worst performing method is that of MMHC.BGe. Note that the BGe score (and various other Bayesian network learning scores)  is only suitable for jointly Gaussian data, e.g., for linear additive Gaussian noise systems.

\Cref{fig:SingleRootedDagsAncestorRelations} illustrates
the performance in terms of ancestor relations.  For small to moderately sized systems ($p\in \{16,32\}$) CAM slightly outperforms CAT.G in terms of median precision ($\mathrm{TP}/(\mathrm{TP}+\mathrm{FP})$) when classifying causal ancestors. However, for large systems ($p=64$) CAT.G outperforms CAM for median precision. On the other hand, CAM is not limited to trees which allows it to find a more significant proportion of the true ancestor, as seen by median recall ($\mathrm{TP}/\mathrm{P}$) performance. 
MMHC.BGe shows subpar performance in terms of ancestor classification, except for large systems and sample sizes when considering recall.
CAT.G seems to be a viable alternative for practical non-tree applications where precision more important than recall for classifying causal ancestor relations.

\begin{figure}[ht]
	\begin{center}
		\includegraphics[width=\textwidth]{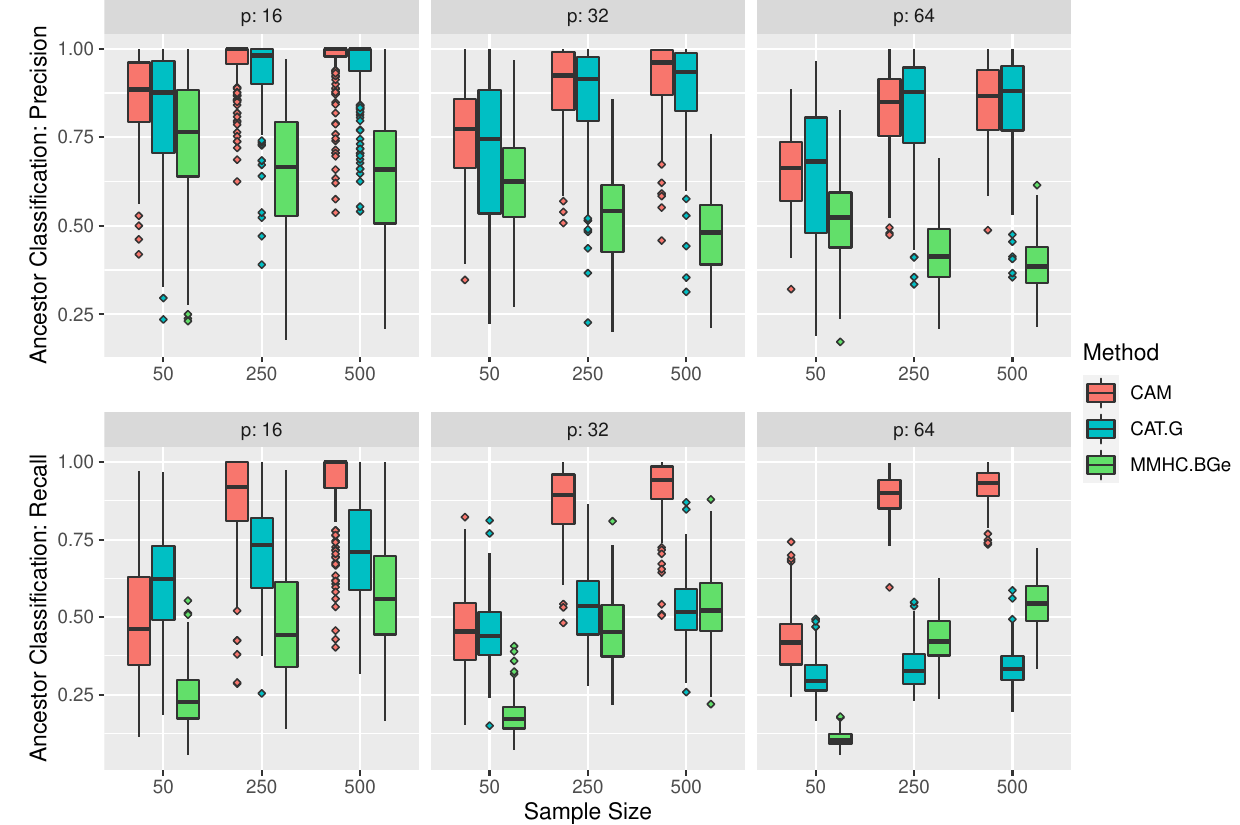}
	\end{center}
	\caption{Evaluating the robustness of CAT by  estimating ancestor relations in non-tree DAGs, see \Cref{sec:ExperimentCATonDAGs}. 
		CAT.G slightly outperforms CAM in terms of true positive rates for large graphs (top) but finds less ancestor relationships (bottom) due to fitting a tree. As expected, CAT.G and CAM outperform MMHC in terms of precision.}
	\label{fig:SingleRootedDagsAncestorRelations}
\end{figure}

\Cref{fig:SingleRootedDagsEdgeRelations} in Section \ref{sec:additionalIllustrations} of \Cref{app:Experiments} illustrates similar comparisons when focusing on causal edges. The precision of CAT.G is larger than that of CAM  only for small sample sizes, while the opposite is true for large sample sizes. 
As expected, and as seen for ancestor relations, 
CAM outperforms both CAT.G and MMHC.BGe in terms of recall.

Finally, while both methods are 
computationally
efficient, CAT has a slightly lower runtime than the greedy search algorithm of CAM. The average runtime of CAM and CAT.G in this experiment for $p=64$ and $n=500$ was 288 and 199 seconds, respectively. For both methods, 
the most time consuming part is 
estimating the conditional expectations that are used to compute the edge weights.

\subsection{Hypothesis Testing} 
\label{sec:ExperimentHypothesisTest}
In this experiment, we experimentally analyze the finite sample size and power properties of the two substructure hypothesis testing procedures proposed in \Cref{sec:testing}. 
We generate the underlying models and data similarly to the experimental setup of the Gaussian noise experiment of \Cref{sec:ExperimentGaussianTrees}. 
We generate a random tree of Type 2 (see \Cref{sec:TreeGenerationSchemes}) of size $p$ with Gaussian process causal functions and Gaussian noise innovations generated in accordance with the description in \Cref{sec:ExperimentGaussianTrees}. Given a finite sample of size $n$ we use the first $\lfloor n/2 \rfloor$ observations to estimate all possible conditional mean functions $x\mapsto \E[X_i | X_j = x]$ for 
$j\not = i$ 
with thin plate regression splines (R-package \verb*|mgcv| with default settings). The remaining $n-\lfloor n/2 \rfloor$ observations are used to estimate the upper and lower Bonferroni corrected confidence bounds $\hat l = (\hat l_{ji})_{j\not = i}$ and $\hat u = (\hat u_{ji})_{j\not = i}$ as defined in \Cref{eq:bonferonnibounds} of \Cref{sec:HypothesisTest}. Using the two testing procedures proposed in  \Cref{alg:HypothesisTestExact,alg:HypothesisTestAsymp} of \Cref{sec:testing}, with a significance level of 5\%, we test all simple hypotheses, i.e., all hypotheses of the form $\cH_0: (j \to i)$ and $\cH_0: (j\not \to i)$ for all $j\not = i$. We repeat this procedure 400 times to observe the average behavior of the testing procedure for the previously mentioned system generation scheme. We do this for all combinations of sample sizes $n\in\{500,1000,5000,10000,20000\}$ and system sizes $p\in\{2,4,6,8,16\}$.

\Cref{fig:power} illustrates the resulting power properties of the two tests CheckC and ConvB. Both testing procedures have better small sample power when testing a false hypothesis of the form $\cH_0 : (j \to i)$ compared to testing a false hypothesis of the form $\cH_0 : (j\not \to i)$. 
Furthermore, the finite sample power of CheckC is inferior to the ConvB. The power of ConvB is only slightly negatively affected by an increase in system size $p$, when testing a false hypothesis of the form $\cH_0 : (j\not \to i)$ . On the other hand, CheckC suffers for both types of hypotheses when increasing the system size. For example, 
the CheckC method has almost zero power when the system size is 16 and the samplesize is 20000.

\begin{figure}[ht]
	\centering
	$\cH_0: (j\to i)$ \hspace{3.8cm}  $\cH_0: (j \not \to i)$ \hspace{1cm} \par\medskip
	\includegraphics[width=\textwidth]{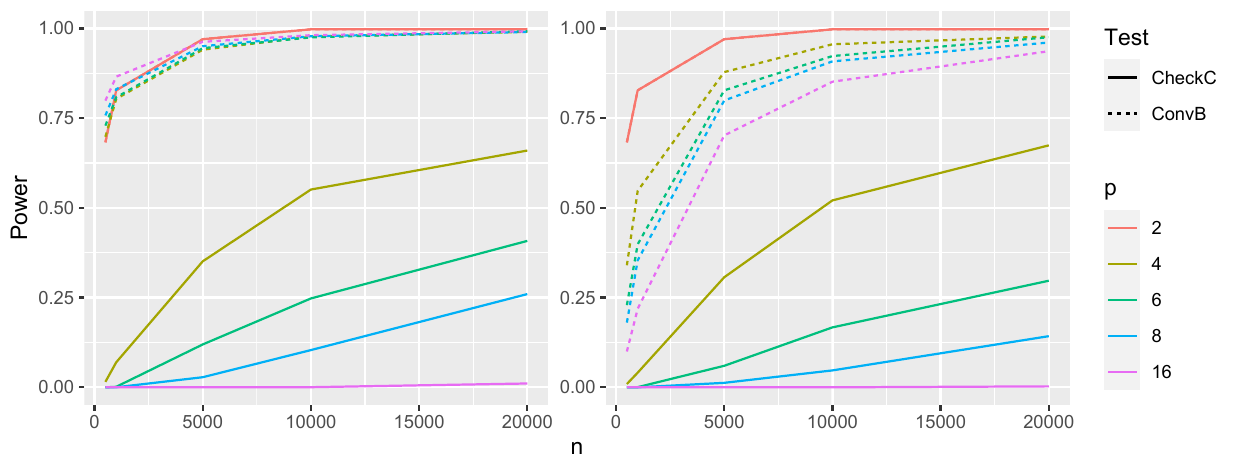}
	\caption{This figure illustrates the power of the proposed testing procedure for simple hypotheses. The left plot shows the empirical probability of rejecting a false hypothesis of the form $\cH_0:(j\to i)$ as a function of the sample size $n$. Similarly, the right plot shows the empirical probability of rejecting a false hypothesis of the form $\cH_0 (j\not \to i)$.
		For both tests the power increases with growing sample size with ConvB outperforming CheckC.} 
	\label{fig:power}
\end{figure} 

In \Cref{tbl:Test_PowerAndLevel}, we further detail the power and level achieved by the ConvB test in the above experiment. 
Both tests seems to hold level in all settings.
For false hypotheses of the form $\cH_0:(j\to i)$ we have split the hypotheses into three groups based on  Distance$(j,i)$ being `negative', `positive' or `no path'. 
If $j$ is a descendant of $i$, then Distance$(j,i)$ is `negative', if $j$ is a non-parent ancestor of $i$, then Distance$(j,i)$ is `positive', and if there is no directed path between $j$ and $i$, then Distance$(j,i)$ equals `no path'. 
For moderately large sample sizes the test exhibits 
high power. However, the power of the test for a false hypothesis of the form $(j\to i)$ when $j$ is a non-parent ancestor of $i$ is relatively low. 
\begin{table}[htp]
\begin{center}
\begin{tabular}{rr|rrrrr|rr}
\toprule
\multicolumn{2}{r}{Property:} & \multicolumn{5}{c}{Power of test} & \multicolumn{2}{c}{Size of test} \\
\cmidrule(l{3pt}r{3pt}){3-7} \cmidrule(l{3pt}r{3pt}){8-9}
\multicolumn{2}{r}{$\cH_0$:} & \multicolumn{4}{c}{$(j\to i)$} & \multicolumn{1}{c}{$(j\not \to i)$} & \multicolumn{1}{c}{$(j\to i)$} & \multicolumn{1}{c}{$(j\not \to i)$}  \\
\cmidrule(l{3pt}r{3pt}){3-6} \cmidrule(l{3pt}r{3pt}){7-7} \cmidrule(l{3pt}r{3pt}){8-8} \cmidrule(l{3pt}r{3pt}){9-9}
\multicolumn{2}{c}{} & \multicolumn{3}{c}{Distance$(j,i)$}  \\
\cmidrule(l{3pt}r{3pt}){3-5}
$p$ & $n$ & Negative & Positive & No Path & Total & Total & Total & Total \\
\midrule
2 & 500 & 0.68 & --- & --- & 0.68 & 0.68 & 0.00 & 0.00\\
2 & 1000 & 0.82 & --- & --- & 0.82 & 0.82 & 0.00 & 0.00\\
2 & 5000 & 0.97 & --- & --- & 0.97 & 0.97 & 0.00 & 0.00\\
2 & 10000 & 0.99 & --- & --- & 0.99 & 0.99 & 0.00 & 0.00\\
2 & 20000 & 0.99 & --- & --- & 0.99 & 0.99 & 0.00 & 0.00\\
\rule{0pt}{20pt}
4 & 500 & 0.69 & 0.32 & 0.85 & 0.69 & 0.34 & 0.01 & 0.01\\
4 & 1000 & 0.79 & 0.54 & 0.92 & 0.80 & 0.54 & 0.00 & 0.00\\
4 & 5000 & 0.93 & 0.86 & 0.98 & 0.94 & 0.87 & 0.00 & 0.01\\
4 & 10000 & 0.97 & 0.93 & 0.99 & 0.97 & 0.95 & 0.00 & 0.00\\
4 & 20000 & 0.99 & 0.96 & 0.99 & 0.99 & 0.97 & 0.00 & 0.00\\
\rule{0pt}{20pt}
8 & 500 & 0.73 & 0.38 & 0.85 & 0.75 & 0.18 & 0.01 & 0.01\\
8 & 1000 & 0.78 & 0.53 & 0.91 & 0.83 & 0.35 & 0.01 & 0.01\\
8 & 5000 & 0.92 & 0.86 & 0.98 & 0.95 & 0.79 & 0.01 & 0.01\\
8 & 10000 & 0.96 & 0.94 & 0.99 & 0.97 & 0.90 & 0.00 & 0.01\\
8 & 20000 & 0.98 & 0.97 & 0.99 & 0.99 & 0.96 & 0.00 & 0.00\\
\rule{0pt}{20pt}
16 & 500 & 0.77 & 0.40 & 0.86 & 0.79 & 0.09 & 0.01 & 0.01\\
16 & 1000 & 0.80 & 0.54 & 0.92 & 0.86 & 0.21 & 0.01 & 0.01\\
16 & 5000 & 0.91 & 0.85 & 0.98 & 0.96 & 0.70 & 0.01 & 0.01\\
16 & 10000 & 0.95 & 0.93 & 0.99 & 0.98 & 0.85 & 0.01 & 0.01\\
16 & 20000 & 0.97 & 0.97 & 0.99 & 0.99 & 0.93 & 0.00 & 0.00\\
\bottomrule
\end{tabular}
\caption{This table contains further details on the average power and size of the ConvB hypothesis test under the data generation described in \Cref{sec:ExperimentHypothesisTest}.}
\label{tbl:Test_PowerAndLevel}
\end{center}
\end{table}
\subsection{Identifiability Gap} \label{sec:ExperimentIdentifiabilityConstant}
We now investigate the behavior of the identifiability gap 
in bivariate models
(\Cref{sec:ExperimentBivariateIdentifiabilityConstant})
and 
evalute the lower bound derived in \Cref{sec:ScoreGap} empirically for multivariate models (\Cref{sec:ExperimentMultiVariateIdentifiabilityConstant}). 
\subsubsection{Bivariate Identifiability Gap} \label{sec:ExperimentBivariateIdentifiabilityConstant}
In this experiment, we 
investigate the behavior of the bivariate identifiability gap and 
analyze setups with both Gaussian and non-Gaussian noise innovations. Let us consider an additive noise model over $(X,Y)$ with causal graph $X\to Y$. 
The causal functions will be chosen from the following function class. 
For any $\lambda\in[0,1]$,
define 
$f_{\lambda}:\R\to \R$ 
as
\begin{align*}
f_{\lambda}(x) = (1-\lambda)x^3 + \lambda x.
\end{align*}
That is,  $\lambda \mapsto f_\lambda$ interpolates between a cubic function $x\mapsto x^3$ and a linear function $x\mapsto x$.  For any $(\alpha,\lambda)\in(0,\infty)\times [0,1]$ we consider the following bivariate structural causal additive model
\begin{align*}
X:= \mathrm{sign}(N_X)|N_X|^\alpha, \quad Y:= f_{\lambda}(X)+ N_Y,
\end{align*}
where $N_X,N_Y$ are independent standard normal distributed random variables. Recall that the bivariate identifiability gap is given by
\begin{align*}
\lE(Y\to X)- \lE(X\to Y) =&\, h(X-\E[X|Y])+h(Y)-h(X-\E[X|Y],Y) \\
=&\,I(X-\E[X|Y];Y), 
\end{align*}
by \Cref{lm:EdgeReversal}. 
Thus, the causal graph $X\to Y$ is identified by the entropy score function if $I(X-\E[X|Y];Y)>0$.

For any fixed $\lambda$ and $\alpha$ we 
now
estimate the identifiability gap; we also 
calculate the $p$-value associated with the null hypothesis that the identifiability gap is zero (based on 50000 observations). 
Similarly to the previous experiment, we estimate the conditional expectations using thin-plate spline regression. We estimate (without sample splitting) the identifiability gap  and construct $p$-values using the CRAN R-package \verb!IndepTest!
\citep[][]{IndepTest}. More specifically, we use the differential entropy estimator of \cite{10.1214/18-AOS1688}  and the mutual information based independence test of \cite{10.1093/biomet/asz024}, respectively.

The heatmap of \Cref{fig:IdentifiabilityConstant_Heatmap} illustrates the behavior of the identifiability gap for all combinations of $\lambda\in\{0,0.05,\ldots,1\}$ and $\alpha \in \{0.3,0.4,\ldots,1.7\}$. It suggests that the identifiability gap only tends to zero when we approach the linear additive Gaussian noise setup. Only in the models closest to the linear additive Gaussian noise setup are we unable to reject the null-hypothesis of a vanishing identifiability gap.

This is also what the theory predicts, namely that for bivariate linear additive Gaussian noise models, the causal direction is not identified. 
It is known that for linear models, non-Gaussianity is helpful for identifiability. The empirical results indicate that the same holds for nonlinear models, i.e., that the identifiability gap increases with the degree of non-Gaussianity of the noise innovations.
\begin{figure}[htp]
\begin{center}
\includegraphics[width=\textwidth]{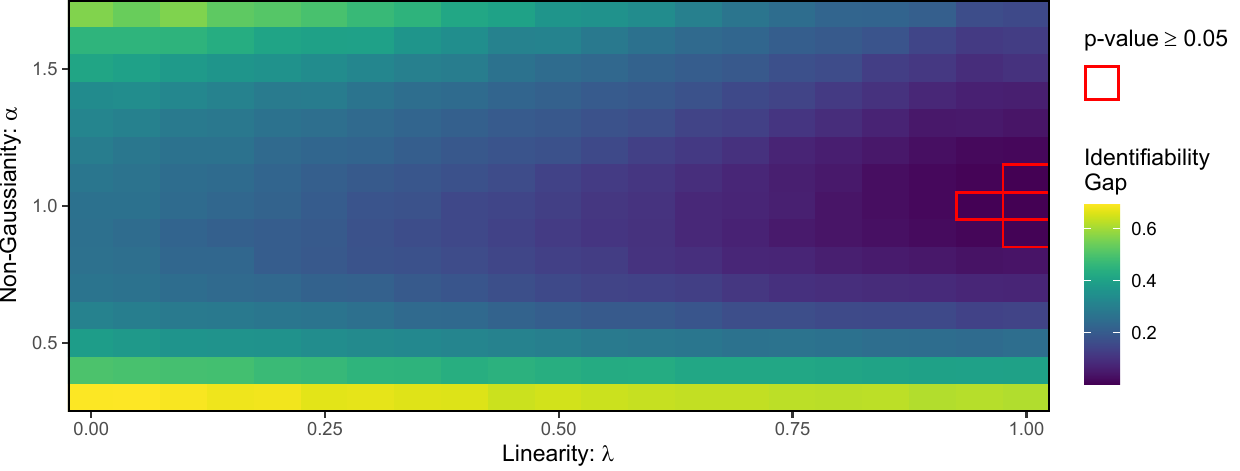}
\end{center}
\caption{Heatmap of the identifiability gap for varying $\lambda$ and $\alpha$. Tiles with a red boundary correspond to the models for which the mutual information based independence test cannot reject the null hypothesis of a vanishing identifiability gap.} \label{fig:IdentifiabilityConstant_Heatmap}
\end{figure}
\subsubsection{Multivariate Identifiability Gap}
\label{sec:ExperimentMultiVariateIdentifiabilityConstant}
In this experiment, we investigate the identifiability gap and its relation to the lower bounds established in \Cref{thm:GaussScoreGapCaseOne}. For a causal additive tree model with Gaussian noise, it holds that
\begin{align*}
\min_{\tilde \cG \in \cT_p\setminus \{\cG\}}\lG(\tilde \cG) -\lG(\cG)&\geq \min\left\{\min_{(w,l,o)\in \Pi_W(\cG)} I(X_w; X_l\, | \, X_o),\min_{i\to j \in \cE} \Delta \lE( i \lra j)\right\}.
\end{align*}
In other words, the identifiability gap is lower bounded by the minimum of the smallest local faithfulness measures and the smallest edge-reversal score difference. 
We now investigate empirically
how important the first term is 
for the inequality to hold.  
More specifically, for a given model generation scheme, we quantify how often the minimum  edge reversal is sufficiently small to establish the lower bound without the conditional mutual information term, that is, how often the identifiability  constant $\min_{\tilde \cG \in \cT_p\setminus \{\cG\}} \lG(\tilde{\cG})-\lG(\cG)$
is larger than the minimum edge reversal. 

The minimum edge reversal can be estimated using the same conditional expectation and entropy estimators of the experiment in \Cref{sec:ExperimentBivariateIdentifiabilityConstant}. However, estimating the identifiability gap between the second-best scoring tree and the causal tree needs further elaboration. We know that the best scoring (causal) tree  can be found by Chu--Liu--Edmonds' (a directed MWST) algorithm. The second-best scoring tree differs from the best scoring tree in at least one edge. Thus, given the best scoring graph, we remove one of the $p-1$ edges of the best scoring tree from the pool of possible edges and rerun Chu--Liu--Edmonds' algorithm. We do this for each of the $p-1$ edges in the best scoring tree which leaves us with $p-1$ possibly different sub-optimal trees of which the minimum score is attained by the second-best scoring graph.

For the experiment,
we randomly sample data generating models similarly to the experiment in \Cref{sec:ExperimentGaussianTrees}. However, we change the causal functions from explicit sample paths of a Gaussian process to a thin-plate spline regression model estimating the sample paths due to memory constraints when generating large sample sizes.
\Cref{fig:IdentifiabilityConstant_Multivariate} illustrates, for  $p\in\{8,16\}$, boxplots of the difference between the  identifiability gap and the minimum edge reversal for 100 randomly generated causal additive tree models with Gaussian noise. For each model, the identifiability gap and corresponding minimum edge reversal is estimated from 200000 independent and identically distributed observations. The illustration suggests that it is in general necessary to also consider the conditional mutual information term  in order to establish a lower bound. However, it also shows that in the majority (90\%) of the models, the minimum edge reversal is indeed a lower bound for the identifiability gap. 
\begin{figure}[htp]
\begin{center}
\includegraphics[width=\textwidth]{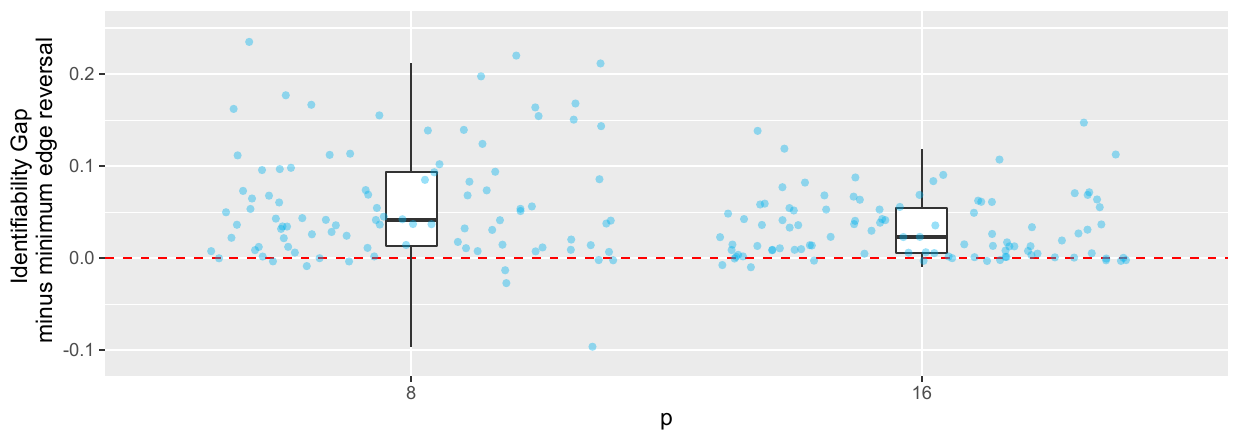}
\end{center}
\caption{Empirical analysis of the lower bound on the identifiability gap, see
\Cref{sec:ExperimentMultiVariateIdentifiabilityConstant}. 
In most of the simulated settings, we see that the estimated identifiability gap is larger than the smallest edge-reversal score difference. This suggests that in many cases, the latter term is sufficient for establishing a lower bound on the identifiability gap. We have also implemented CAT.G and CAT.E with the heuristic pruning procedure introduced in \cite{CAM}.
} \label{fig:IdentifiabilityConstant_Multivariate}
\end{figure}

\section{Empirical Application} \label{sec:EmpApp}
We consider the 
well-known 
non-synthetic bio-informatics data set considered by \cite{sachs2005causal}. The data set contains simultaneous measurements of expression levels of 11 different phosphorylated proteins and phospholipids of human immune system cells under both observational and interventional experimental settings. \cite{sachs2005causal} present (based on expert consensus and experiments) a causal directed acyclic graph with 11 nodes and 20 edges for the 11 phosphorylated proteins and phospholipids. 

We compare our structure learning methods CAT.G and CAT.E with the score-based methods of CAM \citep{CAM}, GES  \citep{chickering2002optimal}, NoTears \citep{zheng2018dags} and the mixed method MMHC \citep{Tsamardinos2006}. The structure learning methods are 
applied to
observational data (853 observations using reagents anti-CD3 and anti-CD28). The results of the structure learning methods can be seen in \Cref{tbl:EmpApp}. 
Learning causal structure from observational data is a difficult problem but several methods seem to outperform estimating an empty graph or a random graph.
CAM is superior in terms of SHD, SID, and recall of edge and root predictions, suggesting that in this data set, one may indeed exploit nonlinearities for indentifying causal structure. 
However, we also see that CAT.G shows competitive performance and ranks in {first or} second place with respect to all reported performance measures. Interestingly, even though CAT.G approximates the non-tree causal DAG by a directed tree, it 
outperforms various DAG structure learning methods 
such as classical approaches of GES and MMHC and the more recent continuous optimization approach of NoTears. CAT.E does not perform well on these data, witnessing that estimating entropies is a difficult statistical problem.
\begin{table}[ht]
\begin{center}
\begin{tabular}{lllccccc}
\toprule
Method & Prune & Score & SHD & SHD-C & SID & Precision & Recall\\
\midrule
CAM & Yes & $\ell_{\mathrm{G}}$ & 14.00 & 15.00 & 72.0 & 0.571 & 0.381\\
CAT & No & $\ell_{\mathrm{G}}$ & 14.00 & 14.00 & 79.0 & 0.636 & 0.333\\
CAT & Yes & $\ell_{\mathrm{G}}$ & 15.00 & 16.00 & 83.0 & 0.545 & 0.286\\
MMHC & --- & BGe & 15.00 & 14.00 & 84.0 & 0.417 & 0.238\\
MMHC & --- & BIC & 15.00 & 14.00 & 84.0 & 0.417 & 0.238\\
GES & --- & BIC & 17.00 & 16.00 & 107.0 & 0.231 & 0.143\\
CAT & Yes & $\ell_{\mathrm{E}}$ & 18.00 & 19.00 & 92.0 & 0.273 & 0.143\\
NoTears & --- & --- & 19.00 & 17.00 & 99.0 & 0.182 & 0.095\\
EmptyGraph & --- & --- & 20.00 & 20.00 & 94.0 & 0.091 & 0.048\\
RandomGraph & --- & --- & 22.32 & 21.93 & 94.7 & 0.271 & 0.170\\
CAT & No & $\ell_{\mathrm{E}}$ & 24.00 & 25.00 & 104.0 & 0.273 & 0.143\\
\bottomrule
\end{tabular}

\caption{Results of the empirical application of various structure learning methods to the data set of \cite{sachs2005causal}. Here we report the structural hamming distance (SHD), structural hamming distance of the respective CPDAGs (SHD-C), and structural intervention distance (SID) between the causal graph  and the estimated graph. The latter two columns show the precision and recall for edge and root classification. The methods EmptyGraph always outputs the empty graph and the method RandomGraph outputs a random single-rooted tree generated according to the generation scheme outlined in \Cref{sec:ExperimentCATonDAGs}. We have implemented CAT.G and CAT.E both with and without the heuristic pruning technique introduced in \cite{CAM}}
\label{tbl:EmpApp}
\end{center}
\end{table}

Finally, we also evaluate the proposed hypothesis testing procedures on this data set, even though the asymptotic guarantees of the hypothesis tests derived in \Cref{sec:HypothesisTest} are not guaranteed to hold as the true underlying graph is not a directed tree. We test every possible simple hypothesis of the form $\cH_0(j\to i)$ and $\cH_0(j\not \to i)$. The results can be seen in \Cref{tbl:Test_EmpApp_PowerAndLevel} (the CheckC test holds level but has zero power). The ConvB test shows reasonable power against false hypothesis of the form $\cH_0(j\to i)$, however, it has no power against the false hypotheses of the form $\cH_0(j\not \to i)$. Rejection rates 
of the true hypotheses of the form $\cH_0(j\to i)$ are larger than the asymptotically guaranteed rate of $0.05$, possibly because of the model violation; this phenomenon is not as expressed for true hypotheses of the form $\cH_0(j\not \to i)$.
\begin{table}[htp]
\begin{center}
\begin{tabular}{rrrrrrrr}
\toprule
\multicolumn{1}{r}{} & \multicolumn{5}{c}{Nulls incorrect}
& \multicolumn{2}{c}{Nulls correct} \\
\cmidrule(l{3pt}r{3pt}){2-6} \cmidrule(l{3pt}r{3pt}){7-8}
\multicolumn{1}{r}{$\cH_0:$} & \multicolumn{4}{c}{$(j\to i)$} & \multicolumn{1}{c}{$(j\not \to i)$} & \multicolumn{1}{c}{$(j\to i)$} & \multicolumn{1}{c}{$(j\not \to i)$}  \\
\cmidrule(l{3pt}r{3pt}){2-5} \cmidrule(l{3pt}r{3pt}){6-6} \cmidrule(l{3pt}r{3pt}){7-7} \cmidrule(l{3pt}r{3pt}){8-8}
\multicolumn{1}{c}{} & \multicolumn{3}{c}{Distance$(j,i)$}  \\
\cmidrule(l{3pt}r{3pt}){2-4}
& Negative & Positive & No Path & Total & Total & Total & Total \\
\cmidrule(l{3pt}r{3pt}){2-8}
Rejection rates: & 0.58 & 0.53 & 0.66 & 0.58 & 0.00 & 0.30 & 0.02\\
$N$: & 46 & 26 & 18 & 90 & 20 & 20 & 90\\
\bottomrule
\end{tabular}
\caption{Further details on the average power and level of the ConvB test with a significance level of 0.05. Here, we have tested every simple hypothesis of the \cite{sachs2005causal} data set; see \Cref{sec:ExperimentHypothesisTest} for further explanations of the distance metric. $N$ denotes the number of hypothesis tests that have been averaged.}
\label{tbl:Test_EmpApp_PowerAndLevel}
\end{center}

\end{table}

\section{Summary and Future Work}
This paper shows that exact structure learning is possible for systems of lesser complexity, i.e., for restricted structural causal models with additive noise and causal graphs given by directed trees. 
We propose the method CAT, which is guaranteed to consistently recover the causal directed tree of a causal additive tree model with Gaussian noise under mild assumptions on the  regression methods used to estimate conditional means.
Furthermore, we argue that CAT is consistent in an asymptotic setup with vanishing identifiability. 
We present a computationally feasible procedure to test substructure hypotheses and provide an analysis of the identifiability gap. Simulation experiments show that CAT outperforms other (more general) structure learning methods for the specific task of recovering the causal graph in additive noise structural causal models when the causal structure is given by directed trees.

The proof of \Cref{thm:UniqueGraph} is based on
the fact that the causal functions of alternative models 
are differentiable and that the noise  densities are continuous. We conjecture that it is possible to get even stronger identifiability statements under weaker assumptions; 
proving such a result necessitates new proof strategies. 
We believe that 
it could be possible to prove
uniform consistency under suitable conditions 
when 
requiring that the infimum of the identifiability gap is strictly positive
and that the mean squared errors of the regression estimates converge uniformly.
Furthermore, we believe that it could inspire future research on more general identifiability conditions (e.g., relaxing the smoothness assumptions) for directed trees and DAGs under the assumption of additive noise.
Furthermore, it should be possible to use a wild bootstrap approach to construct a simultaneous hyperrectangle confidence region for the Gaussian edge weights.
This would, however, require a sufficiently fast convergence rate of the estimation error of the conditional expectations corresponding to non-causal edges. Compared to the Bonferroni correction, this approach could increase the power of the test. We hypothesize that the ConvB test holds level even in many generic, non-identifiable settings.

\section*{Acknowledgments}
We thank Phillip Bredahl Mogensen and Thomas Berrett for helpful discussions on the entropy score and its estimation. PB and JP thank David Bürge and Jan Ernest for helpful discussions on exploiting Chu–Liu–Edmonds’ algorithm for causal discovery during the early stages of this project. MEJ and JP were supported by the Carlsberg Foundation; JP was, in addition, supported by a research grant
(18968) from VILLUM FONDEN. RDS was supported by EPSRC grant EP/N031938/1. PB received funding from the European Research Council (ERC) under the European Union's Horizon 2020 research and innovation programme (grant agreement No. 786461).

\appendix
\begin{appendices}
\section{Graph Terminology} \label{app:graphs}
A \textit{directed graph} $\cG=(V,\cE)$ consists of $p\in \N_{>0}$ vertices (nodes)   $V=\{1,\ldots,p\}$ and a collection of directed edges $\cE\subset \{(j\to i) \equiv (j,i): i,j\in V, i\not = j\}$. For any graph $\cG=(V,\cE)$ we let $\PAg{\cG}{i}:= \{v\in V: \exists (v,i)\in \cE\}$ and $\CHg{\cG}{i} :=\{v\in V : \exists (j,v)\in \cE\}$ denote the \textit{parents} and \textit{children} of node $i\in V$ and we define root nodes $
\mathrm{rt}(\cG) := \{v\in V: \PAg{\cG}{i} =\emptyset \}$ as nodes with no parents (that is, no incoming edges). 
A \textit{path} in $\cG$ between two nodes $i_1,i_k\in V$ consists of a sequence $(i_1, i_2), \ldots, (i_{k-1}, i_k)$ of pairs of nodes such that for all $j \in \{1, \ldots, k-1\}$, we have either 
$(i_j \to i_{j+1}) \in \cE$
or
$(i_{j+1} \to i_{j}) \in \cE$.
A \textit{directed path} in $\cG$ between two nodes $i_1,i_k\in V$ consists of a sequence $(i_1, i_2), \ldots, (i_{k-1}, i_k)$ of pairs of nodes such that for all 
$j \in \{1, \ldots, k-1\}$, we have $(i_j \to i_{j+1}) \in \cE$. Furthermore, we let $\ANg{\cG}{i}$ and $\DEg{\cG}{i}$ denote the \textit{ancestors} and \textit{descendants} of node $i\in V$, consisting of all nodes $j\in V$ for which there exists a directed path to and from $i$, respectively. We let $\NDg{\cG}{i}$ denote the \textit{non-descendants} of $i$. 

A \textit{directed acyclic graph} (DAG) is a directed graph that does not contain any directed cycles, i.e., directed paths visiting the same node twice.
We say that a graph is \textit{connected} if a (possibly undirected) path exists between any two nodes. A \textit{directed tree} is a connected DAG in which all nodes have at most one parent. More specifically, every node has a unique parent except the root node, which has no parent. The root node $\root{\cG}$ is the unique node such there exists a directed path from $\root{\cG}$ to any other node in the directed tree. In graph theory, a directed tree is also called an \textit{arborescence}, a \textit{directed rooted tree}, and a \textit{rooted out-tree}.   A graph $\cG=(V',\cE')$ is a \textit{subgraph} of another graph $\cG=(V,\cE)$ if $V'\subseteq V$, $\cE'\subseteq \cE$ and for all $(j\to i) \in \cE'$ it holds that $j,i\in V'$. A subgraph is \textit{spanning} if $V'=V$. For any DAG $\cG=(V,\cE)$ and three mutually distinct subsets $A,B,C\subset V$ we let $A\dsep{\cG} B\,|\,C$ denote that $A$ and $B$ are d-separated by $C$ in $\cG$ \citep[see, e.g.,][]{pearl2009causality}.

\section{Further Details on Section~\ref{sec:ScoreGap}} \label{sec:AppDetails}

\begin{remark} \label{rmk:ConditionalEntropyRebane}
The conditional entropy score gap is not strictly positive when considering the alternative graphs $\tilde \cG$ that are Markov equivalent  to the causal graph $\cG$, $\tilde \cG\in \mathrm{MEC}(\cG)$. A simple translation of the conditional entropy score function reveals that
\begin{align*}
\lCE(\tilde \cG) + C= \sum_{(j\to i)\in \tilde \cE} h(X_i|X_j) -h(X_i) = - \sum_{(j\to i)\in \tilde \cE} I(X_i;X_j),
\end{align*}
for a constant $C\in \R$. By symmetry of the mutual information, it holds that $
\lCE(\tilde \cG ) =\lCE(\cG )$, 
for any $\tilde \cG \in \mathrm{MEC}(\cG)$, since 
$\tilde \cG $ and $\cG$ share the same skeleton. Thus, the conditional entropy score function can, at most, identify the Markov equivalence class of the causal graph. In fact, the polytree causal structure learning method of \cite{DBLP:conf/uai/RebaneP87} uses the above translated conditional entropy score function to recover the skeleton of the causal graph.
\end{remark}

\begin{example}[Negative local Gaussian score gap] \label{ex:negativelocalscoregap}
Consider two graphs $\cG$ and $\tilde \cG$ with different root nodes, i.e., $\root{\cG}\not = \root{\tilde \cG}$. If $x\mapsto \E[X_{\mathrm{rt}(\cG)}|X_{\PAg{\tilde \cG}{\mathrm{rt}(\cG)}}=x]$ is not almost surely constant, then it holds that
\begin{align*}
\lG(\tilde \cG, \mathrm{rt}(\cG)) -\lG(\cG, \mathrm{rt}(\cG)) &=  \E[(X_{\mathrm{rt}(\cG)}-\E[X_{\mathrm{rt}(\cG)}| X_{\PAg{\tilde \cG}{\mathrm{rt}(\cG)}}])^2] - \Var(X_{\mathrm{rt}(\cG)}) \\
&= \E[\Var(X_{\mathrm{rt}(\cG)}|X_{\PAg{\tilde \cG}{\mathrm{rt}(\cG)}})] - \Var(X_{\mathrm{rt}(\cG)}) \\
& =- \Var(\E[X_{\mathrm{rt}(\cG)}|X_{\PAg{\tilde \cG}{\mathrm{rt}(\cG)}}])< 0.
\end{align*}
\end{example}

\newpage
\section{Further Details on the Simulation Experiments} \label{app:Experiments}
This section contains further details on the simulation experiments. 

\subsection{Tree Generation Algorithms} \label{app:TreeGeneration}
\algnewcommand{\IIf}[1]{\State\algorithmicif\ #1\ \algorithmicthen}
\algnewcommand{\EndIIf}{\unskip\ \algorithmicend\ \algorithmicif}

The following two algorithms, \Cref{alg:type1tree} (many leaf nodes) and \Cref{alg:type2tree} (many branch nodes), details how the Type 1 and Type 2 trees are generated, respectively.
\begin{algorithm}[h] \caption{Generating type 1 trees} \label{alg:type1tree}
\begin{algorithmic}
\Procedure{Type1}{$p$}
\State $A := 0\in \R^{p\times p}$
\For{$j\in\{1,\ldots,p\}$}
\For{$i\in\{j+1,\ldots,p\}$}
\If{$\sum_{k=1}^p A_{ki}=0$}
\If{$i=j+1$}
\State $A_{ji}:= 1$
\Else
\State $A_{ji} := \mathrm{Binomial}(\mathrm{success}=0.1)$
\EndIf
\Else 
\State $A_{ji}:= 0$
\EndIf
\EndFor
\EndFor
\State \textbf{return} $A$
\EndProcedure
\end{algorithmic} 
\end{algorithm}
\begin{algorithm}[h] \caption{Generating type 2 trees} \label{alg:type2tree}
\begin{algorithmic}
\Procedure{Type2}{$p$}
\For{$i\in\{2,\ldots,p\}$}
\State $j := \mathrm{sample}(\{1,\ldots,i-1\})$
\State $A_{ji}:=1$
\EndFor
\State \textbf{return} $A$
\EndProcedure
\end{algorithmic} 
\end{algorithm}

\newpage
\subsection{Additional Illustrations}
This section contains some additional illustrations of the simulation experiments.
\begin{figure}[H]
\begin{center}
\includegraphics[width=\textwidth]{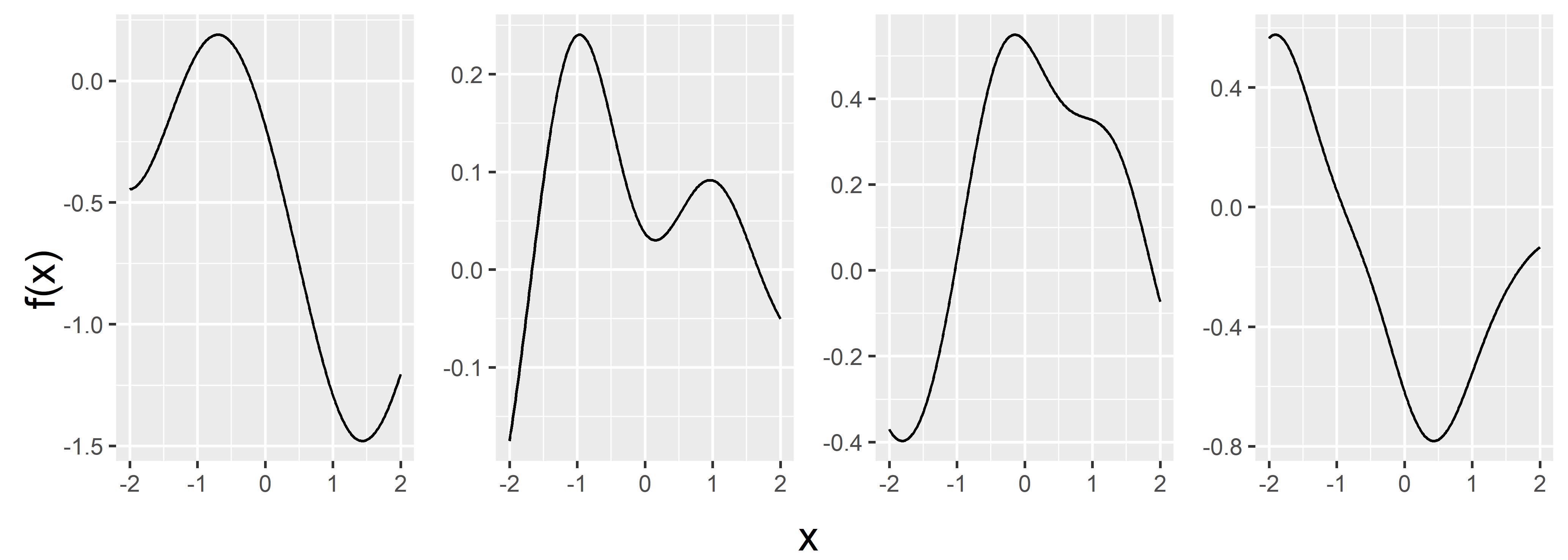}
\end{center}
\caption{Four causal functions as modeled by the RBF kernel Gaussian Process. } \label{fig:samplepaths}
\end{figure}

\begin{figure}[H]
\begin{center}
\includegraphics[width=\textwidth]{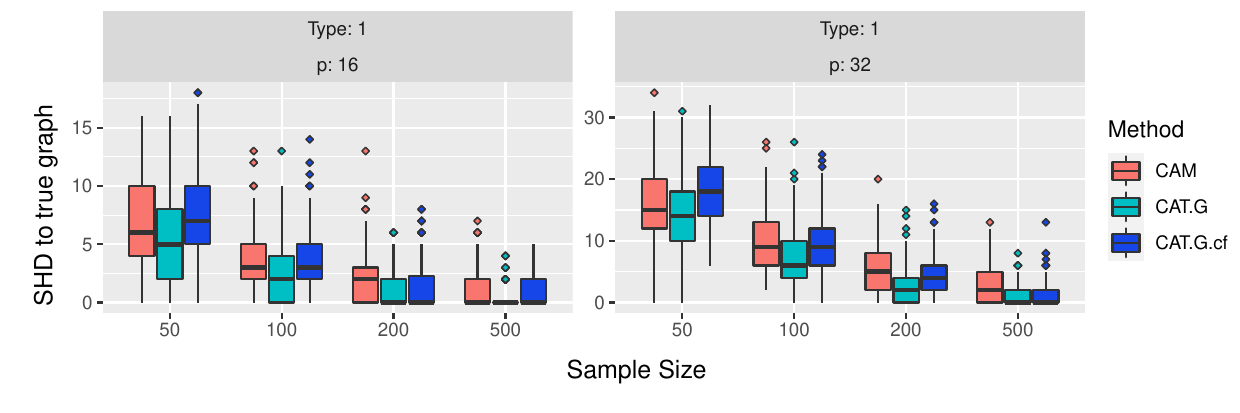}
\end{center}
\caption{Boxplot illustrating the SHD performance of CAM and CAT for varying sample sizes, system sizes and tree types in the experiment of \Cref{sec:ExperimentGaussianTrees} with 200 repetitions. CAT.G.cf is the CAT.G method with cross-fitted edge weights. We see that cross-fitting has no positive impact on the performance. It seems to worsen the performance for small sample sizes.} \label{fig:crossfitboxplot}
\end{figure}

\label{sec:additionalIllustrations}
\begin{figure}[H]
\begin{center}
\includegraphics[width=\textwidth]{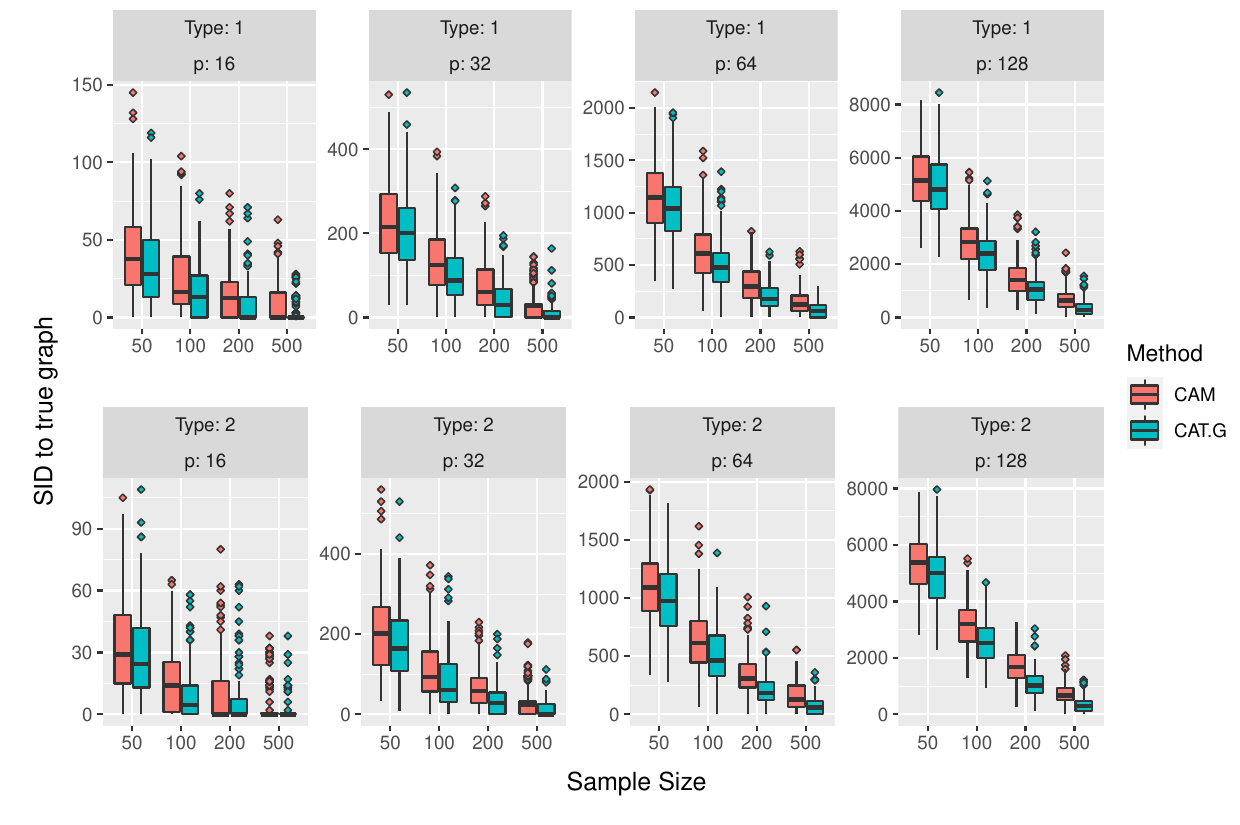}
\end{center}
\caption{Boxplot illustrating the SID performance of CAM and CAT for varying sample sizes, system sizes and tree types in the experiment of \Cref{sec:ExperimentGaussianTrees}. CAT.G is CAT with edge weights  derived from the Gaussian score function.}
\label{fig:boxplot_Gaussian_SID_Type1and2}
\end{figure}
\begin{figure}[H]
\begin{center}
\includegraphics[width=\textwidth]{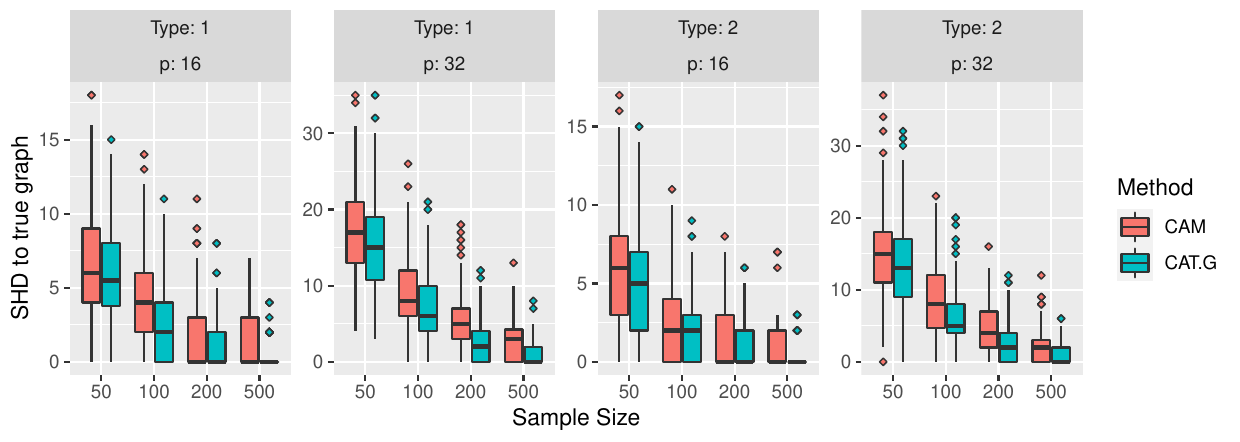}
\end{center}
\caption{Boxplot illustrating the SHD performance of CAM and CAT for varying sample sizes, system sizes and tree types in the experiment of \Cref{sec:ExperimentGaussianTrees}. Here CAT.G is run on the CAM edge weights , so that any difference in nonparametric regression technique is ruled out as the source of the performance difference.}
\label{fig:boxplot_Gaussian_CamScores}
\end{figure}
\begin{figure}[H]
\includegraphics[width=\textwidth]{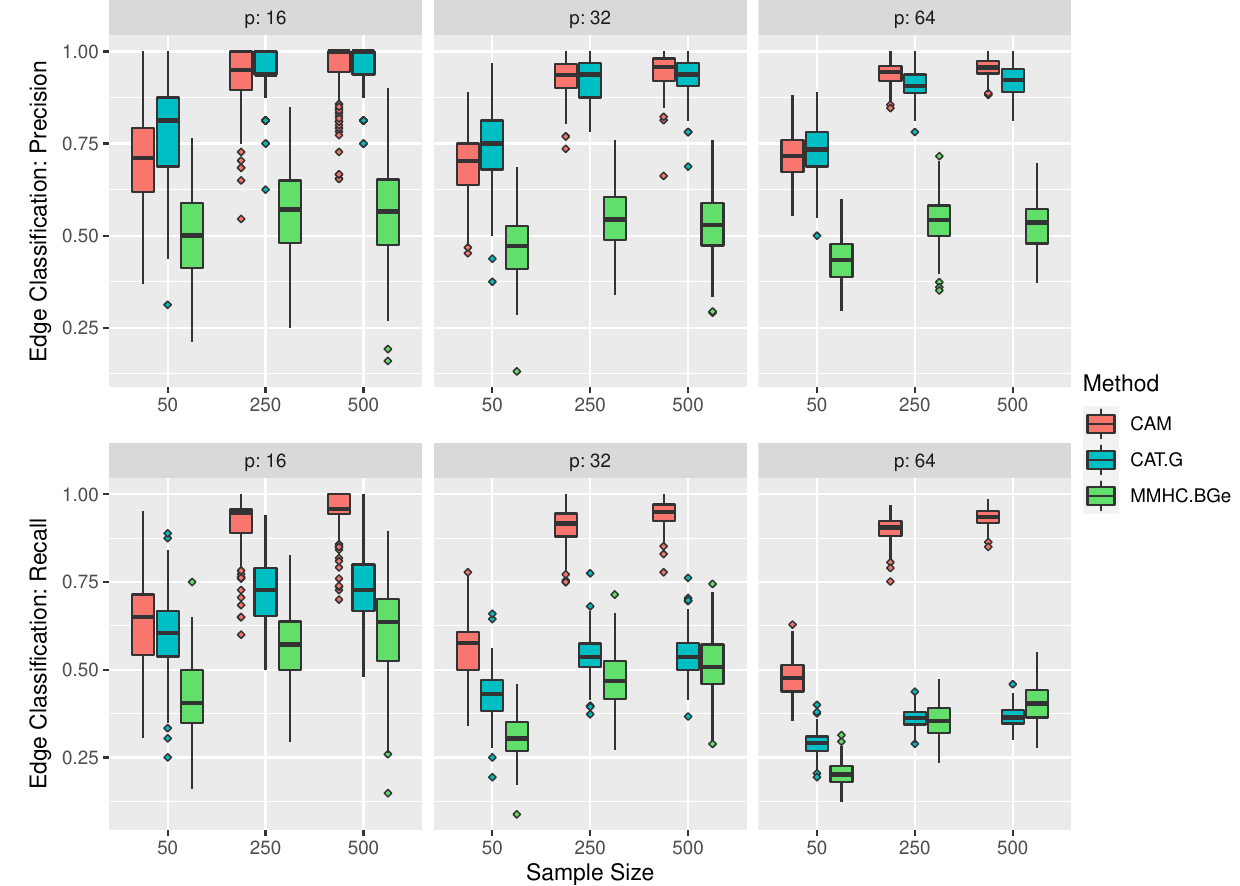}
\caption{Boxplot of edge relations for the experiment in \Cref{sec:ExperimentCATonDAGs}. }
\label{fig:SingleRootedDagsEdgeRelations}
\end{figure}
\begin{figure}[H]
\begin{center}
\includegraphics[width=\textwidth]{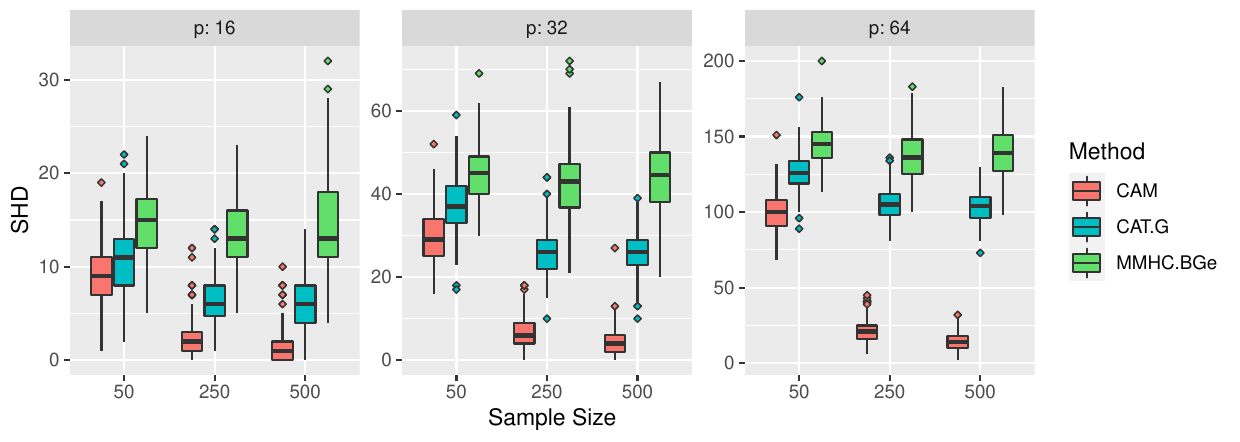}
\end{center}
\caption{Boxplot of SHD for the experiment in \Cref{sec:ExperimentCATonDAGs}.}
\label{fig:SingleRootedDagsSHD}
\end{figure}

\newpage
\section{Proofs} \label{app:proofs}
This section contains the proofs of all results presented in the main text.

\subsection{Proofs of Section~\ref{sec:ScoreBasedStructureLearning}}

\begin{proof}[ of \Cref{lm:RestrictedModelConditionGaussian}]
Let $\theta = (\cG,(f_i),P_N)\in \cT_p \times \cD_3^p \times \cP_{\mathrm{G}}^p$. Furthermore, let all causal functions $(f_i)$ be nowhere constant and nonlinear. The additive noise is Gaussian, so the log density of $N_i$ for all $i\in\{1,\ldots,p\}$ is given by
\begin{align*}
\nu_i(x) = -\frac{1}{2}\log(2\pi \sigma_i^2) -\frac{x^2}{2\sigma_i^2} , \quad  	\nu_i'(x) = -\frac{x}{\sigma_i^2}, \quad  \nu_i''(x)= -\frac{1}{\sigma_i^2}, \quad  \nu_i'''(x) = 0.
\end{align*}
By assumption we have that condition (i) of \Cref{def:RestrictedSEMGeneralCase} is satisfied, hence assume for contradiction that condition (ii) of \Cref{def:RestrictedSEMGeneralCase} is not satisfied. That is, we assume that there exists an $i\in \{1,...,p\}\setminus \{\root{\cG}\}$ such that for all \begin{align*}
(x,y)\in \cJ :=& \, \{(x,y)\in \R^2: \nu_i''(y-f_i(x))f'_i(x)\not = 0\}\\
=&\,	\{(x,y)\in \R^2: f_i'(x)\not = 0\},
\end{align*}
it holds that
\begin{align} \label{eq:RestrictedGaussianModelTempEq1}
\xi'''(x) - \xi''(x)  \frac{f_i''(x)}{f_i'(x)}  -\frac{2 f_i''(x)f_i'(x) }{\sigma^2} &=    -\frac{y-f_i(x) }{\sigma^2}\left(  f_i'''(x)-\frac{(f_i''(x))^2}{f_i'(x)}\right).
\end{align}
Henceforth, suppress the subscript $i$ of $f_i$ and $\sigma_i$.
First note that $\{x\in \R: f'(x) = 0\}$ is closed by continuity of $f'$. The complement is open, hence there exists a countable collection of mutually disjoint open intervals $(O_k)_{k\in \bZ}$ such that  $\{x\in \R: f'(x) \not = 0\} = \cup_{k\in \bZ} O_k$. Since $f$ is nowhere constant we know that $\{x\in \R : f'(x)=0\}$ has empty interior which implies that $\overline{\cup_{k\in \bZ} O_k}=\R$. Now let $(O_k)_{k\in\bZ}$ be indexed by $\mathbb{Z}$ such that for any $k,j\in \mathbb{Z}$ with $k<j$ and $x\in O_k,y\in O_j$ it holds that $x<y$.
As the left-hand side of \Cref{eq:RestrictedGaussianModelTempEq1} is constant in $y$ it must hold that
\begin{align*}
0 = 	f'''(x) - \frac{(f''(x))^2}{f'(x)} =   \frac{\frac{\partial f''(x) }{\partial x}f'(x)-  f''(x)\frac{\partial f'(x)}{\partial x}}{(f'(x))^2} = \frac{\partial }{\partial  x} \left(\frac{f''(x)}{f'(x)} \right) ,
\end{align*}
i.e., $f''(x)/f'(x)$ is constant, for all $x\in \cup_{k\in \mathbb{Z}} O_k$.

On each $O_k$ we have that  \begin{align*}
	\partial/ \partial x \log(\text{sign}(f'(x)) f'(x)) &= c_{k,1} \iff \\\log( \text{sign}(f'(x)) f'(x)) &= c_{k,1}x + c_{k,2} \iff \\\text{sign}(f'(x)) f'(x)&= \exp(c_{k,1}x+c_{k,2}) \iff \\f'(x)& = \pm \exp(c_{k,1}x+c_{k,2}).
\end{align*}
 Recall that we have assumed continuous differentiability of $f'$. That is, for any $k\in\mathbb{Z}$ and $t_k := \sup(O_k) = \inf(O_{k+1})$ we have $\lim_{x\uparrow t_k} f'(x)  
= \lim_{x\downarrow  t_k} f'(x)$ and  $\lim_{x\uparrow t_k} f''(x)  
= \lim_{x\downarrow  t_k} f''(x)$. Assume without loss of generality that $f'(x)=\exp(c_{k,1}x+c_{k,2})$ for all $x\in O_k$ and $k\in\mathbb{Z}$.  These conditions impose the restrictions $
(c_{k,1}-c_{{k+1},1} ) t_k =  c_{{k+1},2}-c_{k,2}$ and $\log ( c_{k,1}/c_{k+1,1}) + (c_{k,1}-c_{k+1,1}) t_k = c_{k+1,2}- c_{k,2}$ 
which entails that $c_{k,1}=c_{k+1,1}$ and $c_{k,2}=c_{k+1,2}$. This proves that there exists $c_1,c_2\in \R$ such that  $f'(x) =  \exp(c_1x+c_2)$ for all $x\in \R$. Thus, the differential equation holds for all $x\in \R$,
\begin{align*}
0= 	\xi'''(x) - \xi''(x)  \frac{f''(x)}{f'(x)}  -\frac{2 f''(x)f'(x) }{\sigma^2} = \frac{\partial }{\partial x}\left( \frac{\xi''(x)}{f'(x)}\right)  - 2\frac{f''(x)}{\sigma^2},
\end{align*}
by division with $f'(x)$. By integration this implies that $0 = \xi''(x)/f'(x) - 2f'(x)/\sigma^2 +c_3$ such that $\xi''(x)= 2\exp(2c_1 x+2c_2)/\sigma^2 -c_3\exp(c_1x+c_2)$ and $\xi'(x) =\exp(2c_1x+2c_2)/c_1\sigma^2 - c_3\exp(c_1x+c_2)/c_1 + c_4$ and 
\begin{align*}
\xi(x) = \frac{\exp(2c_1x+2c_2)}{2c_1^2\sigma^2} - \frac{c_3\exp(c_1x+c_2)}{c_1^2} + c_4x +c_5.
\end{align*}
We see that $\xi(x) \to \i\iff p_{X_{\PAg{\cG}{i}}}(x) \to \infty$ as $x\to \text{sign} (c_1) \cdot\i$, in contradiction with the assumption that $p_{X_{\PAg{\cG}{i}}}(x)$ is a probability density function if $c_1\not = 0$. Thus, it must hold that $f''(x)/f'(x)=0$ for all $x\in \R$, or equivalently, that $f$ is a linear function,  yielding a contradiction.

This proves that whenever $f_i\in \cD_3$ is a nowhere constant and nonlinear function and the additive noise is Gaussian then condition (ii) of \Cref{def:RestrictedSEMGeneralCase} is satisfied, so $\theta\in \Theta_R$. %
\end{proof}

\begin{proof}[ of \Cref{thm:UniqueGraph}]
First, we consider the bivariate setting. 	Let $(X,Y)$ be generated by an additive noise SCM $\theta\in \Theta_R\subset \cT_2 \times \cD_3^2\times \cP_{\cC_3}^2$ given by $X:= N_X$ and $Y := f(X)+ N_Y$ 
with $P_{X} = p_{X}\cdot \lambda$ and $P_{N_Y}=p_{N_Y}\cdot \lambda$ having three times differentiable strictly positive densities and $f$ is a three times differentiable nowhere constant function such that condition (ii) of \Cref{def:RestrictedSEMGeneralCase} holds.

Assume for contradiction that we do not have observational identifiability of the causal structure $\cG= (V=\{X,Y\},\cE=\{(X\to Y)\})$. That is, there exists $\tilde \theta \in \cT_2 \times \cD_1^p \times \cP_{\cC_0}^p$ with causal graph $\tilde \cG\not = \cG$ or, equivalently, a differentiable function $g$ and noise distributions $P_{\tilde N_X}= p_{\tilde N_X}\cdot \lambda $  and $P_{\tilde N_Y}=p_{\tilde N_Y}\cdot \lambda $ with %
continuous densities such that the structural assignments $\tilde Y:= \tilde N_Y $ and $\tilde X:= g( \tilde Y)+ \tilde N_X$ induce the same distribution, i.e., 
\begin{align} \label{eq:EqDistIdentifiabilityProof}
P_{X,Y} = P_{\tilde X, \tilde Y}.
\end{align}
By the additive noise structural assignments we know that both $P_{X,Y}$ and $P_{\tilde X, \tilde Y}$ have densities with respect to $\lambda^2$ given by
\begin{align*}
p_{X,Y}(x,y)  &=p_{X}(x)p_{N_Y}(y-f(x)), \\
p_{\tilde X, \tilde Y}(x,y) &= p_{\tilde N_X}(x-g(y))p_{\tilde Y}(y),
\end{align*}
for all $(x,y)\in \R^2$. By the equality of distributions in \Cref{eq:EqDistIdentifiabilityProof} and strict positivity of $p_{X}$ and $p_{N_Y}$ we especially have that  for $\lambda^2$-almost all $(x,y)\in \R^2$ 
\begin{align}\label{eq:EqDensityIdentifiabilityProof}
0<p_{X,Y}(x,y)= p_{\tilde X, \tilde Y}(x,y).
\end{align}
However, as both $p_{X,Y}$ and $p_{\tilde X, \tilde Y}$ are continuous we realize that the inequality in  \Cref{eq:EqDensityIdentifiabilityProof} holds for all $(x,y)\in \R$ (if they were not everywhere equal there would exists a non-empty open ball in $\R^2$ on which they differ in contradiction with $\lambda^2$-almost everywhere equality).
Furthermore, by the assumption that $f$ is three times differentiable and $p_{X}$, $p_{N_Y}$ are three times continuously differentiable we have that $\partial^3 \pi /\partial x^3$ and $\partial^3 \pi / \partial x^2 \partial y$ are well-defined partial-derivatives of 
\begin{align*}
\pi(x,y):=\log p_{X,Y}(x,y)=  \log  p_{X}(x) + \log p_{N_Y}(y-f(x)) =: \xi(x) + \nu(y-f(x)),
\end{align*}
With $\tilde \pi(x,y) := \log p_{\tilde X, \tilde Y}$ we have that
\begin{align*}
\tilde \pi(x,y) = \log p_{\tilde N_X}(x-g(y)) +\log p_{\tilde Y}(y) =: \tilde \xi (x-g(y)) +\tilde \nu(y).
\end{align*}
Since it holds that $\pi=\tilde \pi$ by \Cref{eq:EqDensityIdentifiabilityProof} the partial-derivatives 
$\partial^3 \tilde \pi /\partial x^3$ and $\partial^3 \tilde  \pi / \partial x^2 \partial y$ are also well-defined. Now note that for any $x,y\in \R$
\begin{align*}
0 =	\lim_{h\to 0} |\tilde \pi(x+h,y) - \tilde \pi(x,y)|/h  = \lim_{h\to 0} |\tilde \xi(x-g(y)+h)-\tilde \xi(x-g(y))|/h,
\end{align*}
implying that $\tilde \xi$ is differentiable in $x-g(y)$ for any $x,y\in \R$ or, equivalently, $\tilde \xi$ is everywhere differentiable. Similar arguments yield that $\tilde \xi$ is at least three times differentiable. We conclude that $
\partial^2\tilde \pi(x,y)/\partial x^2 = \tilde \xi''(x-g(y))$ and $\partial^2 \tilde \pi(x,y) / \partial x \partial y = -\tilde \xi''(x-g(y))g'(y)$ and for any $(x,y)\in \R^2$ such that $\partial^2 \tilde \pi(x,y) / \partial x \partial y \not = 0$ or, equivalently, $$\forall (x,y)\in \cJ:=\left\{(x,y): \frac{\partial^2  \pi(x,y)}{ \partial x \partial y } =-\nu''(y-f(x))f'(x)\not = 0 \right\},$$  it holds that
\begin{align*}
\frac{\partial }{\partial x} \left(\frac{\frac{\partial^2}{\partial x^2} \tilde \pi(x,y)}{\frac{\partial^2}{\partial x \partial y} \tilde \pi(x,y)} \right) = \frac{\partial }{\partial x} \left( \frac{-1}{g'(y)} \right) = 0.
\end{align*}
It is worth noting that $\cJ\not = \emptyset$ to ensure that the following derivations are not void of meaning. (This can be seen by noting that $f$ is nowhere constant, i.e., $f'(x)\not =0$ for $\lambda$-almost all $x\in \R$. Hence, $\cJ=\emptyset$ if and only if $p_{N_Y}$ is a density such that $\{(x,y)\in \R^2 :f'(x)\not = 0\}\ni (x,y)\mapsto \nu''(y-f(x))$ is constantly zero or, equivalently, $\R \ni y\mapsto\nu''(y)$ is constantly zero. This holds if and only if $p_{N_Y}$ is either exponentially decreasing or exponentially increasing everywhere, which is a contradiction as no continuously differentiable function integrating to one has this property.) For any $(x,y)\in \cJ$ we also have that
\begin{align*}
0=\frac{\partial }{\partial x} \left( \frac{\frac{\partial^2}{\partial x^2} \pi(x,y)}{\frac{\partial^2}{\partial x \partial y} \pi(x,y)} \right) &= 	\frac{\partial }{\partial x} \left( \frac{\xi''(x)+ \nu''(y-f(x))f'(x)^2 -\nu'(y-f(x))f''(x) }{-\nu''(y-f(x))f'(x)} \right) \\
&= -2f'' + \frac{\nu'f'''}{\nu''f'} - \frac{\xi'''}{\nu''f'}+\frac{\nu'''\nu'f''}{(\nu'')^2}\\
&\quad -\frac{\nu'''\xi''}{(\nu'')^2} -\frac{(f'')^2\nu'}{\nu''(f')^2}+\frac{f''\xi''}{\nu''(f')^2},
\end{align*}
which implies that
\begin{align*} %
\xi''' =  \xi'' \left( \frac{f''}{f'} -\frac{f' \nu'''}{\nu''} \right) -2\nu''f''f' +  \nu'f''' +\frac{ \nu'''\nu'f''f'}{\nu''}-\frac{\nu'(f'')^2}{f'},
\end{align*}
in contradiction with the assumption that condition (ii) of \Cref{def:RestrictedSEMGeneralCase} holds. We conclude that $P_{X,Y}\not = P_{\tilde X, \tilde Y}$.

Now consider a multivariate restricted causal model $\theta \in \Theta_R$ over $X=(X_1,\ldots,X_p$) with causal directed tree graph $\cG=(V,\cE)$. Assume for contradiction that there exists an alternative SCM $\tilde \theta  = (\tilde \cG,(\tilde f_i), P_{\tilde N}) \in \cT_p \times \cD_1^p \times \cP_{\cC_0}^p$ inducing $\tilde X = (\tilde X_1,\ldots,\tilde X_p)$ with causal graph $\tilde \cG=(V,\tilde \cE)\not = \cG$,  such that $P_{X}= P_{\tilde X}$. 

Any SCM induced distribution is Markov with respect to the underlying causal graph. As such, we have that $P_X$ is  Markov with respect to both $\cG$ and $\tilde \cG$. Furthermore, since (in $\theta$) the causal functions are non-constant and the noise innovations have strictly positive density, we have, by Proposition 17 of \cite{peters2014causal}, that $P_X$ satisfies causal minimality with respect to causal graph $\cG$ of $\theta$, i.e., it is globally Markov with respect to  $\cG$ but not any proper subgraph of $\cG$. If $P_X$ also satisfies causal minimality with respect to $\tilde \cG$, then, by Proposition 29 of \cite{peters2014causal}, there exist $i,j\in V$ such that $(j\to i)\in \cE$ and $(i\to j)\in \tilde \cE$.  

\begin{quote}
Assume for contradiction that $P_X$ does not satisfy causal minimality with respect to $\tilde \cG$. By Proposition 4 of \cite{peters2014causal}, we have that there exists $(j'\to i')\in \tilde \cE$ such that $X_{j'}\independent X_{i'}$. Define $A:=\NDg{\tilde \cG}{j'}\cup \{j'\}$ and $B:= \DEg{\tilde \cG}{i'}\cup \{i'\}$. It holds that $A\dsep{\tilde \cG} (B\setminus\{i'\})\,|\, i'$, i.e., $A$ and $B\setminus\{i'\}$ are d-separated by $i'$ in the directed tree $\tilde \cG$. Since $P_X$ is Markov with respect $\tilde \cG$ it holds that $X_{A} \independent X_{B\setminus \{i'\}}\,|\, X_{i'}$, hence $X_{A} \independent X_{B}\,|\, X_{i'}$.
Similarly, it holds that $X_{A}\independent X_{B}\,|\, X_{j'}$ which implies that $X_{A} \independent X_{i'} \, | \, X_{j'}$.
By applying the contraction property of conditional independence, we get that
\begin{align*}
&X_{A} \, \independent X_{i'} \, |\, X_{j'} \quad \text{and} \quad X_{i'} \, \independent X_{j'} \implies X_{A} \independent X_{i'}, \text{ and }\\
&X_{A} \, \independent X_{B} \, | \, X_{i'} \quad \text{and}\quad  X_{A} \, \independent X_{i'} \implies X_{A}\, \independent X_{B}.
\end{align*}
Since $A\cup B = V, A\cap B=\emptyset$ and $\cG$ is a directed tree (that spans $V$) there exist either an edge $(j''\to i'')\in \cE$ with $j''\in A$ and $i''\in B$ or $j''\in B$ and $i''\in A$. In either case, we have that $X_{i''}\independent X_{j''}$, which contradicts $P_X$ satisfying causal minimality with respect to $\cG$. We conclude that $P_X$ also satisfies causal minimality with respect to the alternative graph $\tilde \cG$.
\end{quote}
Hence, the following structural equations hold for $(X_i,X_j)$ and $(\tilde X_i, \tilde X_j)$ 
\begin{align*} %
X_i &= f_i(X_j)+N_i,\quad \text{with}\quad X_j \independent N_i,\\ 
\tilde X_j &= \tilde f_j(\tilde X_i) + \tilde N_j, \quad \text{with} \quad \tilde X_i \independent \tilde N_j,
\end{align*}
with $P_{X_j,X_i} = P_{\tilde X_j, \tilde X_i}$. We can apply the same arguments as in the bivariate setup if we can argue that a density of $X_j$  is three times differentiable and that a density of  $\tilde X_i$ is a %
continuous density. 

To this end, note that the density $p_{X_j}$ is given by the convolution of two densities
\begin{align}\label{eq:densityIdentifiabilityProof}
p_{X_j}(y) = \int_{-\i}^{\i}  p_{f_j(X_{\PAg{\cG}{j}})}(t)p_{N_j}(y-t) \, dt , 
\end{align}
as $X_j := f_j(X_{\PAg{\cG}{j}})+N_j$ with $X_{\PAg{\cG}{j}}\independent N_j$. Here we used that $f_j(X_{\PAg{\cG}{j}})$ has density with respect to the Lebesgue measure. 
\begin{quote}
To realize this note that $f_j\in \cC_3$ and it is nowhere constant. By arguments similar to those in the proof of  \Cref{lm:RestrictedModelConditionGaussian}, this implies that $f'(x)=0$ at only countably many points $(d_k)$. Now let $(O_k)$ be the countable collection of mutually disjoint open intervals that cover $\R$ except for the points $(d_k)$. By continuity of $f'$ we know that $f'(x)$ is either strictly positive or strictly negative on each $O_k$. That is, $f$ is continuously differentiable and strictly monotone on each $O_k$. Thus, $f$ has a continuously differentiable inverse on each $O_k$ by, e.g., the inverse function theorem. This ensures that $f_j(X_{\PAg{\cG}{j}})$ has a density with respect to the Lebesgue measure whenever $X_{\PAg{\cG}{j}}$ does. By starting at the root node $X_{\root{\cG}}=N_{\root{\cG}}$, which by assumption has a density, we can iteratively apply the above argumentation down the directed path from $\root{\cG}$ to $j$ in order to conclude that any $X_j$ for $j\in \{1,\ldots,p\}$ has a density with respect to the Lebesgue measure.
\end{quote}	Since  $p_{N_j}$ is assumed strictly positive three times continuous differentiable, the representation in \Cref{eq:densityIdentifiabilityProof} furthermore yields that $p_{X_j}$ is three times differentiable; see, e.g., Theorem 11.4 and 11.5  of \cite{schilling2017measures}. 

Now we argue that $\tilde X_i$ has a continuous density. First note that $P_{X_i}$ at least has a continuous density $p_{X_i}$ by arguments similar to those applied for \Cref{eq:densityIdentifiabilityProof}. By the assumption that $P_X = P_{\tilde X}$ we especially have that $P_{X_i}=P_{\tilde X_i}$ which implies that also $\tilde X_i$  has a continuous density. By virtue of the arguments for the bivariate setup we arrive at a contradiction, so it must hold that $P_{X}\not =P_{\tilde X}$.
\end{proof}

\begin{proof} [ of \Cref{lm:EntropyScore}]
Consider an SCM $\tilde \theta =(\tilde \cG,( \tilde f_i),P_{\tilde N})\in \{\tilde \cG\} \times \cD_1^p \times \cP_{\mathrm{G}}^p$ with $\tilde \cG \not = \cG$ and let $Q_{\tilde \theta}$ be the induced distribution.
As $Q_{\tilde \theta}$ is Markov with respect to $\tilde \cG$ and generated by an additive noise model the density $q_{\tilde \theta}$ factorizes as
\begin{align*}
q_{\tilde \theta}(x) = \prod_{i=1}^p q_{\tilde \theta}(x_i|x_{\PAg{\tilde \cG}{i}}) =  \prod_{i=1}^p q_{\tilde N_i}(x_i- \tilde f_i(x_{\PAg{\tilde \cG}{i}})).
\end{align*}
The cross entropy between $P_X$ and $Q_{\tilde \theta}$ is then given by
\begin{align*}
h(P_X,Q_{\tilde \theta}) &:= \E\lf -\log\lp q_{\tilde \theta}(X)\rp  \rf \\
&= \sum_{i=1}^p \E \lf  - \log \lp q_{\tilde N_i}\lp X_i-  \tilde f_i (X_{\PAg{\tilde \cG}{i}} ) \rp  \rp \rf \\
&=  \sum_{i=1}^p  h\lp  X_i-  \tilde f_i (X_{\PAg{\tilde \cG}{i}} ) , \tilde N_i \rp,
\end{align*}
where the latter is a sum of the cross entropies between the distribution of $X_i-  \tilde f_i (X_{\PAg{\tilde \cG}{i}} )$ and the distribution of $ \tilde N_i$.
As $Q_{\tilde \theta}$ is generated by a causal additive tree model with Gaussian noise, we have for all $1\leq i \leq p$ that $\tilde N_i \sim \cN(0,\tilde \sigma_i^2)$ for some $\tilde \sigma_i^2>0$. Hence for all $1\leq i \leq p$,
\begin{align*}
h\lp X_i-  \tilde f_i (X_{\PAg{\tilde \cG}{i}} ), \tilde N_i \rp 
= &\, \E\lf - \log \lp \frac{1}{\sqrt{2\pi}\sigma_i}\exp\lp - \frac{\lp X_i-  \tilde f_i (X_{\PAg{\tilde \cG}{i}} ) \rp^2 }{2\tilde \sigma_i^2} \rp  \rp  \rf \\
=&\, \log(\sqrt{2\pi}\tilde \sigma_i) + \frac{\E\lf \lp X_i-  \tilde f_i (X_{\PAg{\tilde \cG}{i}} ) \rp^2 \rf }{2\tilde \sigma_i^2}.
\end{align*}
Thus, for given set of causal functions $(\tilde f_i)$ and a fixed $i$, the noise variance that minimizes the  cross entropy is given by
\begin{align*}
\tilde \sigma_i = \sqrt{\E\lf \lp X_i-  \tilde f_i (X_{\PAg{\tilde \cG}{i}} ) \rp^2 \rf }.
\end{align*}
We thus have
\begin{align*}
& \inf_{\tilde \sigma_i > 0} \left\{\log(\sqrt{2\pi}\tilde \sigma_i) + \frac{\E\lf \lp X_i-  \tilde f_i (X_{\PAg{\tilde \cG}{i}} ) \rp^2 \rf  }{2\tilde \sigma_i^2} \right\} \\
&	=\, \log \lp \sqrt{2\pi} \rp + \frac{1}{2}\log\lp \E\lf \lp X_i-  \tilde f_i (X_{\PAg{\tilde \cG}{i}} ) \rp^2 \rf \rp + \frac{1}{2}.
\end{align*}
We conclude that
\begin{align*}
&\, \inf_{Q\in \{\tilde \cG\} \times \cD_1^p \times \cP_{\mathrm{G}}^p} h(P_X,Q) \\
=&\, p\log(\sqrt{2\pi}) +\frac{p}{2} +  \sum_{i=1}^p \frac{1}{2} \log\lp \inf_{\tilde f_i \in \cD_1} \E\lf \lp X_i-  \tilde f_i (X_{\PAg{\tilde \cG}{i}} ) \rp^2 \rf \rp.
\end{align*}
Finally, as $\cD_1$ is dense in $\cL^2(P_{X_{\PAg{\tilde \cG}{i}}})$, we have that
\begin{align*}
\inf_{\tilde f_i \in \cD_1} \E\lf \lp X_i-  \tilde f_i (X_{\PAg{\tilde \cG}{i}} ) \rp^2 \rf &= \E\lf \lp X_i - \E[X_i|X_{\PAg{\tilde \cG}{i}}]) \rp^2 \rf \\
&\quad + \inf_{\tilde f_i \in \cD_1} \E\lf \lp \E[X_i|X_{\PAg{\tilde \cG}{i}}]-  \tilde f_i (X_{\PAg{\tilde \cG}{i}} ) \rp^2 \rf \\
&= \E\lf \lp X_i - \E[X_i|X_{\PAg{\tilde \cG}{i}}]) \rp^2 \rf.
\end{align*}
Here we used that $X_{\PAg{\tilde \cG}{i}}$ has density with respect to the Lebesgue measure, $P_{X_{\PAg{\tilde \cG}{i}}} \ll \lambda$, and that the density is differentiable (see proof of \Cref{thm:UniqueGraph}). This concludes the first part of the proof. %

For the second statement, we note that for any $Q\in \{\tilde \cG\} \times \cF(\tilde \cG) \times \cP^p$ there exists some noise innovation distribution $P_{\tilde N} \in\ \cP$ such that $Q$ is the distribution of $\tilde X$ generated by structural assignments 
\begin{align*}
\tilde X_i :=  \tilde  f_{i}(X_{\PAg{\tilde \cG}{i}}) +\tilde N_i =  \E[X_i|X_{\PAg{\tilde \cG}{i}}] + \tilde N_i ,
\end{align*}
for all $1\leq j\leq p$ and mutually independent  noise innovations $\tilde N=(\tilde N_1,\ldots,\tilde N_p)\sim P_{\tilde N} \in \cP^p$. Let $q$ denote the density of $Q$ with respect to the Lebesgue measure and let $q_{\tilde N_i}$ denote the density of $\tilde N_i$ for all $1\leq i \leq p$. As $Q$ is Markov with respect to $\tilde \cG$ and generated by an additive noise model the density factorizes as
\begin{align*}
q(x) = \prod_{i=1}^p q(x_i|x_{\PAg{\tilde \cG}{i}}) =  \prod_{i=1}^p q_{\tilde N_i}(x_i- \E[X_i|X_{\PAg{\tilde \cG}{i}} =x_{\PAg{\tilde \cG}{i}}]).
\end{align*}
The cross entropy between $P_X$ and $Q$ is given by
\begin{align*}
h(P_X,Q) &= \E\lf -\log\lp q(X)\rp  \rf \\
&= \sum_{i=1}^p \E\lf  - \log \lp q(X_i|X_{\PAg{ \tilde \cG}{i}}) \rp \rf \\
&= \sum_{i=1}^p \E \lf  - \log \lp q_{\tilde N_i}\lp X_i-  \E\lf X_i|X_{\PAg{\tilde \cG}{i}} \rf \rp  \rp \rf \\
&=  \sum_{i=1}^p  h\lp X_i-  \E\lf X_i|X_{\PAg{\tilde \cG}{i}} \rf, \tilde N_i \rp.
\end{align*}
Note that $h(P,Q) = h(P) + D_{\mathrm{KL}}(P\|Q) \geq h(P)$ with equality if and only if $Q=P$. Thus, the infimum is attained at noise innovations that are equal in distribution to $X_i-  \E[ X_i|X_{\PAg{\tilde \cG}{i}}]$ (which has a density by assumption). That is,
\begin{align*}
\inf_{Q\in \{\tilde \cG\} \times \cF(\tilde \cG) \times \cP^p} h(P_X,Q) &= \sum_{i=1}^p \inf_{\tilde N_j \sim P_{\tilde N_j} \in \cP}  h\lp X_i-  \E\lf X_i|X_{\PAg{\tilde \cG}{i}} \rf, \tilde N_i \rp \\
&=\sum_{i=1}^p   h\lp X_i-  \E\lf X_i|X_{\PAg{\tilde \cG}{i}} \rf \rp \\
&= \lE(\tilde \cG).
\end{align*}
\end{proof}

\begin{proof}[ of \Cref{lm:Ass1}]
Let $\theta\in \Theta_R\subset \cT_p \times \cD_3^p \times \cP_{\mathrm{G}}^p$ and assume that condition (a) is satisfied, i.e., that for all $i\not = j$ it holds that $x\mapsto\E[X_i|X_j = x]$ has a differentiable version. Note that
\begin{align} \label{eq:scorediffForLemmaAboutAss1}
\lG(\tilde \cG)- \lG(\cG) = \inf_{Q\in \{\tilde \cG\} \times \cD_1^p \times \cP_{\mathrm{G}}^p} h(P_X,Q) - h(P_X).
\end{align}
Furthermore, by the considerations in the proof of \Cref{lm:EntropyScore} the infimum in \Cref{eq:scorediffForLemmaAboutAss1} is attained for $Q^*$, where the functions are given by the conditional expectation functionals. When condition (a) is satisfied we therefore know that $Q^* \in \{\tilde \cG\} \times \cD_1^p \times \cP_{\mathrm{G}}^p$. Finally, 
\begin{align*}
\lG(\tilde \cG)- \lG(\cG) =  h(P_X,Q^*) - h(P_X) = D_{\mathrm{KL}}(P_X \,\|\,Q^*)>0,
\end{align*}
where the last strict inequality follows from \Cref{thm:UniqueGraph}.

Now let $\theta\in \Theta_R\subset \cT_p \times \cD_3^p \times \cP_{\mathrm{G}}^p$. Assume that condition (b) is satisfied, i.e., for all $1\leq i\leq p$ it holds that the causal function $f_i$ is contained within a function class $\cF_i \subseteq \cD_1$, which for all $j\not = i$ satisfies
\begin{align} \label{eq:lemmaAss1Closedness}
\argmin_{\tilde f_{i} \in \cF_{i}} \E\lf \lp X_i-  \tilde f_{i} (X_{j} ) \rp^2 \rf \in \cF_{i}.
\end{align}
Define the modified Gaussian score function 
\begin{align*}
\ell_{\mathrm{G.mod}}( \tilde \cG ) := \sum_{i = 1}^p \frac{1}{2} \log \lp \Var \lp X_i - f_{\PAg{\tilde \cG}{i} i}(X_{\PAg{\tilde \cG}{i}}) \rp \rp,
\end{align*}
where $f_{ji}:\R \to \R$ is given by
\begin{align*}
f_{ji} := 	\argmin_{\tilde f \in \cF_{i}} \E\lf \lp X_i-  \tilde f (X_{j} ) \rp^2 \rf,
\end{align*}
for all $i\not = j$. Now, for any $\tilde \cG \in \cT_p \setminus \{\cG\}$, it holds that
\begin{align*}
\ell_{\mathrm{G.mod}}( \tilde \cG ) -\ell_{\mathrm{G.mod}}( \cG )	 &= \inf_{Q\in \{\tilde \cG\} \times (\cF_i)_{1\leq i \leq p} \times \cP_{\mathrm{G}}^p} h(P_X,Q) - h(P_X) \\
&= h(P_X,Q^*)-h(P_X) \\
&= D_{\mathrm{KL}}(P_X\, \|\, Q^*)  > 0.
\end{align*}
Here we used the closedness in \Cref{eq:lemmaAss1Closedness} to argue that the infimum is attained for $Q^*\in \{\tilde \cG\} \times (\cF_i)_{1\leq i \leq p} \times \cP_{\mathrm{G}}^p$. Finally, since $(\cF_i)_{1\leq i \leq p} \subset \cD_1^p$, \Cref{thm:UniqueGraph} guarantees the strict inequality.

Now let $\theta\in \Theta_R\subset \cT_p \times \cD_3^p \times \cP_{+\cC_3}^p$. Assume that for all $i\not = j$ it holds that $x\mapsto \E[X_i|X_j=x]$ has a differentiable version, and assume that for all $i\not = j$ it holds that $X_i -\E[X_i|X_j]$ has a continuous density. With these assumptions we note that for any $\tilde \cG \in \cT_p \setminus \{\cG\}$ it holds, by the arguments in the proof of \Cref{lm:EntropyScore}, that
\begin{align*}	
\lE(\tilde \cG) - \lE(\cG)  =\inf_{Q\in \{\tilde \cG\} \times \cF(\tilde \cG) \times \cP^p} h(P_X,Q) - h(P_X) = h(P_X,Q^*) - h(P_X),
\end{align*}
where $Q^*$ is generated by an additive noise model $\tilde \cG\times (f_i)\times (P_{\tilde N_i})_{1\leq i \leq p}$ with causal graph $\tilde \cG\in \cT_p$, with causal functions $f_i \equiv x \mapsto \E[X_i|X_{\PAg{\tilde \cG}{i}}=x]\in \cD_1$ and noise innovations given by
$\tilde N_i  \eqd X_i - \E[X_i|X_j] \sim P_{N_i} \in \cP_{\cC_0}$, i.e., noise innovations with continuous densities. \Cref{thm:UniqueGraph} now yields that
\begin{align*}
\lE(\tilde \cG) - \lE(\cG) = D_{\mathrm{KL}}(P_X \, \| \, Q^*) > 0,
\end{align*}
since $P_X$ is induced by a restricted causal additive tree model and $Q^*$ is induced by a causal additive tree model $\{\tilde \cG \} \times (f_i)\times (P_{\tilde N_i})_{1\leq i \leq p} \subset \cT_p \times \cD_1^p \times \cP_{\cC_0}^p$.

\end{proof}

\subsection{Proofs of Section~\ref{sec:Method}}

\begin{proof}[ of \Cref{thm:consistency}] Assume that $\theta = (\cG,(f_i),P_N) \in \Theta_R$ with $P_N\in \cP_{\mathrm{G}}^p$ and  $\cG=(V,\cE)$. For simplicity of the proof, we assume that $\E[X]=0$ such that the Gaussian edge weight estimators simplify to
\begin{align*} 
\hat w_{ji}:= \hat w^{\mathrm{G}}_{ji} = \frac{1}{2} \log \left(  \frac{\frac{1}{n}\sum_{k=1 }^n \left( X_{k,i} - \hat \phi_{ji} (X_{k,j}) \right)^2}{\frac{1}{n}\sum_{k=1}^n X_{k,i}^2 } \right),
\end{align*}
for all $j\not = i$. Furthermore, define the Gaussian population (for $i\not = j$) and auxiliary (for $(j\to i)\not \in \cE$) edge weights by
\begin{align*}
w_{ji}:= \frac{1}{2}\log\lp \frac{\E[(X_i-\phi_{ji}(X_j))^2]}{\E[X_i^2]} \rp , \quad w^*_{ji} :=  \frac{1}{2}\log \left( \frac{\E[ (X_i- \tilde \phi_{ji}(X_j))^2]}{\E[X_i^2]}  \right),
\end{align*}
respectively, where $\tilde \phi_{ji}:\R \to \R$ is a  function satisfying $\E[(\hat \phi_{ji}(X_j)-\tilde \phi_{ji}(X_j))^2|\tilde \fX_n] \convp_n 0$.
Furthermore, for any $\tilde \cG=(V,\tilde \cE)\in \cT_p$ denote 
\begin{align*}
\hat w(\tilde \cG) := \sum_{(j\to i)\in \tilde \cE} \hat w_{ji}, \quad  w(\tilde \cG):= \sum_{(j\to i) \in \tilde \cE} w_{ji},  \quad w^*(\tilde \cG) := \sum_{(j\to i) \in \tilde \cE \setminus \cE} w^*_{ji} + \sum_{(j\to i)\in \tilde \cE \cap \cE} w_{ji},
\end{align*}
as the total estimated, population and auxiliary edge weights for $\tilde \cG$.  As the conditional expectation minimizes the MSPE among measurable functions, i.e., $\phi_{ji}=\argmin_{f:\R \to \R} \E[(X_i-f(X_j))^2]$, we especially have, for any $i\not = j$, that 
$$
\E[ (X_i- \tilde \phi_{ji}(X_j))^2]\geq \E[ (X_i- \phi_{ji}(X_j))^2].
$$
This construction entails, for any $\tilde \cG \in \cT_p$, that
\begin{align} \label{eq:ineqalitywstar}
w^*(\tilde \cG)\geq w(\tilde \cG), \quad \text{and} \quad w^*(\cG)=w(\cG).
\end{align}
\Cref{ass:identifiabilityOfConditionalMeanScores} implies that there exists an $m>0$ such that
\begin{align} \label{eq:consistencyIdentifiabilityGap}
\min_{\tilde \cG\in \cT_p\setminus \{\cG\}} \lG(\tilde \cG) - \lG(\cG) = m>0.
\end{align}
Thus, for any $\tilde \cG\in \cT_p\setminus \{\cG\}$ it holds that
\begin{align} \label{eq:inqualityloss}
\lG(\cG) +\frac{m}{2} \leq \lG(\tilde \cG)-\frac{m}{2},
\end{align}
by the identifiability assumption of \Cref{eq:consistencyIdentifiabilityGap}. Now note that $\lG(\tilde \cG) = w( \tilde \cG) + C$ with $C= \sum_{i=1}^p \log(\E[X_i^2])/2$ for all $\tilde \cG\in \cT_p$. Hence, we have, for all $\tilde \cG \in \cT_p \setminus \{\cG\}$, that
\begin{align*}
w^*(\cG) - \frac{m}{2} = 	w(\cG)  + \frac{m}{2} \leq w(\tilde \cG) - \frac{m}{2}\leq w^*(\tilde \cG) - \frac{m}{2},
\end{align*}
by the equality and inequalities in \eqref{eq:inqualityloss} and \eqref{eq:ineqalitywstar}. Thus, we have that
\begin{align*}
P(\hat \cG = \cG) &=  P\left( \argmin_{\tilde \cG =(V,\tilde \cE) \in \cT_p} \sum_{(j\to i) \in \tilde \cE}\hat w_{ji} = \cG	 \right) \\
&\geq  P \left(    \bigcap_{\tilde \cG\in \cT_p} \left( |\hat w(\tilde \cG) -   w^*(\tilde \cG)| < \frac{m}{2} \right)  \right).
\end{align*}
We conclude that it suffices to show that 
\begin{align*}
\sup_{\tilde \cG \in \cT_p}| \hat w(\tilde \cG) -w^*(\tilde \cG)| \convp_n 0.
\end{align*}
To this end, let $\cE^*:=\{(j \to i): i,j\in V, i\not = j\}\setminus \cE$ and note that 
\begin{align} \notag
& \sup_{\tilde \cG \in \cT_p} |	\hat w(\tilde \cG) -w^*(\tilde \cG) | \\ \notag
\leq &\sup_{\tilde \cG \in \cT_p} \bigg( \sum_{(j\to i )\in \tilde \cE\setminus \cE} \left| \hat w_{ji} -  \frac{1}{2}\log \left( \frac{\E[ (X_i- \tilde \phi_{ji}(X_j))^2]}{\E[X_i^2]}  \right)  \right| \\ \notag
& \quad \quad  + \sum_{(j\to i )\in \tilde \cE\cap  \cE} \left| \hat w_{ji} -  \frac{1}{2}\log \left( \frac{\E[ (X_i- \phi_{ji}(X_j))^2]}{\E[X_i^2]}  \right)  \right| \bigg) \\ \notag 
\leq & \sum_{(j\to i )\in  \cE^*} \left| \hat w_{ji} -  \frac{1}{2}\log \left( \frac{\E[ (X_i- \tilde \phi_{ji}(X_j))^2]}{\E[X_i^2]}  \right)  \right| \\ \label{eq:UpperBoundOnSuppLossDifference}
& \quad \quad  + \sum_{(j\to i )\in \cE} \left| \hat w_{ji} -  \frac{1}{2}\log \left( \frac{\E[ (X_i- \phi_{ji}(X_j))^2]}{\E[X_i^2]}  \right)  \right| .
\end{align}
Now consider a fixed term $(j \to i ) \in \cE$ in the second sum of  \eqref{eq:UpperBoundOnSuppLossDifference}. We can upper bound the absolute difference by
\begin{align} \notag
&	\, \left| \hat w_{ji} -  \frac{1}{2}\log \left( \frac{\E[ (X_i- \phi_{ji}(X_j))^2]}{\E[X_i^2]}  \right)  \right| \\ \notag
&\leq  \frac{1}{2}\left|  \log \left(  \frac{1}{n}\sum_{k=1}^n \left( X_{k,i} - \hat \phi_{ji} (X_{k,j}) \right)^2\right) - \log \left( \E[ (X_i- \phi_{ji}(X_j))^2]  \right)   \right| \\ 
\label{eq:upperboundinConsistencyTheorem}
&\quad  +  \frac{1}{2}\left| \log(\E[X_i^2])- \log\left( \frac{1}{n}\sum_{k=1}^n X_{k,i}^2 \right) \right|. 
\end{align}
In the upper bound of \eqref{eq:upperboundinConsistencyTheorem}, the last absolute difference vanishes in probability due to the law of large numbers and the continuous mapping theorem. The first absolute difference  also vanishes by the following arguments. Note that,
\begin{align*}
0 &\leq    \frac{1}{n}\sum_{k=1 }^n \left( X_{k,i} - \hat \phi_{ji} (X_{k,j}) \right)^2 \\
&=  \frac{1}{n}\sum_{k=1}^n \left( X_{k,i} -  \phi_{ji} (X_{k,j}) \right)^2 +  \frac{1}{n}\sum_{k=1}^n \left( \phi_{ji} (X_{k,j}) - \hat \phi_{ji} (X_{k,j}) \right)^2\\
&\quad +  \frac{2}{n}\sum_{k=1}^n \left( X_{k,i} - \phi_{ji} (X_{k,j}) \right) \left( \phi_{ji} (X_{k,j}) - \hat \phi_{ji} (X_{k,j}) \right). %
\end{align*}
Hence, it holds that
\begin{align} \notag
&\, \left| \frac{1}{n}\sum_{k=1}^n \left( X_{k,i} - \hat \phi_{ji} (X_{k,j}) \right)^2  -  \frac{1}{n}\sum_{k=1}^n \left( X_{k,j} - \phi_{ji} (X_{k,j}) \right)^2 \right|	\\ \notag
=&\, \bigg|   \frac{1}{n}\sum_{k=1}^n \left( \phi_{ji} (X_{k,j}) - \hat \phi_{ji} (X_{k,j}) \right)^2\\ \notag
&\quad +  \frac{2}{n}\sum_{k=1}^n \left( X_{k,j} - \phi_{ji} (X_{k,j}) \right) \left( \phi_{ji} (X_{k,j}) - \hat \phi_{ji} (X_{k,j}) \right) \bigg|\\\notag
\leq &\, \frac{1}{n}\sum_{k=1}^n \left( \phi_{ji} (X_{k,j}) - \hat \phi_{ji} (X_{k,j}) \right)^2 \\ \label{eq:upperboundinconsistencyproof}
&\quad +2 \sqrt{ \frac{1}{n}\sum_{k=1}^n \left( X_{k,j} - \phi_{ji} (X_{k,j}) \right)^2} \sqrt{ \frac{1}{n}\sum_{k=1}^n \left( \phi_{ji} (X_{k,j}) - \hat \phi_{ji} (X_{k,j}) \right)^2},
\end{align}
by Cauchy-Schwarz inequality. 
By the law of large numbers, we have that the first factor of the second term of \eqref{eq:upperboundinconsistencyproof} converges in probability to a constant,
\begin{align*}
\frac{1}{n}\sum_{k=1}^n \left( X_{k,j} - \phi_{ji} (X_{k,j}) \right)^2 \convp_n \E[ X_{1,i}-\phi_{ji}(X_{1,j}))^2].
\end{align*}
The first term and latter factor of the second term of \Cref{eq:upperboundinconsistencyproof} vanish in probability by assumption. That is, for any $\ep>0$ we have that
\begin{align*}
&P \lp \left|\frac{1}{n}\sum_{k=1} \left( \phi_{ji} (X_{k,j}) - \hat \phi_{ji} (X_{k,j}) \right)^2  \right| > \ep \rp \\&=  	P \lp \left|\frac{1}{n}\sum_{k=1} \left( \phi_{ji} (X_{k,j}) - \hat \phi_{ji} (X_{k,j}) \right)^2  \right|\land \ep  > \ep \rp \\
&\leq \frac{\E\left[ \lp  \frac{1}{n}\sum_{k=1}^n \left( \phi_{ji} (X_{k,j}) - \hat \phi_{ji} (X_{k,j}) \right)^2 \rp \land \ep   \right] }{\ep} \\
&\leq \frac{\E\left[ \E \left[ \left( \phi_{ji} (X_{1,j}) - \hat \phi_{ji} (X_{1,j}) \right)^2 \big|\tilde \fX_n \right]  \land \ep  \right] }{\ep} \\
& \to_n 0,
\end{align*}
using conditional Jensen's inequality ($x\mapsto \min(x,\ep) = x\land \ep$ is concave) and the dominated convergence theorem.
This proves that
\begin{align*}
\frac{1}{n}\sum_{k=1}^n \left( X_{k,j} - \hat \phi_{ji} (X_{k,j}) \right)^2 \convp_n  \E[ X_{1,i}-\phi_{ji}(X_{1,j}))^2].
\end{align*}
Thus, we have shown that the second term of \eqref{eq:UpperBoundOnSuppLossDifference} converges to zero in probability. Finally, the above arguments apply similarly to the first term of \Cref{eq:UpperBoundOnSuppLossDifference} by exchanging every $\phi_{ji}$ with $\tilde \phi_{ji}$. We have shown that $\sup_{\tilde \cG \in \cT_p} |	\hat w(\tilde \cG) -w^*(\tilde \cG) |\convp_n 0$, which concludes the proof.

\end{proof}

\begin{proof}[ of \Cref{thm:ConsistencyVanishing}]
Assume that for each sample size $n\in \N$ that $\theta_n = (\cG,...) \in \Theta_R$ with $\cG=(V,\cE)$, additive Gaussian  noise, and identifiability gap
\begin{align*}
\min_{\tilde \cG \in \cT_p\setminus \{\cG\}} \ell_{\mathrm{G}}(\cG) - \ell_{\mathrm{G}}(\tilde \cG)= q_n>0,
\end{align*}
with $q_n^{-1}=o(\sqrt{n})$.
For simplicity of the proof, we assume that $\E_{\theta_n}[X]=0$ such that the edge weight estimators simplify to
\begin{align*} 
\hat w_{ji}:=	\hat w^{\mathrm{G}}_{ji}=\hat w^{\mathrm{G}}_{ji}(\fX_n,\tilde \fX_n) = \frac{1}{2} \log \left(  \frac{\frac{1}{n}\sum_{k=1 }^n \left( X_{k,i} - \hat \phi_{ji} (X_{k,j}) \right)^2}{\frac{1}{n}\sum_{k=1}^n X_{k,i}^2 } \right).
\end{align*}
Furthermore, we continue with the notation and population quantities introduced in the proof of \Cref{thm:consistency}, i.e., $w_{ji}= \log(\E_{\theta_n}[(X_i-\E[X_i|X_j])^2])/\E_{\theta_n}[X_i^2])/2$, where we notionally have suppressed the dependence on $n$. 
We know that for each SCM $\theta_n$ it holds that
\begin{align*}
\ell_{\mathrm{G}} (\cG) +q_n\leq \ell_{\mathrm{G}}(\tilde \cG), \quad  \text{hence} \quad w(\cG) + q_n \leq w(\tilde \cG),
\end{align*}
for all $\tilde \cG\in \cT_p\setminus \{\cG\}$. Thus,
\begin{align*}
&	P_{\theta_n}\lp  \argmin_{\tilde \cG =(V,\tilde \cE) \in \cT_p} \sum_{(j\to i)\in \tilde \cE} \hat w_{ji} = \cG \rp  \\
&\geq 	P_{\theta_n}\lp \lp |\hat w(\cG)- w (\cG)| <\frac{q_n}{2}\rp \cap \bigcap_{\tilde \cG \in \cT_p\setminus \{\cG\}} \lp\hat w (\tilde \cG)-w(\tilde \cG) \geq  - \frac{q_n}{2}\rp \rp.
\end{align*}
For any $\tilde \cG=(V,\tilde \cE)\in \cT_p$ we have that
\begin{align*}
\hat w (\tilde \cG) - w(\tilde \cG) &= \sum_{(j\to i) \in \tilde \cE\cap \cE} \hat w_{ji} - w_{ji} + \sum_{(j\to i)  \in \tilde \cE\setminus \cE} \hat w_{ji} - w_{ji},
\end{align*}
where $\hat w_{ji}$ and $w_{ji}$ denote the estimated and population Gaussian weights for the edge $(j\to i)$, respectively. Hence, it suffices to show that 
\begin{align*}
&\forall (j\to i) \in \cE,\forall \ep>0 : P_{\theta_n}(|\hat w_{ji}-w_{ji}| < q_n\ep )\to_n 1 ,\\
& \forall (j\to i)\not \in \cE, \forall \ep>0: P_{\theta_n}\lp \hat w_{ji}-w_{ji}\geq - q_n \ep \rp \to_n 1.
\end{align*}
To see this, note that if the above statements hold, then
\begin{align*}
P_{\theta_n} \lp |\hat w(\cG)- w (\cG)| <  \frac{q_n}{2} \rp &\geq   P_{\theta_n} \lp \sum_{(j\to i)\in \cE}  |\hat w_{ji}- w_{ji}| < \frac{q_n}{2}   \rp \\
&\geq   P_{\theta_n} \lp \bigcap_{(j\to i) \in \cE} \lp  |\hat w_{ji}- w_{ji}| < \frac{q_n}{2(p-1)} \rp  \rp \\
& \to_n 1,
\end{align*}
and for any $\tilde \cG= (V,\tilde \cE)\in \cT_p$
\begin{align*}
P_{\theta_n} \lp \hat w(\tilde \cG)- w (\tilde \cG) \geq - \frac{q_n}{2} \rp &=  P_{\theta_n} \lp  \sum_{(j\to i) \in \tilde \cE\cap \cE} \hat w_{ji} - w_{ji} + \sum_{(j\to i)  \in \tilde \cE\setminus \cE} \hat w_{ji} - w_{ji} \geq - \frac{q_n}{2} \rp \\
&  \geq  P_{\theta_n}\left(  \bigcap_{(j\to i)\in \tilde \cE \cap \cE} \lp|\hat w_{ji}-w_{ji}| \leq \frac{q_n}{2(p-1)} \rp \right.\\
&\quad \quad\quad\quad  \left.  \cap \bigcap_{(j\to i) \in \tilde \cE \setminus \cE}\lp \hat w_{ji}-w_{ji} \geq -\frac{q_n}{2(p-1)}\rp  \right) \\
&\to_n 1,
\end{align*}
hence the probability of the intersections also converges to one.

\paragraph*{The causal edges:} Now fix $(j\to i)\in \cE$. We want to show that for all $\ep>0$ it holds that
\begin{align*}
P_{\theta_n}(|\hat w_{ji}-w_{ji}| < q_n\ep )\to_n 1.
\end{align*}
First note that
\begin{align*}
\left| \hat w_{ji} - w_{ji} \right| &\leq  \frac{1}{2}\left|  \log \left(  \frac{1}{n}\sum_{k=1}^n \left( X_{k,i} - \hat \phi_{ji} (X_{k,j}) \right)^2\right) - \log \left( \E_{\theta_n}[ (X_i- \phi_{ji}(X_j))^2]  \right)   \right| \\ 
&\quad  +  \frac{1}{2}\left| \log(\E_{\theta_n}[X_i^2])- \log\left( \frac{1}{n}\sum_{k=1}^n X_{k,i}^2 \right) \right|,
\end{align*}
where $\hat \phi_{ji}$ for each $n$ is the estimated conditional expectation $x\mapsto \E_{\theta_n}[X_i|X_j=x]$ based on samples from the auxiliary data set.
It suffices to show the desired convergence in probability for each of the above terms. Furthermore,  for all sequences of positive random variables $(Z_n)$ and positive constants $c>0$ and for all $\ep>0$ there exists $\delta>0$ such that
\begin{align*}
(q_n^{-1}|\log(Z_n)-\log(c)|\geq \ep) \subseteq (q_n^{-1}|Z_n-c|\geq \delta),
\end{align*}
for sufficiently large $n$. To see this, note that if $q_n^{-1}(\log(Z_n)-\log(c))\geq \ep$, then $Z_n> \exp(\log(c)+q_n\ep)=c\exp(q_n\ep)\geq c(1+q_n\ep)$, so $q_n^{-1}(Z_n-c)\geq  c\ep$. On the other hand, if $q_n^{-1}(\log(Z_n)-\log(c))\leq -\ep$, then $Z_n \leq  c\exp(-\ep q_n)\leq c(1-\ep q_n + \ep^2 q_n^2)$, so $q_n^{-1}(Z_n-c)\leq -c\ep + c\ep^2 q_n$. In summary, if $q_n^{-1}|\log(Z_n)-\log(c)|\geq \ep$, then $q_n^{-1}|Z_n-c| \geq  c\ep - c\ep^2 q_n> c \ep(1-M)=:\delta$ where $1>M> \ep q_n$ for sufficiently large $n$. We conclude that it suffices to show that for all $\ep >0$ it holds that 
\begin{align} \label{eq:unifnum}
P_{\theta_n} \lp \left| \frac{1}{n}\sum_{k=1}^n \left( X_{k,i} - \hat \phi_{ji} (X_{k,j}) \right)^2 - \E_{\theta_n}[ (X_i- \phi_{ji}(X_j))^2]\right | \geq q_n \ep \rp \to_n 0 
\end{align}
and that
\begin{align} \label{eq:unifden}
P_{\theta_n} \lp 	\left| \frac{1}{n}\sum_{k=1}^n X_{k,i}^2- \E_{\theta_n}[X_i^2]   \right| \geq q_n\ep \rp \to_n 0 ,
\end{align}
\Cref{eq:unifden} is satisfied as the summands are mean zero i.i.d. Therefore, with
\begin{align*}
W_n :=  \frac{1}{n}\sum_{k=1}^n X_{k,i}^2- \E_{\theta_n}[X_i^2] ,  \end{align*}
where $\E_{\theta_n}[q_n^{-1}W_n]=0$, we have that  $
\E_{\theta_n}[ q_n^{-2}W_n^2 ] = \frac{q_n^{-2}}{n} \E_{\theta_n}[(X_{i}^2- \E_{\theta_n}[X_i^2])^2]$, hence
\begin{align*}
P_{\theta_n}(q_n^{-1}W_n \geq \ep) &\leq q_n^{-2}\frac{\E_{\theta_n}[W_n^2]}{\ep^2}\\& \leq \frac{q_n^{-2}}{n} \frac{\sup_{n\in \N} \E_{\theta_n}[(X_{i}^2- \E_{\theta_n}[X_i^2])^2]}{\ep^2}\\
&\to_n 0,
\end{align*}
for any $\ep>0$ as $\sup_{n\in \N}\E_{\theta_n}\|X\|_2^4<\i$ and $q_n^{-1} = o(\sqrt{n})$. 

Now we show \Cref{eq:unifnum}. First, we simplify the notation by letting $Z_k:=X_{k,i}$, $Y_k := X_{k,j}$ $f:=\phi_{ji}$ and $\hat f := \hat \phi_{ji}$ for all $k\in \N$. Note that we have suppressed the dependence of $f=\phi_{ji}$ on $\theta_n$. We have that

\begin{align*}
\frac{1}{n}\sum_{k=1}^n \left( Z_k - \hat f(Y_k) \right)^2 &= 	\frac{1}{n}\sum_{k=1}^n (Z_k-f(Y_k))^2+ 	\frac{1}{n}\sum_{k=1}^n(f(Y_k) -\hat f(Y_k))^2 \\
&\quad  + 	\frac{2}{n}\sum_{k=1}^n (Z_k-f(Y_k))(f(Y_k) -\hat f(Y_k))\\
&=: T_{1,n} + T_{2,n} + T_{3,n}.
\end{align*}
It suffices to show that for all $\ep>0$ it holds that
\begin{enumerate}[label=(\alph*)]
\item $P_{\theta_n} \lp  |T_{1,n} - \E_{\theta_n}[(Z_1 - f(Y_1))^2]| \geq q_n \ep  \rp  \to_n 0,$
\item $P_{\theta_n} \lp  |T_{2,n}| \geq q_n \ep  \rp  \to_n 0,$ and 
\item $P_{\theta_n} \lp  |T_{3,n}| \geq q_n \ep  \rp  \to_n 0.$
\end{enumerate}
First we show (a). Each term in the sum of $T_{1,n}-\E_{\theta_n}[(Z_1 - f(Y_1))^2]$ is mean zero and i.i.d., i.e., 
\begin{align*}
q_n^{-1}\E_{\theta_n}[ \left( Z_k - f(Y_k) \right)^2  -  \E_{\theta_n}[ (Z_1- f(Y_1))^2]] =0.
\end{align*}
Furthermore, 
\begin{align*}
&\Var_{\theta_n}(q_n^{-1}(T_{1,n}-\E_{\theta_n}[ (Z_1- f(Y_1))^2] ))\\
=&\Var_{\theta_n}\lp \frac{q_n^{-1}}{n}\sum_{k=1}^n \left( Z_k - f(Y_k) \right)^2  -  \E_{\theta_n}[ (Z_1- f(Y_1))^2] \rp\\
= &\frac{q_n^{-2}}{n^2}\sum_{k=1}^n \Var_{\theta_n} \lp  \left( Z_k - f(Y_k) \right)^2  -  \E_{\theta_n}[ (Z_1- f(Y_1))^2] \rp \\
\leq & \frac{q_n^{-2}}{n} \sup_{n\in \N}\Var_{\theta_n}\lp \left( Z_1 - f(Y_1) \right)^2 \rp \\
\to_n &0,
\end{align*}
since $q_n^{-1}=o(\sqrt{n})$ and $\sup_{n\in \N}\E_{\theta_n}\|X\|_2^4<\i$. Hence,
\begin{align*}
P_{\theta_n}\lp |q_n^{-1} (T_{1,n}-\E[ (Z_1- f(Y_1))^2] )| \geq \ep  \rp &\leq \frac{\Var_{\theta_n}(q_n^{-1}(T_{,n}-\E[ (Z_1- f(Y_1))^2] ))}{\ep^2} \\
&\to_n 0.
\end{align*}
by Chebyshev's inequality, proving (a).

Now we show (b). To that end, note that the terms of $T_{2,n}$ is i.i.d.\ conditional on $\tilde \fX_n$. For a fixed $1>\ep>0$ we have
\begin{align*}
P_{\theta_n}\lp |q_n^{-1} T_{2,n} | \geq \ep \rp &= \E_{\theta_n}\lf  P_{\theta_n}\lp q_n^{-1} T_{2,n}  \geq \ep |\tilde \fX_n \rp \land 1\rf \\
&\leq  \frac{\E_{\theta_n}\lf  \E_{\theta_n}\lf q_n^{-1} T_{2,n}   | \tilde \fX_n  \rf \land 1 \rf}{\ep}\\
&=  \frac{\E_{\theta_n}\lf q_n^{-1} \E_{\theta_n}\lf  (f(Y_1)-\hat f(Y_1))^2  | \tilde \fX_n  \rf \land 1 \rf}{\ep},
\end{align*}
where we used the conditional Markov's inequality. Now fix $1>\delta>0$ and define $A_{n,\delta}:= (q_n^{-1} \E_{\theta_n}\lf  (f(Y_1)-\hat f(Y_1))^2  | \tilde \fX_n  \rf >\delta )$ and note that by assumption there exists an $N_{\delta}\in \N$ such that $\forall n\geq N_\delta:	P_{\theta_n}(A_{n,\delta})< \delta$. Hence, for  $n\geq N_{\delta}$ we have that
\begin{align} \notag
&\E_{\theta_n}\lf q_n^{-1} \E_{\theta_n}\lf  (f(Y_1)-\hat f(Y_1))^2  | \tilde \fX_n  \rf \land 1\rf\\ \notag &= \E_{\theta_n}\lf 1_{A_{n,\delta}} 	q_n^{-1} \E_{\theta_n}\lf  (f(Y_1)-\hat f(Y_1))^2  | \tilde \fX_n  \rf \land 1\rf\\ \notag
&\quad + \E_{\theta_n}\lf 1_{A_{n,\delta}^c} 	q_n^{-1} \E_{\theta_n}\lf  (f(Y_1)-\hat f(Y_1))^2  | \tilde \fX_n  \rf \land 1\rf \\ \notag
&\leq  \E_{\theta_n}\lf 1_{A_{n,\delta}} 	q_n^{-1} \E_{\theta_n}\lf  (f(Y_1)-\hat f(Y_1))^2  | \tilde \fX_n  \rf \land 1\rf \\ \notag
&\quad + \E_{\theta_n}\lf 1_{A_{n,\delta}^c} \delta  \rf \\ \notag
&\leq \E_{\theta_n}\lf 1_{A_{n,\delta}}  \rf + \delta \\ 
&=  P_{\theta_n}(A_{n,\delta}) + \delta <2 \delta, \label{eq:boundingprobsplit}
\end{align}
hence $
\limsup_{n\to \i}  P_{\theta_n}\lp |q_n^{-1} T_{2,n} | \geq \ep \rp < 2\delta/\ep$, i.e.,  $P_{\theta_n}\lp |q_n^{-1} T_{2,n} | \geq \ep \rp\to 0$ as $\delta>0$ was chosen arbitrarily, proving (b).

Now we prove (c). To this end, recall that
\begin{align*}
T_{3,n}:=\frac{2}{n}\sum_{k=1}^n (Z_k-f(Y_k))(f(Y_k) -\hat f(Y_k)),
\end{align*}
is, conditional on $\tilde \fX$, an i.i.d.\ sum with conditional mean zero
\begin{align*}
\E_{\theta_n}[T_{3,n}|\tilde \fX_n] &=  2 \E_{\theta_n}[(Z_k-f(Y_k))(f(Y_k) -\hat f(Y_k))|\tilde \fX_n] \\
&= 2 \E_{\theta_n}[(\E_{\theta_n}[Z_k| Y_k, \tilde \fX_n] -f(Y_k))(f(Y_k) -\hat f(Y_k))|\tilde \fX_n] \\
&= 2 \E_{\theta_n}[(f(Y_k) -f(Y_k))(f(Y_k) -\hat f(Y_k))|\tilde \fX_n] =0,
\end{align*}
and conditional second moment given by
\begin{align*}
\E_{\theta_n}[T_{3,n}^2|\tilde \fX_n] &= \frac{4}{n^2} \sum_{k=1}^n \E_{\theta_n}[(Z_k-f(Y_k))^2(f(Y_k) -\hat f(Y_k))^2|\tilde \fX_n] \\
&=\frac{4}{n} \E_{\theta_n} \lf (Z_k-f(Y_k))^2(f(Y_k)-\hat f(Y_k))^2 |\tilde \fX_n  \rf \\
&= \frac{4}{n} \E_{\theta_n} \lf \E_{\theta_n}\lf (Z_k-f(Y_k))^2|\tilde \fX_n, Y_k\rf (f(Y_k)-\hat f(Y_k))^2 |\tilde \fX_n  \rf \\
&=\frac{4}{n} \E_{\theta_n} \lf \Var_{\theta_n}(Z_k|Y_k) (f(Y_k)-\hat f(Y_k))^2 |\tilde \fX_n  \rf \\
&\leq \frac{C}{n} \E_{\theta_n} \lf (f(Y_k)-\hat f(Y_k))^2 |\tilde \fX_n  \rf,
\end{align*}
$P_{\theta_n}$-almost surely. Hence, w.l.o.g. assume that $0<\ep<1$ and note that the conditional Markov's inequality yields
\begin{align} \notag
P_{\theta_n}(|q_n^{-1}T_{3,n}| \geq \ep) &= \E_{\theta_n}[P_{\theta_n}(|q_n^{-1}T_{3,n} |\geq\ep|\tilde \fX_n)\land 1]\\ \label{eq:T3reuse}
&\leq  \frac{1}{\ep^2}\E_{\theta_n}\lf \E_{\theta_n}\lf q_n^{-2}T_{3,n}^2 |\tilde \fX_n  \rf \land 1  \rf \\
&\leq  \frac{C}{\ep^2}\E_{\theta_n}\lf  \frac{q_n^{-2}}{n} \E_{\theta_n} \lf (f(Y_k)-\hat f(Y_k))^2 |\tilde \fX_n  \rf \land 1 \rf . \notag
\end{align}
By conditional Jensen's inequality, we have that
\begin{align*}
\E_{\theta_n} \lf (f(Y_k)-\hat f(Y_k))^2 |\tilde \fX_n  \rf 
&\leq 1+ \E_{\theta_n} \lf (f(Y_k)-\hat f(Y_k))^2 |\tilde \fX_n  \rf^2 \\
&\leq 1+ \E_{\theta_n} \lf (f(Y_k)-\hat f(Y_k))^4 |\tilde \fX_n  \rf.
\end{align*}
Fix $\delta>0$ and let $A_{n,\delta }:= \lp   \frac{q_n^{-2}}{n} \E_{\theta_n} \lf (f(Y_k)-\hat f(Y_k))^4 |\tilde \fX_n  \rf  > \delta \rp$. Now we note that  $P_{\theta_n}(A_{n,\delta}) \to_n 0$, hence there exists an $N_{\delta}\in \N$ such that $\forall n\geq N_{\delta}: P_{\theta_n}(A_{n,\delta})<\delta$. Furthermore, as $q_n^{-1}=o(\sqrt n)$ there exists an $N\in \N$ such that $q_n^{-2}/n < \delta$ for all $n\geq N$. Similar to the arguments in \Cref{eq:boundingprobsplit} we then have that
\begin{align*}
\frac{\ep^2}{C}	P_{\theta_n}(|q_n^{-1}T_{3,n}| \geq \ep)  &\leq  \E_{\theta_n}\lf  \frac{q_n^{-2}}{n}\lp 1+ \E_{\theta_n} \lf (f(Y_k)-\hat f(Y_k))^4 |\tilde \fX_n  \rf \rp  \land 1 \rf \\
&\leq \frac{q_n^{-2}}{n} +  E_{\theta_n}\lf  \frac{q_n^{-2}}{n} \E_{\theta_n} \lf (f(Y_k)-\hat f(Y_k))^4 |\tilde \fX_n  \rf   \land 1 \rf\\
&\leq \frac{q_n^{-2}}{n} +  \E_{\theta_n}[1_{A_{n,\delta}}] + \E_{\theta_n}[1_{A_{n,\delta}^c}\delta ] \\
&< \delta +  P_{\theta_n}(A_{n,\delta}) + \delta < 3\delta, 
\end{align*}
for any $n \geq N_\delta\lor N$, so $P_{\theta_n}(q_n^{-1}T_{3,n} \geq \ep)\to_n 0$, proving (c).

\paragraph*{The non-causal edges:} Now fix $(j\to i) \not \in \cE$, we want to show, for any $\ep>0$ that
\begin{align*}
P_{\theta_n}( \hat w_{ji} - w_{ji} \geq - q_n\ep ) \to_n 1,
\end{align*}
where
\begin{align*}
\hat w_{ji} - w_{ji} &= \frac{1}{2}\lp  \left[   \log \left(  \frac{1}{n}\sum_{k=1}^n \left( X_{k,i} - \hat \phi_{ji} (X_{k,j}) \right)^2\right) - \log \left( \E[ (X_i- \phi_{ji}(X_j))^2]  \right)   \right]\right. \\ 
&\quad  +  \left.\left[ \log(\E[X_i^2])- \log\left( \frac{1}{n}\sum_{k=1}^n X_{k,i}^2 \right) \right] \right) =: \frac{1}{2}(D_{1,n}+ D_{2,n}).
\end{align*}
We have that $
P_{\theta_n}( \hat w_{ji} - w_{ji} \geq - q_n\ep)  \geq P_{\theta_n}\lp \lp D_{1,n} \geq -  q_n\ep\rp \cap \lp |D_{2,n }|<  q_n\ep \rp \rp$, 
where the second event has already been shown to have probability converging to one in \Cref{eq:unifden}. Thus, it suffices to show that
\begin{align*}
P_{\theta_n}\lp  D_{1,n} \geq - q_n\ep \rp \to_n 1.
\end{align*}
By similar arguments as above we have for any sequence of positive random variables $(K_n)_{n\geq 1}$ and a positive constant $K$ that for all $\ep>0$ there exists an $\delta>0$ such that $P_{\theta_n}\lp  \log( K_n) - \log(K) < - q_n\ep \rp \leq P_{\theta_n}(K_n-K< -q_n\delta)$, 
for sufficiently large $n\in \N$. To see this, note that if $\log(K_n)-\log(K)<-q_n\ep$, then $K_n < K\exp(-\ep q_n)\leq K(1-\ep q_n + \ep^2 q_n^2)$, so $q_n^{-1}(K_n-K)<-K\ep + K\ep^2 q_n<- K\ep(1-M)=:-\delta $ where $1>M> \ep q_n$ for sufficiently large $n$, since $q_n \downarrow 0$. Thus, it suffices to show that for any $\ep>0$ it holds that
\begin{align*}
P_{\theta_n} \lp   \frac{1}{n}\sum_{k=1}^n \left( X_{k,i} - \hat \phi_{ji} (X_{k,j}) \right)^2-  \E_{\theta_n}[ (X_i- \phi_{ji}(X_j))^2] \geq - q_n \ep   \rp  \to_n 1.
\end{align*}
Again, we simplify the notation $Z_k := X_{k,i}$, $Y_k := X_{k,j}$, $f = \phi_{ji}$ and $\hat f := \hat \phi_{ji}$ for all $k\in \N$.  Now define the following terms
\begin{align*}
\frac{1}{n}\sum_{k=1}^n \left( Z_k - \hat f(Y_k) \right)^2   %
&= 	\frac{1}{n}\sum_{k=1}^n (Z_k-f(Y_k))^2\\
&\quad + 	\frac{1}{n}\sum_{k=1}^n \{(f(Y_k) -\hat f(Y_k))^2-  \delta_{n,\theta_n}^2\} \\
&\quad  + 	\frac{2}{n}\sum_{k=1}^n \{(Z_k-f(Y_k))(f(Y_k) -\hat f(Y_k)) + \delta_{n,\theta_n}^2/2\}\\
&=:  T_{1,n}+\tilde T_{2,n}+\tilde T_{3,n},
\end{align*}
where $\delta_{n,\theta_n}^2 := \E_{\theta_n}[(f(Y_1)-\hat f (Y_1))^2|\tilde \fX_n] = \E_{\theta_n}[ (\phi_{ji}(X_j)- \hat \phi_{ji}(X_j))^2 |\tilde \fX_n]$. It suffices to show that for all $\ep>0$ it holds that
\begin{enumerate}[label=(\alph*)]
\setcounter{enumi}{3}
\item $P_{\theta_n} \lp  | T_{1,n} - \E_{\theta_n}[(Z_1 - f(Y_1))^2]| \geq q_n \ep  \rp  \to_n 0,$
\item $P_{\theta_n} \lp  |\tilde T_{2,n}| \geq q_n \ep  \rp  \to_n 0,$ and
\item $P_{\theta_n} \lp  \tilde T_{3,n} \geq -q_n \ep  \rp  \to_n 1.$
\end{enumerate}
Condition (d) holds by arguments similar  to (a) for the causal edges. 

Now we prove (e). 
The expansion, conditional on $\tilde \fX_n$, is a sum of mean zero i.i.d. terms, hence
\begin{align*}
\E_{\theta_n} \lp \left. q_n^{-2}\tilde T_{2,n}^2 \right| \tilde \fX_n\rp 	&=\frac{q_n^{-2}}{n} \E_{\theta_n}\left[  \{(f(Y_k) -\hat f(Y_k))^2 - \delta_{n,\theta_n}^2 \}^2| \tilde \fX_n \right] \\
&= \frac{q_n^{-2}}{n} \E_{\theta_n}\left[ (f(Y_k) -\hat f(Y_k))^4 + (\delta_{n,\theta_n}^2 )^2 -2(f(Y_k) -\hat f(Y_k))^2\delta_{n,\theta_n}^2| \tilde \fX_n \right] \\
&=\frac{q_n^{-2}}{n} \lp\E_{\theta_n}\left[ (f(Y_k) -\hat f(Y_k))^4 | \tilde \fX_n \right] - (\delta_{n,\theta_n}^2)^2 \rp \\ 
&\leq \frac{q_n^{-2}}{n}  \E_{\theta_n}\left[ (f(Y_k) -\hat f(Y_k))^4 | \tilde \fX_n \right],
\end{align*}
using that  $(\delta_{n,\theta_n}^2)^2\geq 0
$.  Fix $1>\delta>0$ and let $$A_{n,\delta }:= \lp   \frac{q_n^{-2}}{n} \E_{\theta_n} \lf (f(Y_k)-\hat f(Y_k))^4 |\tilde \fX_n  \rf  > \delta \rp,$$ and note that there exists an $N_{\delta}\in \N$ such that $\forall n\geq N_{\delta}: P_{\theta_n}(A_{n,\delta})<\delta$.  Similar to the previous arguments we have for any $1 >\ep > 0$ and $n\geq N_\delta$  that
\begin{align*}
P_{\theta_n}\lp \left|  \tilde T_{2,n}  \right| \geq  q_n\ep \rp 
&= \E_{\theta_n} \left[ P_{\theta_n}\lp\left. \left|   	q_n^{-1}\tilde T_{2,n}  \right|  \geq  \ep \right | \tilde \fX_n \rp \land 1 \right] \\
&\leq\frac{ 1 }{\ep^2} \E_{\theta_n}\lf \E_{\theta_n}\lf q_n^{-2}\tilde T_{2,n}^2 |\tilde \fX_n\rf \land 1\rf \\
&\leq \frac{ 1}{\ep^2} \E_{\theta_n}\lf \frac{q_n^{-2}}{n}  \E_{\theta_n}\left[ (f(Y_k) -\hat f(Y_k))^4 | \tilde \fX_n \right] \land 1\rf  \\
&\leq \frac{1}{\ep^2} \lp \E_{\theta_n}[1_{A_{n,\delta}}] +  \E_{\theta_n}[1_{A_{n,\delta}^c}\delta] \rp < \frac{2\delta }{\ep^2},
\end{align*}
by the conditional Markov's inequality.  Since $\delta>0$ was chosen arbitrarily, we conclude that (e) holds.

Finally we show (f). Recall that in the analysis of the causal edges, we defined
\begin{align*}
T_{3,n}:=\frac{2}{n}\sum_{k=1}^n (Z_k-f(Y_k))(f(Y_k) -\hat f(Y_k)).
\end{align*}
Hence, we have that $\tilde T_{3,n} =   T_{3,n} + \delta_{n,\theta_n}^2$. We realize that for any $0<\ep<1$
\begin{align*}
P_{\theta_n}( \tilde T_{3,n}  < -q_n\ep)&\leq P_{\theta_n}(  T_{3,n} + \delta_{n,\theta_n}^2  \leq -q_n\ep) \\&=  P_{\theta_n}\lp   T_{3,n} \leq -\lp q_n\ep +\delta_{n,\theta_n}^2 \rp \rp \\
& \leq P_{\theta_n}\lp    T_{3,n}^2 \geq \lp q_n\ep +\delta_{n,\theta_n}^2 \rp^2 \rp \\
&\leq   P_{\theta_n}\lp   T_{3,n}^2 \geq \lp q_n\ep\rp^2 \rp \\
& = P_{\theta_n}\lp q_n^{-2}   T_{3,n}^2 \geq \ep^2 \rp \\
&= \E_{\theta_n}\left[ P_{\theta_n}\lp q_n^{-2}  T_{3,n}^2  \geq \ep^2|\tilde \fX_n \rp \land 1 \right]\\
&\leq \frac{1 }{\ep^2}\E_{\theta_n}\left[ \E_{\theta_n}\left[ \left. q_n^{-2}  T_{3,n}^2   \right| \tilde \fX_n \right] \land 1\right]  \\
& \to_n 0,
\end{align*}
where we used the convergence shown in the proof of (c); see \Cref{eq:T3reuse}. To see that the former arguments apply to non-causal edges, simply note that they did not use any conditions restricted to causal edges.
This concludes the proof.

\end{proof}

\subsection{Proofs of Section~\ref{sec:HypothesisTest}}
\begin{lemma} \label{lm:conditionaltriangularCLT}
Consider an i.i.d.\ sequence $(X_m)_{m\geq 1}$ of random variables with $X_{m}\in \R^d$ independent from a random infinite sequence $\tilde \fX \in \prod_{i=1}^\i \R^{d}$. Let $(\psi_n)_{n\geq 1}$ be a sequence of measurable functions s.t. for all $n\geq 1$, $\psi_n:\R^d \times (\prod_{i=1}^\i \R^d) \to \R^q$ satisfies the following conditions: 
\begin{itemize}
\item[(a)] $\E[\psi_n(X_{m},\tilde \fX)|\tilde \fX]=0$ almost surely, 
\item[(b)] $\exists \, \Sigma \in \R^{q\times q}:\sum_{m=1}^n \Var(\psi_n(X_{m},\tilde \fX)|\tilde \fX) \convp_n \Sigma$, and 
\item[(c)]  $\exists\,  \ep>0:\sum_{m=1}^n \E[\|\psi_n(X_{m},\tilde \fX)\|_2^{2+\ep}|\tilde \fX] \convp_n 0$.
\end{itemize}
It holds that
\begin{align*}
\sum_{m=1}^{n}\psi_{n}(X_{m},\tilde \fX)  \convd_n \cN(0,\Sigma),
\end{align*}
\end{lemma}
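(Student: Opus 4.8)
The plan is to condition on the infinite auxiliary sequence $\tilde\fX$ and reduce the statement to a convergence of characteristic functions. Since $(X_m)_{m\geq1}$ is i.i.d.\ and independent of $\tilde\fX$, conditionally on $\tilde\fX$ the vectors $\psi_n(X_1,\tilde\fX),\ldots,\psi_n(X_n,\tilde\fX)$ are i.i.d.; write $g_n(t,\tilde\fX):=\E[\exp(\mathrm{i}\,t^\t\psi_n(X_1,\tilde\fX))\mid\tilde\fX]$ for their common conditional characteristic function at $t\in\R^q$. Setting $S_n:=\sum_{m=1}^n\psi_n(X_m,\tilde\fX)$, the tower property together with conditional independence gives
\begin{align*}
\E\big[e^{\mathrm{i}\,t^\t S_n}\big]=\E\!\left[\E\big[e^{\mathrm{i}\,t^\t S_n}\mid\tilde\fX\big]\right]=\E\big[g_n(t,\tilde\fX)^{\,n}\big].
\end{align*}
By L\'evy's continuity theorem it suffices to show $\E[e^{\mathrm{i}\,t^\t S_n}]\to e^{-\frac12 t^\t\Sigma t}$ for every fixed $t$; since $|g_n(t,\tilde\fX)^{\,n}|\leq1$, by dominated convergence this in turn follows once we prove $g_n(t,\tilde\fX)^{\,n}\convp_n e^{-\frac12 t^\t\Sigma t}$.

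Next I would Taylor-expand $g_n$. We may assume $\ep\in(0,1]$ (if $\ep>1$, condition (c) for exponent $2+\ep$ together with (b) yields the same statement for exponent $3$ by conditional H\"older interpolation between the second and $(2+\ep)$-th conditional moments). Using the elementary bound $|e^{\mathrm{i}x}-1-\mathrm{i}x+\tfrac12 x^2|\leq C_\ep|x|^{2+\ep}$, valid for all $x\in\R$ with $C_\ep$ depending only on $\ep$, applied with $x=t^\t\psi_n(X_1,\tilde\fX)$, taking conditional expectations, and using (a) to annihilate the linear term, we obtain with $V_n:=\Var(\psi_n(X_1,\tilde\fX)\mid\tilde\fX)$ and $\rho_n:=\E[\|\psi_n(X_1,\tilde\fX)\|_2^{2+\ep}\mid\tilde\fX]$,
\begin{align*}
\Big|g_n(t,\tilde\fX)-1+\tfrac12 t^\t V_n t\Big|\;\leq\;C_\ep\,\|t\|^{2+\ep}\,\rho_n .
\end{align*}
Since the summands are conditionally identically distributed, condition (b) reads $nV_n\convp_n\Sigma$ and condition (c) reads $n\rho_n\convp_n0$; in particular $t^\t V_n t=O_p(1/n)$ and $\rho_n=o_p(1/n)$.

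Finally I would pass from $g_n$ to $g_n^{\,n}$ via the product inequality $|a^n-b^n|\leq n|a-b|$ for complex $a,b$ with $|a|,|b|\leq1$. Take $a=g_n(t,\tilde\fX)$ and $b=e^{-\frac12 t^\t V_n t}$, both of modulus at most $1$ (the latter because $V_n$ is positive semidefinite). Expanding $e^{-\frac12 t^\t V_n t}=1-\tfrac12 t^\t V_n t+O((t^\t V_n t)^2)$ and combining with the displayed bound gives $|a-b|\leq C_\ep\|t\|^{2+\ep}\rho_n+O((t^\t V_n t)^2)$, hence
\begin{align*}
n\,\big|g_n(t,\tilde\fX)^{\,n}-e^{-\frac n2 t^\t V_n t}\big|\;\leq\; C_\ep\|t\|^{2+\ep}\,n\rho_n+n\cdot O\big((t^\t V_n t)^2\big)\;\convp_n\;0,
\end{align*}
using $n\rho_n\convp_n0$ and $t^\t V_n t=O_p(1/n)$. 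Moreover $e^{-\frac n2 t^\t V_n t}=e^{-\frac12 t^\t(nV_n)t}\convp_n e^{-\frac12 t^\t\Sigma t}$ by condition (b) and the continuous mapping theorem. Chaining these two convergences yields $g_n(t,\tilde\fX)^{\,n}\convp_n e^{-\frac12 t^\t\Sigma t}$, which by the reduction in the first paragraph completes the argument.

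The only genuinely delicate point I anticipate is the bookkeeping that turns the ``each conditional summand is negligible'' hypothesis (c) into ``$n$ times the Taylor remainder is negligible'': one must exploit that there are exactly $n$ conditionally i.i.d.\ terms, so the per-term $(2+\ep)$-th conditional moment is forced to be $o_p(1/n)$, and one must keep careful track that all the $O_p(\cdot)$ and $o_p(\cdot)$ statements hold in (conditional) probability so that the continuous mapping theorem and the dominated-convergence step are legitimate. The complex-analytic estimates $|e^{\mathrm{i}x}-1-\mathrm{i}x+\tfrac12 x^2|\leq C_\ep|x|^{2+\ep}$ and $|a^n-b^n|\leq n|a-b|$ are standard and let us avoid ever invoking a complex logarithm.
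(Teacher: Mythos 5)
Your argument is correct, but it is a genuinely different proof from the one in the paper. The paper proceeds by a subsequence argument: from the in-probability hypotheses (b) and (c) it extracts, along any subsequence, a further subsequence on which the conditional variance and Lyapunov quantities converge almost surely, then for almost every fixed realization of $\tilde{\fX}$ invokes the Lyapunov/Lindeberg--Feller CLT for triangular arrays, and finally converts the resulting a.s.\ convergence of $\E[g(S_n)\mid\tilde\fX]$ for bounded continuous $g$ back into convergence in probability and applies dominated convergence. You instead compute the characteristic function directly, using conditional i.i.d.-ness to write $\E[e^{\mathrm{i}t^\t S_n}]=\E[g_n(t,\tilde\fX)^n]$, and control $g_n^n$ via the second-order Taylor bound and the inequality $|a^n-b^n|\le n|a-b|$. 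Your route is self-contained (no appeal to an external triangular-array CLT), avoids the subsequence extraction entirely because all estimates are carried out directly in probability, and makes transparent exactly where each hypothesis enters; the paper's route is shorter on the page because it outsources the analytic core to a cited theorem. Your reduction of $\ep>1$ to $\ep\le1$ by conditional H\"older interpolation between the second and $(2+\ep)$-th moments is valid, as is the bounded-convergence step from $g_n^n\convp e^{-\frac12 t^\t\Sigma t}$ to convergence of expectations. One small slip: in your final display the factor $n$ on the left-hand side of $n\,|g_n(t,\tilde\fX)^n-e^{-\frac{n}{2}t^\t V_n t}|\le\cdots$ is spurious --- the product inequality gives the bound for $|g_n^n-e^{-\frac{n}{2}t^\t V_n t}|$ itself, with the factor $n$ already absorbed into the right-hand side; the conclusion you draw is the correct one, so this is only a typo.
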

\begin{proof}[ of \Cref{lm:conditionaltriangularCLT}]
Let the random sequences be defined on a common probability space $(\Omega,\mathbb{F},P)$ and define
\begin{align*}
A_{nm} &:= \E[\psi_n(X_{m},\tilde \fX)|\tilde \fX], \\
B_n &:= \Sigma- \sum_{m=1}^n \Var(\psi_n(X_{m},\tilde \fX)|\tilde \fX) , \\
C_n &:= \sum_{m=1}^n \E[\|\psi_n(X_{m},\tilde \fX)\|_2^{2+\ep}|\tilde \fX]. 	
\end{align*}
By assumption we have that $P(\cap_{n,m}(A_{nm} =0))=1, B_n \convp 0$ and $C_n \convp 0$ as $n \to \infty$. First, note that for any subsequence $(n_k)_{k\geq 1}$ of the positive integers, there exists a subsequence $(n_{k_l})_{l\in \N}$ such that 
\begin{align*}
P(\lim_{l\to \i} B_{n_{k_l}}=0)=1 \quad \text{for} \quad 	(\lim_{l\to \i} B_{n_{k_l}}=0) &:= \{\omega \in \Omega: \lim_{l\to \i} B_{n_{k_l}}(\omega)=0\},
\end{align*}
and
\begin{align*}
P(\lim_{l\to \i} C_{n_{k_l}}=0)=1 \quad \text{for} \quad (\lim_{l\to \i} C_{n_{k_l}}=0) &:= \{\omega \in \Omega: \lim_{l\to \i} C_{n_{k_l}}(\omega)=0\}.
\end{align*}
Thus, define
\begin{align*}
G:=(\cap_{n,m}(A_{nm} =0) \cap (\lim_{l\to \i} B_{n_{k_l}}=0) \cap (\lim_{l\to \i} C_{n_{k_l}}=0)) \subseteq \Omega, \quad \text{with} \quad P(G)=1.
\end{align*}
Now fix $\tilde x\in \tilde \fX(G) := \{ \tilde \fX (\omega)\in \prod_{j=1}^\i \R^{d}:  \omega \in G \}$  and note that
\begin{align*}
&\forall l\geq 1, \forall 1\leq m \leq n_{k_l}: \E[\psi_{n_{k_l}}(X_{m},\tilde x)] =0,\\
& \sum_{m=1}^{n_{k_l}} \Var(\psi_{n_{k_l}}(X_{m},\tilde x)) \to_l \Sigma, \text{ and} \\
& \sum_{m=1}^{n_{k_l}} \E[\|\psi_{n_{k_l}}(X_{m},\tilde x)\|_2^{2+\ep}] \to_l 0.
\end{align*}
Furthermore, for any $l\geq1$
\begin{align*}
\psi_{n_{k_l}}(X_{1},\tilde x),..., \psi_{n_{k_l}}(X_{n_{k_l}}, \tilde x), \quad \text{are jointly independent},
\end{align*}
hence by Lyapunov's central limit theorem  for triangular arrays (see, e.g., \citealp{van2000asymptotic}, Proposition 2.27, and recall that Lyapunov's condition implies the Lindeberg--Feller condition) that
\begin{align*}
\sum_{m=1}^{n_{k_l}}\psi_{n_{k_l}}(X_{m},\tilde x)  \convd_l Z\sim  \cN(0,\Sigma).
\end{align*}
The above convergence in distribution is equivalent to the following statement: for any continuous bounded function $g:\R^{q}\to \R$ it holds that
\begin{align*}
\lim_{l\to \i} \E\left[ g\lp\sum_{m=1}^{n_{k_l}}\psi_{n_{k_l}}(X_{m},\tilde x) \rp
\right] = \E\left[ g(Z) \right].
\end{align*}
Fix a continuous and bounded $g$ and note that the above convergence holds for all $\tilde x \in\tilde \fX(G)$ with $P(G)=1$. Thus, it must hold that
\begin{align*}
\E\left[ g\lp \sum_{m=1}^{n_{k_l}}\psi_{n_{k_l}}(X_{m},\tilde \fX) \rp \big| \tilde \fX
\right] \convas_l  \E\left[ g(Z) \right].
\end{align*}
Finally, as $(n_{k_l})_{l\geq 1}$ is a  subsequence of an arbitrary  subsequence of positive integers, we have that
\begin{align*}
\E\left[ g\lp \sum_{m=1}^{n}\psi_{n}(X_{m},\tilde x) \rp \big| \tilde \fX
\right] \convp_n   \E\left[ g(Z) \right],
\end{align*}
and since $g$ is bounded the dominated convergence theorem yields that
\begin{align*}
&\E\left[ g\lp \sum_{m=1}^{n}\psi_{n}(X_{m},\tilde \fX) \rp  \right]\\
=&\E\left[\E\left[ g\lp \sum_{m=1}^{n}\psi_{n}(X_{m},\tilde \fX) \rp \big| \tilde \fX
\right] \right] \to_n   \E\left[ g(Z) \right].
\end{align*}
As $g$ was chosen arbitrarily, the above convergence holds for any continuous bounded $g$. We conclude that
\begin{align*}
\sum_{m=1}^{n}\psi_{n}(X_{m},\tilde \fX)  \convd_n \cN(0,\Sigma),
\end{align*}
proving the theorem.
\end{proof}
\begin{lemma}[\citealp{shah2020hardness}, Lemma 19] \label{lm:uniformwlln}
Let $\mathcal{P}$ be a family of distributions for a random variable $\zeta \in \mathbb{R}$ and suppose $\zeta_{1}, \zeta_{2}, \ldots$ are i.i.d.\ copies of $\zeta$. For each $n \in \mathbb{N}$ let $S_{n}=n^{-1} \sum_{i=1}^{n} \zeta_{i} .$ Suppose that for all $P \in \mathcal{P}$ we have $\mathbb{E}_{P}(\zeta)=0$ and $\mathbb{E}_{P}\left(|\zeta|^{1+\eta}\right)<c$ for some $\eta, c>0 .$ We have that for all $\epsilon>0$,
$$
\lim _{n \rightarrow \infty} \sup _{P \in \mathcal{P}} P \left(\left|S_{n}\right|>\epsilon\right)=0.
$$
\end{lemma}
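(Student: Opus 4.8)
The plan is the classical truncation proof, arranged so that every bound depends only on the moment constant $c$ and the exponent $\eta$, hence is uniform over the whole family $\mathcal{P}$ (and a fortiori over any subfamily $\mathcal{P}_n \subseteq \mathcal{P}$). First I would reduce to the case $\eta \in (0,1]$: since $|x|^{1+\eta'} \le 1 + |x|^{1+\eta}$ for all $0 < \eta' \le \eta$ and all $x$, a bound $\sup_{P\in\mathcal{P}} \mathbb{E}_P|\zeta|^{1+\eta} < c$ with $\eta > 1$ yields $\sup_{P\in\mathcal{P}} \mathbb{E}_P|\zeta|^{1+\eta'} < 1+c$ for, say, $\eta' = 1$; so at the cost of enlarging $c$ we may assume $\eta \le 1$.

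Fix $\epsilon > 0$. For each $n$, truncate each summand at level $n$: set $\bar\zeta_i := \zeta_i \mathbbm{1}\{|\zeta_i| \le n\}$, $\tilde\zeta_i := \zeta_i - \bar\zeta_i$, and write $S_n = \bar S_n + \tilde S_n$ with $\bar S_n := n^{-1}\sum_{i=1}^n \bar\zeta_i$ and $\tilde S_n := n^{-1}\sum_{i=1}^n \tilde\zeta_i$. I would then control three quantities, each uniformly in $P \in \mathcal{P}$. (i) The tail part is typically absent: $\mathbb{P}_P(\tilde S_n \neq 0) \le n\,\mathbb{P}_P(|\zeta| > n) \le n \cdot c\,n^{-(1+\eta)} = c\,n^{-\eta}$ by Markov's inequality applied to $|\zeta|^{1+\eta}$. (ii) The truncation bias is small: since $\mathbb{E}_P\zeta = 0$, $|\mathbb{E}_P\bar\zeta| = |\mathbb{E}_P[\zeta\mathbbm{1}\{|\zeta|>n\}]| \le \mathbb{E}_P[|\zeta|\mathbbm{1}\{|\zeta|>n\}] \le n^{-\eta}\mathbb{E}_P|\zeta|^{1+\eta} \le c\,n^{-\eta}$. (iii) The truncated centred sum concentrates: using $\zeta^2 \le n^{1-\eta}|\zeta|^{1+\eta}$ on $\{|\zeta|\le n\}$ (valid because $\eta \le 1$), we get $\mathrm{Var}_P(\bar\zeta) \le \mathbb{E}_P\bar\zeta^2 \le c\,n^{1-\eta}$, so Chebyshev's inequality gives $\mathbb{P}_P(|\bar S_n - \mathbb{E}_P\bar\zeta| > \epsilon/2) \le 4\,\mathrm{Var}_P(\bar\zeta)/(n\epsilon^2) \le 4c/(\epsilon^2 n^{\eta})$. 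The point of choosing the truncation level equal to $n$ is precisely that it makes all three terms decay at the common rate $n^{-\eta}$.

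Finally I would combine these: for every $n$ with $c\,n^{-\eta} \le \epsilon/2$ one has $\{|S_n|>\epsilon\} \subseteq \{\tilde S_n \neq 0\} \cup \{|\bar S_n - \mathbb{E}_P\bar\zeta| > \epsilon/2\}$, because on the complementary event $|S_n| = |\bar S_n| \le |\bar S_n - \mathbb{E}_P\bar\zeta| + |\mathbb{E}_P\bar\zeta| \le \epsilon/2 + c\,n^{-\eta} \le \epsilon$. Hence $\sup_{P\in\mathcal{P}}\mathbb{P}_P(|S_n|>\epsilon) \le c\,n^{-\eta} + 4c/(\epsilon^2 n^{\eta}) \to 0$, and since $\mathcal{P}_n \subseteq \mathcal{P}$ the same bound holds for $\sup_{P\in\mathcal{P}_n}$, proving the claim. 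I do not expect a genuine obstacle here: the only things to get right are the bookkeeping of the truncation level and the observation that each estimate uses nothing about $P$ beyond the single moment bound, which is what makes the conclusion uniform. As a shortcut, one may alternatively invoke the von Bahr--Esseen inequality $\mathbb{E}_P\big|\sum_{i=1}^n\zeta_i\big|^{1+\eta} \le 2\sum_{i=1}^n\mathbb{E}_P|\zeta_i|^{1+\eta}$ (for $\eta \le 1$) and apply Markov directly, obtaining $\mathbb{P}_P(|S_n|>\epsilon) \le 2c/(\epsilon^{1+\eta} n^{\eta})$ uniformly in $P$ in a single line.
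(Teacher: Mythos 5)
The paper does not prove this lemma at all: it is imported verbatim as Lemma~19 of \citet{shah2020hardness}, so there is no in-paper argument to compare yours against. Your truncation proof is correct and self-contained: the reduction to $\eta\le 1$, the three bounds (tail probability, truncation bias, Chebyshev for the truncated mean), and the final combination all depend on $P$ only through the single moment constant $c$, which is exactly what makes the conclusion uniform over $\mathcal{P}$ and hence over any $\mathcal{P}_n\subseteq\mathcal{P}$. The von Bahr--Esseen shortcut you mention is also valid (for independent mean-zero summands and $\eta\le 1$) and gives the cleaner explicit rate $2c/(\epsilon^{1+\eta}n^{\eta})$; either route is acceptable, and the only caveat worth recording is that the $\mathcal{P}_n$ in the displayed conclusion must indeed be read as subfamilies of $\mathcal{P}$, as you assume.
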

\begin{lemma}\label{lm:uniformwllntotriangular}
Let $U$ be a random element and let $(Z_{n})_{n\geq 1}$ be an i.i.d. sequence of random variables such that $U\independent (Z_n)_{n\geq 1}$ and let  $\left((W_{nm})_{m\leq  n}\right)_{ n \geq 1}$ be a triangular array of random variables and $(g_n)_{n\geq 1}$ be measurable mappings with the following properties:
\begin{enumerate}
\item[(a)] $\forall n\geq 1,\forall m\leq n:W_{nm}=g_n(Z_{m},U)$,
\item[(b)] $\exists \eta>0:\mathbb{E}\left(\left|W_{n 1}\right|^{1+\eta} \mid U\right)=O_{p}(1)$, as $n\to \i$.
\end{enumerate}
Then, writing $\bar{W}_{n}:=\sum_{m=1}^{n} W_{n m} / n$, we have $$\left|\bar{W}_{n}-\mathbb{E}\left(W_{n1} \mid U\right)\right| \convp_n 0 .$$
\end{lemma}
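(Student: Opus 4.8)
The plan is to reduce the claim to a \emph{conditional} weak law and then invoke the uniform weak law of large numbers \Cref{lm:uniformwlln}. Since $(Z_m)_{m\geq 1}$ is i.i.d.\ and independent of $U$, conditionally on $U=u$ the variables $W_{n1},\dots,W_{nn}$ are i.i.d.\ copies of $g_n(Z_1,u)$; for each fixed $n$ this is a triangular array, and \Cref{lm:uniformwlln} will supply the desired concentration \emph{provided} one can bound $(1+\eta)$-th moments uniformly. The only obstruction to doing this directly is that $M_n(U):=\E(|W_{n1}|^{1+\eta}\mid U)$ is assumed merely $O_p(1)$, not uniformly bounded; I would handle this by truncating at a deterministic level $c$ and using the $O_p(1)$ property to render the excluded event arbitrarily small, uniformly in $n$.

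Concretely, note first that $M_n(U)<\i$ a.s., so by conditional Jensen the common conditional mean $m_n(U):=\E(W_{n1}\mid U)\;(=\E(W_n\mid U))$ is well defined with $|m_n(U)|^{1+\eta}\leq M_n(U)$. Set $\zeta_{nm}:=W_{nm}-m_n(U)$, so that $\bar W_n-\E(W_n\mid U)=n^{-1}\sum_{m=1}^n\zeta_{nm}$. Using the independence of $(Z_m)_m$ and $U$, the regular conditional distribution of $(\zeta_{n1},\dots,\zeta_{nn})$ given $U=u$ is that of $n$ i.i.d.\ copies of $\zeta_n^{(u)}:=g_n(Z_1,u)-\E[g_n(Z_1,u)]$; this law has mean zero, and from $(|a|+|b|)^{1+\eta}\leq 2^{1+\eta}(|a|^{1+\eta}+|b|^{1+\eta})$ together with Jensen it satisfies $\E|\zeta_n^{(u)}|^{1+\eta}\leq 2^{2+\eta}\,\E|g_n(Z_1,u)|^{1+\eta}=2^{2+\eta}M_n(u)$.

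Now fix $\ep>0$ and $\delta>0$. By the $O_p(1)$ hypothesis there are $c>0$ and $N_0$ with $P(M_n(U)>c)<\delta$ for all $n\geq N_0$. Let $\cP$ be the family of distributions on $\R$ with mean zero and $\E_P|\zeta|^{1+\eta}<2^{2+\eta}c+1$. On the event $\{M_n(U)\leq c\}$ the conditional law $P_n^{(U)}$ of $\zeta_n^{(U)}$ belongs to $\cP$, so with $S_n$ denoting the mean of $n$ i.i.d.\ draws from a member of $\cP$ we obtain $P(|\bar W_n-\E(W_n\mid U)|>\ep\mid U)\leq a_n$ on that event, where $a_n:=\sup_{P\in\cP}\mathbb{P}_P(|S_n|>\ep)$. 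By \Cref{lm:uniformwlln}, $a_n\to 0$. Taking expectations over $U$ and splitting on $\{M_n(U)\leq c\}$,
\[
P\big(|\bar W_n-\E(W_n\mid U)|>\ep\big)\leq a_n+P(M_n(U)>c)\leq a_n+\delta\qquad(n\geq N_0),
\]
whence $\limsup_n P(|\bar W_n-\E(W_n\mid U)|>\ep)\leq\delta$; letting $\delta\downarrow 0$ and then varying $\ep$ concludes the argument.

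The only step requiring genuine care is the measurable disintegration in the second paragraph: that $u\mapsto\E[g_n(Z_1,u)]$ is measurable and that the conditional probability $P(|\bar W_n-\E(W_n\mid U)|>\ep\mid U)$ coincides $P_U$-a.s.\ with $u\mapsto\mathbb{P}(|n^{-1}\sum_{m=1}^n\zeta_n^{(u)}|>\ep)$ evaluated at $U$. This follows from standard properties of regular conditional distributions together with the independence of $U$ and $(Z_m)_{m\geq1}$; everything else is routine bookkeeping.
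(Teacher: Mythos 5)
Your proposal is correct and follows essentially the same route as the paper's proof: condition on $U$, use the $O_p(1)$ hypothesis to restrict to a high-probability event on which the conditional $(1+\eta)$-moment of the centred variables is bounded by a fixed constant, and on that event invoke the uniform weak law (\Cref{lm:uniformwlln}) for the family of conditional laws. Your version is in fact slightly tidier in tracking the moment constant for the centred variable (the $2^{2+\eta}M_n(u)$ bound, which the paper glosses over) and in flagging the disintegration step explicitly.
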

\begin{proof}[ of \Cref{lm:uniformwllntotriangular}]Denote
\begin{align*}
j_n(Z_{m},U) :=	g_n(Z_{m},U)-\E[g_n(Z_{1},U)|U],
\end{align*}
for any  $n\geq 1$ and $m\leq n$. Let $\delta>0$ be given. Pick $M>0$ and $N\in \N$ such that the events
$$
\Omega_{n}:=\left\{\mathbb{E}\left[\left|g_n(Z_{1},U)\right|^{1+\eta} \mid U\right] \leq M\right\},
$$
satisfy $\mathbb{P}\left(\Omega_{n}^{c}\right)<\delta$ for $n\geq N$. Notice that 
\begin{align*}
U(\Omega_n) = \left\{\tilde u_n : \mathbb{E}\left[\left|g_n(Z_{1},\tilde u_n)\right|^{1+\eta} \right] \leq M\right\},
\end{align*}
since $U \independent (Z_n)_{n\geq 1}$. 	Fix $\ep>0$. Then, for all $n\geq N$
\begin{align*}
P\left(\left|\bar{W}_{n}-\mathbb{E}\left(W_{n} \mid U\right)\right|>\epsilon\right) &= P\lp \left| \frac{1}{n}\sum_{m=1}^n j_n(Z_{m},U)\right|> \ep \rp\\
&<\mathbb{E}\left[P\lp \left| \frac{1}{n}\sum_{m=1}^n j_n(Z_{m},U)\right|> \ep\mid U \rp 1_{\Omega_{n}}\right]+\delta.
\end{align*}
By the dominated convergence theorem, the first term on the RHS converges to 0 if
\begin{align*}
&\,\sup _{\omega \in \Omega_{n}} P\lp \left| \frac{1}{n}\sum_{m=1}^n j_n(Z_{m},U)\right|> \ep\mid U \rp(\omega)  \\
&\, = \sup _{\tilde u_n \in U(\Omega_{n})} P\lp \left| \frac{1}{n}\sum_{m=1}^n j_n(Z_{m},\tilde u_n)\right|> \ep \rp \rightarrow_n 0,
\end{align*}
which implies the desired statement as $\delta>0$ was chosen arbitrarily. Now note that  for any $n\in \N,\tilde u_n \in U(\Omega_n)$ and all $m\in \N$ it holds that
\begin{align*}
\E[|j_n(Z_{m},\tilde u_n)|^{1+\eta}]&= \E[|g_n(Z_{m},\tilde u_n)-\E[g_n(Z_{1},\tilde u_n)]|^{1+\eta}]\\
&\leq 2^{\eta}\lp \E[|g_n(Z_{m},\tilde u_n)|^{1+\eta}]+|\E[g_n(Z_{1},\tilde u_n)]|^{1+\eta} \rp \\
&\leq 2^{\eta}\lp \E[|g_n(Z_{m},\tilde u_n)|^{1+\eta}]+\E[|g_n(Z_{1},\tilde u_n)|^{1+\eta}] \rp \\
&< 2^{\eta+1}M =:c
\end{align*}
by the cr and Jensen's inequalities, and
\begin{align*}
\E[j_n(Z_{m},\tilde u_n)]=0.
\end{align*}
For any $n\in \N$, define the following set of pushforward measures
\begin{align*}
\cP_n := \{ P' = (j_n(Z_1,\tilde u_n))(P):\tilde u_n\in U(\Omega_n)\}.
\end{align*}
For any $P'\in \cP_n$, let $(Y_{m})_{m\geq 1}$  be a sequence of i.i.d.\ random variables such that  $Y_1 \eqd j_n(Z_{1},\tilde u_n)$ for some $\tilde u_n \in U(\Omega_n)$. Notice that  for all $n\in \N$ and $P'\in \cP_n$ it holds that $\E_{P'}|Y_1|^{1+\eta}< c$ and $\E_{P'}[Y_1]=0$. Thus,
\begin{align*}
\sup_{\tilde u_n \in U(\Omega_n)}
P \lp \left|\frac{1}{n}\sum_{m=1}^n j_n (Z_{m},\tilde u_n)\right|>\ep  \rp  & = 
\sup_{P'\in \cP_n} P' \lp \left|\frac{1}{n}\sum_{m=1}^n Y_m \right|>\ep  \rp\\
&\leq
\sup_{P'\in \cup_k \cP_k} P' \lp \left|\frac{1}{n}\sum_{m=1}^n Y_m \right|>\ep  \rp \\
&\to_n 0,
\end{align*}
by the weak uniform law of large numbers, \Cref{lm:uniformwlln}.
\end{proof}
\begin{restatable}[Asymptotic normality of edge weight components]{lemma}{thmasympnormal}
\label{thm:asymptoticnormalityedgecomponents}
Let for each sample size $n\in\N$, $\hat \phi^n_{ji}$ denote the estimated conditional mean function $\phi_{ji}$ based on the auxiliary sample $\tilde \fX_n$.
For any $j\not = i$ and $m\leq n$, define
\begin{alignat*}{4}
&\hat R_{nm,ji}
:= \{X_{m,i}-\hat \phi^n_{ji}(X_{m,j})\},\quad \quad \quad \quad 
&&\hat \mu_{n,ji}
:= \frac{1}{n}\sum_{m=1}^n \hat R_{nm,ji}^2, \\
&R_{m,ji}
:= \{X_{m,i}- \phi_{ji}(X_{m,j})\},&&
\mu_{ji} := \E [R_{1,ji}^2], \\
& \hat V_{m,i} := \left(X_{m,i} - \frac{1}{n}\sum_{k=1}^n X_{k,i} \right)^2 , && \hat \nu_{n,i}:= \frac{1}{n}\sum_{m=1}^n \hat V_{m,i},
\\
&  \nu_{i} := \Var(X_{1,i}) ,&&  \delta_{n,ji}^2 :=   \E[(\hat \phi^n_{ji}(X_{1,j})- \phi_{ji}(X_{1,j}))^2|\tilde \fX_n] .
\end{alignat*}	
Let
\begin{align*}
\widehat \Sigma_n &:= \begin{bmatrix}
\widehat \Sigma_{n,R} & \widehat \Sigma_{n,RV} \\ \widehat \Sigma_{n,RV}^\t & \widehat \Sigma_{n,V}  
\end{bmatrix}:=\frac{1}{n}\sum_{m=1}^n \begin{bmatrix}
\hat R_{nm}^2
(\hat R_{nm}^2)^\t -\hat \mu_n\hat \mu_n^\t & \hat R_{nm}^2  \hat V_{m}^\t- \hat \mu_n \hat \nu_{n}^\t  \\
\hat V_{m}(\hat R_{nm}^2)^\t- \hat \nu_{n}\hat \mu_n^\t &  \hat V_{m}\hat V_{m}^\t- \hat \nu_{n} \hat \nu_n^\t
\end{bmatrix}, %
\end{align*}
denote the $p^2\times p^2$ matrix empirical covariance matrix, 
where the squaring of vectors means that each entry is squared.
Suppose there exists $\xi > 0$ such that for all $j\not =i$, the following three conditions hold:
\begin{itemize}
\item[(i)] $\E \|X\|^{4 + \xi} < \infty$.
\item[(ii)] $\E[|\hat \phi^n_{ji}(X_{j})- \phi_{ji}(X_{j})|^{4+\xi}|\tilde \fX_n] = O_p(1)$, as $n\to \i$.
\item[(iii)] $\exists \Sigma\in \R^{p^2\times p^2}:\Var\lp  \begin{bmatrix}
\hat R_{n1}^2 - \delta_{n}^2 - \mu  \\
\hat V_{1}  - \nu 
\end{bmatrix} \bigg| \tilde \fX_n \rp  \convp_n \Sigma$, where $\Sigma$ is  constant.
\end{itemize} 
Then we have that $\widehat \Sigma_n \convp \Sigma\in \R^{p^2\times p^2}$ and 
\begin{align}  \label{eq:AsymptoticNormalityOfEdgeComponents}
\frac{1}{\sqrt{n}}\sum_{m=1}^n  \begin{bmatrix}
\hat R_{nm}^2 - \delta_{n}^2 - \mu  \\
\hat V_{m}  - \nu 
\end{bmatrix} = \sqrt{n}\begin{bmatrix}
\hat \mu_n - \delta_n^2 - \mu \\
\hat \nu_n - \nu
\end{bmatrix} \convd \cN(0,\Sigma).
\end{align}
\end{restatable}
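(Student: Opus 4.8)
The plan is to prove the two assertions separately but using essentially the same machinery: a conditional (given $\tilde{\fX}_n$) triangular-array limit theorem and a conditional uniform law of large numbers, supplied by \Cref{lm:conditionaltriangularCLT} and \Cref{lm:uniformwllntotriangular} above. Throughout I would condition on the auxiliary sample $\tilde{\fX}_n$, so that the estimated functions $\hat\phi^n_{ji}$ are fixed measurable maps and the $n$ ``fresh'' data points $X_{1},\ldots,X_{n}$ are i.i.d., albeit with a distribution (and summand functions) that change with $n$ -- hence the triangular-array setup.

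First I would establish the asymptotic normality in \Cref{eq:AsymptoticNormalityOfEdgeComponents}. Write $\psi_n(X_m,\tilde\fX_n) := n^{-1/2}\big( (\hat R_{nm}^2 - \delta_n^2 - \mu)^\t, (\hat V_m - \nu)^\t \big)^\t$. I need the three hypotheses of \Cref{lm:conditionaltriangularCLT}. Condition (a) (conditional mean zero) requires a small computation: $\E[\hat R_{nm,ji}^2 \mid \tilde\fX_n] = \E[(R_{m,ji} + (\phi_{ji}(X_{m,j}) - \hat\phi^n_{ji}(X_{m,j})))^2\mid\tilde\fX_n] = \mu_{ji} + \delta_{n,ji}^2$, using that $\E[R_{m,ji}\mid X_{m,j},\tilde\fX_n]=0$; and $\E[\hat V_{m,i}]$ equals $\nu_i$ up to an $O(1/n)$ term coming from recentring by the empirical mean, which I would absorb or argue is asymptotically negligible (this is where the ``assume mean zero'' simplification stated in the section is convenient). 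Condition (b) (conditional variances summing to $\Sigma$) is exactly hypothesis (iii) of the lemma, since $\sum_{m=1}^n \Var(\psi_n(X_m,\tilde\fX_n)\mid\tilde\fX_n)$ telescopes to the single conditional variance matrix in (iii) (again modulo the negligible empirical-centring correction in the $\hat V$ block). Condition (c), the Lyapunov-type condition, is where hypotheses (i) and (ii) come in: I would take the exponent $\ep = \xi/2$ and bound $\sum_{m=1}^n \E[\|\psi_n(X_m,\tilde\fX_n)\|^{2+\ep}\mid\tilde\fX_n] = n^{-\ep/2}\,\E[\|(\hat R_{n1}^2-\delta_n^2-\mu, \hat V_1-\nu)\|^{2+\ep}\mid\tilde\fX_n]$, and show the conditional $(2+\ep)$-th moment is $O_p(1)$ by expanding $\hat R_{n1,ji}^2 = (R_{1,ji} + (\phi_{ji}-\hat\phi^n_{ji})(X_{1,j}))^2$, applying $c_r$- and H\"older inequalities, and invoking $\E\|X\|^{4+\xi}<\infty$ for the $R_{1,ji}$ and $\hat V$ terms and hypothesis (ii) for the estimation-error terms. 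Then $n^{-\ep/2} O_p(1) \to_p 0$ and \Cref{lm:conditionaltriangularCLT} delivers the stated CLT; the equality to $\sqrt n ((\hat\mu_n - \delta_n^2 - \mu)^\t, (\hat\nu_n - \nu)^\t)^\t$ is just the definitions of $\hat\mu_n,\hat\nu_n$ (again up to the empirical-centring term in $\hat\nu_n$, which I would show is $o_p(1)$ after scaling by $\sqrt n$ using $\E\|X\|^{4+\xi}<\infty$).

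Second, for $\widehat\Sigma_n \convp \Sigma$, I would apply \Cref{lm:uniformwllntotriangular} entrywise to the array of products $\hat R_{nm,ji}^2 \hat R_{nm,kl}^2$, $\hat R_{nm,ji}^2 \hat V_{m,l}$, and $\hat V_{m,i}\hat V_{m,l}$, together with the already-established (or easily shown) convergence of $\hat\mu_n, \hat\nu_n$ to their conditional means. The integrability hypothesis of \Cref{lm:uniformwllntotriangular} needs a conditional $(1+\eta)$-th moment bound on these products; taking $\eta$ small (e.g. $\eta = \xi/4$ so that $2(1+\eta) \le 4+\xi$), such a bound follows from the same expansion-and-H\"older argument as above, using (i) and (ii). This gives $\widehat\Sigma_n - \E[\widehat\Sigma_n\mid\tilde\fX_n] \to_p 0$ entrywise; it then remains to identify $\E[\widehat\Sigma_n\mid\tilde\fX_n]$ with (a term converging to) $\Sigma$, which is again the content of hypothesis (iii) after noting $\Var(\cdot\mid\tilde\fX_n) = \E[(\cdot)(\cdot)^\t\mid\tilde\fX_n] - \E[\cdot\mid\tilde\fX_n]\E[\cdot\mid\tilde\fX_n]^\t$ and that the recentring in $\widehat\Sigma_n$ by $\hat\mu_n,\hat\nu_n$ matches (asymptotically) the conditional-mean recentring in the variance.

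The main obstacle I expect is the careful bookkeeping of the cross terms produced by the estimation error $\hat\phi^n_{ji} - \phi_{ji}$: one must repeatedly expand squares, separate a ``clean'' term (functions of $X_m$ only, handled by $\E\|X\|^{4+\xi}<\infty$) from ``error'' terms (handled by hypothesis (ii)), and keep track of which moments are needed at which exponent so that $\xi$ is large enough to cover all of them simultaneously. A secondary nuisance is the empirical recentring hidden in $\hat V_{m,i}$ and in the definition of $\widehat\Sigma_n$: one should verify that replacing the true mean by the sample mean changes everything by $o_p(1)$ after the relevant scaling, which is where the moment assumption (i) is again used. None of these steps is conceptually hard, but getting the exponent accounting right is the part that requires care.
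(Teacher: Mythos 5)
Your proposal is correct and follows essentially the same route as the paper's proof: condition on $\tilde{\fX}_n$, verify the conditional mean-zero, variance-convergence (which is exactly hypothesis (iii)), and Lyapunov conditions for the conditional triangular-array CLT with exponent $\xi/2$, and then handle $\widehat\Sigma_n \convp \Sigma$ entrywise via the conditional uniform law of large numbers with a $(1+\eta)$-moment bound obtained from the same square-expansion and H\"older/$c_r$ arguments. The paper likewise sidesteps the empirical-recentring nuisance by assuming $\E[X]=0$, exactly as you anticipate.
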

\begin{proof}[ of \Cref{thm:asymptoticnormalityedgecomponents}]
We prove the lemma under the assumption that $\E[X]=0$ under which the variance estimator simplify to $\hat V_{m,i}:= X_{m,i}^2$ and $\hat \nu_{n,i} := \frac{1}{n}\sum_{m=1}^n \hat V_{m,i}$ for all $1\leq i \leq p$. The proof only gets more notionally cumbersome without this assumption. It should follow in all generality by applying  expansion techniques and Slutsky's theorem similar to the standard arguments showing asymptotic normality of the regular sample variance.

Let $\tilde \fX$ denote the auxilliary i.i.d.\ process such that $\tilde \fX_n$ is the first $n$-coordinates of said process.
Note that conditioning $\hat \phi_{ji}^n$ on $\tilde \fX$ it is equivalent to conditioning on $\tilde \fX_n$ by the i.i.d.\ structure of $\tilde \fX$ and that $\hat \phi_{ji}^n$ only depends on $\tilde \fX_n$. 		First, we define for all $j\not = i$, $n\in \N$   and $m\leq n$ the following conditional expectation regression error $\hat \delta_{nm,ji}  := \{\phi_{ji}(X_{m,j})- \hat \phi_{ji}^n(X_{m,j})\}$.
Furthermore, for each $n\in \N$ and $m\leq n$  define
\begin{align*}
\Psi_{n}(X_{m},\tilde \fX) :=   \begin{bmatrix}
\hat R_{nm}^2 - \delta_{n}^2 - \mu  \\
\hat V_{m}  - \nu 
\end{bmatrix}\in \R^{p^2},
\end{align*}
where only $\tilde \fX_n$ (containing the first $n$ coordinates of $\tilde \fX$) is used, and
\begin{align*}
\psi_n(X_{m},\tilde \fX) &:= \frac{1}{\sqrt{n}} \Psi_{n}(X_{m},\tilde \fX).
\end{align*}
Note that the desired conclusion of  \Cref{eq:AsymptoticNormalityOfEdgeComponents} follows by verifying condition (a), (b) and (c) of \Cref{lm:conditionaltriangularCLT}. First, we show (a), the conditional mean zero condition. To that end, note that for any $i\in\{1,\ldots,p\}$ and $j\in\{1,\ldots,p\}\setminus \{i\}$ it holds that
\begin{align*}
\hat R_{nm,ji}^2 &=(X_{m,i}-\phi_{ji}(X_{m,j})+ \phi_{ji}(X_{m,j})-\hat \phi_{ji}^n(X_{m,j}))^2 \\
&=(R_{m,ji}+ \hat \delta_{nm,ji})^2  \\
&= R_{m,ji}^2 + \hat \delta_{nm,ji}^2 + 2R_{m,ji} \hat\delta_{nm,ji}.
\end{align*} 
Hence, we have that
\begin{align} \label{eq:asympnormalityexpansion}
\hat R_{nm,ji}^2- \mu_{ji} - \delta_{n,ji}^2 &=  (R_{m,ji}^2-\mu_{ji}) + (\hat \delta_{nm,ji}^2-\delta_{n,ji}^2) +  2R_{m,ji}\hat \delta_{nm,ji}.
\end{align}
The terms of \Cref{eq:asympnormalityexpansion} are mean zero conditionally on $\tilde \fX$, since $\E[R_{m,ji}^2|\tilde \fX]=\E[R_{m,ji}^2]=\mu_{ji}$, $\E[ \hat\delta_{nm,ji}^2|\tilde \fX]=\delta_{n,ji}^2$ and
\begin{align*}
\E[R_{m,ji} \hat\delta_{nm,ji}|\tilde \fX] &= \E[\E[R_{m,ji} \hat \delta_{nm,ji}|\tilde \fX,X_{m,j}]| \tilde \fX] \\
&=  \E[\E[X_{m,i}- \phi_{ji}(X_{m,j})|\tilde \fX,X_{m,j}] \hat \delta_{nm,ji} | \tilde \fX] \\
&=\E[(\E[X_{m,i}|X_{m,j}]- \phi_{ji}(X_{m,j})) \hat \delta_{nm,ji} |\tilde \fX] \\
&=0,
\end{align*}
as $\phi_{ji}(X_{m,j})=\E[X_{m,i}|X_{m,j}]$ almost surely. Furthermore,
\begin{align*}
\E[ X_{m,i}^2-\Var(X_i)|\tilde \fX] = \E[ X_{m,i}^2]-\Var(X_i) =0.
\end{align*}
We conclude that
\begin{align*}
\E[\psi_n(X_{m},\tilde \fX)|\tilde \fX] = \frac{1}{\sqrt{n}}	\E\left[\begin{bmatrix}
\hat R_{nm}^2 - \delta_{n}^2 - \mu  \\
\hat V_{m}  - \nu 
\end{bmatrix} \bigg|\tilde \fX\right] =0,
\end{align*}
almost surely. With respect to (b), convergence of the sum of variances, we have, by assumption, that
\begin{align*} 
\Sigma_n :=	\begin{bmatrix}
\Sigma_{n,R} & 	\Sigma_{n,RV}\\
\Sigma_{n,RV}^\t & \Sigma_{n,V}
\end{bmatrix} :=	\Var\lp \Psi_{n}(X_{1},\tilde \fX) | \tilde \fX\rp 
\convp_n \Sigma, %
\end{align*}
where $\Sigma$ is a positive semi-definite matrix. Furthermore, we have that
$(X_m)_{m\geq 1}$ is an i.i.d.\ sequence independent of $\tilde \fX$. Therefore, 
\begin{align*}
\sum_{m=1}^n \Var(\psi_n(X_{m},\tilde \fX)| \tilde \fX) &= \sum_{m=1}^n \frac{1}{n}  \Var(\Psi_{n}(X_{m},\tilde \fX)|\tilde \fX) \\
&= \sum_{m=1}^n \frac{1}{n} \Sigma_{n} \\
&= \Sigma_{n} \\
& \convp_n \Sigma.
\end{align*}
Finally, we show that condition (c), a conditional Lindeberg-Feller condition, is fulfilled. To this end, note that with $\ep:=\xi/2>0$ we have that
\begin{align}
& \,\, \E\lf \|\psi_n(X_{m},\tilde \fX)\|^{2+\ep}_2 \big| \tilde \fX \rf \notag \\
&= \E\lf \left \|  \frac{1}{\sqrt n} \begin{bmatrix}
\hat R_{nm}^2 - \delta_{n}^2 - \mu  \\
\hat V_{m}  - \nu 
\end{bmatrix} \right\|_2^{2+\ep} \bigg| \tilde \fX \rf \notag \\
& = \frac{1}{n^{\frac{2+\ep}{2}}}  \E\lf \left\|\begin{bmatrix}
\hat R_{nm}^2 - \delta_{n}^2 - \mu  \\
\hat V_{m}  - \nu 
\end{bmatrix}\right\|_2^{2+\ep} \bigg| \tilde \fX \rf\notag \\
&\leq \frac{1}{n^{\frac{2+\ep}{2}}}  2^{(\frac{2+\ep}{2}-1)}  \bigg( \sum_{i\not = j} \E\lf |\hat R_{nm,ji}^2 -  \mu_{ji}-  \delta_{n,ji}^2|^{2+\ep} | \tilde \fX  \rf \notag\\
&\quad\quad\quad\quad\quad\quad\quad\quad  + \sum_{i=1}^p \E|X_{m,i}^2-\Var(X_i)|^{2+\ep}\bigg), \label{eq:upperboundforcondmomentpsi}
\end{align}
by the cr inequality. We now realize that 
the second factor of \Cref{eq:upperboundforcondmomentpsi} is stochastically bounded. To see this, note that for any $j\not = i$ it holds that
\begin{align}
\E\lf |\hat R_{nm,ji}^2 -  \mu_{ji}-  \delta_{n,ji}^2|^{2+\ep} | \tilde \fX  \rf & \leq 2^{1+\ep} (\E[|\hat R_{nm,ji} |^{4+2\ep}|\tilde \fX] + \mu_{ji}^{2+\ep} + \E[|\delta_{n,ji}^2(\tilde \fX) |^{2+\ep}|\tilde \fX]).
\label{eq:FellerCondUpperBoundSecondFactor}
\end{align}
The first term of the upper bound in \Cref{eq:FellerCondUpperBoundSecondFactor} is $O_p(1)$,
\begin{align*}
\E[|\hat R_{nm,ji} |^{4+2\ep}|\tilde \fX]&= 	\E[ |X_{m,i}-\hat \phi_{ji}^n(X_{m,j})|^{4+2\ep}|\tilde \fX]\\ & \leq 2^{3+2\ep} ( \E|X_{m,i}-\phi_{ji}(X_{m,j})|^{4+2\ep}  + \E[ |\phi_{ji}(X_{m,i})-\hat \phi_{ji}^n(X_{m,j})|^{4+2\ep}|\tilde \fX]) \\
&= 2^{3+2\ep}(\E[|R_{m,ji}|^{4+\xi}]+\E[|\hat \delta_{nm,ji}|^{4+\xi}|\tilde \fX]) = O_p(1),
\end{align*}
as $\E\|X\|_2^{4+\xi}<\i$ and $\E[|\hat \delta_{nm,ji}|^{4+\xi}|\tilde \fX]  =O_p(1)$. This holds because $R_{m,ji}= \{X_{m,i}-\E[X_{m,i}|X_{m,j}]\}$ and both terms are in $\cL^{4+\xi}(P)$ if $X_{m,i}\in\cL^{4+\xi}(P)$ which is guaranteed as $\E\|X\|_2^{4+\xi}<\i$. For the third term in the upper bound of \Cref{eq:FellerCondUpperBoundSecondFactor}, we note that by the conditional Jensen's inequality, we have that
\begin{align*}
\E[|\delta_{n,ji}^2 |^{2+\ep}|\tilde \fX] \leq \E[ |\phi_{ji}(X_{m,i})-\hat \phi_{ji}^n(X_{m,j})|^{4+2\ep}|\tilde \fX]=\E[|\hat \delta_{nm,ji}|^{4+\xi}|\tilde \fX] = O_p(1),
\end{align*}
by assumption. Therefore, we have that
\begin{align*}
\sum_{m=1}^n 	\E\lf \|\Psi_n(X_{m},\tilde \fX)\|^{2+\ep}_2 \big| \tilde \fX \rf \leq \frac{n}{n^{\frac{2+\ep}{2}}} O_p(1) = n^{-\ep/2} O_p(1) \convp_n 0,
\end{align*}
proving the conditional Lindeberg-Feller condition. By \Cref{lm:conditionaltriangularCLT} it holds that
\begin{align*} %
\frac{1}{\sqrt{n}}\sum_{m=1}^{n} \psi_{n}(X_{m},\tilde \fX) \convd_n \cN(0,\Sigma).
\end{align*}
Now it only remains to prove that
\begin{align*}
\| \widehat \Sigma_{n} -\Sigma_n \| \convp 0,
\end{align*}
or, equivalently, that each entry converges to zero in probability. For example, for the entries of the first block matrix with $j\not =i$ and $l\not =r$ we prove that
\begin{align*}
|\widehat \Sigma_{n,R,ji,lr}-\Sigma_{n,R,ji,lr} | \convp 0.
\end{align*}
Now note that the observable estimated covariance matrix entry is given by
\begin{align*}
\widehat \Sigma_{n,R,ji,lr}
&= \frac{1}{n}\sum_{m=1}^{n} \hat R_{nm,ji}^2 \hat R_{nm,lr}^2 -  \hat \mu_{n,ji} \hat \mu_{n,lr} ,
\end{align*}
while the unobservable conditional covariance matrix is given by
\begin{align*}
\Sigma_{n,R,ji,lr} &= \E[(\hat R_{nm,ji}^2 -  \mu_{ji}- \delta_{n,ji}^2)(\hat R_{nm,lr}^2 -  \mu_{lr}- \delta_{n,lr}^2)|\tilde \fX]\\
&= \E[\hat R_{nm,ji}^2 \hat R_{nm,lr}^2| \tilde \fX] - (\mu_{ji}+\delta_{n,ji}^2)(\mu_{lr}+\delta_{n,lr}^2) \\
&= \E[\hat R_{nm,ji}^2 \hat R_{nm,lr}^2| \tilde \fX] - 
\E[\hat R_{nm,ji}^2|\tilde \fX] \E[\hat R_{nm,lr}^2 | \tilde \fX],
\end{align*}
where we have used that $ \E[\hat R_{nm,ji}^2|\tilde \fX] = \mu_{ji}+\delta_{n,ji}^2$; see \Cref{eq:asympnormalityexpansion} and its discussion.	Note that the second term of the covariance matrix estimator expands to
\begin{align*}
\hat \mu_{n,ji} \hat \mu_{n,lr}  &= \lp \frac{1}{n}\sum_{m=1}^n \hat R_{nm,ji}^2 \rp\lp \frac{1}{n}\sum_{m=1}^n\hat R_{nm,lr}^2 \rp	\\
&= \lp \frac{1}{n}\sum_{m=1}^n \hat R_{nm,ji}^2 -\E[\hat R_{nm,ji}^2]\rp\lp \frac{1}{n}\sum_{m=1}^n\hat R_{nm,lr}^2 -\E[\hat R_{nm,lr}^2] \rp	\\
&\quad - 
\E[\hat R_{nm,ji}^2] \E[\hat R_{nm,lr}^2] \\
&\quad +  \frac{1}{n}\sum_{m=1}^n \hat R_{nm,ji}^2 \E[\hat R_{nm,lr}^2]  \\
&\quad + \frac{1}{n}\sum_{m=1}^n\hat R_{nm,lr}^2 \E[\hat R_{nm,ji}^2]  ,
\end{align*}
Thus
\begin{align}
&\, |\widehat \Sigma_{n,R,ji,lr}-\Sigma_{n,R,ji,lr} | \notag \\
=&\, \bigg|\frac{1}{n}\sum_{m=1}^{n} (\hat R_{nm,ji}^2 \hat R_{nm,lr}^2 -\E[\hat R_{nm,ji}^2 \hat R_{nm,lr}^2|\tilde \fX]) \notag \\
&\quad - \lp \frac{1}{n}\sum_{m=1}^n \hat R_{nm,ji}^2 -\E[\hat R_{nm,ji}^2|\tilde \fX]\rp\lp \frac{1}{n}\sum_{m=1}^n\hat R_{nm,lr}^2 -\E[\hat R_{nm,lr}^2|\tilde \fX] \rp \notag \\
&\quad - \frac{1}{n}\sum_{m=1}^n (\hat R_{nm,ji}^2 \E[\hat R_{nm,lr}^2|\tilde \fX]- 	\E[\hat R_{nm,ji}^2|\tilde \fX] \E[\hat R_{nm,lr}^2|\tilde \fX]) \notag \\
&\quad - \frac{1}{n}\sum_{m=1}^n ( \hat R_{nm,lr}^2 \E[\hat R_{nm,ji}^2|\tilde \fX]-	\E[\hat R_{nm,ji}^2|\tilde \fX] \E[\hat R_{nm,lr}^2| \tilde \fX]) \bigg|. \label{eq:fourtermsexpansion}
\end{align}
Each of these terms tends to zero in probability by \Cref{lm:uniformwllntotriangular}. For example, for the first term of \Cref{eq:fourtermsexpansion} it suffices to show that
\begin{align*}
\E	\left[|\hat R_{nm,ji}^2 \hat R_{nm,lr}^2|^{1+\ep}|\tilde \fX\right] = O_p(1),
\end{align*}
for some $\ep>0$. Fix $\ep = \xi/4$ and note, by the cr-inequality, that
\begin{align*}
\hat R_{nm,ji}^2 \hat R_{nm,lr}^2 &= (X_{m,i}-\hat \phi_{ji}^n(X_{m,j}))^2 (X_{m,r}-\hat \phi_{lr}^n(X_{m,l}))^2 \\
&\leq 4 (R_{m,ji}^2 + \hat \delta_{nm,ji}^2) (R_{m,lr}^2 + \hat\delta_{nm,lr}^2).
\end{align*}
Thus, by the cr-inequality and the conditional Cauchy-Schwarz inequality we have, with $c=4^{1+\ep} 2^{2\ep}$, that
\begin{align*}
&c^{-1}\E[|\hat R_{nm,ji}^2 \hat R_{nm,lr}^2|^{1+\ep}|\tilde \fX] \\
\leq &\, c^{-1}4^{1+\ep} \E[|R_{m,ji}^2 + \hat \delta_{nm,ji}^2|^{1+\ep} |R_{m,lr}^2 + \hat\delta_{nm,lr}^2|^{1+\ep}|\tilde \fX]\\
\leq &\,\E[(|R_{m,ji}|^{2+2\ep} + |\hat \delta_{nm,ji}|^{2+2\ep}) (|R_{m,lr}|^{2+2\ep} + |\hat\delta_{nm,lr}|^{2+2\ep})|\tilde \fX] \\
\leq&\, \E[|R_{m,ji}|^{2+2\ep}|R_{m,lr}|^{2+2\ep}|\tilde \fX] + \E[|R_{m,ji}|^{2+2\ep}|\hat\delta_{nm,lr}|^{2+2\ep}|\tilde \fX]\\
&\quad + \E[|\hat \delta_{nm,ji}|^{2+2\ep}|R_{m,lr}|^{2+2\ep}|\tilde \fX] + \E[|\hat \delta_{nm,ji}|^{2+2\ep}|\hat\delta_{nm,lr}|^{2+2\ep}|\tilde \fX]\\
\leq &\, \E[|R_{m,ji}|^{4+\xi}]\E[|R_{m,lr}|^{4+\xi}] + \E[|R_{m,ji}|^{4+\xi}]\E[|\hat\delta_{nm,lr}|^{4+\xi}|\tilde \fX]\\
&\quad + \E[|\hat \delta_{nm,ji}|^{4+\xi}|\tilde \fX]\E[|R_{m,lr}|^{4+\xi}] + \E[|\hat \delta_{nm,ji}|^{4+\xi}|\tilde \fX]\E[|\hat\delta_{nm,lr}|^{4+\xi}|\tilde \fX] \\
=& \,O_p(1),
\end{align*}
as $ \E[|\hat \delta_{nm,ji}|^{4+\xi}|\tilde \fX] = O_p(1)$ for all $j\not =i$ by assumption  and $\E[|R_{m,ji}|^{4+\xi}]<\i$ since $\E\|X\|_2^{4+\xi}<\i$.

Similar arguments show convergence in probability of the entries in the other block submatrices of $\widehat \Sigma_n$ less $\Sigma_n$, yielding the desired conclusion.

\end{proof}

\begin{proof}[ of \Cref{thm:Confidence}]
We prove the theorem under the simplifying assumption that $\E[X]=0$ for which we can simplify the variance estimator by $\hat V_{m,i}:= X_{m,i}^2$ and $\hat \nu_{n,i} := \frac{1}{n}\sum_{m=1}^n \hat V_{m,i}$ for all $1\leq i \leq p$.

First, note (using the notation introduced in \Cref{thm:asymptoticnormalityedgecomponents}) that $\hat M_1 = \{\hat R_{n1,ji}^2\}_{j\not = i }$, $\hat \mu = \hat \mu_n$, $\hat \nu = \hat \nu_n$ and $\widehat \Sigma = \widehat \Sigma_n$. The conditional mean of $\hat M_1 $ given $\tilde \fX_n$ is given by
\begin{align*}
\E[\hat M_1| \tilde \fX_n] = \E[\{\hat R_{n1,ji}^2\}_{j\not = i }|\tilde \fX_n]=  \mu+  \delta_n^2,
\end{align*}
see \Cref{eq:asympnormalityexpansion}. Similarly we have that $\E[\hat V_1 | \tilde \fX_n] =\E[\hat V_1] = \nu$. Subtracting a constant (conditional on $\tilde \fX_n$) does not change the conditional variance, hence
\begin{align*}
\Var\lp  \begin{bmatrix}
	\hat R_{n1}^2 - \delta_{n}^2 - \mu  \\
	\hat V_{1}  - \nu 
\end{bmatrix} \bigg| \tilde \fX_n \rp = 	\Var\lp  (\hat M_1^\t, \hat V_1^\t)^\t \bigg| \tilde \fX_n \rp  \convp_n \Sigma.
\end{align*}
$\Sigma$ is  constant and positive semi-definite with strictly positive diagonal. As such, the conditions of \Cref{thm:asymptoticnormalityedgecomponents} is satisfied,  which yields that
\begin{align} \label{eq:ConvergenceInDistFromLem22}
\frac{1}{\sqrt{n}}\sum_{m=1}^n  \begin{bmatrix}
	\hat R_{nm}^2 - \delta_{n}^2 - \mu  \\
	\hat V_{m}  - \nu 
\end{bmatrix} = \sqrt{n}\begin{bmatrix}
	\hat \mu - \delta_n^2 - \mu \\
	\hat \nu - \nu
\end{bmatrix} \convd_n \cN(0,\Sigma),
\end{align}
and that
\begin{align*}
\widehat \Sigma &= \begin{bmatrix}
	\widehat \Sigma_{M} & \widehat \Sigma_{MV} \\ \widehat \Sigma_{MV}^\t & \widehat \Sigma_{V}  
\end{bmatrix} \convp \Sigma =: \begin{bmatrix}
	\Sigma_{M} &  \Sigma_{MV} \\  \Sigma_{MV}^\t &  \Sigma_{V}  
\end{bmatrix}  \in \R^{p^2\times p^2}.
\end{align*}
For any $j\not = i$ we denote
\begin{align*}
\hat w_{ji} := \frac{1}{2}\log\lp\frac{\hat \mu_{ji}}{\hat \nu_{i}}\rp, \quad \tilde w_{ji} := \frac{1}{2}\log \lp \frac{\hat \mu_{ji}-\delta_{n,ji}^2}{\hat \nu_i} \rp, \quad  w_{ji}:= \frac{1}{2} \log \lp \frac{\mu_{ji}}{\nu_i}\rp ,
\end{align*}
where the latter is a shorthand notation for the Gaussian edge weight $w^{\mathrm{G}}_{ji}$. Fix $\alpha\in(0,1)$.  First, consider $(j\to i) \in \cE$ and note that

\begin{align} \label{eq:convdDifference}
\sqrt{n}\lp\begin{bmatrix}
	\hat \mu_{ji} -  \mu_{ji} \\
	\hat \nu_{i} - \nu_i
\end{bmatrix}- \begin{bmatrix}
	\hat \mu_{ji} - \delta_{n,ji}^2 - \mu_{ji} \\
	\hat \nu_{i} - \nu_i
\end{bmatrix}    \rp = \sqrt{n} \begin{bmatrix}
	\delta_{n,ji}^2\\
	0
\end{bmatrix} = \sqrt{n} \begin{bmatrix}
	\E[ \hat \delta_{nm,ji}^2 |\tilde \fX_n ]\\
	0
\end{bmatrix} \convp_n 0,
\end{align}
by assumption (iv). 
Hence, \Cref{eq:ConvergenceInDistFromLem22}, \Cref{eq:convdDifference} and the delta method yields that
\begin{align*}
\sqrt{n} \lp \hat w_{ji} - w_{ji} \rp &= \sqrt{n}\lp\log\lp\frac{\hat \mu_{ji}}{\hat \nu_i}\rp - \log \lp \frac{\mu_{ji}}{\nu_i}\rp \rp \\
&= \sqrt{n}(\log(\hat \mu_{ji})-\log(\mu_{ji})- \log(\hat \nu_{i} ) + \log(\nu_i)) \\
& \convd_n \cN(0,\sigma_{ji}^2),
\end{align*}
where 
\begin{align*}
\hat \sigma_{ji}^2 := \frac{\widehat \Sigma_{M,ji}}{\hat \mu_{ji}^2} + \frac{\widehat \Sigma_{V,i}}{\hat \nu_{i}^2} - 2 \frac{\widehat \Sigma_{MV,ji,i}}{\hat \mu_{ji} \hat \nu_{i}} \convp  \sigma_{ji}^2 := \frac{\Sigma_{M,ji}}{\mu_{ji}^2} + \frac{\Sigma_{V,i}}{\nu_i^2} - 2 \frac{\Sigma_{MV,ji,i}}{\mu_{ji} \nu_i}\geq 0.
\end{align*}
Here $\widehat \Sigma_{M,ji}$ and $\widehat \Sigma_{V,i}$ and their limits use a shorthand notation that denote the corresponding diagonal element, e.g., $\widehat \Sigma_{M,ji}:= \widehat \Sigma_{M,ji,ji}$.

An asymptotically valid marginal confidence interval for $w_{ji}$ with level $\alpha$ is, by virtue of the above convergence in distribution, given by $$
\hat w_{ji} \pm \hat \sigma_{ji}\frac{q(1-\frac{\alpha}{2})}{2\sqrt{n}},
$$ 
where $q(1-\frac{\alpha}{2})$ is the $1-\alpha/2$ quantile of the standard normal distribution. That is, $$P\lp\hat w_{ji} - \hat \sigma_{ji}\frac{q(1-\frac{\alpha}{2})}{2\sqrt{n}} \leq w_{ji}\leq \hat w_{ji} + \hat \sigma_{ji}\frac{q(1-\frac{\alpha}{2})}{2\sqrt{n}}\rp \longrightarrow_n 1-\alpha.$$
On the other hand, for any $(j\to i)\not \in \cE$ we have, by similar arguments, except that no assumption guarantees that $\sqrt{n}\delta_{n,ji}^2$ vanishes, that
\begin{align*}
P\lp\tilde w_{ji} - \tilde \sigma_{ji}\frac{q(1-\frac{\alpha}{2})}{2\sqrt{n}} \leq w_{ji}\leq \tilde w_{ji} + \tilde  \sigma_{ji}\frac{q(1-\frac{\alpha}{2})}{2\sqrt{n}}\rp  \longrightarrow_n 1-\alpha,
\end{align*}
where 
\begin{align*}
\tilde  \sigma_{ji}^2 &:= \frac{\widehat \Sigma_{M,ji}}{(\hat \mu_{ji}- \delta_{n,ji}^2)^2} + \frac{\widehat \Sigma_{V,i}}{\hat \nu_{i}^2} - 2 \frac{\widehat \Sigma_{MV,ji,i}}{(\hat \mu_{ji} - \delta_{n,ji}^2)\hat \nu_{i}} \\
&\convp  \sigma_{ji}^2 := \frac{\Sigma_{M,ji}}{\mu_{ji}^2} + \frac{\Sigma_{V,i}}{\nu_i^2} - 2 \frac{\Sigma_{MV,ji,i}}{\mu_{ji} \nu_i}\geq 0,
\end{align*}
by the convergence in  \Cref{eq:ConvergenceInDistFromLem22}.
Note that $\tilde \sigma_{ji}^2$ is not observable since $\delta_{n,ji}^2$ is not observable. 
Now define 
\begin{align*}
\hat u_{\alpha,ji}, \, \hat l_{\alpha,ji} &:= \hat w_{ji} \pm \hat \sigma_{ji}\frac{q\lp1-\frac{\alpha}{2p(p-1)}\rp }{2\sqrt{n}}, \\
\tilde u_{\alpha,ji}, \,  \tilde l_{\alpha,ji} &:= \tilde w_{ji} \pm \tilde \sigma_{ji}\frac{q\lp1-\frac{\alpha}{2p(p-1)}\rp}{2\sqrt{n}},
\end{align*}
for all $j\not = i$. 	Thus, we have the following Bonferroni corrected simultaneous confidence interval for the Gaussian edge weights 
\begin{align*}
&\liminf_{n \to \infty} P \left( \bigcap_{(j\to i)\in \cE} \lp w_{ji}\in \left[ \hat l_{\alpha,ji}, \hat u_{\alpha,ji} \right] \rp  \bigcap_{j\to i \not \in \cE} \lp w_{ji}\in \left[  \tilde l_{\alpha,ji}, \tilde u_{\alpha,ji} \right] \rp  \right) \geq 1-\alpha.
\end{align*}
The above confidence region has the correct asymptotic level, but it is infeasible to compute in that $\tilde w_{ji}$, $\tilde \sigma_{ji}$ and $\cE$ are not directly observable from data. 
Furthermore, define
\begin{align*}
C(\hat l_{\alpha},\tilde l_{\alpha}, \hat u_{\alpha}, \tilde u_{\alpha}) := \bigg\{ \argmin_{\tilde \cG=(V,\tilde \cE)\in \cT_p} \sum_{(j \to i )\in \tilde \cE}w_{ji}':  &\forall (j\to i)\in \cE, w_{ji}'\in[\hat l_{\alpha,ji},\hat u_{\alpha,ji}],  \\
& \forall (j\to i)\not \in \cE,w_{ji}'\in[\tilde  l_{\alpha,ji},\tilde  u_{\alpha,ji}]   \bigg\},
\end{align*}
and note that this is an unobservable confidence region for the causal graph. 	
That is,
\begin{align*}
&\, \liminf_{n \to \infty}P(\cG \in C(\hat l_{\alpha},\tilde l_{\alpha}, \hat u_{\alpha}, \tilde u_{\alpha})) \\
\geq & \,  \liminf_{n \to \infty} P\left(\bigcap_{(j\to i)\in \cE}(w_{ji}\in [\hat l_{\alpha,ji},\hat u_{\alpha,ji}]) \bigcap_{(j\to i)\not \in \cE}(w_{ji}\in [\tilde l_{\alpha,ji},\tilde  u_{\alpha,ji}]) \right) \\
\geq & \,  1-\alpha.
\end{align*}
Our proposed confidence region has the form
\begin{align*}
\hat C:=	C(\hat l_{\alpha}, \hat u_{\alpha}) := \bigg\{ \argmin_{\tilde \cG=(V,\tilde \cE)\in \cT_p} \sum_{(j \to i )\in \tilde \cE}w_{ji}':  &\forall j\not = i , w_{ji}'\in[\hat l_{\alpha,ji},\hat u_{\alpha,ji}]\bigg\},
\end{align*}
which corresponds to the biased but computable confidence region
\begin{align*}
\prod_{j\not = i} [\hat l_{\alpha,ji}, \hat u_{\alpha,ji}]&= \prod_{j\not = i} \left[  \hat w_{ji} \pm \hat \sigma_{ji}\frac{q\lp1-\frac{\alpha}{2p(p-1)}\rp }{2\sqrt{n}} \right].
\end{align*}
for the Gaussian edge weights, where the product is over all combinations of possible edges $1\leq j \not = i \leq p$. The biased confidence region $ \prod_{j\not = i} [\hat l_{\alpha,ji}, \hat u_{\alpha,ji}]$ does not necessarily contain the population Gaussian edge weights  with a probability of at least $1-\alpha$ in the large sample limit. However, it can be used to construct a conservative confidence region for the causal graph.
To see this, note that by further penalizing the wrong (non-causal) edge weights, the causal graph still yields the minimum edge weight directed spanning tree. Hence,
\begin{align*}
&\liminf_{n \to \infty}	P(\cG\in  C(\hat l_{\alpha}, \hat u_{\alpha})) \\
\geq &\liminf_{n \to \infty} P\left(\bigcap_{(j\to i)\in \cE}(w_{ji}\in [\hat l_{\alpha,ji},\hat u_{\alpha,ji}]) \bigcap_{(j\to i)\not \in \cE}(w_{ji}\in [\tilde l_{\alpha,ji},\tilde  u_{\alpha,ji}]) \bigcap_{(j\to i) \not \in \cE}(\tilde u_{\alpha,ji}\leq \hat u_{\alpha,ji}) \right)\\
\geq  &1-\alpha,
\end{align*}
as $P\lp \tilde u_{\alpha,ji}\leq \hat u_{\alpha,ji}\rp \to_n 1$ for all $(j\to i) \not \in \cE$ by \Cref{lm:InequalityWithHighProb} below.
\end{proof}

\begin{lemma} \label{lm:InequalityWithHighProb}
Suppose that the assumptions of \Cref{thm:asymptoticnormalityedgecomponents} hold. It holds that
\begin{align*}
\forall (j\to i ) \not \in \cE ,\forall\alpha\in(0,1): 	P\lp \tilde u_{\alpha,ji}\leq \hat u_{\alpha,ji}\rp \to_n 1.
\end{align*}
\end{lemma}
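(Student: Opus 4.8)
The goal is to show that for a non-causal edge $(j\to i)\notin\cE$, the upper confidence bound $\hat u_{\alpha,ji}$ built from the \emph{observed} estimators $\hat w_{ji},\hat\sigma_{ji}$ eventually dominates the \emph{infeasible} bound $\tilde u_{\alpha,ji}$ built from the bias-corrected quantities $\tilde w_{ji},\tilde\sigma_{ji}$, with probability tending to one. Writing out the two bounds,
\begin{align*}
\hat u_{\alpha,ji} - \tilde u_{\alpha,ji}
= \big(\hat w_{ji} - \tilde w_{ji}\big)
+ \frac{q\big(1-\tfrac{\alpha}{2p(p-1)}\big)}{2\sqrt n}\big(\hat\sigma_{ji}-\tilde\sigma_{ji}\big).
\end{align*}
The plan is to show the first term is nonnegative (exactly, not just asymptotically) and that the second term is $o_p(1/\sqrt n)$, so that the difference is bounded below by something that is $\ge 0$ up to a vanishing perturbation; a small additional argument then upgrades this to probability tending to one.

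First I would handle $\hat w_{ji}-\tilde w_{ji}$. By definition $\hat w_{ji}=\tfrac12\log(\hat\mu_{ji}/\hat\nu_i)$ and $\tilde w_{ji}=\tfrac12\log\big((\hat\mu_{ji}-\delta_{n,ji}^2)/\hat\nu_i\big)$, where $\delta_{n,ji}^2=\E[(\hat\phi_{ji}(X_j)-\phi_{ji}(X_j))^2\mid\tilde\fX_n]\ge 0$. Hence $\hat\mu_{ji}-\delta_{n,ji}^2\le\hat\mu_{ji}$, and since $\log$ is increasing and $\hat\nu_i>0$, we get $\tilde w_{ji}\le\hat w_{ji}$ deterministically (on the event $\hat\mu_{ji}-\delta_{n,ji}^2>0$, which holds with probability tending to one since $\hat\mu_{ji}\convp\mu_{ji}>0$ and $\delta_{n,ji}^2\convp 0$ — indeed $\sqrt n\,\delta_{n,ji}^2=O_p(1)$ is not even needed here, only $\delta_{n,ji}^2\convp 0$, which follows from condition (ii) of \Cref{thm:asymptoticnormalityedgecomponents} via conditional Jensen). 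So the first term is $\ge 0$ with probability $\to 1$.

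Second, I would control $\hat\sigma_{ji}-\tilde\sigma_{ji}$. From the proof of \Cref{thm:Confidence}, both $\hat\sigma_{ji}^2$ and $\tilde\sigma_{ji}^2$ converge in probability to the same constant $\sigma_{ji}^2\ge 0$: this uses $\widehat\Sigma\convp\Sigma$, $\hat\mu_{ji}\convp\mu_{ji}$, $\hat\nu_i\convp\nu_i$, and $\delta_{n,ji}^2\convp 0$. Consequently $\hat\sigma_{ji}-\tilde\sigma_{ji}=o_p(1)$, so the second term above is $\tfrac{q(\cdot)}{2\sqrt n}\,o_p(1)=o_p(1/\sqrt n)$. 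Combining the two pieces, $\sqrt n\,(\hat u_{\alpha,ji}-\tilde u_{\alpha,ji})\ge \sqrt n(\hat w_{ji}-\tilde w_{ji}) - |o_p(1)| \ge -o_p(1)$ on an event of probability $\to 1$; since this lower bound is $o_p(1)$ in absolute value but the leading term $\sqrt n(\hat w_{ji}-\tilde w_{ji})$ is genuinely $\ge 0$, we conclude $P(\hat u_{\alpha,ji}\ge\tilde u_{\alpha,ji})\to 1$. To make the last step airtight I would simply write, for any $\eta>0$, $P(\tilde u_{\alpha,ji}>\hat u_{\alpha,ji}) \le P(\hat w_{ji}-\tilde w_{ji}<0) + P\big(\tfrac{q(\cdot)}{2\sqrt n}|\hat\sigma_{ji}-\tilde\sigma_{ji}|>0,\ \hat w_{ji}-\tilde w_{ji}\ge 0,\ \text{difference}<0\big)$, but in fact the cleanest route is: on the event $\{\hat\mu_{ji}-\delta_{n,ji}^2>0\}\cap\{\hat\sigma_{ji}\ge\tilde\sigma_{ji}\}$ — wait, that last inclusion need not hold, so instead bound $P(\tilde u_{\alpha,ji}>\hat u_{\alpha,ji})\le P(\hat\sigma_{ji}-\tilde\sigma_{ji}<-c_n)$ for a suitable $c_n\downarrow 0$ slower than $1/\sqrt n$ times...

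Let me restate the finish more carefully. Since $\hat w_{ji}-\tilde w_{ji}\ge 0$ on an event $A_n$ with $P(A_n)\to 1$, on $A_n$ we have $\hat u_{\alpha,ji}-\tilde u_{\alpha,ji}\ge \tfrac{q(\cdot)}{2\sqrt n}(\hat\sigma_{ji}-\tilde\sigma_{ji})$; so $\{\tilde u_{\alpha,ji}>\hat u_{\alpha,ji}\}\cap A_n\subseteq\{\hat\sigma_{ji}-\tilde\sigma_{ji}<0\}$. This is not yet enough because $\hat\sigma_{ji}-\tilde\sigma_{ji}<0$ can have non-vanishing probability. The fix is to notice we only need a one-sided control of the \emph{bias} in $w$, not in $\sigma$: actually reexamine — we do \emph{not} want to throw away the $\hat w_{ji}-\tilde w_{ji}$ gap. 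The point is that $\hat w_{ji}-\tilde w_{ji} = \tfrac12\log\!\big(\hat\mu_{ji}/(\hat\mu_{ji}-\delta_{n,ji}^2)\big)=\tfrac12\log\!\big(1+\delta_{n,ji}^2/(\hat\mu_{ji}-\delta_{n,ji}^2)\big)\ge \tfrac{\delta_{n,ji}^2}{2\hat\mu_{ji}}\big(1+o_p(1)\big)\ge 0$, and separately $\tfrac{q(\cdot)}{2\sqrt n}|\hat\sigma_{ji}-\tilde\sigma_{ji}|\convp 0$ at rate $o_p(n^{-1/2})$. Therefore for any $\varepsilon>0$, $P(\tilde u_{\alpha,ji}>\hat u_{\alpha,ji})\le P(\hat w_{ji}-\tilde w_{ji}<0)+P\big(0\le \hat w_{ji}-\tilde w_{ji}<\tfrac{q(\cdot)}{2\sqrt n}|\hat\sigma_{ji}-\tilde\sigma_{ji}|\big)\le o(1)+P\big(\tfrac{q(\cdot)}{2\sqrt n}|\hat\sigma_{ji}-\tilde\sigma_{ji}|>0\big)$ — still stuck on the same issue. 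The genuine resolution, which I would look up in the referenced proof of \Cref{thm:Confidence}, is that \emph{$\sqrt n\,\delta_{n,ji}^2$ need not vanish for non-causal edges}, so in fact $\sqrt n(\hat w_{ji}-\tilde w_{ji})\approx \sqrt n\,\delta_{n,ji}^2/(2\mu_{ji})$ which may be large, dominating the $o_p(1)$ discrepancy in the $\sigma$ terms; whereas if $\sqrt n\,\delta_{n,ji}^2=o_p(1)$ along a subsequence, then on that subsequence both $\hat w_{ji}-\tilde w_{ji}$ and $\hat\sigma_{ji}-\tilde\sigma_{ji}$ are handled by noting $\hat\mu_{ji}-\delta_{n,ji}^2$ and $\hat\mu_{ji}$ are asymptotically equivalent \emph{and} $\hat\sigma_{ji}$, $\tilde\sigma_{ji}$ differ only through replacing $\hat\mu_{ji}$ by $\hat\mu_{ji}-\delta_{n,ji}^2$ in denominators — hence $\sqrt n(\hat\sigma_{ji}-\tilde\sigma_{ji})$ is itself controlled and one shows the sum $\sqrt n(\hat u_{\alpha,ji}-\tilde u_{\alpha,ji})=\tfrac{q(\cdot)}{2}(\hat\sigma_{ji}-\tilde\sigma_{ji})+\sqrt n(\hat w_{ji}-\tilde w_{ji})$ has a nonnegative limit in probability by a direct expansion. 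I expect this bookkeeping — reconciling the two regimes ($\sqrt n\delta_{n,ji}^2$ bounded away from $0$ vs.\ vanishing) via a subsequence argument, and checking the expansion of $\hat\sigma_{ji}-\tilde\sigma_{ji}$ carefully — to be the main obstacle; everything else (monotonicity of $\log$, continuous mapping, $\widehat\Sigma\convp\Sigma$) is routine.
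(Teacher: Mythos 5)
Your first step is right and matches the paper's: since $\delta_{n,ji}^2=\E[(\hat\phi_{ji}(X_j)-\phi_{ji}(X_j))^2\mid\tilde\fX_n]\ge 0$ and $\log$ is increasing, $\tilde w_{ji}\le \hat w_{ji}$ on the event $\{\hat\mu_{ji}-\delta_{n,ji}^2>0\}$, which has probability tending to one. But the second half of your argument has a genuine gap, which you yourself flag and never close. Knowing that $\hat\sigma_{ji}-\tilde\sigma_{ji}=o_p(1)$, hence that the width term is $o_p(1/\sqrt n)$ and possibly \emph{negative}, is not enough: the guaranteed gain $\hat w_{ji}-\tilde w_{ji}=\tfrac12\log\bigl(1+\delta_{n,ji}^2/(\hat\mu_{ji}-\delta_{n,ji}^2)\bigr)$ is of order $\delta_{n,ji}^2$, and $\delta_{n,ji}^2$ is a random quantity with no lower bound, so neither term dominates the other a priori. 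The two-regime subsequence split you sketch ($\sqrt n\,\delta_{n,ji}^2$ bounded away from zero versus vanishing) is not carried out and is awkward to make rigorous because $\delta_{n,ji}^2$ is random; it is also unnecessary.

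The missing idea is that \emph{both} differences are increments, over the same interval $[0,\delta_{n,ji}^2]$, of a single function of $t$ obtained by replacing $\hat\mu_{ji}$ with $\hat\mu_{ji}-t$ wherever it appears. Concretely, the paper considers
\begin{align*}
d_n(t) := \log\lp\hat\mu_{ji}\rp+c\,\frac{\hat\sigma_{ji}}{\sqrt n}-\log\lp\hat\mu_{ji}-t\rp-\frac{c}{\sqrt n}\,\sigma_{ji}^*(t),
\end{align*}
where $\sigma_{ji}^*(t)$ is an upper bound on the width obtained by replacing $-2\widehat\Sigma_{MV,ji,i}$ with $+2|\widehat\Sigma_{MV,ji,i}|$ (so that the sign bookkeeping of the $t$-derivative is tractable), and applies the mean value theorem. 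On a high-probability event where $\hat\mu_{ji}\le M$, $\hat\mu_{ji}-\delta_{n,ji}^2$ and $\hat\nu_i$ are bounded away from zero, and $\widehat\Sigma$ is close to $\Sigma$, the $\log$ part of $d_n'(t)$ is at least $1/M>0$ while the $\sigma^*$ part is at most $C/\sqrt n$ in absolute value for a constant $C$ depending only on that event. Hence $d_n'(t)\ge 1/M-C/\sqrt n\ge 0$ for all $t\in[0,\delta_{n,ji}^2]$ once $n\ge (CM)^2$, and the conclusion follows \emph{uniformly} over the size of $\delta_{n,ji}^2$ — no case distinction on whether $\sqrt n\,\delta_{n,ji}^2$ vanishes is needed. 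This per-unit-of-$t$ comparison of slopes, rather than a comparison of the two total increments, is precisely the piece your proposal is missing.
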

\begin{proof}[ of \Cref{lm:InequalityWithHighProb}]
Fix any $(j\to i ) \not \in \cE$ and $\alpha\in(0,1)$ and note that we want to show that
\begin{align*}
&\tilde u_{\alpha,ji}\leq \hat u_{\alpha, ji} \\
\iff & \tilde w_{ji} +c \frac{\tilde \sigma_{ji}}{\sqrt{n}}  \leq \hat  w_{ji} + c\frac{\hat  \sigma_{ji}}{\sqrt{n}} \\
\iff &0 \leq  \log \lp \hat \mu_{ji} \rp  + c\frac{\hat  \sigma_{ji}}{\sqrt{n}} - \log \lp \hat \mu_{ji}-\delta_{n,ji}^2 \rp -c \frac{\tilde \sigma_{ji}}{\sqrt{n}}
\end{align*}
holds with probability converging to one, where $c$ is a strictly positive constant. It suffices to show that an even smaller quantity is non-negative with  probability converging to one. That is, it suffices to show that
\begin{align*}
0 \leq  \log \lp \hat \mu_{ji} \rp  + c\frac{\hat  \sigma_{ji}}{\sqrt{n}} - \log \lp \hat \mu_{ji}-\delta_{n,ji}^2 \rp -c \frac{\tilde \sigma_{ji}^*}{\sqrt{n}}  ,
\end{align*}
with increasing probability, where
\begin{align*}
\tilde \sigma_{ji}^* := \sqrt{\frac{\widehat \Sigma_{M,ji}}{(\hat \mu_{ji}- \delta_{n,ji}^2)^2} + \frac{\widehat \Sigma_{V,i}}{\hat \nu_{i}^2} + 2 \frac{|\widehat \Sigma_{MV,ji,i}|}{(\hat \mu_{ji} - \delta_{n,ji}^2)\hat \nu_{i}}} \geq \tilde \sigma_{ji},
\end{align*}
with $P(\tilde \sigma_{ji}^*>0)\to_n 1$.
Let $d_n(t):[0,\i)\to \R$ denote the random function given by
\begin{align*}
d_n(t) :=&\log \lp \hat \mu_{ji} \rp  + c\frac{\hat  \sigma_{ji}}{\sqrt{n}} - \log \lp \hat \mu_{ji}-t \rp \\
&\quad -\frac{c}{\sqrt{n}} \sqrt{ \frac{\widehat \Sigma_{M,ji}}{(\hat \mu_{ji}- t)^2} + \frac{\widehat \Sigma_{V,i}}{\hat \nu_{i}^2} + 2 \frac{|\widehat \Sigma_{MV,ji,i}|}{(\hat \mu_{ji} - t)\hat \nu_{i}}}.
\end{align*}
It holds that $d_n(0)=0$ surely, so by the mean value theorem, the desired conclusion holds if it with probability one (as $n$ tends to infinity) holds, for all $t\in[0,\delta_{n,ji}^2]$,  that $d_n'(t)\geq 0$ . 

Now fix $\eta>0$ and choose $M_\eta,\ep_1,\ldots,\ep_5>0$ such that the constant lower bounds in the following inequalities are strictly positive
\begin{align*}
\Omega_n(1) :&= (\hat \mu_{ji}\leq M_\eta),\\
\Omega_n(2) :&= (\Sigma_{M,ji}- \ep_1 \leq \widehat \Sigma_{M,ji}\leq \Sigma_{M,ji} +\ep_1),\\
\Omega_n(3) :&= (\Sigma_{V,i}- \ep_2 \leq \widehat \Sigma_{V,i}\leq \Sigma_{V,i} +\ep_2),\\
\Omega_n(4) :&= ( 0 \leq |\widehat \Sigma_{MV,ji,i}|\leq | \Sigma_{MV,ji,i}|+\ep_3) ,\\
\Omega_n(5) :&= (\mu_{ji} - \ep_4\leq \hat \mu_{ji}-\delta_{n,ji}^2 \leq \mu_{ji}+ \ep_4 ),\\
\Omega_n(6) :&= (\nu_i -\ep_5 \leq \hat \nu_{i} \leq \nu_i +\ep_5),
\end{align*}
and $\liminf_{n\to \i}P(\Omega_n(1))> 1- \eta$. 
This is possible as $\hat \mu_{ji}- \delta_{n,ji}^2 \convp_n \mu_{ji}>0$ and 
\begin{align*}
\delta_{n,ji}^{2} =	E[|\hat \delta_{nm,ji}|^{2}|\tilde \fX] &= E[|\hat \delta_{nm,ji}|^{\frac{4+\xi}{2+\xi/2}}|\tilde \fX] \\
&\leq E[|\hat \delta_{nm,ji}|^{4+\xi}|\tilde \fX]^{\frac{1}{2+\xi/2}} = O_p(1),
\end{align*}
by the conditional Jensen's inequality and concavity of $[0,\i)\ni x\mapsto x^{\frac{1}{2+\xi/2}}$, which implies that $
\hat \mu_{ji} = 	(\hat \mu_{ji}- \hat \delta_{n,ji}^{2}-\mu_{ji}) +(\hat \delta_{n,ji}^{2}+\mu_{ji}) = o_p(1) + O_p(1) = O_p(1)$. Furthermore, as
\begin{align*}
&\widehat \Sigma_{M,ji} \convp_n \Sigma_{M,ji}>0, \quad \widehat \Sigma_{V,i} \convp_n \Sigma_{V,i}>0, \\
&|\widehat \Sigma_{MV,ji,i}| \convp_n | \Sigma_{MV,ji,i}|\geq 0 , \quad  \hat \nu_{i} \convp_n \nu_i> 0,
\end{align*}
it holds that
\begin{align*}
\limsup_{n\to \i }P \lp \bigcup_{1\leq k \leq 6 } \Omega_n(k)^c \rp &\leq \sum_{1\leq k \leq 6} \limsup_{n\to \i } P(\Omega_n(k)^c) \\
&= \limsup_{n\to \i }P(\Omega_n(1)^c) \leq \eta.
\end{align*}
Here we used that the diagonal elements of the limit covariance matrix are assumed strictly positive. That $\mu_{ji},\nu_{i}>0$ follows from the fact that $X_i-\E[X_i|X_j]$ is assumed to have a density (w.r.t. Lebesgue measure) and that the variables are non-degenerate $\nu_i = \text{Var}(X_i)>0$. 
Thus, we have that
$$	
\liminf_{n\to \i} P\lp\bigcap_{1\leq k \leq 6}\Omega_n(k) \rp > 1-\eta.
$$
Now consider a fixed $\omega \in \bigcap_{1\leq k \leq 6}\Omega_n(k)$ and note that with $g_n:[0, \delta_{n,ji}^{2}] \to \R$ given by $g_n(t) = \hat \mu_{ji} - t$ we have that  $g_n$ is decreasing and that
\begin{align*}
g_n(	[0, \delta_{n,ji}^2]) \subset [\mu_{ji}- \ep_4, \hat \mu_{ji}] \subset (0,M_\eta]
\end{align*}
We have for any $t\in[0,\delta_{n,ji}^2]$ that
\begin{align*}
d_n'(t) &= \frac{1}{ \hat \mu_{ji}-t} - \frac{c}{\sqrt n} \lp  \frac{\widehat \Sigma_{M,ji}}{(\hat \mu_{ji}- t)^2} + \frac{\widehat \Sigma_{V,i}}{\hat \nu_i^2} +  \frac{2|\widehat \Sigma_{MV,ji,i}|}{(\hat \mu_{ji} - t)\hat \nu_i}\rp^{-1/2} \\
&\quad \times \lp  \frac{\widehat \Sigma_{M,ji}}{(\hat \mu_{ji}- t)^3}  +  \frac{|\widehat \Sigma_{MV,ji,i}|}{(\hat \mu_{ji} - t)^2\hat \nu_i}\rp,
\end{align*}
hence,
\begin{align*}
d_n'(t) &= \frac{1}{ \hat\mu_{ji}-t} - \frac{c}{\sqrt n} \lp  \frac{\widehat \Sigma_{M,ji}}{g_n(t)^2} + \frac{\widehat \Sigma_{V,i}}{\hat \nu_{i}^2} +  \frac{2|\widehat \Sigma_{MV,ji,i}|}{g_n(t)\hat \nu_i}\rp^{-1/2} \\
&\quad \times  \lp  \frac{\widehat \Sigma_{M,ji}}{g_n(t)^3}  +  \frac{|\widehat \Sigma_{MV,ji,i}|}{g_n(t)^2\hat \nu_{i}}\rp \\
& \geq \frac{1}{ \hat \mu_{ji}} - \frac{c}{\sqrt n} \lp  \frac{\widehat \Sigma_{M,ji}}{\hat \mu_{ji}^2} + \frac{\widehat \Sigma_{V,i}}{\hat \nu_{i}^2} \rp^{-1/2} \\
&\quad \times \lp  \frac{\widehat \Sigma_{M,ji}}{(\hat \mu_{ji}-\delta_{n,ji}^2)^3}  +  \frac{|\widehat \Sigma_{MV,ji,i}|}{(\hat \mu_{ji}-\delta_{n,ji}^2)^2\hat \nu_{i}}\rp \\
&\geq \frac{1}{M_\eta} - \frac{c}{\sqrt{n}} \lp \frac{\Sigma_{M,ji}-\ep_1}{M_\eta^2} + \frac{\Sigma_{V,i}-\ep_2}{(\nu_i+\ep_5)^2}  \rp^{-1/2}\\
&\quad \times  \lp  \frac{\Sigma_{M,ji}+\ep_1}{(\mu_{ji}-\ep_4)^3}  +  \frac{| \Sigma_{MV,ji,i}|+\ep_3}{(\mu_{ji}-\ep_4)^2(\nu_i- \ep_5)}\rp \\
&=: \frac{1}{M_\eta} - \frac{C_{M_\eta,\ep_1,\ep_2,\ep_3,\ep_4,\ep_5}}{\sqrt{n}}\\
&\geq 0,
\end{align*}
for $n \geq (C_{M_\eta,\ep_1,\ep_2,\ep_3,\ep_4,\ep_5}M_\eta)^2$. We conclude that for $n \geq (C_{M_\eta,\ep_1,\ep_2,\ep_3,\ep_4,\ep_5}M_\eta)^2$
\begin{align*}
P \lp  \tilde u_{\alpha,ji}\leq \hat u_{\alpha,ji} \rp &= P \lp 0 \leq  \log \lp \hat \mu_{ji} \rp  + c\frac{\hat  \sigma_{ji}}{\sqrt{n}} - \log \lp \hat \mu_{ji}-\delta_{n,ji}^2 \rp -c \frac{\tilde \sigma_{ji}}{\sqrt{n}}  \rp  \\
&\geq P\lp \forall t\in[0, \delta_{n,ji}^2]: d_n'(t) \geq 0 \rp\\
& \geq P \lp \bigcap_{1\leq k \leq 6} \Omega_n(k)\rp.
\end{align*}
Hence,
\begin{align*}
\liminf_{n\to \i} P \lp  \tilde u_{\alpha,ji}\leq \hat u_{\alpha,ji} \rp  \geq \liminf_{n\to \i} P \lp \bigcap_{1\leq k \leq 6} \Omega_n(k)\rp \geq 1-\eta,
\end{align*}
and as $\eta>0$ was chosen arbitrarily, we have the desired conclusion
\begin{align*}
P \lp  \tilde u_{\alpha,ji}\leq \hat u_{\alpha,ji} \rp \to_n 1.
\end{align*}
\end{proof}

\begin{proof}[ of \Cref{thm:testlevel}]
Consider a collection of arbitrary  and possibly data-dependent substructures $\cR_1,\cR_2,...$ and level $\alpha\in (0,1)$. First, we note that the score associated with two sets of edge weights  $w_1$ and $w_2$ is weakly monotone, that is,  $S_{\cT_p}(w_1)\leq S_{\cT_p}(w_2)$ if
$w_1$ and $w_2$ satisfy the component-wise partial ordering $w_1\leq w_2$.   Furthermore, the restricted score function $w\mapsto S_{\cT_p(\cR)}(w)$ is also weakly monotone for any set of restrictions $\cR$.

Let $k\in \N$ and suppose that the null hypothesis
\begin{align*}
\cH_0(\cR_k) : \cE_{\cR_k} \setminus \cE = \emptyset, \;  \cE\setminus \cE^\text{miss}_{\cR_k}  = \emptyset,\; r_k=\root{\cG},
\end{align*}
corresponding to the restriction $\cR_k=(\cE_{\cR_k},\cE_{\cR_k}^{\text{miss}},r_k)$ 	is true. 	

If there is a graph in $\hat C_{\mathrm{Bon}}:= \hat C(\hat l_\alpha, \hat u_\alpha)$ satisfying the restrictions imposed by the substructure $\cR_k$, then there exist $\hat l_\alpha \leq w' \leq \hat u_\alpha$ such that $S_{\cT_p}(w')$ attains its minimum value in a graph satisfying $\cR_k$. Penalizing (or removing) edges that are not present in the minimum edge weight directed tree does not affect the score of the minimum edge weigh directed tree. Hence, it holds that
$$
S_{\cT_p(\cR_k)}(w') = S_{\cT_p}(w').
$$ 
Monotonicity of $S_{\cT_p(\cR_k)}$ and $S_{\cT_p}$ in the edge weights  imply that $$
S_{\cT_p(\cR_k)}(\hat l_\alpha) \leq S_{\cT_p(\cR_k)}(w') = S_{\cT_p}(w') \leq S_{\cT_p}(\hat u_\alpha).$$ 
Hence, $S_{\cT_p(\cR_k)}(\hat l_\alpha) > S_{\cT_p}(\hat u_\alpha)$ entails that  no graph in $\hat C$  satisfies the restrictions of $\cR_k$. (This is a slightly conservative criterion as  $S_{\cT_p(\cR_k)}(\hat l_\alpha) \leq  S_{\cT_p}(\hat u_\alpha)$ does not necessarily guarantee that a graph in $\hat C_{\mathrm{Bon}}$ satisfies the restrictions of $\cR_k$.) 

Therefore, if $\psi_{\cR_k}^{\mathrm{CheckC}}=1$, then we know that there is no graph in $\hat C_{\mathrm{Bon}}$  satisfying the restrictions of $\cR_k$. As the causal graph $\cG$ satisfies the restriction $\cR_k$ we conclude that $\cG$ is not contained in $\hat C_{\mathrm{Bon}}$. Thus for any true $\cR_k$ we have that
\begin{align*}
(\psi_{\cR_k}^{\mathrm{CheckC}}=1) \subseteq (\cG \not \in \hat C_{\mathrm{Bon}}).
\end{align*}
Since this holds for any true $\cR_k$, the conclusion follows by noting that
\begin{align*}
\limsup_{n\to \i} P\left(\bigcup_{k : \mathcal{H}_0(\cR_k) \text{ is true}} (\psi_{\cR_k}^{\mathrm{CheckC}}=1 ) \right) \leq 	\limsup_{n\to \i}  P(\cG \not \in \hat C_{\mathrm{Bon}}) \leq  \alpha,
\end{align*}
where we used \Cref{thm:Confidence}. \\ \\
For the claim about the level guarantee of the ConvB test, let $k\in \N$ and consider a true substructure restriction $\cR_k=(\cE_{\cR_k},\cE_{\cR_k}^{\text{miss}},r_k)$. Suppose that  $\cG \in \hat C_{\mathrm{Bon}}$. This implies that there exist $\hat l_\alpha \leq w' \leq \hat u_\alpha$ such that $S_{\cT_p}(w')$ attains its minimum value in a graph satisfying $\cR_k$.
Now let $w'' =(w''_{ji})_{j\not = i}$ be given by
\begin{align*}
w''_{ji} = \left\{\begin{array}{ll}
	\hat u_{ji} & \text{if }  [\exists l \not = j : (l \to i) \in \cE_{\cR_k} ] \lor [ (i \to j) \in \cE_{\cR_k}] \lor [ (j\to i) \in \cE_{\cR_k}^{\text{miss}}] \lor [ i = r], \\
	w'_{ji} & \text{otherwise,}   \\
\end{array} \right.
\end{align*}
where we penalize edges that are in disagreement with the substructure restriction $\cR_k$. It is clear that the MWDST using the edge weights $w'$ and $w''$, i.e., $\cG^*_{\cT_p}(w')$ and $\cG^*_{\cT_p}(w'')$, both satisfy the substructure restriction $\cR_k$. However, as $w'$ is unknown, so is $w''$. We lower bound the unknown $w'$ by $\hat l$ 
and define  
$\check w = (\check w_{ji})_{j\not = i}$ as
\begin{align*}
\check w_{ji} = \left\{\begin{array}{ll}
	\hat u_{ji} & \text{if }  [\exists l \not = j : (l\to i) \in \cE_{\cR_k} ] \lor [ (i \to j) \in \cE_{\cR_k}] \lor [ (j\to i) \in \cE_{\cR_k}^{\text{miss}}] \lor [ i = r], \\
	\hat l_{ji} & \text{otherwise,}   \\
\end{array} \right.
\end{align*}
Now, the MWDST $\cG^*_{\cT_p}(\tilde w)$ may  use edges that are in disagreement with $\cR_k$ or not satisfy $\cR_k$. (For example, consider a three node  causal graph $V= \{1,2,3\}$ with edges $1 \to 2 \to 3$ and consider the substructure restriction $\cE_{\cR_k} = \{(1 \to 2)\}$. Now it may happen that $\hat l_{12} + \hat l_{23} > \hat l_{13} + \hat u_{32}$ or $\hat l_{12} + \hat l_{23} > \hat l_{23} + \hat l_{31}$, that is, the MWDST $\cG^*_{\cT_p}(\tilde w)$ does not satisfy the substructure restriction.) We now argue that this happens with probability tending to zero.

By the assumed identifiability, i.e., that  \Cref{ass:identifiabilityOfConditionalMeanScores} holds, we have that
\begin{align} \label{eq:strictPosScoreGapTest}
\Delta := \min_{\tilde \cG\in \cT_p \setminus \cG }  \ell_{\mathrm{G}}(\tilde \cG) -  \ell_{\mathrm{G}}(\cG) > 0.
\end{align}
Now consider the events $(A_n)_{n\in \N}$ (that are independent of $k$) given by 
\begin{align*}
A_n := \bigcap_{j\not = i}\lp |\hat l_{ji}- w^{\mathrm{G}}_{ji}| < \frac{\Delta}{p-1} \rp , \quad n\in \N.
\end{align*}
We realize that on $A_n$ it must hold that the MWDST $\cG^*_{\cT_p}(\tilde w)$ satisfies $\cR_k$.
Thus, for a true substructure restriction $\cR_k$ we have that
\begin{align*}
A_n\cap (\cG \in \hat C_{\mathrm{Bon}}) \subseteq (\psi^{\mathrm{ConvB}}_{\cR_k}=0) \iff A_n^c \cup (\cG \not \in \hat C_{\mathrm{Bon}}) \supseteq (\psi^{\mathrm{ConvB}}_{\cR_k}=1),
\end{align*}
for all $n\in \N$. Hence, we have that
\begin{align*}
\limsup_{n\to \i} P\left(\bigcup_{k : \mathcal{H}_0(\cR_k) \text{ is true}} (\psi_{\cR_k}^{\mathrm{ConvB}}=1 ) \right) &=  	\limsup_{n\to \i} P\left( (\cG \not \in \hat C_{\mathrm{Bon}} )\cup A_n^c \right) \\
&\leq 	\limsup_{n\to \i}  P(\cG \not \in \hat C_{\mathrm{Bon}}) + \limsup_{n\to \i} P(A_n^c) \\
&\leq  \alpha,
\end{align*}
by \Cref{thm:Confidence}, proving the claim. It only remains to argue that $\limsup_{n\to \i} P(A_n^c) =0$. To that end, note that by  \Cref{thm:Confidence} it holds that $\hat \Sigma \convp_n \Sigma$ and that
\begin{align*}
\sqrt{n} \lp  \begin{pmatrix}
	\hat \mu - \delta_n^2 \\
	\hat \nu
\end{pmatrix} - \begin{pmatrix}
	\mu \\
	\nu
\end{pmatrix}\rp \convd_n \cN(0, \Sigma).
\end{align*}
By the strengthened assumptions, i.e., the $\sqrt{n}$-convergence for the non-causal edges, we have that (see the arguments for the causal edges from \Cref{thm:Confidence})
\begin{align*}
\sqrt{n} \lp  \begin{pmatrix}
	\hat \mu \\
	\hat \nu
\end{pmatrix} - \begin{pmatrix}
	\mu \\
	\nu
\end{pmatrix}\rp \convd_n \cN(0, \Sigma).
\end{align*}
Thus, for any $j\not = i$, that
\begin{align}
\hat u_{ji},\hat l_{ji}  &= \frac{1}{2}\log \lp\frac{\hat \mu_{ji}}{\hat \nu_{i}} \rp \pm  z_{\alpha}\frac{\hat \sigma_{ji}}{2\sqrt{ n}}  \convp_n \frac{1}{2}\log \lp \frac{\mu_{ji}}{\nu_{ji}}\rp = w^{\mathrm{G}}_{ji}, \label{eq:convinProbOfLowerAndUpperBound}
\end{align}
since $\hat \mu \convp_n \mu$, $\hat \nu \convp_n \nu$, and $\hat \sigma_{ji} \convp_n \sigma_{ji}$. The convergence statements in \Cref{eq:convinProbOfLowerAndUpperBound} obviously implies that $P(A_n) \to_n 1$, since $\Delta$ is strictly positive, see \Cref{eq:strictPosScoreGapTest}). This concludes the proof.

\end{proof}

\subsection{Proofs of Section~\ref{sec:ScoreGap}}

\subsubsection{Proofs of First Results in Section~\ref{sec:ScoreGap}}

\begin{proof}[ of \Cref{lm:ScoreOrderings}]
As conditioning reduces entropy we always have that
\begin{align*}
\lCE(\tilde \cG,i) = h(X_i|X_{\PAg{\tilde \cG}{i}}) &= h(X_i - \E[X_i| X_{\PAg{\tilde \cG}{i}}]|X_{\PAg{\tilde \cG}{i}}) \\
&\leq h(X_i - \E[X_i| X_{\PAg{\tilde \cG}{i}}]) \\
&= \lE(\tilde \cG,i).
\end{align*}
Furthermore, note that when conditioning we `throw out' dependence information captured by the mutual information $I(X_i-\E[X_i|X_{\PAg{ \tilde \cG}{i}}]; X_{\PAg{  \tilde\cG}{i}})$, which is zero if and only if $X_i-\E[X_i|X_{\PAg{ \tilde \cG}{i}}] \independent X_{\PAg{  \tilde\cG}{i}}$. This is especially the case for the true graph, i.e., $X_i-\E[X_i|X_{\PAg{\cG}{i}}] \independent X_{\PAg{\cG}{i}}$, implying that $\lCE(\cG,i)=\lE(\cG,i)$. Consequently, we have that the local conditional entropy score gap lower bounds the local entropy score gap,
\begin{align*}
\lCE(\tilde \cG,i)-\lCE( \cG,i) \leq \lE(\tilde \cG,i)-\lE(\cG,i).
\end{align*}
Furthermore, from the arguments in the proof of \Cref{lm:EntropyScore} we have that
\begin{align*}
\lE(\tilde \cG,i) &= \inf_{\tilde N_i \sim  P_{\tilde N_i}\in \cP} h\lp X_i-  \E\lf X_i|X_{\PAg{ \tilde \cG}{i}} \rf, \tilde N_i \rp \\
&\leq \inf_{\tilde N_i \sim  P_{\tilde N_i}\in \cP_G} h\lp X_i-  \E\lf X_i|X_{\PAg{ \tilde \cG}{i}} \rf, \tilde N_i \rp \\
&= \lG(\tilde \cG,i) + \log(\sqrt{2\pi e}).
\end{align*}
If $X$ is generated by a causal additive tree model with Gaussian noise, i.e., with generating SCM $\theta=(\cG,(f_i),P_N)$ with $P_N\in \cP_{\mathrm{G}}^p$, then $
\lE(\cG,i) = h(X_i-\E[X_i|X_{\PAg{\cG}{i}}])=h(N_i) = \log(\sqrt{2\pi e} \sigma_i) = \log(\sqrt{2\pi e}) + \frac{1}{2}\log( \E[N_i^2])= \log(\sqrt{2\pi e}) +l_\mathrm{G}(\cG,i)$, in which case the local entropy score gap lower bounds the local Gaussian score gap
\begin{align*}
\lE(\tilde \cG,i)-\lE(\cG,i)  \leq \lG(\tilde \cG,i) -\lG(\cG,i).
\end{align*}	

\end{proof}

\begin{proof}[ of \Cref{lm:EdgeReversal}]
Note that $\E[Y|X]=\E[f(X)+N_Y|X]=f(X)+\E[N_Y|X] =f(X)+\E[N_Y]$, since $N_Y\independent N_X=X$. Hence, the score difference can be written as
\begin{align*}
\lE(\tilde \cG) - \lE(\cG) =&\, \lE(\tilde \cG,X) - \lE(\cG,X) + \lE(\tilde \cG,Y)-\lE(\cG,Y) \\
=& \,   h(X-\E(X|Y)) -h(X) +h(Y)- h(Y-E(Y|X))  \\
=& \,   h(X-\E(X|Y)) -h(X) +h(Y)- h(N_Y+\E[N_Y])  \\
=& \,   h(X-\E(X|Y)) -h(X) +h(Y)- h(N_Y), 
\end{align*}
as the differential entropy is translation invariant.
Now note that as $N_Y \independent N_X$ it holds that $N_Y\independent f(X)$, so conditioning on $f(X)$ yields that
\begin{align*}
h(Y) &= h(Y | f(X)) + I(Y;f(X)) \\
&= h(f(X)+N_Y | f(X)) + I(Y;f(X)) \\
&=h(N_Y) + I(Y;f(X)).
\end{align*}
Similarly, conditioning on $X$ yields that
\begin{align*}
h(Y) &= h(Y | X) + I(Y;X) \\
&=h(N_Y) + I(Y;X),
\end{align*}
which proves that
\begin{align*}
I(Y;f(X))= I(Y;X).
\end{align*}
This equality is normally derived by restricting $f$ to be bijective, but here it holds regardless by the structural assignment form, as $Y$ only depends on $X$ through  $f(X)$. Furthermore, we have that
\begin{align*}
h(X-\E[X|Y]) &= I(X-\E[X|Y];Y) + h(X-\E[X|Y]|Y) \\
&= I(X-\E[X|Y];Y) + h(X|Y).
\end{align*}
Hence,
\begin{align*}
h(X-\E[X|Y])- h(X) &= I(X-\E[X|Y];Y)+h(X|Y) - h(X) \\
&= I(X-\E[X|Y];Y)- I(Y;X).
\end{align*}
Thus
\begin{align*}
\lE(\tilde \cG) - \lE(\cG) &= h(X-\E[X|Y]) - h(X) +h(Y)-h(N_Y)  \\
&= I(X-\E[X|Y];Y)- I(Y;X) + h(N_Y) + I(Y;f(X)) -h(N_Y)\\
&= I(X-\E[X|Y];Y)- I(Y;X)  + I(Y;f(X)) \\
&= I(X-\E[X|Y];Y),
\end{align*}
proving the claim.
\end{proof}

\begin{proof}[ of \Cref{lm:EdgeReversalSymmetric}]
As the conditional mean $\E[X|Y]$ vanishes, we have that
\begin{align*}
\lE(\tilde \cG) - \lE(\cG)&=I(X-\E(X|Y);Y) \\
&=  I(X;Y) \\
&= I(Y;X) \\
&= I(Y;f(X)),
\end{align*}
where the last equality was derived in the proof of \Cref{lm:EdgeReversal}. 
Now let $f(X)^\mathrm{G}$ and $N_Y^\mathrm{G}$ be independent normal distributed random variables with the same mean and variance as $f(X)$ and $N_Y$. That is,  $f(X)^\mathrm{G} \sim \cN(\E[f(X)],\Var(f(X)))$, $N_Y^\mathrm{G} \sim \cN(\E[N_Y],\Var(N_Y))$ with $N_Y^\mathrm{G} \independent f(X)^\mathrm{G}$  such that $f(X)^\mathrm{G}+N_Y^\mathrm{G}\sim \cN(\E[f(X)]+\E[N_Y],\Var(f(X))+\Var(N_Y))$.

\begin{itemize}
\item[(a)] If $D_{\mathrm{KL}}(f(X)\|f(X)^\mathrm{G}) \leq D_{\mathrm{KL}}(N_Y\|N_Y^\mathrm{G})$ then  	by Lemma C.1 of \cite{silva2009unified} we have, since $X\independent N_Y$, that
\begin{align*}
	I(Y;f(X)) = I(f(X)+N_Y;f(X)) \geq I(f(X)^\mathrm{G}+N_Y^\mathrm{G};f(X)^\mathrm{G}),
\end{align*}
Note, we have equality if and only if $f(X)$ and $N_Y$ are jointly Gaussian.  Furthermore,
\begin{align*}
	I(f(X)^\mathrm{G}+N_Y^\mathrm{G};f(X)^\mathrm{G}) &= h(f(X)^\mathrm{G}+N_Y^\mathrm{G})- h(f(X)^\mathrm{G}+N_Y^\mathrm{G}|f(X)^\mathrm{G}) \\
	&=  h(f(X)^\mathrm{G}+N_Y^\mathrm{G})- h(N_Y^\mathrm{G}) \\
	&=	  \log(\sqrt{2\pi(\Var(f(X))+\Var(N_Y))}) - \log(\sqrt{2\pi\Var(N_Y)})  \\
	&= \frac{1}{2}\log\lp \frac{\Var(f(X))+\Var(N_Y)}{\Var(N_Y)}\rp  \\
	&= \frac{1}{2}\log\lp 1+ \frac{\Var(f(X))}{\Var(N_Y)}\rp.
\end{align*}
\item[(b)] If $f(X)+ N_Y$ is log-concave distributed, then by Theorem 3 of \cite{marsiglietti2018lower} we have that
\begin{align*}
	h(f(X)+N_Y) \geq \frac{1}{2}\log \lp 4\Var(f(X)+N_Y)\rp = \frac{1}{2}\log \lp 4(\Var(f(X))+\Var(N_Y)\rp.
\end{align*}
Furthermore, it is well known that for fixed variance, the normal distribution maximizes entropy, hence
\begin{align*}
	h(N_Y) \leq h(N_Y^\mathrm{G}) = \frac{1}{2} \log \lp 2\pi \Var(N_Y) \rp.
\end{align*}
Therefore, we get that 
\begin{align*}
	I(Y;f(X)) &= I(f(X)+N_Y;f(X)) \\
	&= h(f(X)+N_Y) - h(f(X) + N_Y | f(X)) \\
	&= h(f(X)+N_Y) - h(N_Y) \\
	&\geq \frac{1}{2}\log \lp 4(\Var(f(X))+\Var(N_Y)\rp - \frac{1}{2} \log \lp 2\pi e \Var(N_Y) \rp \\
	&= \frac{1}{2}\log \lp \frac{2}{\pi e} + \frac{2}{\pi e} \frac{\Var(f(X))}{\Var(N_Y)} \rp,
\end{align*}
which yields a strictly positive lower bound if and only if
\begin{align*}
	\frac{2}{\pi e} + \frac{2}{\pi e} \frac{\Var(f(X))}{\Var(N_Y)}> 1 \iff \frac{\Var(f(X))}{\Var(N_Y)} >\frac{\pi e}{2 } -1  \approx 3.27.
\end{align*}

\end{itemize}
\end{proof}

\begin{restatable}[]{lemma}{MarkovEquivTreesPathReversal}
\label{lm:MarkovEquivTreesPathReversal}
Two different but Markov equivalent trees $\tilde \cG$ and $ \hat \cG$ share the exact same edges except for a single reversed directed path between the two root nodes of the graphs,
\begin{align*}
\begin{array}{rcccccccccc}
	\hat	\cG: & c_1 & \to & c_2 &  \to & \cdots & \to & c_{r-1} & \to & c_{r}, \\
	\tilde \cG: &c_r & \to & c_{r-1}  & \to & \cdots & \to & c_{2} & \to & c_{1},
\end{array}
\end{align*}
with $c_1=\mathrm{rt} ( \hat \cG)$ and $c_r = \mathrm{rt}(\tilde \cG)$.
\end{restatable}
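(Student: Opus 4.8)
# Proof Proposal for Lemma~\ref{lm:MarkovEquivTreesPathReversal}

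The plan is to exploit the fact that directed trees have no v-structures, so Markov equivalence reduces to sharing the same skeleton. First I would recall the standard characterization (e.g., from \citealp{Koller2009}) that two DAGs are Markov equivalent if and only if they have the same skeleton and the same set of v-structures (immoralities). In a directed tree, every node has at most one parent, so no node can be the collider of a v-structure; hence directed trees have no v-structures at all. Consequently $\tilde\cG$ and $\hat\cG$ are Markov equivalent if and only if they share the same skeleton $S$. Since $\tilde\cG\neq\hat\cG$ but both are orientations of $S$ that are directed trees, they must have distinct root nodes: indeed, a directed tree is uniquely determined by its skeleton together with the choice of root, because orienting all edges of $S$ away from a fixed root is the only way to make every non-root node have exactly one parent. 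Write $c_1:=\root{\hat\cG}$ and $c_r:=\root{\tilde\cG}$, which are therefore distinct.

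Next I would analyze how the two orientations of the common skeleton $S$ differ. Since $S$ is a tree (in the undirected sense), there is a unique undirected path $c_1 = v_1, v_2, \ldots, v_r = c_r$ in $S$ between the two roots. In $\hat\cG$, all edges are oriented away from $c_1$, so along this path we have $c_1 = v_1 \to v_2 \to \cdots \to v_r = c_r$ (each step points away from $c_1$, i.e., toward $c_r$ along the path). In $\tilde\cG$, all edges are oriented away from $c_r$, so along the same path we get $c_r = v_r \to v_{r-1} \to \cdots \to v_1 = c_1$. Relabeling $c_i := v_i$ gives exactly the claimed reversal of the directed path $c_1 \to c_2 \to \cdots \to c_r$.

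It remains to show that the two graphs agree on all edges \emph{not} on this path. Take any edge $\{a,b\}$ of $S$ with, say, $a$ lying (weakly) closer to $c_1$ than $b$ in the tree metric and $\{a,b\}$ not on the path $c_1\cdots c_r$. Removing $\{a,b\}$ from $S$ splits it into two components; let $T_b$ be the one containing $b$. I claim neither root lies in $T_b$: if $c_1 \in T_b$ then the unique path from $c_1$ to $a$ would have to leave $T_b$, forcing it through $\{a,b\}$, contradicting that $a\notin T_b$; and if $c_r\in T_b$, then the path from $c_1$ to $c_r$ would pass through $\{a,b\}$, contradicting that $\{a,b\}$ is off that path. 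Hence both $c_1$ and $c_r$ lie in the other component, so in both $\hat\cG$ (orient away from $c_1$) and $\tilde\cG$ (orient away from $c_r$) the edge $\{a,b\}$ is oriented $a \to b$, since the root-to-$b$ path passes through $a$ in each case. Thus $\hat\cG$ and $\tilde\cG$ coincide outside the path, completing the proof.

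The main obstacle is not any single deep step but rather carefully justifying the two structural facts cleanly: (i) that an orientation of a tree skeleton is a directed tree (arborescence) precisely when it is the out-orientation from some root, and (ii) the componentwise argument showing that off-path edges are oriented identically. Both are elementary graph-theoretic observations, so I expect the write-up to be short; the one place to be careful is ensuring the indexing of the path $c_1,\ldots,c_r$ is consistent with $c_1 = \root{\hat\cG}$ and $c_r = \root{\tilde\cG}$ as stated, which the relabeling above handles.
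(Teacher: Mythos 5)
Your proof is correct and follows the same three-stage skeleton as the paper's argument (Markov equivalence reduces to equality of skeletons, the unique path between the two roots is reversed, and all other edges agree), but it justifies the last stage by a genuinely different mechanism. The paper argues locally and iteratively: for an edge $z_1 - z_2$ with $z_1$ on the root-to-root path and $z_2$ off it, orienting it as $z_2 \to z_1$ in $\tilde \cG$ would either give $z_1$ two parents or give $\root{\tilde\cG}$ an incoming edge, and this ``at most one parent'' contradiction is then propagated outward along each branch. You instead invoke the global characterization that an orientation of a tree skeleton is a directed tree exactly when every edge points away from a single root, and then observe that deleting an off-path edge $\{a,b\}$ leaves both roots in the component containing $a$, so both orientations direct it $a \to b$. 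Your route buys a one-shot, non-inductive argument and makes both the ``distinct roots'' claim and the orientation of the root-to-root path immediate; the paper's route avoids having to state and prove the out-orientation characterization and stays closer to the bare definition of a directed tree. One phrasing slip worth fixing: in your component argument, the case $c_1 \in T_b$ contradicts the assumption that $a$ is closer to $c_1$ than $b$ (the unique path would then be $c_1,\ldots,b,a$, making $b$ strictly closer), not the statement $a \notin T_b$; the conclusion that both roots lie in the component of $a$ is unaffected.
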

\begin{proof}[ of \Cref{lm:MarkovEquivTreesPathReversal}] First, note that there always exists a unique directed path in $\hat \cG$ from $\mathrm{rt}(\hat\cG)$  to  $\mathrm{rt}(\tilde \cG)$
\begin{align*}
\hat	\cG: \mathrm{rt}(\hat \cG)=c_1  \to \cdots  \to c_{r-1}  \to  c_{r}=\mathrm{rt}(\tilde \cG).
\end{align*}
Since $\tilde \cG$ and $\hat\cG$ are Markov equivalent, they share the same skeleton, so in $\tilde \cG$ the above path must be reversed. That is, there exists a unique directed path in $\tilde \cG$ from $\mathrm{rt}(\tilde \cG)$ to $\mathrm{rt}(\hat\cG)$ given by
\begin{align*}
\tilde \cG: 	\mathrm{rt}(\tilde \cG) = c_r \to c_{r-1} \to \cdots \to c_1=\mathrm{rt}(\hat\cG),
\end{align*}
If $r=p$ we are done, so assume $r<p$. As $\hat \cG$ is a directed tree there must exists a node $z_2$ which is not a part of the above path but is a child of a node in the path. That is, there exists a node $z_1\in\{c_1,\ldots,c_r\}$ such that $\hat\cG$ contains the edge
\begin{align*}
\hat\cG: z_1 \to z_2.
\end{align*}
Furthermore, by equality of skeleton, this edge must also be present in $\tilde \cG$,
\begin{align*}
\tilde 	\cG: z_1 - z_2.
\end{align*}
Assume for contradiction that $z_2 \to z_1$ in $\tilde \cG$. As such, it must hold that $z_1=c_r=\root{\tilde \cG}$ for otherwise if $z_1\in\{c_1,\ldots,c_{r-1}\}$ then $z_1$ would have two parents in $\tilde \cG$, a contradiction since $\tilde \cG$ is a directed tree. However, if $z_1=c_r=\root{\tilde \cG}$ then there is an incoming edge into the root node, a contradiction. We conclude that the directed edge $z_1 \to z_2$ also is present in $\tilde \cG$.

Any paths further out on this branch will coincide in both graphs for otherwise there exists nodes with two parents. These arguments show that any paths branching out from the main reversed path will coincide in both $\hat \cG$ and $\tilde \cG$. Thus, the two graphs coincide up to a directed path between root nodes that is reversed.

\end{proof}

\begin{proof}[ of \Cref{lm:MarkovEquivTreesScoreGap}]
By \Cref{lm:MarkovEquivTreesPathReversal} there exists a path reversal 
\begin{align*}
\begin{array}{rcccccccccc}
	\cG: & \mathrm{rt}(\cG)=c_1 & \to & c_2 &  \to & \cdots & \to & c_{r-1} & \to & c_{r}=\mathrm{rt}(\tilde \cG), \\
	\tilde \cG: & \mathrm{rt}(\tilde \cG)=c_r & \to & c_{r-1}  & \to & \cdots & \to & c_{2} & \to & c_{1}=\mathrm{rt}(\cG),
\end{array}
\end{align*}
while all other edges in $\cG=(V,\cE)$ and $\tilde \cG=(V, \tilde \cE)$ coincide. Hence, the entropy score difference reduces to
\begin{align*}
\lE(\tilde \cG)- \lE(\cG) &= h(X_{\root{\tilde \cG}})  + \sum_{(j,i)\in \tilde \cE} h(X_i-\E[X_i|X_{j}])  \\& \quad \quad -h(X_{\root{ \cG}}) - \sum_{(j,i)\in \cE}  h(X_i-\E[X_i|X_{j}])  \\
&= h(X_{c_r}) + \sum_{i=1}^{r-1} h(X_{c_i}-\E[X_{c_i}|X_{c_{i+1}}])  \\
&\quad \quad -h(X_{c_1}) -\sum_{i=2}^{r}  h(X_{c_i}-\E[X_{c_i}|X_{c_{i-1}}]).
\end{align*}
Note that
\begin{align*}
h(X_{c_r})-h(X_{c_1}) &= \sum_{i=2}^r h(X_{c_i}) - \sum_{i=1}^{r-1} h(X_{c_i}) = \sum_{i=1}^{r-1} h(X_{c_{i+1}})-h(X_{c_i}).
\end{align*}
Hence, 
\begin{align*}
&\lE(\tilde \cG)- \lE(\cG) \\
=\, & \sum_{i=1}^{r-1}  h(X_{c_i}-\E[X_{c_i}|X_{c_{i+1}}]) + h(X_{c_{i+1}}) - h(X_{c_{i+1}}-\E[X_{c_{i+1}}|X_{c_{i}}]) - h(X_{c_i}) \\
=\, & \sum_{i=1}^{r-1} \Delta \lE (c_i\lra c_{i+1}) \\
\geq \,&  \min_{1\leq i \leq r-1} \Delta \lE (c_i\lra c_{i+1}),
\end{align*}
which concludes the proof.

\end{proof}

\subsubsection{Proof of \Cref{lm:thm:scoreGapEntropy}} \label{sec:moredetailsgraphreduction}
We first describe the graphs that result from the reduction 
technique described in \ref{sec:ScoreGapGeneralGraphs}.
To do so, define 
$$
\bL(\cG,\tilde \cG):= \{L\in V_{R}: \CHg{\cG_{R}}{L}=\emptyset \land ( \PAg{\tilde \cG_{R}}{L}\not= \PAg{\cG_{R}}{L} \lor \CHg{\tilde \cG_{R}}{L}\not = \emptyset)\},
$$ 
containing 
the sink nodes in $\cG_R$
that are either not sink nodes in $\tilde \cG_R$ or sink nodes in $\tilde \cG_R$ with different parents: $\PAg{\cG_R}{L}\not = \PAg{\tilde \cG_R}{L}$.
Now fix any $L\in \bL(\cG,\tilde \cG) \subset V_R$ and note that its only parent in $\cG_R$, $\PAg{\cG_R}{L}$, is either also a parent of $L$, a child of $L$ or not adjacent to $L$, in $\tilde \cG_R$. That is, one and only one of the following sets 
is non-empty 
\begin{align*}
Z(L) :&= \PAg{\cG_R}{L} \cap \PAg{\tilde \cG_R}{L}, &\qquad \text{(`staying parents')}\\
Y(L) :&= \PAg{\cG_R}{L} \cap \CHg{\tilde \cG_R}{L}, &\qquad \text{(`parents to children')}\\
W(L) :&= \PAg{\cG_R}{L} \cap (V\setminus \{ L \cup \CHg{\tilde \cG_R}{L} \cup \PAg{\tilde \cG_R}{L}  \}) &\qquad \text{(`removing parents')}
\end{align*}
We define the $\tilde \cG_R$ parent and children of $L$ that are not adjacent to $L$ in $\cG_R$ as
\begin{align*}
D(L) :&= \PAg{ \tilde \cG_R}{L} \cap (V\setminus \{ L \cup \CHg{ \cG_R}{L} \cup \PAg{ \cG_R}{L}  \}), \text{ and} \\
O(L):&= \CHg{\tilde \cG_R}{L}  \cap (V\setminus \{ L \cup \CHg{ \cG_R}{L} \cup \PAg{ \cG_R}{L}  \}),
\end{align*}
respectively. 
All such sets contain at most one node and 
by slight abuse of notation, 
we use the same letters to refer to the nodes. 
We will henceforth suppress the dependence on $L$ if the choice is clear from the context.
\Cref{fig:reducedSubgraphs} visualizes the above sets.

Now partition $\cT_p\setminus \{\cG\}$ into the three following disjoint partitions for which there exists a reduced graph sink node $	L\in 	\bL(\cG,\tilde \cG)$  such that $W(L)$, $Y(L)$ and $Z(L)$ is non-empty, respectively. That is, we define
\begin{align*}
\cT_p(\cG,W):&=\{\tilde \cG\in \cT_p\setminus \{\cG\}: \exists L\in \bL(\cG,\tilde \cG) \text{ s.t. } W(L)\not = \emptyset\}, \\
\cT_p(\cG,Y):&=\{\tilde \cG\in \cT_p\setminus \{\cG\}: \exists L\in \bL(\cG,\tilde \cG) \text{ s.t. } Y(L)\not = \emptyset\}\setminus\cT_p(\cG,W) ,\\
\cT_p(\cG,Z):&=\{\tilde \cG\in \cT_p\setminus \{\cG\}: \exists L\in \bL(\cG,\tilde \cG) \text{ s.t. } Z(L)  \not= \emptyset\} \setminus (\cT_p(\cG,W) \cup \cT_p(\cG,Y)).
\end{align*}
Using that $ \cT_p(\cG,W) \cup \cT_p(\cG,Y) \cup \cT_p(\cG,Z) = \cT_p(\cG)$, we can now find a lower bound for the score gap that holds uniformly over all alternative directed tree graphs $\cT_p\setminus \{\cG\}$: 
\begin{align*}
\min_{\tilde \cG \in \cT_p\setminus \{\cG\}}\lE(\tilde \cG) -\lE(\cG) =&\,  
\min \bigg\{ \min_{\tilde \cG \in \cT_p(\cG,Z)} \lE(\tilde \cG) -\lE(\cG), \\
&\quad \quad \quad   \min_{\tilde \cG \in \cT_p(\cG,W)} \lE(\tilde \cG) -\lE(\cG), \min_{\tilde \cG \in \cT_p(\cG,Y)} \lE(\tilde \cG) -\lE(\cG) \bigg\}.
\end{align*}
We now turn to each of these three terms individually and first consider alternative graphs in the partitioning $\cT_p(\cG,Z)$. The following lower bound consists of possibly non-localized conditional dependence properties of the observable distribution $P_X$. (That is, the bound may involve nodes that are not close to each other in the graph $\cG$.)
\begin{restatable}[]{lemma}{ScoreGapCaseOne}
\label{lm:ScoreGapCaseOne}
Let $\Pi_Z(\cG)$ denote all tuples $(z,l,o)\in V^3$ of adjacent nodes $(z\to l)\in \cE$ for which there exists a node $o\in \NDg{\cG}{l}\setminus\{z,l\}$. It holds that
\begin{align*}
\min_{\tilde \cG \in \cT_p(\cG,Z)} \lE(\tilde \cG) -\lE(\cG)&\geq 	\min_{(z,l,o)\in \Pi_Z(\cG)} I(X_z;X_o|X_l).
\end{align*}
\end{restatable}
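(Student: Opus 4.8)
The plan is to fix an alternative graph $\tilde \cG \in \cT_p(\cG, Z)$, apply the graph reduction of Section~\ref{sec:ScoreGapGeneralGraphs} to obtain $\cG_R, \tilde \cG_R$ with identical vertex sets $V_R$, and pick a sink node $L \in \bL(\cG, \tilde \cG)$ whose $\cG_R$-parent $Z \in Z(L)$ is still a parent of $L$ in $\tilde \cG_R$. By the footnote in Section~\ref{sec:ScoreGapGeneralGraphs}, the score gap is unchanged by the reduction, so it suffices to lower bound $\lE(\tilde \cG_R) - \lE(\cG_R)$. First I would isolate the local contributions of the node $L$: in $\cG_R$ it has the single parent $Z$, so its local score is $h(X_L - \E[X_L \mid X_Z])$, while in $\tilde \cG_R$ it has parents $\{Z\} \cup \mathrm{something}$ — from Figure~\ref{fig:reducedSubgraphs}, $L$ has parents $D$ and $Z$ in $\tilde \cG_R$ (with $D$ possibly empty). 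Crucially, the node $o$ I want to use is $D$ (or another node that is a non-descendant of $l$ in $\cG_R$ and is adjacent to $L$ in $\tilde \cG_R$): since $(z \to l) \in \cE$ and $o \in \NDg{\cG}{l} \setminus \{z,l\}$, the tuple $(z,l,o) = (Z, L, D)$ lies in $\Pi_Z(\cG)$.

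The key step is to show that adding the extra parent to $L$ in $\tilde \cG_R$ — and correspondingly reattaching the descendants of $L$ differently — produces a net change in total entropy score bounded below by $I(X_Z ; X_D \mid X_L)$. The mechanism I would use is the same telescoping/mutual-information bookkeeping as in the proof of Lemma~\ref{lm:MarkovEquivTreesScoreGap}: one rewrites both $\lE(\tilde \cG_R)$ and $\lE(\cG_R)$ as sums of local entropy scores, cancels the common terms (nodes with identical parent sets in both graphs), and expresses the remaining difference using the chain rule for differential entropy $h(A - \E[A \mid \cdot]) = h(A \mid \cdot) + I(\,\cdot\,; \text{predictor})$ together with the fact that in the causal graph $\cG$ the residual $X_L - \E[X_L \mid X_{\PAg{\cG}{L}}]$ is independent of $X_{\PAg{\cG}{L}}$ (so $\lE(\cG, L) = \lCE(\cG, L)$, as recorded in the proof of Lemma~\ref{lm:ScoreOrderings}). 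Conditioning reduces entropy, which gives the inequalities needed to throw away the descendant-reattachment terms and the $D$-related terms down to the single conditional mutual information $I(X_Z; X_D \mid X_L)$. I would then take the minimum over $(z,l,o) \in \Pi_Z(\cG)$ to get the stated uniform bound.

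The main obstacle I anticipate is handling the descendants of $L$ in $\tilde \cG_R$ (the nodes $O_1, \ldots, O_k$ in Figure~\ref{fig:reducedSubgraphs}): since $L$ is a sink in $\cG_R$ but not in $\tilde \cG_R$, these descendants receive different parent sets in the two graphs, and their local scores do not cancel directly. The argument must show that the aggregate local-score change over $L$ and all of $\{O_1, \ldots, O_k\}$, and over the path connecting the two root nodes, is still bounded below by $I(X_Z; X_D \mid X_L)$ — this requires careful use of the submodularity of entropy (conditioning reduces entropy) and possibly an inductive peeling argument along the branch, very much in the spirit of Lemma~\ref{lm:MarkovEquivTreesScoreGap} but with one extra "staying parent" creating a v-structure at $L$ in $\tilde \cG_R$. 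A secondary subtlety is verifying that the relevant densities exist so that all the differential entropies and conditional mutual informations are well-defined; this follows from the standing assumption that $X_i - \E[X_i \mid X_j]$ has a density for all $i \neq j$, combined with the density-propagation argument in the proof of \Cref{thm:UniqueGraph}.
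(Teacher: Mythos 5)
There is a genuine gap, and it stems from a misreading of Figure~\ref{fig:reducedSubgraphs}. In the `staying parent' case the node $L$ already has $Z$ as its parent in $\tilde\cG_R$, and since $\tilde\cG_R$ is a directed tree, $L$ cannot have a second parent: $D(L)=\emptyset$ whenever $Z(L)\neq\emptyset$ (the figure superimposes the three mutually exclusive cases, which is what misled you). Consequently your proposed tuple $(Z,L,D)$ does not exist, there is no v-structure at $L$ in $\tilde\cG_R$, and the entire mechanism of ``adding the extra parent to $L$'' has nothing to act on. The correct third node is one of the children $O_1,\ldots,O_k$ of $L$ in $\tilde\cG_R$ (the set $O(L)$ is necessarily non-empty, since otherwise $L$ would have been deleted by the reduction), and the relevant structural fact is the d-separation $Z\ \dsep{\tilde\cG_R}\ O_i\mid L$ along the chain $Z\to L\to O_i$. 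One checks that $O_i\in\NDg{\cG}{L}$ because $L$ is a sink in $\cG_R$ and any directed $\cG$-path from $L$ to a surviving node would have survived the reduction, so $(Z,L,O_i)\in\Pi_Z(\cG)$.

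Beyond the wrong choice of node, your proof strategy (telescoping local entropy scores and peeling descendants as in Lemma~\ref{lm:MarkovEquivTreesScoreGap}) is not the one that closes the argument, and you correctly identify why: the descendants $O_1,\ldots,O_k$ and their subtrees have different parent sets in $\cG_R$ and $\tilde\cG_R$, so the local scores do not cancel and the bookkeeping does not reduce to a single conditional mutual information. The paper avoids this entirely by working globally: by Lemma~\ref{lm:EntropyScore}, $\lE(\tilde\cG)-\lE(\cG)=\inf_Q h(P_X,Q)-h(P_X)$ over $Q\in\{\tilde\cG\}\times\cF(\tilde\cG)\times\cP^p$; every such $Q$ is Markov with respect to $\tilde\cG$ and hence factorizes as $Q_{A\mid Z,O,L}\,Q_{Z\mid L}\,Q_{O\mid L}\,Q_L$ with $A=V\setminus\{Z,O,L\}$; minimizing each cross-entropy factor separately (each is bounded below by the corresponding conditional entropy of $P_X$, by non-negativity of KL divergence) yields $\inf_Q h(P_X,Q)\geq h(P_X,Q^*)$ with $Q^*=P_{A\mid Z,O,L}P_{Z\mid L}P_{O\mid L}P_L$, and then $h(P_X,Q^*)-h(P_X)=D_{\mathrm{KL}}(P_{Z,O\mid L}\,\|\,P_{Z\mid L}P_{O\mid L}\mid P_L)=I(X_Z;X_O\mid X_L)$. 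If you want to salvage a local-score argument you would need to resolve the descendant-reattachment obstacle you flagged; as written, the proposal does not establish the bound.
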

The next result proves a lower bound that holds uniformly over all alternative graphs in $\cT_p(\cG,W)$. The lower bound consists only of local conditional dependence properties. That is, for any subgraph of the causal graph $\cG$ of the form $X_o \to X_w \to X_l$ or $X_o \leftarrow X_w \to X_l$ we measure, by means of conditional mutual information, the conditional dependence of the two adjacent nodes $X_w$ and $X_l$ conditional on $X_o$, $I(X_w;X_l|X_o)$. The lower bound consists of the smallest  of all such local conditional dependence measures.

\begin{restatable}[]{lemma}{ScoreGapCaseTwo}
\label{lm:ScoreGapCaseTwo}	
Let $\Pi_W(\cG)$ denote all tuples $(w,l,o)\in V^{3}$ of adjacent nodes $(w\to l)\in \cE$ and $o\in (\CHg{\cG}{w} \setminus \{l\})\cup \PAg{\cG}{w}$. It holds that that\begin{align*}
\min_{\tilde \cG \in \cT_p(\cG,W)} \lE(\tilde \cG) -\lE(\cG) &\geq \min_{(w,l,o)\in \Pi_W(\cG)} I(X_w;X_l|X_o).
\end{align*}
\end{restatable}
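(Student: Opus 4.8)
The plan is to reduce the score gap for an arbitrary $\tilde \cG \in \cT_p(\cG,W)$ to a single conditional mutual information term by exploiting the graph reduction and the local structure around the sink node $L$ identified in \Cref{fig:reducedSubgraphs}. First I would fix $\tilde \cG \in \cT_p(\cG,W)$ and pass to the reduced graphs $\cG_R$, $\tilde \cG_R$; by the footnote argument in \Cref{sec:ScoreGapGeneralGraphs} the score gap is unchanged, so it suffices to bound $\lE(\tilde \cG_R) - \lE(\cG_R)$. By definition of $\cT_p(\cG,W)$ there is a sink node $L$ of $\cG_R$ whose (unique) parent $W$ in $\cG_R$ is neither a parent nor a child of $L$ in $\tilde \cG_R$, i.e. $W$ is unconnected to $L$ in $\tilde \cG_R$. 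The key observation is that we only need the \emph{local} node-wise terms at $L$ and at $W$ to survive the comparison: $L$ is a sink in $\cG_R$ with parent $W$, so it contributes $h(X_L - \E[X_L \mid X_W])$ to $\lE(\cG_R)$, whereas in $\tilde \cG_R$ the node $L$ has some parent set $\PAg{\tilde \cG_R}{L}$ not containing $W$; and $W$ has parents $\PAg{\cG_R}{W}$ in $\cG_R$ and some (possibly different) parent set in $\tilde \cG_R$ that, crucially, includes at least one node $o$ adjacent to $W$ in $\cG_R$ (either a child of $W$ other than $L$, namely some $O_k$, or a parent of $W$, namely $D$; this is where the set membership $o \in (\CHg{\cG}{w}\setminus\{l\}) \cup \PAg{\cG}{w}$ comes from).

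The main analytic step is then a telescoping/regrouping of the two entropy sums analogous to the manipulation in the proof of \Cref{lm:MarkovEquivTreesScoreGap}, but now isolating the pair $(W,L)$ rather than a whole reversed path. Concretely, I would show that all node-wise local scores in $\lE(\tilde \cG_R)$ and $\lE(\cG_R)$ for nodes other than $L$ and $W$ can be cancelled or lower-bounded against each other — using the elementary fact that $h(X_i - \E[X_i \mid X_{A}])$ is monotone decreasing in $A$ (adding conditioning variables only reduces the residual entropy, since $\E[X_i\mid X_A]$ is the $L^2$-projection), together with \Cref{lm:EdgeReversal}-type identities for any reversed edges forced by the reduction. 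What remains after cancellation is a difference of the form $h(X_W \mid \text{old parents}) + h(X_L \mid X_W) - h(X_W \mid \text{new parents including } o) - h(X_L \mid \text{new parents not including } W)$, and the dependence-information that is ``lost'' when we drop the edge $W \to L$ in $\tilde \cG_R$ but condition $X_W$ on its neighbor $X_o$ instead is exactly $I(X_W ; X_L \mid X_o)$. Chaining the inequalities and minimizing over the (finitely many) admissible tuples $(w,l,o) \in \Pi_W(\cG)$ — noting $\Pi_W(\cG)$ is defined on $\cG$ itself and the reduction only removes nodes, so every tuple arising in a reduced graph is also a tuple in $\Pi_W(\cG)$ — yields
\begin{align*}
	\lE(\tilde \cG) - \lE(\cG) \geq I(X_W; X_L \mid X_o) \geq \min_{(w,l,o)\in \Pi_W(\cG)} I(X_w; X_l \mid X_o),
\end{align*}
and taking the minimum over $\tilde \cG \in \cT_p(\cG,W)$ gives the claim.

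The hard part will be the bookkeeping in the cancellation step: showing rigorously that \emph{every} node other than $W$ and $L$ contributes a non-negative amount to $\lE(\tilde \cG_R) - \lE(\cG_R)$, or can be paired off, without accidentally double-counting a node that plays several roles (e.g. a node that is simultaneously the $D$ for one sink and an ancestor of another). I expect this requires a careful induction on the structure of the reduced graph — peeling off $L$, arguing that $\cG_R \setminus \{L\}$ and a suitably modified $\tilde \cG_R \setminus \{L\}$ are again in the scope of the argument, and that the only genuinely new negative contribution is the single term at $W$ which is absorbed by the conditional mutual information. This is essentially the three-case analysis sketched after \Cref{fig:reducedSubgraphs}, and the $W$-case is designed precisely so that the ``parent $W$ is removed and replaced by a neighbor $o$'' move produces a clean $I(X_w;X_l\mid X_o)$ term; making that intuition into a line-by-line entropy identity is the bulk of the work.
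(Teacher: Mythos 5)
There are two genuine gaps here. First, your central mechanism --- a node-wise telescoping/cancellation showing that ``every node other than $W$ and $L$ contributes a non-negative amount to $\lE(\tilde \cG_R)-\lE(\cG_R)$'' --- is exactly the kind of argument the paper warns against: by \Cref{ex:negativelocalscoregap}, a node that gains a parent in $\tilde\cG_R$ (e.g.\ the root of $\cG_R$) has a \emph{strictly negative} local score gap, so the pairing-off you hope for does not go through in general. The paper's proof avoids node-wise comparison entirely. It uses the global characterization $\lE(\tilde\cG)-\lE(\cG)=\inf_{Q\in\{\tilde\cG\}\times\cF(\tilde\cG)\times\cP^p} D_{\mathrm{KL}}(P_X\|Q)$ from \Cref{lm:EntropyScore}, extracts a \emph{single} d-separation $L\,\dsep{\tilde\cG_R}\,W\mid A$ that every such $Q$ must respect, relaxes the constraint set to all $Q$ factorizing as $Q_{K|W,L,A}Q_{W|A}Q_{L|A}Q_A$, and evaluates the resulting infimum term by term to get the clean lower bound $I(X_W;X_L\mid X_A)$. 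No cancellation of the remaining nodes is needed because they are absorbed into the unconstrained factor $Q_{K|W,L,A}$.

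Second, you misidentify the conditioning node. The node $A$ that blocks $W$ from $L$ in $\tilde\cG_R$ is one of $D$ or $O_1,\ldots,O_k$, which are by construction parents/children of $L$ \emph{in $\tilde\cG_R$}; they need not be adjacent to $W$ in the true graph $\cG$ at all, so the bound obtained at this stage is $I(X_w;X_l\mid X_a)$ for some $a\in\NDg{\cG}{l}\setminus\{w\}$, not yet for an $o\in(\CHg{\cG}{w}\setminus\{l\})\cup\PAg{\cG}{w}$. The paper closes this gap with a separate localization step: choosing $o$ adjacent to $w$ in $\cG$ that blocks the path from $a$ to $l$, and using $X_l\independent(X_o,X_a)\mid X_w$ together with the fact that conditioning reduces entropy to show $I(X_w;X_l\mid X_a)\geq I(X_w;X_l\mid X_o)$. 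Your proposal asserts adjacency where there is none and therefore skips this necessary reduction.
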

A uniform lower bound of the score gap over all alternative graphs in the final partition $\cT_p(\cG,Y)$ is given by the smallest edge-reversal of any edge in the causal graph $\cG$.

\begin{restatable}[]{lemma}{ScoreGapCaseThree}
\label{lm:ScoreGapCaseThree}
It holds that
\begin{align*}
\min_{\tilde \cG \in \cT_p(\cG,Y)} \lE(\tilde \cG) -\lE(\cG) \geq \min_{(j\to i) \in \cE} \Delta \lE( j \lra i).
\end{align*}	
\end{restatable}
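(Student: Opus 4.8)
The plan is to establish the bound for an arbitrary fixed $\tilde\cG\in\cT_p(\cG,Y)$ and then take the minimum over this set. By definition of $\cT_p(\cG,Y)$, the graph reduction yields a sink node $L$ of $\cG_R$ with $Y(L)=\PAg{\cG_R}{L}\cap\CHg{\tilde\cG_R}{L}\neq\emptyset$; writing $Y$ for this node and using that $\cG_R,\tilde\cG_R$ are the induced subgraphs of $\cG,\tilde\cG$ on $V_R$, this says precisely that the causal edge $(Y\to L)\in\cE$ is \emph{reversed} in $\tilde\cG$, so $(L\to Y)\in\tilde\cE$ and $\PAg{\tilde\cG}{Y}=\{L\}$. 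Working with the full graphs on $V$, I would use the exact decomposition
\begin{align*}
\lE(\tilde\cG)-\lE(\cG) = \big(\lCE(\tilde\cG)-\lCE(\cG)\big) + \sum_{v\in V} J_v, \qquad J_v := I\big(X_v-\E[X_v|X_{\PAg{\tilde\cG}{v}}]\,;\,X_{\PAg{\tilde\cG}{v}}\big),
\end{align*}
and then discard every non-negative $J_v$ except $J_Y$.

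To justify the decomposition, I would reuse the computation in the proof of \Cref{lm:ScoreOrderings}: there it is shown that $\lE(\tilde\cG,v)-\lCE(\tilde\cG,v)=J_v\ge 0$ for every node, and that at the true graph the residual equals the noise $N_v\independent X_{\PAg{\cG}{v}}$, so $J_v=0$ and $\lE(\cG,v)=\lCE(\cG,v)$. For the conditional-entropy term I would argue $\lCE(\tilde\cG)\ge\lCE(\cG)$ globally: since $P_X$ factorizes over the tree $\cG$, the chain rule gives $\lCE(\cG)=\sum_v h(X_v|X_{\PAg{\cG}{v}})=h(P_X)$, whereas for any tree $\tilde\cG$ one has $\lCE(\tilde\cG)=h(P_X,Q_{\tilde\cG})=h(P_X)+D_{\mathrm{KL}}(P_X\|Q_{\tilde\cG})$, where $Q_{\tilde\cG}$ is the distribution with density $\prod_v p(x_v|x_{\PAg{\tilde\cG}{v}})$. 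Hence $\lCE(\tilde\cG)-\lCE(\cG)=D_{\mathrm{KL}}(P_X\|Q_{\tilde\cG})\ge 0$; this is exactly the Chow--Liu fact that the true skeleton maximizes $\sum_{(j\to i)\in\cE}I(X_i;X_j)$ over spanning trees (cf.\ \Cref{rmk:ConditionalEntropyRebane}).

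Combining the two pieces gives $\lE(\tilde\cG)-\lE(\cG)\ge\sum_{v}J_v\ge J_Y$. Since $\PAg{\tilde\cG}{Y}=\{L\}$ and $(Y\to L)\in\cE$ is a causal edge, we have $X_L=f_L(X_Y)+N_L$ with $N_L\independent X_Y$, so \Cref{lm:EdgeReversal} applied to the causal edge $Y\to L$ yields $J_Y=I(X_Y-\E[X_Y|X_L];X_L)=\Delta\lE(Y\lra L)$. As $(Y\to L)\in\cE$, this is bounded below by $\min_{(j\to i)\in\cE}\Delta\lE(j\lra i)$, and since the bound is uniform over $\tilde\cG\in\cT_p(\cG,Y)$ the claim follows. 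I expect the main obstacle to be the conditional-entropy step: one must argue $\lCE(\tilde\cG)\ge\lCE(\cG)$ for the full distribution (the $D_{\mathrm{KL}}\ge 0$/Chow--Liu argument, which crucially uses that $P_X$ is exactly tree-Markov with respect to $\cG$) and then read off from the case-$Y$ reduction that some causal edge is reversed in $\tilde\cG$, so that the single retained term $J_Y$ is precisely an individual edge-reversal score gap.
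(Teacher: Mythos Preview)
Your argument is correct, and it takes a genuinely different route from the paper. The paper works directly with the representation $\lE(\tilde\cG)-\lE(\cG)=\inf_{Q}h(P_X,Q)-h(P_X)$ over $Q\in\{\tilde\cG\}\times\cF(\tilde\cG)\times\cP^p$: it factorizes any such $Q$ as $Q_{A|L,Y}Q_{L,Y}$ with $A=V\setminus\{L,Y\}$, lower-bounds the $A$-part of the cross entropy by $\E[-\log p_{A|L,Y}(A|L,Y)]$, and then explicitly optimizes the bivariate piece $h(P_{L,Y},Q_{L,Y})-h(P_{L,Y})$ over all additive-noise $Q_{L,Y}$ with causal function $\E[Y|L=\cdot]$ to obtain $\Delta\lE(Y\lra L)$. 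Your decomposition $\lE(\tilde\cG)-\lE(\cG)=\big(\lCE(\tilde\cG)-\lCE(\cG)\big)+\sum_v J_v$ bypasses this optimization entirely: the conditional-entropy part is handled globally by the Chow--Liu fact $\lCE(\tilde\cG)-\lCE(\cG)=D_{\mathrm{KL}}(P_X\|Q_{\tilde\cG})\ge 0$, and a single retained term $J_Y=I(X_Y-\E[X_Y|X_L];X_L)$ is identified with $\Delta\lE(Y\lra L)$ via \Cref{lm:EdgeReversal}. Your approach is shorter and more modular (it reuses the score-ordering computations and the Chow--Liu inequality), while the paper's approach is self-contained in the sense that it does not invoke the global minimality of $\lCE$ at $\cG$ and instead localizes the analysis to the pair $(L,Y)$ through the factorization of $Q$. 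One small point worth making explicit in your write-up: the graph reduction removes only nodes that are childless in \emph{both} graphs, so the $\tilde\cG$-parent of $Y$ is never removed and hence $\PAg{\tilde\cG}{Y}=\PAg{\tilde\cG_R}{Y}=\{L\}$, which is needed to conclude $J_Y=I(X_Y-\E[X_Y|X_L];X_L)$.
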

An immediate consequence of \Cref{lm:ScoreGapCaseOne,lm:ScoreGapCaseTwo,lm:ScoreGapCaseThree} is that the entropy identifiability gap is given by the smallest of the lower bounds derived for each partition, see \Cref{lm:thm:scoreGapEntropy}. Thus, it only remains to prove \Cref{lm:ScoreGapCaseOne,lm:ScoreGapCaseTwo,lm:ScoreGapCaseThree}. 

\begin{proof}[ of \Cref{lm:ScoreGapCaseOne}]
Let $\tilde \cG\in \Pi_Z(\cG)$ such that $Z\not = \emptyset$. This implies that  $Y=W=\emptyset$ as $L$ can only have one parent in $\cG$. Furthermore, $D=\emptyset$ as $L$ can only have one parent in $\tilde \cG$   and  $O\not = \emptyset$ for otherwise $L$ would have been deleted by the deletion procedure in \Cref{sec:ScoreGap}. Assume without loss of generality that $O=\{O_1,\ldots,O_k\}$ for some $k\in \N$. The two subgraphs are illustrated in \Cref{fig:subgraphsPiZ}.
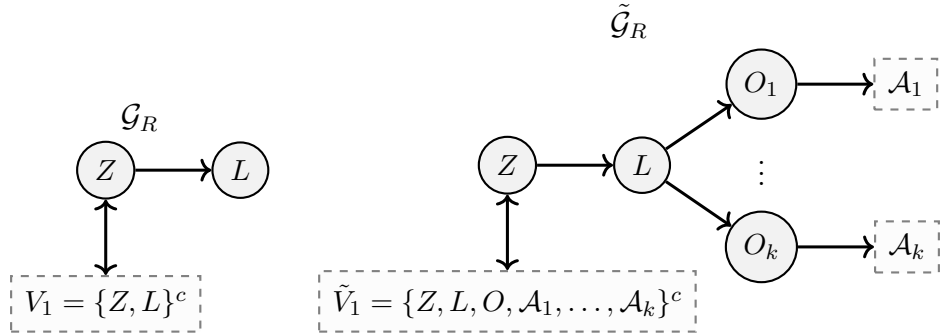
\begin{figure}[H]
\begin{center}
	\begin{tikzpicture}[node distance = 1cm, roundnode/.style={circle, draw=black, fill=gray!10, thick, minimum size=7mm},
		roundnode/.style={circle, draw=black, fill=gray!10, thick, minimum size=7mm},
		outer/.style={draw=gray,dashed,fill=black!1,thick,inner sep=5pt}
		]
		\node[outer,rectangle] (rest) [] {$V_1=\{Z,L\}^c$};
		\node[roundnode] (Z) [above = of rest] {$Z$};
		\node[roundnode] (L) [right = of Z] {$L$};
		\node[above,font=\large\bfseries ] at (current bounding box.north) {$\cG_R$};
		
		\draw[<->, line width=0.4mm] (rest) -- (Z);
		\draw[->, line width=0.4mm] (Z) -- (L);
	\end{tikzpicture} $\quad$	\begin{tikzpicture}[node distance = 1cm, roundnode/.style={circle, draw=black, fill=gray!10, thick, minimum size=7mm},
		roundnode/.style={circle, draw=black, fill=gray!10, thick, minimum size=7mm},
		outer/.style={draw=gray,dashed,fill=black!1,thick,inner sep=5pt}
		]
		\node[outer,rectangle] (rest) [] {$\tilde V_1 =\{Z,L,O,\cA_1,\ldots,\cA_k\}^c$};
		\node[roundnode] (Z) [above = of rest] {$Z$};
		\node[roundnode] (L) [right = of Z] {$L$};
		\node[] (d) [right = of L] {$\vdots$};
		\node[roundnode] (O1) [above = 0.2cm of d] {$O_1$};
		\node[roundnode] (Ok) [below = 0.2cm of d] {$O_k$};
		\node[outer,rectangle] (A1) [right = of O1] {$\cA_1$};
		\node[outer,rectangle] (Ak) [right = of Ok] {$\cA_k$};
		\node[above,font=\large\bfseries ] at (current bounding box.north) {$\tilde \cG_R$};
		
		\draw[<->, line width=0.4mm] (rest) -- (Z);
		\draw[->, line width=0.4mm] (Z) -- (L);
		\draw[->, line width=0.4mm] (L) -- (O1);
		\draw[->, line width=0.4mm] (L) -- (Ok);
		\draw[->, line width=0.4mm] (O1) -- (A1);
		\draw[->, line width=0.4mm] (Ok) -- (Ak);
	\end{tikzpicture}
\end{center}
\caption{Illustration of the reduced form graphs $\cG_R$ and  $\tilde \cG_R$ for the case $\tilde \cG \in \Pi_Z(\cG)$. $\cA_1,\ldots,\cA_k$ are possibly empty sets of nodes, and dashed rectangle nodes denotes a possibly multi-node subgraph over the variables enclosed. The bi-directed edges means that the edge can be directed in both directions. An edge pointing into the multi-node subgraph, can possibly be multiple edges into distinct nodes of the subgraph.} \label{fig:subgraphsPiZ}
\end{figure}

For ease of notation, fix any $1\leq i\leq k$ and denote $O:=O_i$. We note that in $\tilde \cG$ the following d-separation holds 
\begin{align*}
Z\, \dsep{\tilde \cG} O \, |\,  L.
\end{align*}
Thus, we have for all probability measures $Q\in  \{\tilde \cG\} \times \cF(\tilde \cG) \times \cP^p$ over nodes $V$ that $Z\independent O\, |\, L$ (as the path between $Z$ and $O$ is blocked by $L$ and all probability measures generated in accordance with an SCM are Markovian with respect to the generating graph $\tilde \cG$). Recall that
\begin{align*}
\lE(\tilde \cG)- \lE(\cG) &= \inf_{Q\in  \{\tilde \cG\} \times \cF(\tilde \cG) \times \cP^p}	D_{\mathrm{KL}}(P_X\| Q) \\
&= \inf_{Q\in  \{\tilde \cG\} \times \cF(\tilde \cG) \times \cP^p} h(P_X,Q) - h(P_X).	
\end{align*}
Now fix $Q=q\cdot \lambda^{p} \in \{\tilde \cG\} \times \cF(\tilde \cG) \times \cP^p$ and note that the probability measure factorizes as $Q=Q_{A|Z,O,L}Q_{Z|L}Q_{O|L}Q_L$, i.e., the density $q$ factorizes as  \begin{align*}
q(x)&= q_{A|Z,O,L}(a|z,o,l)q_{Z,O,L}(z,o,l)\\
&= q_{A|Z,O,L}(a|z,o,l)q_{Z|L}(z|l)q_{O|L}(o|l)q_L(l),
\end{align*}
for $\lambda^p$-almost all $x=(a,z,o,l)\in \R^p$ where $A=V\setminus \{Z,O,L\}$. Hence, the cross entropy splits additively into
\begin{align} \notag
h(P_X,Q) &\geq  \E[-\log(q_{A|Z,O,L}(A|Z,O,L))] \\ \notag
&\quad \quad +  \E[-\log(q_{Z|L}(Z|L))] \\ \notag
&\quad \quad  + \E[-\log(q_{O|L}(O|L))] \\
&\quad \quad  + \E[-\log(q_{L}(L))]. \label{eq:InfCrossEntropyLowerBoundOneCasei}
\end{align}
Now note, e.g., that for a conditional distribution (Markov kernel) $Q_{Z|L}$ it holds that
\begin{align*}
0\leq 	D_{\mathrm{KL}}(P_{Z|L}P_L\|Q_{Z|L}P_L) &= \E\lf -\log \lp\frac{q_{Z|L}(Z|L)p_L(L)}{p_{Z|L}(Z|L)p_L(L)} \rp    \rf \\
&= \E[-\log(q_{Z|L}(Z|L))] - \E[-\log(p_{Z|L}(Z|L))],
\end{align*}
proving that 
$$\E[-\log(q_{Z|L}(Z|L))]\geq  \E[-\log(p_{Z|L}(Z|L))].$$
By similar arguments, we get that the three other terms in the lower bound of \Cref{eq:InfCrossEntropyLowerBoundOneCasei} are bounded below by
\begin{align*}
\E[-\log(q_{A|Z,O,L}(A|Z,O,L))] &\geq \E[-\log(p_{A|Z,O,L}(A|Z,O,L))] , \\ 
\E[-\log(q_{O|L}(O|L))] &\geq \E[-\log(p_{O|L}(O|L))], \\
\E[-\log(q_L(L))] &\geq \E[-\log(p_L(L))].
\end{align*}
This implies that
\begin{align*}
\inf_{Q\in \{\tilde \cG\} \times \cF(\tilde \cG) \times \cP^p}	h(P_X,Q) \geq h(P_X,Q^*),
\end{align*}
where $Q^* = P_{A|Z,O,L}P_{Z|L}P_{O|L}P_L$. On the other hand, we know that $P_X$ factorizes as  $P_X=P_{A|Z,O,L}P_{Z,O|L}P_{L}$. Thus we have the following entropy score gap lower bound
\begin{align*}
\lE(\tilde \cG)- \lE(\cG) & \geq  h(P_X,Q^*) - h(P_X) \\
&=  	D_{\mathrm{KL}}(P_{X}\|Q^*) \\
&= D_{\mathrm{KL}}(P_{A|Z,O,L}P_{Z,O|L}P_L\|P_{A|Z,O,L}P_{Z|L}P_{O|L}P_L) \\
&= D_{\mathrm{KL}}(P_{Z,O|L}P_L\|P_{Z|L}P_{O|L}P_L) \\
&= D_{\mathrm{KL}}(P_{Z,O|L}\|P_{Z|L}P_{O|L}|P_L) \\
&=I(Z;O|L).
\end{align*}
$\Pi_Z(\cG)$ denotes all tuples $(z,l,o)\in V^3$ of adjacent nodes $(z\to l)\in \cE$ for which there exists a node $o\in \NDg{\cG}{l}\setminus\{z,l\}$. For any graph $\tilde \cG\in \cT_p(\cG,Z)$ we can, by the above considerations, find a tuple $(z,l,o)\in \Pi_Z(\cG)$ such that
\begin{align*} %
\lE(\tilde \cG) - \lE(\cG) \geq I(X_o; X_z\, | \, X_l).
\end{align*} We conclude that
\begin{align*}
\min_{\tilde \cG \in \cT_p(\cG,Z)} \lE(\tilde \cG) -\lE(\cG)&\geq 	\min_{(z,l,o)\in \Pi_Z(\cG)} I(X_o;X_z|X_l).
\end{align*}

\end{proof}

\begin{proof}[ of \Cref{lm:ScoreGapCaseTwo}]
Fix any $\tilde \cG\in \cT_p(\cG,W)$ and $L$ with $W \not = \emptyset$ such that $Z=Y=\emptyset$. We have illustrated the subgraph $\cG_R$ in \Cref{fig:GRsubgraphPiW} and the possible subgraphs $\tilde \cG_R$ in \Cref{fig:TildeGrSubgraphPiW}.
\begin{figure}[H]
\begin{center}
	\begin{tikzpicture}[node distance = 1cm, roundnode/.style={circle, draw=black, fill=gray!10, thick, minimum size=7mm},
		roundnode/.style={circle, draw=black, fill=gray!10, thick, minimum size=7mm},
		outer/.style={draw=gray,dashed,fill=black!1,thick,inner sep=5pt}
		]
		\node[outer,rectangle] (rest) [] {$\{W,L\}^c$};
		\node[roundnode] (W) [right = of rest] {$W$};
		\node[roundnode] (L) [right = of W] {$L$};
		\node[above,font=\large\bfseries ] at (current bounding box.north) {$\cG_R$};
		
		\draw[<->, line width=0.4mm] (rest) -- (W);
		\draw[->, line width=0.4mm] (W) -- (L);
	\end{tikzpicture} 
\end{center}
\caption{Illustrations of the $\cG_R$ subgraph for for $\tilde\cG \in \cT_p(\cG,W)$.} \label{fig:GRsubgraphPiW}
\end{figure}

\begin{figure}[H]
\begin{minipage}{.5\textwidth}
	\begin{center}
		\begin{tikzpicture}[node distance = 1cm, roundnode/.style={circle, draw=black, fill=gray!10, thick, minimum size=7mm},
			roundnode/.style={circle, draw=black, fill=gray!10, thick, minimum size=7mm},
			outer/.style={draw=gray,dashed,fill=black!1,thick,inner sep=5pt}
			]
			\node[roundnode] (L) [] {$L$};
			\node[] (d) [right = of L] {$\vdots$};
			\node[roundnode] (O1) [above = 0.2cm of d] {$O_1$};
			\node[roundnode] (Ok) [below = 0.2cm of d] {$O_k$};
			\node[outer,rectangle] (A1) [right = of O1] {$\cA_1$};
			\node[outer,rectangle] (Ak) [right = of Ok] {$\cA_k$};
			\node[above,font=\large\bfseries ] at (current bounding box.north) {$\tilde \cG_R$: $D=\emptyset$, $O\not = \emptyset$};
			
			\draw[->, line width=0.4mm] (L) -- (O1);
			\draw[->, line width=0.4mm] (L) -- (Ok);
			\draw[->, line width=0.4mm] (O1) -- (A1);
			\draw[->, line width=0.4mm] (Ok) -- (Ak);
		\end{tikzpicture} 
	\end{center}
\end{minipage}
\begin{minipage}{.5\textwidth} 
	\begin{center}
		\begin{tikzpicture}[node distance = 1cm, roundnode/.style={circle, draw=black, fill=gray!10, thick, minimum size=7mm},
			roundnode/.style={circle, draw=black, fill=gray!10, thick, minimum size=7mm},
			outer/.style={draw=gray,dashed,fill=black!1,thick,inner sep=5pt}
			]
			\node[outer,rectangle] (rest) [] {$\{L,D\}^c$};
			\node[roundnode] (D) [right = of rest] {$D$};
			\node[roundnode] (L) [right = of D] {$L$};
			\node[above,font=\large\bfseries ] at (current bounding box.north) {$\tilde \cG_R$: $D\not = \emptyset$, $O=\emptyset$};
			
			\draw[<->, line width=0.4mm] (rest) -- (D);
			\draw[->, line width=0.4mm] (D) -- (L);
		\end{tikzpicture} 
	\end{center}
\end{minipage}
\begin{minipage}{1\textwidth} 
	\begin{center}
		\begin{tikzpicture}[node distance = 1cm, roundnode/.style={circle, draw=black, fill=gray!10, thick, minimum size=7mm},
			roundnode/.style={circle, draw=black, fill=gray!10, thick, minimum size=7mm},
			outer/.style={draw=gray,dashed,fill=black!1,thick,inner sep=5pt}
			]
			\node[outer,rectangle] (rest) [] {$\{D,L,O,\cA_1,\ldots,\cA_k\}^c$};
			\node[roundnode] (D) [right = of rest] {$D$};
			\node[roundnode] (L) [right = of D] {$L$};
			\node[] (d) [right = of L] {$\vdots$};
			\node[roundnode] (O1) [above = 0.2cm of d] {$O_1$};
			\node[roundnode] (Ok) [below = 0.2cm of d] {$O_k$};
			\node[outer,rectangle] (A1) [right = of O1] {$\cA_1$};
			\node[outer,rectangle] (Ak) [right = of Ok] {$\cA_k$};
			\node[above,font=\large\bfseries , yshift=-10mm] at (current bounding box.north) {$\tilde \cG_R$: $D\not = \emptyset$, $O\not = \emptyset$};
			
			\draw[<->, line width=0.4mm] (rest) -- (D);
			\draw[->, line width=0.4mm] (D) -- (L);
			\draw[->, line width=0.4mm] (L) -- (O1);
			\draw[->, line width=0.4mm] (L) -- (Ok);
			\draw[->, line width=0.4mm] (O1) -- (A1);
			\draw[->, line width=0.4mm] (Ok) -- (Ak);
		\end{tikzpicture} 
	\end{center}
\end{minipage}
\caption{Illustrations of the possible $\tilde \cG_R$ subgraphs for $\tilde\cG \in \cT_p(\cG,W)$.} \label{fig:TildeGrSubgraphPiW}
\end{figure}
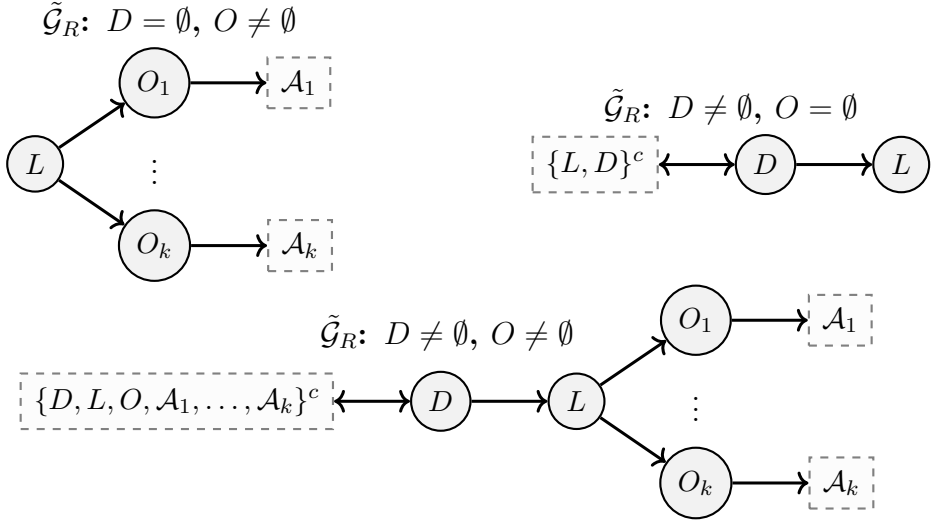	

Note that for any of the three possible local graph structures presented in \Cref{fig:TildeGrSubgraphPiW} there exists an $A\in\{O_1,\ldots,O_k,D\}$ such that $L\, \dsep{\tilde \cG_R} W\, |\, A$, i.e., A blocks the path between $L$ and $W$. Thus, for all probability measures $Q\in\{\tilde \cG\} \times \cF(\tilde \cG) \times \cP^p$ over nodes $V=\{1,..,p\}$ it always holds that $L\independent W \, | \, A$. By arguments similar to those in the proof of \Cref{lm:ScoreGapCaseOne}, we note that
\begin{align*}
\lE(\tilde \cG)- \lE(\cG) &=  \inf_{Q\in\{\tilde \cG\} \times \cF(\tilde \cG) \times \cP^p} h(P_X,Q) - h(P_X),
\end{align*}
and  that 
$$\inf_{Q\in \{\tilde \cG\} \times \cF(\tilde \cG) \times \cP^p}	h(P_X,Q)\geq h(P_X,Q^*),
$$ for $P_X = P_{K|W,L,A}P_{W,L|A}P_A$ and $Q^*=P_{K|W,L,A}P_{L|A}P_{W|A}P_A$ where $K=V\setminus \{W,L,A\}$. To that end, we now have that
\begin{align*}
	\lE(\tilde \cG)- \lE(\cG) & \geq  h(P_X,Q^*) - h(P_X) \\
	&=  	D_{\mathrm{KL}}(P_{X}\|Q^*) \\
	&= D_{\mathrm{KL}}(P_{K|W,L,A}P_{W,L|A}P_A\|P_{K|W,L,A}P_{L|A}P_{W|A}P_A) \\
	&= D_{\mathrm{KL}}(P_{W,L|A}P_A\|P_{L|A}P_{W|A}P_A) \\
	&= D_{\mathrm{KL}}(P_{W,L|A}\|P_{L|A}P_{W|A}|P_A) \\
	&=I(W;L|A).
\end{align*}
Let $\hat \Pi_W(\cG)$ denote all tuples $(w,l,a)\in V^{3}$ of adjacent nodes $(w\to l)\in \cE$ for which there exists a node $a\in \NDg{\cG}{l}\setminus \{w\}$. Now note that for any graph $\tilde \cG\in \cT_p(\cG,W)$ we can, by the above considerations, find a tuple $(w,l,a)\in \hat\Pi_W(\cG)$ such that
\begin{align} \label{eq:InequalityCaseTwoSingleGraph}
	\lE(\tilde \cG) - \lE(\cG) \geq I(X_w; X_l\, | \, X_a).
\end{align}
(Conversely for any tuple $(w,l,a)\in \hat \Pi_W(\cG)$ we can construct a graph $\tilde \cG\in \cT_p(\cG,W)$ such that \eqref{eq:InequalityCaseTwoSingleGraph} holds. To see this, fix $ (w,l,a)\in \hat \Pi_W(\cG)$ and construct $\tilde \cG$ such that the subtree with root node $l$ is identical in both $\cG$ and $\tilde \cG$ and $a$ blocks the path between $l$ and $w$ in $\tilde \cG$.) Therefore, the following lower bound holds (and it is not unnecessarily small).
\begin{align*}
	\min_{\tilde \cG\in \cT_p(\cG,W)}\lE(\tilde \cG) - \lE(\cG) \geq \min_{(w,l,a)\in \hat\Pi_W(\cG)} I(X_w; X_l\, | \, X_a).
\end{align*}
For any $(w,l,a)\in \hat \Pi_W(\cG)$ it either holds that $a \in (\CHg{\cG}{w} \setminus \{l\})\cup \PAg{\cG}{w} $ or that there exists an $o\in (\CHg{\cG}{w} \setminus \{l\})\cup \PAg{\cG}{w}$ blocking the path between $a$ and $l$ in $\cG$ such that  $X_l\independent X_a | X_o$. Furthermore, we note that as $X_l\independent (X_o,X_a)\, |\, X_w$ we have that
\begin{align*}
	I(X_w;X_l|X_a) &= h(X_l|X_a)-h(X_l|X_a,X_w) \\
	&= h(X_l|X_a)-h(X_l|X_w) \\
	&= h(X_l|X_a)-h(X_l|X_o,X_w) \\
	&\geq  h(X_l|X_a,X_o)-h(X_l|X_o,X_w) \\ 
	&= h(X_l|X_o)-h(X_l|X_o,X_w) \\
	&= I(X_w;X_l\, | \, X_o),
\end{align*}
as further conditioning reduces conditional entropy. Let $\Pi_W(\cG)$ denote all tuples $(w,l,o)\in V^{3}$ of adjacent nodes $(w\to l)\in \cE$ and $o\in (\CHg{\cG}{w} \setminus \{l\})\cup \PAg{\cG}{w}$. By the above considerations we conclude that
\begin{align*}
	\min_{\tilde \cG\in \cT_p(\cG,W)}\lE(\tilde \cG) - \lE(\cG) \geq \min_{(w,l,o)\in \Pi_W(\cG)} I(X_w; X_l\, | \, X_o).
\end{align*}
\end{proof}

\begin{proof}[ of \Cref{lm:ScoreGapCaseThree}]
Fix $\tilde \cG\in \cT_p(\cG,Y)$ and $L$ such that $Y\not = \emptyset$. It holds that $W=Z= \emptyset$. We have illustrated the $\cG_R$ in \Cref{fig:GRsubgraphPiY} and the three possible subgraphs $\tilde \cG_R$ in \Cref{fig:TildeGrSubgraphPiY}. 

\begin{figure}[H]
	\begin{center}
		\begin{tikzpicture}[node distance = 1cm, roundnode/.style={circle, draw=black, fill=gray!10, thick, minimum size=7mm},
			roundnode/.style={circle, draw=black, fill=gray!10, thick, minimum size=7mm},
			outer/.style={draw=gray,dashed,fill=black!1,thick,inner sep=5pt}
			]
			\node[outer,rectangle] (rest) [] {$\{Y,L\}^c$};
			\node[roundnode] (Y) [right = of rest] {$Y$};
			\node[roundnode] (L) [right = of Y] {$L$};
			\node[above,font=\large\bfseries ] at (current bounding box.north) {$ \cG_R$};
			
			\draw[<->, line width=0.4mm] (rest) -- (Y);
			\draw[->, line width=0.4mm] (Y) -- (L);
		\end{tikzpicture} 
	\end{center}
	\caption{Illustrations of the $\cG_R$ subgraph for $\tilde\cG \in \cT_p(\cG,Y)$.} \label{fig:GRsubgraphPiY}
\end{figure}

\begin{figure}[H]
	\begin{minipage}{.4\textwidth}
		\begin{center}
			\begin{tikzpicture}[node distance = 1cm, roundnode/.style={circle, draw=black, fill=gray!10, thick, minimum size=7mm},
				roundnode/.style={circle, draw=black, fill=gray!10, thick, minimum size=7mm},
				outer/.style={draw=gray,dashed,fill=black!1,thick,inner sep=5pt}
				]
				\node[outer,rectangle] (rest) [] {$\{Y,L,O\}^c$};
				\node[roundnode] (L) [above = of rest] {$L$};
				\node[roundnode] (Y) [left = of L] {$Y$};
				\node[roundnode] (O) [right = of L] {$O$};
				\node[above,font=\large\bfseries ] at (current bounding box.north) {$ \tilde \cG_R$: $D=\emptyset$, $O\not = \emptyset$};
				
				\draw[<->, line width=0.4mm] (rest) -- (L);
				\draw[->, line width=0.4mm] (L) -- (Y);
				\draw[->, line width=0.4mm] (L) -- (O);
			\end{tikzpicture} 
		\end{center}
	\end{minipage}
	\begin{minipage}{.6\textwidth} 
		\begin{center}
			\begin{tikzpicture}[node distance = 1cm, roundnode/.style={circle, draw=black, fill=gray!10, thick, minimum size=7mm},
				roundnode/.style={circle, draw=black, fill=gray!10, thick, minimum size=7mm},
				outer/.style={draw=gray,dashed,fill=black!1,thick,inner sep=5pt}
				]
				\node[outer,rectangle] (rest) [] {$\{Y,L,D\}^c$};
				\node[roundnode] (D) [right = of rest] {$D$};
				\node[roundnode] (L) [right = of D] {$L$};
				\node[roundnode] (Y) [right = of L] {$Y$};
				\node[above,font=\large\bfseries ] at (current bounding box.north) {$ \tilde \cG_R$: $D\not = \emptyset$, $O=\emptyset$};
				
				\draw[<->, line width=0.4mm] (rest) -- (D);
				\draw[->, line width=0.4mm] (D) -- (L);
				\draw[->, line width=0.4mm] (L) -- (Y);
			\end{tikzpicture}
		\end{center}
	\end{minipage}
	\begin{minipage}{1\textwidth} 
		\begin{center}
			\begin{tikzpicture}[node distance = 1cm, roundnode/.style={circle, draw=black, fill=gray!10, thick, minimum size=7mm},
				roundnode/.style={circle, draw=black, fill=gray!10, thick, minimum size=7mm},
				outer/.style={draw=gray,dashed,fill=black!1,thick,inner sep=5pt}
				]
				\node[outer,rectangle] (rest) [] {$\{D,L,Y,O,\cA,\cB\}^c$};
				\node[roundnode] (D) [above = of rest] {$D$};
				\node[roundnode] (L) [right = of D] {$L$};
				\node[] (dummy) [right = of L]{};
				\node[roundnode] (Y) [above = of dummy] {$Y$};
				\node[roundnode] (O) [below = of dummy] {$O$};
				\node[outer,rectangle] (A) [right = of Y] {$\cA$};
				\node[outer,rectangle] (B) [right = of O] {$\cB$};
				\node[above,font=\large\bfseries ] at (current bounding box.north) {$ \tilde \cG_R$: $D\not = \emptyset$,  $O\not = \emptyset$};
				
				\draw[<->, line width=0.4mm] (rest) -- (D);
				\draw[->, line width=0.4mm] (D) -- (L);
				\draw[->, line width=0.4mm] (L) -- (Y);
				\draw[->, line width=0.4mm] (L) -- (O);
				\draw[->, line width=0.4mm] (O) -- (B);
				\draw[->, line width=0.4mm] (Y) -- (A);
			\end{tikzpicture} 
		\end{center}
	\end{minipage}
	\caption{Illustrations of the possible $\tilde \cG_R$ subgraphs for $\tilde\cG \in \cT_p(\cG,Y)$.} \label{fig:TildeGrSubgraphPiY}
\end{figure}
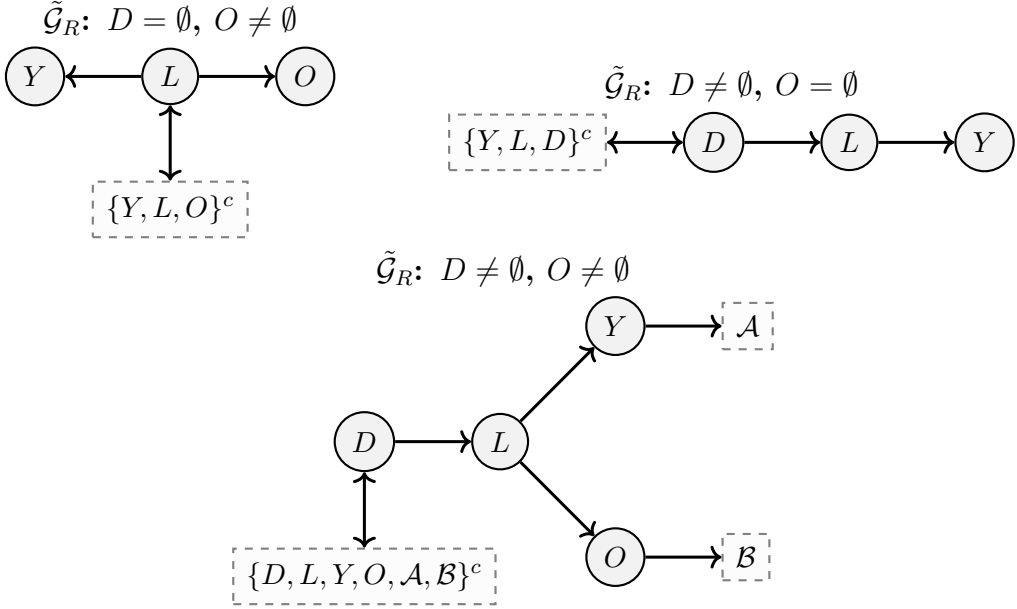

Note that for any of the three possible local graph structures of $\tilde \cG_R$ illustrated in \Cref{fig:TildeGrSubgraphPiY} we have that for all probability measures $Q\in\{\tilde \cG\} \times \cF(\tilde \cG) \times \cP^p$ factorizes as $Q_{A|L,Y}Q_{L,Y}$, where $A=V\setminus \{L,Y\}$. It always holds that $Q_{L,Y}$ is the distribution of $(\tilde L, \tilde Y)$ generated in accordance with a structural causal model of the form
\begin{align} \label{eq:StrucAssignmentTildeYCaseThree}
	\tilde Y:= \tilde f_Y( \tilde L) + \tilde N_Y,
\end{align}
where $\tilde f_Y(l) =\E[Y|L=l]$ for all $l\in \R$, and any $\cL(\tilde N_Y),\cL(\tilde L)\in \cP$ with $\tilde N_Y\independent \tilde L$. Now recall that
\begin{align*}
	\lE(\tilde \cG)- \lE(\cG) &= \inf_{Q\in \{\tilde \cG\} \times \cF(\tilde \cG) \times \cP^p} h(P_X,Q) - h(P_X)	 ,
\end{align*}
and notice that by arguments similar to those in the proof of \Cref{lm:ScoreGapCaseOne} we get
\begin{align*}
	h(P_X,Q) &=  h(P_X,Q_{A|L,Y}Q_{L,Y}) \\
	&=   \E[-\log(q_{A|L,Y}(A|L,Y))]   +  h(P_{L,Y},Q_{L,Y}) \\
	&\geq  \E[-\log(p_{A|L,Y}(A|L,Y))] +  h(P_{L,Y},Q_{L,Y}),
\end{align*}
and that $h(P_X) = \E[-\log(p_{A|L,Y}(A|L,Y))] +  h(P_{L,Y})$. Thus, we have that
\begin{align*}
	\lE(\tilde \cG)- \lE(\cG) \geq \inf_{Q \in\{\tilde \cG\} \times \cF(\tilde \cG) \times \cP^p} h(P_{L,Y},Q_{L,Y}) - h(P_{L,Y}).
\end{align*}
For any $Q=Q_{A|L,Y}Q_{L,Y}\in \{\tilde \cG\} \times \cF(\tilde \cG) \times \cP^p$ we have that $Q_{L,Y}$ is uniquely determined by a marginal distribution $Q_{L}\in \cP$ and the noise distribution of $\tilde N_Y\sim q_{\tilde N_Y}\cdot \lambda\in \cP$ from the additive noise structural assignment in \Cref{eq:StrucAssignmentTildeYCaseThree} for $\tilde Y$ and the causal function $\tilde f_Y$. Thus, the density $q_{L,Y}$ of $Q_{L,Y}$ is given by
\begin{align*}
	q_{L,Y}(l,y)=q_{Y|L}(y|l)q_L(l) = q_{\tilde N_Y}(y-\tilde f_Y(l)) q_L(l) =q_{\tilde N_Y}(y-\E[Y|L=l]) q_L(l) . 
\end{align*}
Hence,
\begin{align*}
	h(P_{L,Y},Q_{L,Y}) 
	&= \E\lf -\log\lp q_{L,Y}(L,Y)\rp  \rf \\
	&= \E\lf -\log\lp q_{Y|L}(Y|L)\rp  \rf + \E\lf -\log\lp q_{L}(L)\rp  \rf \\
	&= \E\lf -\log\lp q_{\tilde N_Y}(Y-\E[Y|L])\rp  \rf +h(P_L,Q_L)  \\
	&= h(Y-\E[Y|L], \tilde N_Y) +h(P_L,Q_L) \\
	&\geq h(Y-\E[Y|L])+ h(L),
\end{align*}
where we used that $h(P,Q) = D_{\mathrm{KL}}(P,Q) + h(P) \geq h(P)$.	Thus, we have that \begin{align*}
	\lE(\tilde \cG)- \lE(\cG)& \geq \inf_{Q \in \{\tilde \cG\} \times \cF(\tilde \cG) \times \cP^p} h(P_{L,Y},Q_{L,Y}) - h(P_{L,Y}) \\
	&\geq  h(Y-\E[Y|L])+ h(L) - h(L-\E[L|Y]) - h(Y) \\
	&= \Delta \lE (Y \lra L).
\end{align*}
We conclude that
\begin{align*}
	\min_{\tilde \cG\in \cT_p(\cG,Y)}	\lE(\tilde \cG) - \lE(\cG) \geq \min_{(i\to j)\in \cE}\Delta \lE (j \lra i).
\end{align*}

\end{proof}

\subsubsection{Remaining Proof of Section~\ref{sec:ScoreGap}}

\begin{proof}[ of \Cref{thm:GaussScoreGapCaseOne}]

Consider a graph $\tilde \cG\in \cT_p(\cG,Z)$ and let $\cG_{R,1}=(\cE_{R,1},V_{R,1})$ and $\tilde \cG_{R,1}=(\tilde \cE_{R,1}, V_{R,1})$ be the reduced graphs after the initial edge and node deletion procedure of \Cref{sec:ScoreGapGeneralGraphs}. The deletion procedure does not change the score gap, that is, $$	\lG(\tilde \cG)- \lG(\cG) = \lG(\tilde \cG_{R,1})- \lG(\cG_{R,1}).$$
For any $i\geq 1$ and fixed $\cG_{R,i}$ and $\tilde \cG_{R,i}$ we define
\begin{align*}
	\bL_{R,i}:= \{L\in V_{R,i}: \CHg{\cG_{R,i}}{L}=\emptyset \land ( \PAg{\tilde \cG_{R,i}}{L}\not= \PAg{\cG_{R,i}}{L} \lor \CHg{\tilde \cG_{R,i}}{L}\not = \emptyset)\}.
\end{align*}
Now fix $L_{1}\in \bL_{R,1}$ such that $Z_{1}\not = \emptyset$, where $Y_{1},Z_{1},W_{1},D_{1}$ and $O_{1}$ are defined similarly to the variables in  \Cref{sec:ScoreGap}. Let $O_1=\{O_{1,1},\ldots,O_{1,k_1}\}$, for some $k_1\in \N$.

Assume that there exists an $i\in\{1,\ldots,k_1\}$ such that $(Z_{1}\to O_{1,i})\in \cE_{R,1}$ in which case we have the following two  paths in $\cG_{R,1}$ and $\tilde \cG_{R,1}$ 
\begin{align*}
	\cG_{R,1}:	O_{1,i} \leftarrow Z_{1} \to L_{1}, \quad \text{and} \quad \tilde \cG_{R,1}: Z_{1}\to L_{1}\to O_{1,i}.
\end{align*} 
Since $O_{1,i}\independent_{\tilde \cG_{R,1} } Z_{1}\,|\, L_{1}$, an entropy score gap lower bound is given by
\begin{align*} \notag
	\lG(\tilde \cG_{R,1})- \lG( \cG_{R,1})&\geq  \lE(\tilde \cG_{R,1})- \lE(\cG_{R,1}) 		\\
	&=  \inf_{Q\in \{\tilde \cG\}\times \cF(\tilde \cG)\times \cP^p} h(P_X,Q) -P_X\\
	&\geq D_{\mathrm{KL}}(P_X \| Q^*)  \\
	&=I(O_{1,i};Z_{1}|L_{1}), 
\end{align*}
with $P_X = P_{K|O,Z,L}P_{O,Z|L}P_L$ and $Q^* = P_{K|O,Z,L}P_{Z|L}P_{O|L} P_L$ for $K=V\setminus \{O,Z,L\}$, by arguments similar to those from the proof of \Cref{lm:ScoreGapCaseTwo}. Now note that $ (Z_{1},O_{1,i},L_{1}) \in \Pi_W(\cG_{R,1})\subseteq \Pi_W(\cG)$ as $(Z_1\to O_{1,i})\in \cE_{R,1}$ and $L_1\in \CHg{\cG_{R,1}}{Z_1} \setminus \{O_{1,i}\}\subseteq (\CHg{\cG_{R,1}}{Z_1}\setminus \{O_{1,i}\})\cup \PAg{\cG_{R,1}}{Z_1}$. Hence,
\begin{align}
	\lG(\tilde \cG_{R,1})- \lG( \cG_{R,1}) \geq \min_{(w,l,o)\in \Pi_W(\cG)}I(X_w;X_l|X_o). \label{eq:LowerBoundIfZToOInfimumOverCombinations}
\end{align}	

Assume now that for all $i \in \{1,..,k_1\}$ we have $(Z_{1}\to O_{1,i})\not \in \cE_{R,1}$. Let $\hat \cG_{R,1}=(\hat \cE_{R,1}, V_{R,1})$ denote an intermediate graph where $\hat{\cE}_{R,1}$ is identical to $\tilde \cE_{R,1}$ except the edges $\{(L_{1}\to O_{1,i}) :1\leq i \leq k_1\} \subset \tilde \cE_{R,1}$ are replaced by the edges $\{(Z_{1}\to O_{1,i}):1\leq i \leq k_1 \}$. It holds that
\begin{align*}
	\lG(\tilde \cG_{R,1})- \lG(\cG_{R,1}) 
	&=\lG(\tilde \cG_{R,1}) - \lG(\hat \cG_{R,1}) + \lG(\hat \cG_{R,1}) - \lG(\cG_{R,1}) \\
	&\geq \lG(\hat \cG_{R,1}) - \lG(\cG_{R,1}).
\end{align*}
Note that this score gap lower bound is still strictly positive as $\hat \cG_{R,1} \not = \cG_{R,1}$.
To realize the last inequality (see also \citealp{Peters2022}), simply note that as $O_{1,i}\independent L_{1}\, |\, Z_{1}$ we have for all $i\in \{1,\ldots,k_1\}$ that 
\begin{align}
	2\lG(\tilde \cG_{R,1},O_{1,i}) &= \log \E[(O_{1,i}-\E[O_{1,i}|L_{1}])^2] \notag\\
	&\geq \log \E[(O_{1,i}-\E[O_{1,i}|Z_{1},L_{1}])^2] \notag \\
	&= \log \E[(O_{1,i}-\E[O_{1,i}|Z_{1}])^2] \notag \\
	&=2\lG(\hat \cG_{R,1},O_{1,i}).\label{eq:intermediategraphineq}
\end{align}
Now since all edges in $\tilde \cG_{R,1}$ and $\hat \cG_{R,1}$ coincide except the incoming edges into $O_{1,1},\ldots,O_{1,k_1}$ we get that $$
\lG(\tilde \cG_{R,1}) -\lG(\hat \cG_{R,1}) = \sum_{i=1}^{k_1} \lG(\tilde \cG_{R,1},O_{1,i})-\lG(\hat \cG_{R,1},O_{1,i}) \geq 0,$$
where the inequality follows from \Cref{eq:intermediategraphineq}. 
Now both $\hat \cG_{R,1}$ and $\cG_{R,1}$ have a childless  node $L_{1}$ with the same parent $Z_{1}$, so we let  $\tilde \cG_{R,2}$ and $\cG_{R,2}$ denote these two graphs where the node $L_{1}$ and its incoming edge are deleted. This deletion does not change the graph scores, i.e., 
\begin{align*}
	\lG(\hat  \cG_{R,1})- \lG(\cG_{R,1})
	&=  \lG(\tilde \cG_{R,2})- \lG(\cG_{R,2}).
\end{align*}
Now fix $L_{2}\in \bL_{R,2}$ and define $Y_{2},Z_{2},W_{2},D_{2}$ and $O_{2}=\{O_{2,1},\ldots,O_{2,k_2}\}$ accordingly.

If either $Y_{2}$ or $W_{2}$ is non-empty, we use the score gap lower bound previously discussed in \Cref{lm:ScoreGapCaseTwo} and \Cref{lm:ScoreGapCaseThree}. If $Z_{2}$ is non-empty, we can repeat the above procedure and iteratively move edges and delete nodes until  we arrive at the first $i\in \N$ with $\tilde \cG_{R,i}$ and $\cG_{R,i}$ being the iteratively reduced graphs and $L_{R,i}\in \bL_{R,i}$ where either
\begin{itemize}
	\item[i)]  $Y_{i}$ or $W_{i}$ is non-empty, here, we get that $\lG(\tilde \cG) - \lG( \cG)$ is lower bounded by a bound similar to the form of \Cref{lm:ScoreGapCaseTwo} or \Cref{lm:ScoreGapCaseThree}. That is,
	\begin{align*}
		\lG(\tilde \cG_{R,i})- \lG( \cG_{R,i})\geq 	&\,  \lE(\tilde \cG_{R,i})-\lE(\cG_{R,i}) \\
		\geq  &\, \min\left\{\min_{j\to i \in \cE} \Delta \lE( i \lra j),\min_{(w,l,o)\in \Pi_W(\cG)} I(X_w; X_l\, | \, X_o)\right\}.
	\end{align*}
	\item[ii)] $Z_{i}$ is non-empty and there exists a $j\in\{1,\ldots,k_i\}$ such that $(Z_{i} \to O_{i,j})\in \cG_{R,i}$. As previously argued, the score gap lower bound of \Cref{eq:LowerBoundIfZToOInfimumOverCombinations} applies. That is
	\begin{align*}
		\lG(\tilde \cG_{R,i})- \lG( \cG_{R,i}) \geq \lE(\tilde \cG_{R,i})-\lE(\cG_{R,i}) &\geq \min_{(w,l,o)\in \Pi_W(\cG)} I(X_w; X_l\, | \, X_o).
	\end{align*}
\end{itemize}
Note that whenever we do not meet scenario i) or ii) we remove a node in both graphs that is a sink node in the reduced true causal graph $\cG_{R,i}$ and the intermediate graph $\hat \cG_{R,i}$. After at most $p-2$ graph reduction iterations of not encountering scenario i) or ii) we are left with two different graphs on two nodes, in which case the score gap is an edge reversal.
We conclude that
\begin{align*}
	\lG(\tilde \cG) - \lG(\cG) 
	&\geq \min\left\{\min_{i\to j \in \cE} \Delta \lE( j \lra i),\min_{(w,l,o)\in \Pi_W(\cG)} I(X_w; X_l\, | \, X_o)\right\}.
\end{align*}

\end{proof}

\end{appendices}

\vskip 0.2in
\bibliography{bibfile}

\end{document}